%% file: colt2026_ALFCG.tex
\documentclass[final,12pt]{arxiv2025}

\input{myinit.tex}

\usepackage{algorithm}
\usepackage{graphicx}

\renewcommand{\cite}{\citep}

\def\v{\mathbf{v}} \def\x{\mathbf{x}} \def\s{\mathbf{s}} \def\g{\mathbf{g}} \def\X{\mathbf{X}} \def\a{\mathbf{a}} \def\b{\mathbf{b}} \def\z{\mathbf{z}}
 \def\GG{\mathcal{G}} \def\OO{\mathcal{O}} \def\PP{\mathcal{P}}
\def\EEE{\mathbb{E}} \def\KKK{\mathbb{K}} \def\UUU{\mathbb{U}}

\title[Adaptive Lipschitz-Free Conditional Gradient Methods]{Adaptive Lipschitz-Free Conditional Gradient Methods for Stochastic Composite Nonconvex Optimization}
\usepackage{times}

\coltauthor{%
 \Name{Ganzhao Yuan} \Email{yuanganzhao@foxmail.com}\\
 \addr Shenzhen University of Advanced Technology (SUAT), China
}

\begin{document}

\maketitle

\begin{abstract}


We propose ALFCG (Adaptive Lipschitz-Free Conditional Gradient), the first \textit{adaptive} projection-free framework for stochastic composite nonconvex minimization that \textit{requires neither global smoothness constants nor line search}. Unlike prior conditional gradient methods that use openloop diminishing stepsizes, conservative Lipschitz constants, or costly backtracking, ALFCG maintains a self-normalized accumulator of historical iterate differences to estimate local smoothness and minimize a quadratic surrogate model at each step. This retains the simplicity of Frank-Wolfe while adapting to unknown geometry. We study three variants. ALFCG-FS addresses finite-sum problems with a SPIDER estimator. ALFCG-MVR1 and ALFCG-MVR2 handle stochastic expectation problems by using momentum-based variance reduction with single-batch and two-batch updates, and operate under average and individual smoothness, respectively. To reach an $\epsilon$-stationary point, ALFCG-FS attains $\mathcal{O}(N+\sqrt{N}\epsilon^{-2})$ iteration complexity, while ALFCG-MVR1 and ALFCG-MVR2 achieve $\tilde{\mathcal{O}}(\sigma^2\epsilon^{-4}+\epsilon^{-2})$ and $\tilde{\mathcal{O}}(\sigma\epsilon^{-3}+\epsilon^{-2})$, where $N$ is the number of components and $\sigma$ is the noise level. In contrast to typical $\mathcal{O}(\epsilon^{-4})$ or $\mathcal{O}(\epsilon^{-3})$ rates, our bounds reduce to the optimal rate up to logarithmic factors $\tilde{\mathcal{O}}(\epsilon^{-2})$ as the noise level $\sigma \to 0$. Extensive experiments on multiclass classification over nuclear norm balls and $\ell_p$ balls show that ALFCG generally outperforms state-of-the-art conditional gradient baselines.

\end{abstract}

\begin{keywords}
Nonconvex Optimization, Stochastic Optimization, Adaptive Gradient Methods, Conditional Gradient Method, Frank-Wolfe Algorithm, Convergence Analysis
\end{keywords}

\section{Introduction}
This paper studies the stochastic composite nonconvex minimization problem given by:
\begin{align}\label{eq:main}
\min_{\mathbf{x} \in \mathcal{X} \subset \mathbb{R}^n}\,~F(\mathbf{x}):= f(\mathbf{x}) + h(\mathbf{x}),
\end{align}
where $\mathcal{X}$ is a compact convex set, $h(\cdot)$ is a proper, closed, convex function, and $f(\mathbf{x})$ is a differentiable, possibly nonconvex function. We consider two distinct settings for $f(\mathbf{x})$:
\begin{equation}
f(\mathbf{x}) := \begin{cases}
\frac{1}{N} \sum_{i=1}^{N} f_i(\mathbf{x}) & (\text{Finite-Sum}) \\
\mathbb{E}_{\xi \sim \mathcal{D} }[ f(\mathbf{x};\xi)] & (\text{Expectation})
\end{cases}.\nonumber
\end{equation}
Here, $f(\mathbf{x})$ represents either an empirical risk over $N$ samples or the expectation of a stochastic function $f(\mathbf{x};\xi)$ with random vector $\xi$.

We focus on projection-free settings where Euclidean projections onto $\mathcal{X}$ are computationally prohibitive, whereas linear optimization over $\mathcal{X}$ is relatively efficient. Specifically, we assume access to a Linear Minimization Oracle (LMO), which favors conditional gradient methods for solving \eqref{eq:main}.

\textbf{Motivation}. Conditional Gradient (CG), or Frank-Wolfe (FW) algorithms \cite{FrankWolfe1956, ICMLjaggi13}, have become the paradigm of choice for constrained optimization where Euclidean projections are costly (e.g., nuclear-norm constraints). The standard update is:
\begin{equation}
\mathbf{v}_t \in \arg \min_{\mathbf{x}\in\mathcal{X}} \langle \mathbf{x},\nabla f(\mathbf{x}_t) \rangle + h(\mathbf{x}),\,\quad \mathbf{x}_{t+1} = (1-\eta_t)\mathbf{x}_t + \eta_t \mathbf{v}_t,\nn
\end{equation}
typically using $\eta_t = \frac{2}{t+2}$. However, the practical performance is critically dependent on the choice of the step size $\eta_t$. Exact line-search procedures, which solve $\eta_t \in \arg \min_{\eta \in [0,1]} F(\x_t(\eta))$ with $\x_t(\eta):=(1-\eta)\mathbf{x}_t + \eta \mathbf{v}_t$, are often computationally prohibitive, particularly in stochastic settings where function values of $f(\cdot)$ are unavailable or noisy. Alternatively, Lipschitz-based methods consider a quadratic majorant:
\begin{equation*}
f(\mathbf{y}) \leq \mathcal{Q}(\mathbf{y};\mathbf{x}) := f(\mathbf{x}) + \langle \nabla f(\mathbf{x}), \mathbf{y}-\mathbf{x}\rangle + \tfrac{L}{2}\|\mathbf{y}-\mathbf{x}\|_2^2.
\end{equation*}
\noi Incorporating the convexity of $h(\cdot)$ yields a step size rule that minimizes a composite upper bound: $\eta_t \in \arg \min_{\eta \in [0,1]}  (1-\eta) h(\mathbf{x}_t) + \eta h(\mathbf{v}_t) + \mathcal{Q}(\mathbf{x}_t(\eta);\mathbf{x}_t)$. This subproblem admits a closed-form solution: $\bar{\eta}_t = \min\big( \frac{ h(\mathbf{x}_t) - h(\mathbf{v}_t) - \langle \nabla f(\x_t), \mathbf{v}_t - \mathbf{x}_t \rangle }{ L \|\mathbf{v}_t - \mathbf{x}_t\|_2^2 }, 1 \big)$. However, the Lipschitz constant $L$ is often unknown, conservative, or locally variable, leading to suboptimal convergence.

\textbf{Contributions}. To address these limitations, we propose the Adaptive Lipschitz-Free Conditional Gradient (ALFCG). Our contributions are threefold:

\begin{enumerate}[label=\textbf{(\alph*)}, leftmargin=20pt, itemsep=1pt, topsep=1pt, parsep=0pt, partopsep=0pt]

\item \textbf{Lipschitz-Free \& Model-Based Design}. We propose ALFCG, an adaptive projection-free framework that eliminates the need for line search or prior knowledge of global smoothness constants. ALFCG dynamically estimates the local Lipschitz constant $L_t$ using a self-normalized accumulator of past iterate differences. We develop three variants: ALFCG-FS (using SPIDER for finite-sum problems), and ALFCG-MVR1 and ALFCG-MVR2 (using single- and two-batch momentum-based variance reduction, respectively, for stochastic expectation problems). Notably, the algorithm operates independently of the global constant $L$, which appears solely in the theoretical analysis.

\item \textbf{Rigorous Theoretical Guarantees}. We prove that ALFCG-FS, ALFCG-MVR1, and ALFCG-MVR2 achieve gradient complexities of $\mathcal{O}(N + \sqrt{N}\epsilon^{-2})$, $\tilde{\mathcal{O}}(\sigma^2\epsilon^{-4} + \epsilon^{-2} )$, and $\tilde{\mathcal{O}}( \sigma\epsilon^{-3} + \epsilon^{-2})$, respectively. To the best of our knowledge, these are the first adaptive, Lipschitz-free CG methods to achieve optimal iteration complexity for this class of problems, matching the lower bounds in \cite{arjevani2023lower}. A notable feature of our work is the unified nature of the convergence analysis. We demonstrate that if the noise parameter is chosen such that $\beta \propto \sigma^2$, our result bridges the gap between stochastic and deterministic optimization: the complexity smoothly reduces to the optimal rate up to logarithmic factors $\tilde{\mathcal{O}}(\epsilon^{-2})$ as the noise level $\sigma \to 0$. This contrasts with prior art where convergence bounds often retain suboptimal dependencies even when the noise is negligible.

\item \textbf{Empirical Superiority}. Extensive experiments on multiclass classification constrained by nuclear norm balls and $\ell_p$ balls demonstrate that ALFCG generally outperforms state-of-the-art adaptive FW baselines.

\end{enumerate}

\textbf{Organization}. The remainder of this paper is structured as follows: Section \ref{sect:related} reviews related literature, and Section \ref{sect:prel} establishes the necessary assumptions and technical preliminaries. The proposed ALFCG framework and its variants are detailed in Section \ref{sect:proposed}, followed by a rigorous iteration complexity analysis in Section \ref{sect:iter}. Section \ref{sect:conc} concludes the paper. Motivating applications, detailed proofs, and experimental results are deferred to the Appendix.

\section{Related Work}
\label{sect:related}

The development of projection-free optimization methods has been extensive, driven by their efficiency in handling complex constraints where linear minimization is significantly cheaper than Euclidean projection \cite{ICMLjaggi13,lacoste2013block}. We categorize the related literature into deterministic, finite-sum, and expectation settings.

\textbf{Deterministic Frank-Wolfe Algorithms}. The foundational Frank-Wolfe (FW) algorithm \cite{FrankWolfe1956,MMP1996}, also known as the Conditional Gradient method, has been revitalized for large-scale machine learning due to its projection-free nature \cite{ICMLjaggi13}. Early works established a convergence rate of $\mathcal{O}(\epsilon^{-2})$ for smooth nonconvex objectives, but typically required either diminishing step-size schedules (e.g., $2/(t+2)$) or knowledge of the Lipschitz constant \cite{CGBook2025}. We refer to these variants as FW-Openloop and FW-ShortStep, respectively. While subsequent improvements like FW-Sliding \cite{lan2017conditionalSIOPT} and FW-Momentum \cite{li2021heavy} incorporate acceleration and momentum strategies to boost convergence, they remained tied to global smoothness parameters. To circumvent this dependency, adaptive variants such as FW-Armijo \cite{ochs2019model,ghadimi2019conditional} and FW-ParaFree \cite{ito2023parameter} were proposed, achieving ``Lipschitz-Free" optimization via backtracking line searches. However, these methods necessitate function evaluations to verify descent conditions, which can be prohibitively costly in large-scale applications. In contrast, we introduce ALFCG-D (our finite-sum variant with $N=1$). Unlike existing line search-based methods, ALFCG-D relies solely on gradient information, avoiding expensive function value queries of $f(\cdot)$. It achieves the optimal $\mathcal{O}(\epsilon^{-2})$ rate in a fully adaptive manner without requiring global smoothness parameters.

\textbf{Finite-Sum Optimization.} In the finite-sum setting, where $f(\mathbf{x}) = \frac{1}{N}\sum_{i=1}^N f_i(\mathbf{x})$, variance reduction techniques have been successfully integrated into the FW framework to improve convergence rates. Early works such as SVFW \cite{reddi2016stochastic, hazan2016variance} and SAGA-FW \cite{defazio2014saga} adapted SVRG and SAGA estimators to conditional gradient methods. More recent advances, including SPIDER-CG \cite{yurtsever19bSpiderCG}, OneSample-FW \cite{zhang2020oneOneSample}, and SARAH-FW \cite{beznosikov2024sarah}, utilize recursive gradient estimators to achieve an improved complexity bound of $\mathcal{O}(N+\sqrt{N}\epsilon^{-2})$ for nonconvex problems. While effective, these methods largely rely on non-adaptive constant step sizes or open-loop step size rules. In contrast, our method adapts to the local geometry of the optimization problem, extending adaptive, Lipschitz-free guarantees to this regime without sacrificing optimal complexity bounds of $\mathcal{O}(N+\sqrt{N}\epsilon^{-2})$.

\textbf{Expectation Minimization.} In the expectation setting, where $f(\mathbf{x}) = \mathbb{E}_{\xi \sim \mathcal{D} }f(\mathbf{x};\xi)$, \citet{reddi2016stochastic} introduced the Stochastic Frank-Wolfe (SFW) algorithm. To handle the inherent variance of stochastic gradients, subsequent works have incorporated momentum and recursive variance reduction. Notable examples include SFW-GB \cite{negiar2020stochastic}, which employs gradient boosting, and momentum-based approaches like OneSample-STORM \cite{zhang2020oneOneSample}, OneSample-EMA \cite{zhang2020oneOneSample}, and SRFW \cite{tang2022high}. While these methods improve robustness against noise, they typically require careful tuning of step sizes and momentum parameters based on global smoothness assumptions. Our approach differs by offering a adaptive Lipschitz-free update rule that adapts to the local geometry of the objective function, removing the need for global Lipschitz constants in the stochastic setting. The proposed ALFCG-MVR1 and ALFCG-MVR2 variants operate under average smoothness and individual smoothness assumptions, respectively. We establish iteration complexity bounds of $\tilde{\mathcal{O}}(\sigma^2 \epsilon^{-4} + \epsilon^{-2})$ for ALFCG-MVR1 and $\tilde{\mathcal{O}}(\sigma \epsilon^{-3} + \epsilon^{-2})$ for ALFCG-MVR2. These results are superior to existing methods, which typically yield $\mathcal{O}(\epsilon^{-4})$ and $\mathcal{O}(\epsilon^{-3})$ rates, as our bounds explicitly decouple the noise variance from the convergence rate, allowing for faster convergence in low-noise regimes as $\sigma \to 0$.

\textbf{Adaptive Lipschitz-Free Methods}. Step size selection remains a critical bottleneck in optimization. While adaptive algorithms like Adam \cite{KingmaB14} and AdaGrad \cite{duchi2011adaptive,ward2020adagrad} effectively scale updates based on gradient history, their applicability is often impeded in constrained settings. Recent advances, such as the ``auto-conditioned'' projected gradient method in \cite{lan2024projected}, achieve optimal complexity without line searches or known Lipschitz constants. Similarly, \cite{MalitskyM20} leverages gradient and iterate differences to estimate local curvature. However, a critical gap remains: the analysis in \cite{KingmaB14,duchi2011adaptive} focuses on unconstrained minimization, while the frameworks in \cite{lan2024projected,MalitskyM20} are strictly confined to convex optimization. To address this, we propose a self-normalized accumulator of past iterate differences tailored for projection-free nonconvex constrained optimization. This mechanism enables a robust, data-driven estimation of local smoothness, facilitating the precise construction of the quadratic surrogate model at each iteration.

\textbf{Other Related Settings}. Conditional gradient methods have demonstrated remarkable versatility, extending to diverse domains such as bilevel optimization \cite{jiang2023conditional}, Riemannian manifolds \cite{weber2023riemannian}, and multi-objective problems \cite{assunccao2025generalized}. Recent works have further broadened the scope to include zeroth-order optimization \cite{huang2020accelerated, akhtar2022zeroth}, as well as fully composite \cite{vladarean2023linearization} and nonsmooth composite minimization \cite{yurtsever2018conditional,locatello2019stochastic,woodstock2025splitting}. Additionally, specialized variants have been developed for linearly constrained nonconvex problems \cite{silveti2020generalized} and saddle point escaping \cite{gidel17a}.

In summary, while existing methods have made significant strides in variance reduction or adaptivity, our ALFCG framework uniquely bridges these two directions. It provides the first unified, projection-free algorithm that is simultaneously adaptive to local geometry (Lipschitz-Free) and independent of evaluating $f(\cdot)$ ($f$-Value-Free) across deterministic, finite-sum, and expectation settings. A detailed comparison of conditional gradient methods for solving composite nonconvex problems is provided in Table \ref{tab:comparision} (Appendix \ref{app:sect:existing}).

\section{Assumptions and Preliminaries}\label{sect:prel}

\noindent$\bullet$~\textbf{Assumptions}. We adopt the following standard assumptions regarding problem structure, function regularity, and oracle capabilities.
\begin{assumption}[\textbf{Feasible Set and Solution in Both Settings}] \label{ass:set_and_solution}
The constraint set $\mathcal{X} \subset \mathbb{R}^n$ is compact and convex with a finite diameter $D$, defined as $D := \sup_{\mathbf{x}, \mathbf{y} \in \mathcal{X}} \|\mathbf{x} - \mathbf{y}\| < \infty$. Furthermore, the feasible set $\operatorname{dom}(F) \cap \mathcal{X}$ is non-empty, and the global minimum $F_* := \inf_{\mathbf{x} \in \mathcal{X}} F(\mathbf{x})$ is finite and attained at some optimal solution $\mathbf{x}_* \in \mathcal{X}$.
\end{assumption}

\begin{assumption}[\textbf{Smoothness in Finite-Sum Setting}] \label{ass:regularity:finite:sum}
We assume that each component function $f_j(\cdot)$ is $L$-smooth for all $j \in [N]$. That is, $\|\nabla f_j(\mathbf{x}) - \nabla f_j(\mathbf{y})\| \leq L \|\mathbf{x} - \mathbf{y}\|$, $\forall \mathbf{x}, \mathbf{y} \in \mathcal{X}$.
\end{assumption}

\begin{assumption}[\textbf{Function Regularity in Expectation Setting}] \label{ass:regularity:expectation}
The objective function satisfies the following conditions:

\begin{enumerate}[label=\textbf{(\alph*)}, leftmargin=20pt, itemsep=1pt, topsep=1pt, parsep=0pt, partopsep=0pt]

\item \textbf{Unbiasedness and Bounded Variance:} The stochastic gradient is an unbiased estimator of the full gradient, i.e., $\mathbb{E}_{\xi}[\nabla f(\mathbf{x}; \xi)] = \nabla f(\mathbf{x})$. Furthermore, its variance is bounded by $\sigma^2$: $\mathbb{E}_{\xi}\left[\|\nabla f(\mathbf{x}; \xi) - \nabla f(\mathbf{x})\|^2\right] \leq \sigma^2 < \infty$.

    \item \textbf{Bounded (Sub-)gradients:} There exists a constant $G > 0$ such that $\|\nabla f(\mathbf{x})\| \leq G$ and $\|\boldsymbol{\zeta}\| \leq G$ for all $\mathbf{x} \in \mathcal{X}$ and all subgradients $\boldsymbol{\zeta} \in \partial h(\mathbf{x})$.

    \item \textbf{Smoothness Structure:} The objective function satisfies one of the following smoothness conditions:

    \begin{itemize}[label=\textbf{(\alph*)}, leftmargin=20pt, itemsep=1pt, topsep=1pt, parsep=0pt, partopsep=0pt]
    \item[1.] \textbf{Average Smoothness:} The expected function $f(\mathbf{x}) = \mathbb{E}[f(\mathbf{x}; \xi)]$ is $L$-smooth.
    \item[2.] \textbf{Individual Smoothness:} The stochastic function $f(\cdot; \xi)$ is $L$-smooth almost surely.
    \end{itemize}

\end{enumerate}
\end{assumption}

\noindent$\bullet$~\textbf{Linear Minimization Oracle (LMO)}. We assume access to an oracle that, for any $\mathbf{d} \in \mathbb{R}^n$ (typically a gradient estimate), returns a solution to the subproblem: $\text{LMO}(\mathbf{d}) \in \arg\min_{\mathbf{v} \in \mathcal{X}} \langle \mathbf{d}, \mathbf{v} \rangle + h(\mathbf{v})$.

\noindent$\bullet$~\textbf{Stationarity Measure}. Since $F(\mathbf{x})$ is possibly nonconvex, we utilize the generalized Frank-Wolfe gap:

\begin{definition}[\textbf{FW Gap}]
For any $\mathbf{x} \in \mathcal{X}$, the generalized Frank-Wolfe gap is defined as: $\mathcal{G}(\mathbf{x}) := \max_{\mathbf{v} \in \mathcal{X}} \left\{ \langle \nabla f(\mathbf{x}), \mathbf{x} - \mathbf{v} \rangle + h(\mathbf{x}) - h(\mathbf{v}) \right\}$.
\end{definition}

\noi The quantity $\mathcal{G}(\mathbf{x})$ is computable using the LMO. Specifically, if $\mathbf{v}_{\mathbf{x}} = \text{LMO}(\nabla f(\mathbf{x}))$, then $\mathcal{G}(\mathbf{x}) = \langle \nabla f(\mathbf{x}), \mathbf{x} - \mathbf{v}_{\mathbf{x}} \rangle + h(\mathbf{x}) - h(\mathbf{v}_{\mathbf{x}})$.
We note two key properties of this gap: (i) \textit{Stationarity}: $\mathcal{G}(\mathbf{x}) \geq 0$ for all $\mathbf{x} \in \mathcal{X}$, and $\mathcal{G}(\mathbf{x}) = 0$ if and only if $\mathbf{x}$ is a first-order stationary point of Problem \eqref{eq:main}. (ii) \textit{Convex Bound}: If $f$ is convex, the gap upper bounds the suboptimality: $F(\mathbf{x}) - F(\mathbf{x}_*) \leq \mathcal{G}(\mathbf{x})$.

\begin{definition}[\textbf{$\epsilon$-Approximate Solution}]
A point $\mathbf{x} \in \mathcal{X}$ is called an $\epsilon$-approximate stationary point if $\mathcal{G}(\mathbf{x})\leq \epsilon$.
\end{definition}

\section{The Proposed Algorithm}
\label{sect:proposed}

In this section, we introduce the Adaptive Lipschitz-Free Conditional Gradient (ALFCG) algorithm for Problem (\ref{eq:main}). We propose three variants: ALFCG-FS for finite-sum settings, and ALFCG-MVR1/MVR2 for expectation settings. The latter utilize Momentum-based Variance Reduction (MVR) with single- and two-batch updates, respectively. Notably, when $N=1$, ALFCG-FS reduces to a special case of the deterministic setting, which we denote as ALFCG-D.

Our core innovation is a recursive accumulation strategy that estimates the smoothness parameter via past iterate differences: $L_{t} = \rho \big(1 + \sum_{i=0}^{t-1} L_{i}^2 \|\mathbf{x}_{i+1}-\mathbf{x}_{i}\|^2 \big)^{1/2}$, where $\rho>0$ is a scaling constant, and $L_i$ denotes the estimated Lipschitz constant at step $i$. Unlike methods requiring global smoothness, ALFCG adaptively updates $L_t$ based on the trajectory's geometry without prior knowledge of the global constant.

\subsection{Gradient Estimation Mechanisms}

ALFCG employs distinct estimators to balance efficiency and variance based on the problem setting.

\textbf{Finite-Sum Setting (ALFCG-FS)}. For finite-sum problems, we track the full gradient using the SPIDER estimator \cite{fang2018spider}. Given frequency $q$ and mini-batch $\xi_t$, $\g_t$ updates as:
\beq \label{eq:g:spider}
\g_t = \begin{cases}\nabla f(\x_t),&\text{if}\mod(t,q)=0,\\
\g_{t-1} + \nabla f(\x_t;\xi_t) - \nabla f(\x_{t-1};\xi_t), & \text{otherwise},\end{cases}
\eeq
where $\nabla f(\x;\xi_t)$ denotes the stochastic gradient computed over the mini-batch $\xi_t$ with $\nabla f(\mathbf{x}; \xi_t) = \frac{1}{|\xi_t|} \sum_{i \in \xi_t} \nabla f_i(\mathbf{x})$. This recursive update reduces the variance of the estimator while maintaining a low computational cost per iteration.

\textbf{Expectation Setting (ALFCG-MVR1/MVR2)}. For the general expectation setting where the full gradient is inaccessible, we apply momentum-based variance reduction (MVR) via two variants:

\begin{enumerate}[label=\textbf{(\alph*)}, leftmargin=20pt, itemsep=1pt, topsep=1pt, parsep=0pt, partopsep=0pt]

\item  \textbf{MVR1, Single-Batch Momentum}: This variant employs an Exponential Moving Average (EMA) update with an adaptive decay rate $\alpha_t$ to estimate the stochastic gradient \cite{KingmaB14}:
\begin{equation} \label{eq:g:MVR1}
\g_t = (1-\alpha_t) \g_{t-1} + \alpha_t \nabla f(\x_t;\xi_t),
\end{equation}
\noi where $\alpha_t = \big( 1 + \sum_{i=0}^{t-1} (\beta + L_{i}^2 \|\x_{i+1}-\x_{i}\|^2) \big)^{-1/2}$, where $\beta\geq 0$ is a user-defined parameter parameter.

\item \textbf{MVR2, Two-Batch Momentum}: To further suppress noise, this variant incorporates a recursive correction term, similar to STORM updates \cite{cutkosky2019momentum,levy2021stormplus}:
\begin{equation} \label{eq:g:MVR2}
\g_t = (1-\alpha_t) (\g_{t-1} - \nabla f(\x_{t-1};\xi_t)) + \nabla f(\x_t;\xi_t),
\end{equation}
\noi where $\alpha_t = \min_{j=0}^t \hat{\alpha}_j$, and $\hat{\alpha}_j:=\big(\tfrac{ 1 + \max_{i=0}^{j-1} (\beta +  L_{i}^2 \|\x_{i+1}-\x_{i}\|^2 )  } { 1 + \sum_{i=0}^{j-1} (\beta +  L_{i}^2 \|\x_{i+1}-\x_{i}\|^2 ) }\big)^{2/3}$ with a user-defined parameter $\beta \geq 0$. A key advantage of this update rule is that it guarantees $0 \leq \tfrac{1}{\alpha_{t+1}} - \tfrac{1}{\alpha_t} \leq \tfrac{2}{3}$ for all $t \geq 0$ (see Lemma \ref{lemma:alpha:2:3}), which facilitates our analysis.

\end{enumerate}

\begin{algorithm}[!t]
\caption{ALFCG: {\bf Proposed Lipschitz-Free Conditional Gradient Method for Problem (\ref{eq:main}).}}\label{alg:main}

\KwIn{$x_0 \in \mathbb{R}^n$. frequency $q>0$, mini-batch size $b>0$. $\rho > 0$, $\beta \ge 0$. \\ (Default parameters: $b=q=\sqrt{N}$, $\rho=10^{-5}, \beta=100$) }

Set $L_0=\rho$, $\x_{-1}=\x_0$ and $\g_{-1} = \zero$.

\For{$t\leftarrow 0$ \KwTo $T$}
{
\textbf{S1)} Sample a mini-batch $\xi_t$ of size $b$, and compute $L_t$ and $\g_t$ under different settings:

$\bullet$ Option ALFCG-FS (Finite-Sum Setting): Compute $L_t$ using the following update rule, and update $\g_t$ via \eqref{eq:g:spider}:
\beq
\ts L_{t} = \rho \big(1 + \sum_{i=0}^{t-1}  L_{i}^2 \|\x_{i+1}-\x_{i}\|^2 \big)^{1/2}.\nn
\eeq

$\bullet$ Option ALFCG-MVR1 (Expectation Setting under Average Smoothness Assumption): Compute $\alpha_t$ and $L_t$ using the following update rules, and obtain $\g_t$ via (\ref{eq:g:MVR1}):
\beq
 \ts \alpha_t = \big( 1 + \sum_{i=0}^{t-1} (\beta + L_{i}^2 \|\x_{i+1}-\x_{i}\|^2) \big)^{-1/2}, ~ L_{t} = \rho \big( 1 + \sum_{i=0}^{t-1} L_{i}^2 \|\x_{i+1}-\x_{i}\|^2 \big)^{1/2}\alpha_{t}^{-1/2}.\nn
\eeq

$\bullet$ Option ALFCG-MVR2 (Expectation Setting under Individual Smoothness Assumption): Compute $\alpha_t$ and $L_t$ using the following update rules, and obtain $\g_t$ via (\ref{eq:g:MVR2}):
\beq
 \ts \alpha_t = \min_{j=0}^t \big(\tfrac{ 1 + \max_{i=0}^{j-1} (\beta +  L_{i}^2 \|\x_{i+1}-\x_{i}\|^2 )  } { 1 + \sum_{i=0}^{j-1} (\beta +  L_{i}^2 \|\x_{i+1}-\x_{i}\|^2 ) }\big)^{2/3},~\ts L_{t} = \rho \big( 1 + \sum_{i=0}^{t-1} L_{i}^2 \|\x_{i+1}-\x_{i}\|^2 \big)^{1/2}\alpha_{t}^{-1/4}.\nn
\eeq

\textbf{S2)} $\v_t = \text{LMO}(\g_t) := \arg \min_{\v \in \mathcal{X}}\la \v, \g_t \ra + h(\v) $.

 \textbf{S3)} $\x_{t+1} = \x_t + \bar{\eta}_t\cdot (\v_t-\x_t)$, where $\bar{\eta}_t =  \min(\tfrac{ h(\x_t) - h(\v_t) -\la \g_t,\v_t-\x_t\ra  }{L_t\|\v_t-\x_t\|_2^2},1)$.

}
\end{algorithm}

\subsection{Adaptive Lipschitz Estimation}

A distinguishing feature of ALFCG is the adaptive curvature estimate $L_t$, which replaces static constants with trajectory-based updates. Specifically, in the finite-sum setting, we compute $L_{t} = \rho \big(1 + \sum_{i=0}^{t-1}  L_{i}^2 \|\x_{i+1}-\x_{i}\|^2 \big)^{1/2}$ with $\rho > 0$, allowing $L_t$ to adapt to local geometry while ensuring convergence. In the stochastic setting, we scale this estimate by $\alpha_t^{-p}$ (where $p\in(0,1)$) to achieve optimal iteration complexity. At each iteration $t$, the algorithm first solves the subproblem $\v_t \in \arg \min_{\v \in \mathcal{X}} \langle \g_t, \v - \x_t \rangle + h(\v)$. Analogous to the global Lipschitz-based FW-ShortStep, the step size $\bar{\eta}_t \in [0,1]$ is determined by minimizing a quadratic upper bound constructed with $L_t$, yielding the closed-form solution: $\bar{\eta}_t = \min\big( \frac{ h(\mathbf{x}_t) - h(\mathbf{v}_t) - \langle \nabla f(\x_t), \mathbf{v}_t - \mathbf{x}_t \rangle }{ L_t \|\mathbf{v}_t - \mathbf{x}_t\|_2^2 }, 1 \big)$. Note that the numerator is guaranteed to be non-negative by the optimality of $\mathbf{v}_t$.

The complete procedure is detailed in Algorithm \ref{alg:main}.

\section{Iteration Complexity}
\label{sect:iter}

This section presents the iteration complexity analysis of the proposed ALFCG algorithms.

\noi \textbf{Notation}. For any solution $\x_t$, we define the suboptimality gap as $F^+_t := F(\x_t) - F_*$, and the stationarity measure as $\mathcal{G}(\x_t) := \left\{ \langle \nabla f(\mathbf{x}_t), \mathbf{x}_t - \tilde{\v}_t \rangle + h(\mathbf{x}_t) - h(\tilde{\v}_t) \right\}$, where $\tilde{\v}_t \in \text{LMO}(\nabla f(\mathbf{x}_t))$. Additionally, we denote the gradient error by $s_t := \|\g_t - \nabla f(\x_t)\|$, and the scaled step difference as $\delta_t= L_t\|\x_{t+1}-\x_t\|$. Finally, $T \geq 0$ denotes the total number of iterations.

\subsection{Initial Theoretical Analysis} \label{sect:init}

Leveraging the $L$-smoothness of $f(\cdot)$, the convexity of $h(\cdot)$, and the optimality conditions of $\v_t$ and $\bar{\eta}_t$, we derive the following critical descent lemma. This result applies to both the finite-sum and expectation settings.

\label{sect:iter:Init}

\begin{lemma}\label{lemma:descent:ncvx} (Fundamental Descent Inequality, Proof in Appendix \ref{app:lemma:descent:ncvx}) For all $\gamma_t\in (0,1]$, we have:
\beq \label{eq:analysis:first}
\EEE[\gamma_t\GG(\x_t)   + F(\x_{t+1}) - F(\x_{t})  + \tfrac{\delta_t^2}{4 L_t}  ] ~\leq~ \EEE[\tfrac{\delta_t^2L}{2 L_t^2}  + \tfrac{2 s_t^2}{L_t} + \gamma_t^2 L_t D^2].
\eeq

\end{lemma}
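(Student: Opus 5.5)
We will prove the deterministic (pathwise) inequality and then take expectations. Write $\mathbf{d}_t:=\v_t-\x_t$ and $\x_{t+1}-\x_t=\bar\eta_t\mathbf{d}_t$, so $\delta_t=L_t\bar\eta_t\|\mathbf{d}_t\|$. Also set $a_t:=h(\x_t)-h(\v_t)-\la\g_t,\mathbf{d}_t\ra\ge 0$, where nonnegativity follows from the optimality of $\v_t$ in S2.

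\textbf{Upper bound on $F(\x_{t+1})$.} By $L$-smoothness of $f$ (Assumption \ref{ass:regularity:finite:sum} or \ref{ass:regularity:expectation}(c)) and convexity of $h$ along the segment,
\[
F(\x_{t+1})\le F(\x_t)-\bar\eta_t a_t+\bar\eta_t\la\nabla f(\x_t)-\g_t,\mathbf{d}_t\ra+\tfrac{L}{2}\bar\eta_t^2\|\mathbf{d}_t\|^2 .
\]
The last term equals $\tfrac{L\delta_t^2}{2L_t^2}$.

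\textbf{Relating $\bar\eta_t a_t$ to $\delta_t^2/L_t$ and to the gap.} From the closed form of $\bar\eta_t$ we have $a_t\ge L_t\bar\eta_t\|\mathbf{d}_t\|^2$ in both cases (equality when unclipped, and when $\bar\eta_t=1$ we have $a_t\ge L_t\|\mathbf{d}_t\|^2$). Hence $\bar\eta_t a_t\ge \delta_t^2/L_t$. We split $\bar\eta_t a_t=\tfrac12\bar\eta_t a_t+\tfrac12\bar\eta_t a_t$, lower-bounding the first half by $\tfrac{\delta_t^2}{2L_t}$. For the second half we use that $\bar\eta_t$ maximizes the concave quadratic $q(\eta):=\eta a_t-\tfrac{L_t}{2}\eta^2\|\mathbf{d}_t\|^2$ on $[0,1]$. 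Therefore, for any $\gamma_t\in(0,1]$,
\[
\bar\eta_t a_t\ \ge\ q(\bar\eta_t)\ \ge\ q(\gamma_t)\ \ge\ \gamma_t a_t-\tfrac{L_t}{2}\gamma_t^2D^2 .
\]

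Next we compare $a_t$ with $\GG(\x_t)$. Since $\v_t$ minimizes $\la\g_t,\cdot\ra+h$, we get
\[
a_t\ \ge\ h(\x_t)-h(\tilde\v_t)-\la\g_t,\tilde\v_t-\x_t\ra\ =\ \GG(\x_t)+\la\nabla f(\x_t)-\g_t,\tilde\v_t-\x_t\ra\ \ge\ \GG(\x_t)-s_tD .
\]
Combining, $\bar\eta_t a_t\ge \tfrac{\delta_t^2}{2L_t}+\tfrac12\gamma_t\GG(\x_t)-\tfrac12\gamma_t s_tD-\tfrac{L_t}{4}\gamma_t^2D^2$.

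\textbf{Controlling the error terms (main obstacle).} The delicate part is absorbing the noise terms while ending with coefficient exactly $\gamma_t$ on $\GG$ and $\tfrac14$ on $\delta_t^2/L_t$. The error term $\bar\eta_t\la\nabla f(\x_t)-\g_t,\mathbf{d}_t\ra\le s_t\bar\eta_t\|\mathbf{d}_t\|=s_t\delta_t/L_t$ is handled by Young's inequality, $s_t\delta_t/L_t\le \tfrac{\delta_t^2}{4L_t}+\tfrac{s_t^2}{L_t}$. This uses $\tfrac14$ of the $\tfrac12$ budget and leaves the required $\tfrac{\delta_t^2}{4L_t}$. The term $\gamma_t s_tD$ is bounded by $\tfrac{s_t^2}{L_t}+\tfrac{L_t\gamma_t^2D^2}{4}$, which produces the total $\tfrac{2s_t^2}{L_t}$ and at most $L_t\gamma_t^2D^2$.

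The remaining issue is that the gap enters only with factor $\tfrac12\gamma_t$. To fix this we do not split $\bar\eta_t a_t$ in halves. Instead we apply the quadratic comparison at $2\gamma_t$ when $\gamma_t\le\tfrac12$, or rescale, i.e. reparametrize. More cleanly, we use $q(\bar\eta_t)=\bar\eta_t a_t-\tfrac{\delta_t^2}{2L_t}$, which gives $\bar\eta_t a_t=\tfrac{\delta_t^2}{2L_t}+q(\bar\eta_t)\ge \tfrac{\delta_t^2}{2L_t}+\gamma_t a_t-\tfrac{L_t}{2}\gamma_t^2D^2$. This yields the full coefficient $\gamma_t$ on $a_t$ and hence on $\GG(\x_t)-s_tD$. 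Then $\gamma_ts_tD\le \tfrac{s_t^2}{L_t}+\tfrac{L_t\gamma_t^2D^2}{4}$, so the total $D^2$ coefficient is $\tfrac34L_t\gamma_t^2\le L_t\gamma_t^2$.

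Collecting all terms gives the pathwise version of \eqref{eq:analysis:first}, and taking expectations completes the proof.
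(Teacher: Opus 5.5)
Your final argument is correct and is essentially the paper's proof: your $q(\eta)$ is exactly $-\mathcal{P}(\eta)$ for the paper's quadratic surrogate, so the identity $\bar\eta_t a_t = \tfrac{\delta_t^2}{2L_t} + q(\bar\eta_t)$ together with $q(\bar\eta_t)\ge q(\gamma_t)$ is the paper's step $\mathcal{P}(\bar\eta_t)\le\mathcal{P}(\gamma_t)$, followed by the same Young split of the inner-product error and the same comparison of $\mathbf{v}_t$ with $\tilde{\mathbf{v}}_t$. Your different Young constants give $\tfrac{2s_t^2}{L_t}$ and $\tfrac34 L_t\gamma_t^2D^2$ instead of the paper's $\tfrac{3s_t^2}{2L_t}$ and $L_t\gamma_t^2D^2$, both within the claimed bound; in a write-up, drop the halving detour and the bound $\bar\eta_t a_t\ge \delta_t^2/L_t$, which the final version does not use.
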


\noi Based on Lemma \ref{lemma:descent:ncvx}, the parameter $\gamma_t$ is specified differently depending on the setting: (i)  ALFCG-FS:
$\gamma_t = \tfrac{L_0}{L_t \sqrt{T+1}}$; (ii) ALFCG-MVR1: $\gamma_t = \tfrac{L_0}{2L_t} \big(\tfrac{\beta^{1/4}}{1+\beta^{1/4}} (T+1)^{-1/4} + (T+1)^{-1/2} \big)$; (ii) ALFCG-MVR2: $\gamma_t = \tfrac{L_0}{2 L_t} \big(\tfrac{\beta^{1/6}}{1+\beta^{1/6}} (T+1)^{-1/3} +(T+1)^{-1/2} \big)$. Note that $\gamma_t\in (0,1]$ holds by construction.

\subsection{Analysis for ALFCG-FS}\label{sect:iter:FS}

This subsection analyzes the convergence of ALFCG-FS. Our proof hinges on the stability of the adaptive smoothness estimator $L_t$. We first bound the growth of $L_t$ by showing that the ratio of consecutive estimates is uniformly bounded.

 \begin{lemma} \label{lemma:kappa}
(Proof in Appendix \ref{app:lemma:kappa}) For all $t\geq 0$, it holds that $\tfrac{L_{t+1}}{L_{t}} \leq \kappa$, where $\kappa = 1 + \rho D$.
\end{lemma}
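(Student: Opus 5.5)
We prove the bound directly from the recursive definition of $L_t$ used by ALFCG-FS. Write $S_t := 1 + \sum_{i=0}^{t-1} L_i^2\|\x_{i+1}-\x_i\|^2$, so that $L_t = \rho S_t^{1/2}$ and
\begin{align*}
S_{t+1} = S_t + L_t^2\|\x_{t+1}-\x_t\|^2 .
\end{align*}
The ratio of consecutive estimates is therefore
\begin{align*}
\frac{L_{t+1}}{L_t} = \Big(1 + \frac{L_t^2\|\x_{t+1}-\x_t\|^2}{S_t}\Big)^{1/2} .
\end{align*}
It remains to bound the increment $L_t^2\|\x_{t+1}-\x_t\|^2$ relative to $S_t$.

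For the increment, use step S3: $\x_{t+1}-\x_t = \bar{\eta}_t(\v_t-\x_t)$ with $\bar{\eta}_t\in[0,1]$. Both $\x_t$ and $\v_t$ lie in $\mathcal{X}$; $\x_t$ does because it is a convex combination of feasible points, by induction from $\x_0\in\mathcal{X}$. Assumption \ref{ass:set_and_solution} then gives $\|\x_{t+1}-\x_t\|\le D$, and hence $L_t^2\|\x_{t+1}-\x_t\|^2 \le L_t^2 D^2 = \rho^2 S_t D^2$. Dividing by $S_t$,
\begin{align*}
\frac{L_{t+1}}{L_t} \le (1+\rho^2 D^2)^{1/2} \le 1+\rho D ,
\end{align*}
where the last step is $\sqrt{1+a^2}\le 1+a$ for $a\ge 0$. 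This gives $\kappa = 1+\rho D$.

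The step that needs care is that the increment is weighted by $L_t^2$ rather than a fixed constant. This is harmless only because $L_t^2$ is itself exactly $\rho^2 S_t$, so the ratio normalizes cleanly; the self-normalized form of the accumulator is what makes the argument work. The other points to check are that the iterates stay feasible (needed for the diameter bound) and the base case $L_0=\rho$, which matches $S_0=1$. No smoothness or stochastic assumption is used, so the bound holds deterministically along every trajectory.
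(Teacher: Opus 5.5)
Your proof is correct and is essentially the paper's argument: both isolate the last term of the accumulator to obtain the exact identity $L_{t+1}^2/L_t^2 = 1+\rho^2\|\x_{t+1}-\x_t\|^2$, bound the step by the diameter $D$, and finish with $\sqrt{1+a^2}\le 1+a$. Your explicit check that the iterates stay in $\mathcal{X}$ is worth keeping, since the paper uses it only implicitly (it requires $\x_0\in\mathcal{X}$).
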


To build intuition and establish the fundamental properties of our adaptive mechanism, we begin by examining the \textbf{reduced deterministic setting} (i.e., the noise-free case where $s_t = 0$). This serves as a stepping stone for the full stochastic analysis presented later.

\subsubsection{Deterministic Setting (Base Case)}

We begin with a high-level overview of the proof strategy for ALFCG-D. First, we derive the fundamental descent inequality in (\ref{eq:analysis:first}) and the growth bound in Lemma \ref{lemma:kappa}. Second, by multiplying both sides of (\ref{eq:analysis:first}) by $L_t$, a telescoping sum establishes a bound on $\sum_{t=0}^T \delta_t^2$, which in turn provides an upper bound for $L_{t+1}$ as $\mathcal{O}(1) + \mathcal{O}(\max_{i=0}^t F_i^+)$. Third, a direct telescoping of Inequality (\ref{eq:analysis:first}) yields the uniform bound $F_t^+ \leq \overline{F}$. Finally, after establishing the boundedness of $\{F_t, L_t\}$, we derive the convergence rate for the aggregate weighted Frank-Wolfe gap $\sum_{t=0}^T \gamma_t \mathcal{G}(\mathbf{x}_t)$.

To formalize the strategy outlined in the roadmap, we now present Lemma \ref{lemma:bound:F:ncvx}.

\begin{lemma} \label{lemma:bound:F:ncvx}
(Proof in Appendix \ref{app:lemma:bound:F:ncvx}) We have the following results for all $t\geq 0$.

\begin{enumerate}[label=\textbf{(\alph*)}, leftmargin=20pt, itemsep=1pt, topsep=1pt, parsep=0pt, partopsep=0pt]

\item $L_{t+1} \leq L'  +   4\rho^2 \max_{i=0}^t F^+_{i}$, where $L': = \rho + 4 L \kappa +  4 \rho^3 D^2$ is a constant.

\item $F^+_t\leq \overline{F}$, where $\overline{F} = 2 (F^+_{0} + D^2\rho + \omega |\ln(\rho)|  + \omega |\ln(\lambda)|  + \omega \lambda L')$, $\lambda = \tfrac{1}{8 \omega \rho^2}$, and $\omega=\tfrac{ L \kappa^2}{\rho^2}$.

\item $L_t\leq \overline{L}$, where $\overline{L}:=  L'  + 4\rho^2 \overline{F}$.

\item $\sum_{t=0}^T \gamma_t\GG(\x_t) \leq \overline{F}$, where $\gamma_t = \tfrac{L_0}{L_t \sqrt{T+1}}$.

\end{enumerate}

\end{lemma}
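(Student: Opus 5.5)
We work in the deterministic setting, so $s_t=0$ and Lemma~\ref{lemma:descent:ncvx} reads, for every $\gamma_t\in(0,1]$,
\[
\gamma_t\GG(\x_t)+F(\x_{t+1})-F(\x_t)+\tfrac{\delta_t^2}{4L_t}\le \tfrac{\delta_t^2L}{2L_t^2}+\gamma_t^2L_tD^2 .
\]
Throughout we use the identity $L_{t+1}^2=L_t^2+\rho^2\delta_t^2$, which follows directly from the update rule.

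\textbf{Part (a).} Multiply the inequality by $L_t$ and choose $\gamma_t=\min(1,\rho/L_t)$, which is admissible because $L_t\ge L_0=\rho$. Then $\gamma_t^2L_t^2D^2\le\rho^2D^2$, which we may drop after moving it to the right. This gives
\[
\tfrac{\delta_t^2}{4}\le L_t\big(F_t^+-F_{t+1}^+\big)+\tfrac{L\delta_t^2}{2L_t}+\rho^2D^2 .
\]
\begin{itemize}
\item \emph{Curvature term.} Use $\delta_t^2=(L_{t+1}^2-L_t^2)/\rho^2$ and $L_{t+1}\le\kappa L_t$ (Lemma~\ref{lemma:kappa}) to bound $\delta_t^2/L_t\le \kappa(L_{t+1}^2-L_t^2)/(\rho^2 L_{t+1})\le 2\kappa(L_{t+1}-L_t)/\rho^2$. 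This telescopes to $\mathcal{O}(\kappa L_{T+1}/\rho^2)$.
\item \emph{Function-value term.} Apply Abel summation: $\sum_t L_t(F_t^+-F_{t+1}^+)\le L_0F_0^++\sum_{t\ge1}(L_t-L_{t-1})F_t^+\le L_{T}\max_{i\le T}F_i^+$.
\end{itemize}
Combining, $\sum_{t\le T}\delta_t^2\lesssim L_{T+1}\max_iF_i^+ + L\kappa L_{T+1}/\rho^2+\rho^2D^2(T+1)$.

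The last term grows linearly in $T$, which is problematic. To avoid it, treat the $D^2$ term differently: bound $\gamma_t^2L_t^2D^2$ only on the one-step quantity and absorb it via $\delta_t\le L_tD$. Since the paper's constant contains $4\rho^3D^2$, I expect the actual argument uses instead $\rho^2\delta_t^2\le\rho^2L_t^2D^2$ at a single step.

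Then use $L_{T+1}^2=\rho^2(1+\sum\delta_t^2)$ to obtain a quadratic inequality of the form $L_{T+1}^2\le \rho^2+cL_{T+1}(\rho^2\max F^+ + L\kappa)+\dots$. Solving it gives (a), with $L_{T+1}\le \rho+4L\kappa+4\rho^2\max F^++4\rho^3D^2$. Getting these constants exactly requires a careful choice of $\gamma_t$; if the $\gamma$-term resists, set $\gamma_t$ to be tiny, since (a) does not involve the gap at all.

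\textbf{Part (b).} This is the main obstacle, because (a) couples $L$ to $\max F^+$ and we must break the circularity. Telescope the descent inequality without multiplying by $L_t$, and drop $\gamma_t\GG\ge0$ by taking $\gamma_t\to0$. This gives
\[
F_{t}^+\le F_0^++\sum_{i<t}\Big(\tfrac{L\delta_i^2}{2L_i^2}-\tfrac{\delta_i^2}{4L_i}\Big).
\]
The curvature sum $\sum\delta_i^2/L_i^2\le\kappa^2\sum (L_{i+1}^2-L_i^2)/(\rho^2L_{i+1}^2)$ is at most $\tfrac{2\kappa^2}{\rho^2}\ln(L_t/\rho)$. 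This explains $\omega=L\kappa^2/\rho^2$ and the $\ln\rho$ terms; the negative term $-\delta^2/(4L_i)$ can be discarded.

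So $F_t^+\le F_0^++\omega\ln L_t+\omega|\ln\rho|$. Insert (a), and use $\ln x\le \ln(1/\lambda)+\lambda x$ with $\lambda=1/(8\omega\rho^2)$. This gives $\omega\ln L_t\le\omega|\ln\lambda|+\omega\lambda L'+\tfrac12\max_iF_i^+$. Taking the maximum over $t$ and absorbing the half yields $\overline F$; the $D^2\rho$ term comes from the $\gamma^2$ term with a small $\gamma$ choice.

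\textbf{Parts (c) and (d).} Part (c) follows by substituting (b) into (a). For (d), telescope the descent inequality with $\gamma_t=L_0/(L_t\sqrt{T+1})\le1$. The $\gamma^2$ terms contribute $\sum_t L_0^2D^2/(L_t(T+1))\le\rho D^2$. The curvature terms are bounded by the log bound above, exactly as in (b). The total is therefore at most $\overline F$.
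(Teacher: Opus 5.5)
Your proof is correct once you commit to the fallback you mention in (a) and discard the first attempt. With $\gamma_t=\min(1,\rho/L_t)$, the term $\gamma_t^2L_t^2D^2$ equals $\rho^2D^2$ at every step and does not depend on $\delta_t$, so no one-step bound such as $\delta_t\le L_tD$ can absorb it. Letting $\gamma_t\to0$ in Lemma~\ref{lemma:descent:ncvx} (equivalently, using $\mathcal{P}(\bar\eta_t)\le\mathcal{P}(0)=0$ in its proof) gives $F_{t+1}^+-F_t^++\tfrac{\delta_t^2}{4L_t}\le\tfrac{L\delta_t^2}{2L_t^2}$. Your Abel summation together with $\delta_t^2/L_t\le2\kappa(L_{t+1}-L_t)/\rho^2$ then yields $L_{T+1}\le\rho+4L\kappa+4\rho^2\max_{i\le T}F_i^+$, which is stronger than (a). The same limit gives (b) even without the $2D^2\rho$ term in $\overline{F}$.

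The paper instead uses the single horizon-dependent choice $\gamma_t=\tfrac{L_0}{L_t\sqrt{T+1}}$ in all four parts. The factor $1/\sqrt{T+1}$ makes $\sum_{t=0}^T\gamma_t^2L_t^2D^2=\rho^2D^2$. Dividing this by $L_{T+1}\ge\rho$ and multiplying by $4\rho^2$ is the actual source of $4\rho^3D^2$ in $L'$, not the single-step bound you guessed. Likewise, $\sum_{t=0}^T\tfrac{\rho^2D^2}{L_t(T+1)}\le\rho D^2$ is the source of $D^2\rho$ in $\overline{F}$.

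What each route buys: your decoupling makes (a) and (b) independent of $T$ with sharper constants, and spends the $D^2\rho$ slack only in (d). The paper's choice gives one inequality reused across all four parts and mirrored in the stochastic analyses. In (d) you should spell out that (a) and (b) give $\omega\ln L_{T+1}\le\omega|\ln\lambda|+\omega\lambda L'+\tfrac12\overline{F}$; after that the total is at most $\overline{F}$, exactly as in the paper. Your elementary curvature estimates, via $L_{t+1}^2-L_t^2\le2L_{t+1}(L_{t+1}-L_t)$ and $1-1/x\le\ln x$, are equivalent replacements for Lemmas~\ref{lemma:sqrt:2} and~\ref{lemma:online:ppp}.
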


Finally, we establish the iteration complexity for ALFCG-D in the following theorem.

\begin{theorem}
\label{theorem:iter:complexity:ncvx}
(Proof in Appendix \ref{app:theorem:iter:complexity:ncvx}) The average Frank-Wolfe gap satisfies: $\tfrac{1}{T+1}\sum_{t=0}^T \GG(\x_t) \leq \tfrac{C}{\sqrt{T+1}}$, where $C:=\tfrac{\overline{F} \cdot \overline{L}}{\rho}$ is a constant. Consequently, for any $\epsilon > 0$, the algorithm finds an $\epsilon$-approximate solution (i.e., $\min_{0 \le t \le T}\mathcal{G}(\mathbf{x}_t) \leq \epsilon$) within at most $T = \lceil C^2 \epsilon^{-2} - 1\rceil$ iterations.

\end{theorem}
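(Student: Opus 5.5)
The theorem should follow directly from Lemma \ref{lemma:bound:F:ncvx}(c) and (d). The plan is to turn the weighted bound $\sum_{t=0}^T \gamma_t\GG(\x_t)\leq \overline{F}$ into an unweighted one. We do this by lower bounding each weight $\gamma_t$ uniformly, using the upper bound $L_t\leq\overline{L}$ on the adaptive estimate.

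\textbf{Step 1 (lower bound on the weights).} By construction $L_0=\rho$ and $\gamma_t = \tfrac{L_0}{L_t\sqrt{T+1}}$. Lemma \ref{lemma:bound:F:ncvx}(c) gives $L_t\leq \overline{L}$ for every $t\geq 0$, so
\[
\gamma_t \;\geq\; \tfrac{\rho}{\overline{L}\sqrt{T+1}} \quad \text{for all } 0\le t\le T .
\]
Because $\GG(\x_t)\geq 0$ (the stationarity property of the FW gap), every term of the weighted sum is nonnegative.

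\textbf{Step 2 (unweighting the sum).} We multiply the sum by $\tfrac{\rho}{\overline{L}\sqrt{T+1}}$ and compare termwise, using nonnegativity of each $\GG(\x_t)$ together with Step 1:
\[
\tfrac{\rho}{\overline{L}\sqrt{T+1}}\textstyle\sum_{t=0}^T \GG(\x_t)\;\le\; \sum_{t=0}^T\gamma_t\GG(\x_t)\;\le\; \overline{F}.
\]
The second inequality is Lemma \ref{lemma:bound:F:ncvx}(d). Rearranging gives $\sum_{t=0}^T\GG(\x_t)\le \tfrac{\overline{F}\,\overline{L}}{\rho}\sqrt{T+1}$. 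Dividing by $T+1$ then yields
\[
\tfrac{1}{T+1}\textstyle\sum_{t=0}^T\GG(\x_t)\le \tfrac{C}{\sqrt{T+1}}, \qquad C=\tfrac{\overline{F}\cdot\overline{L}}{\rho}.
\]

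\textbf{Step 3 (complexity).} The minimum is at most the average, so $\min_{0\le t\le T}\GG(\x_t)\le C/\sqrt{T+1}$. This is at most $\epsilon$ as soon as $T+1\geq C^2\epsilon^{-2}$. Hence $T=\lceil C^2\epsilon^{-2}-1\rceil$ suffices; if this quantity is negative, we take $T=0$, for which the bound also holds.

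The argument is short, and all of the real work sits in Lemma \ref{lemma:bound:F:ncvx}. The one point that needs care is that $C$ must not depend on $T$. It does not: $\overline{F}$ and $\overline{L}$ are built only from $F_0^+, L, D$ and $\rho$ (through $\kappa$, $\omega$, $\lambda$, $L'$), so the rate is genuinely $\mathcal{O}(1/\sqrt{T+1})$.
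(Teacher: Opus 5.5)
Your proposal is correct and is essentially the paper's argument: combine Lemma~\ref{lemma:bound:F:ncvx}(d) with a uniform lower bound on the weights $\gamma_t$ obtained from $L_t\le\overline{L}$, then bound the minimum by the average. The only cosmetic difference is that the paper first uses $\gamma_t\ge\gamma_T$ (monotonicity of $L_t$) and then $L_T\le\overline{L}$, whereas you apply $L_t\le\overline{L}$ termwise, which does not need monotonicity.
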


\begin{remark}
To the best of our knowledge, ALFCG-D is the first adaptive projection-free algorithm for composite nonconvex minimization that requires neither global smoothness constants nor line search (which typically necessitates a zeroth-order oracle of $f(\cdot)$).
\end{remark}

 \subsubsection{General Finite-Sum Setting (with SPIDER)}

Building upon the deterministic analysis, we extend our results to the finite-sum setting by integrating the SPIDER estimator to control stochastic variance.

We fix the epoch length $q \geq 1$ and denote the current epoch index as $r_{t}:= \lfloor\frac{t}{q}\rfloor+1$, implying $t \in [(r_{t} - 1)q, r_{t} q - 1]$. We begin by characterizing the local stability of the estimator. The following lemma bounds the gradient estimation error by the iterate movement, a key mechanism for variance reduction.

\begin{lemma} \label{eq:spider:estimator}
(Adapted from \rm{Lemma 1 in} \cite{fang2018spider}) Let $\mathbf{g}_t$ be the SPIDER estimator. For all $t$ such that $(r_t-1)q \leq t \leq r_t q - 1$, the estimation error satisfies: $\EEE[\|\g_t- \nabla f(\x_t)\|_2^2] - \|\g_{t-1} - \nabla f(\x_{t-1})\|_2^2
\leq \tfrac{L^2}{b} \EEE [  \|\x_t - \x_{t-1}\|_2^2 ]$.
\end{lemma}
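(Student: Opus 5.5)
For an iteration $t$ with $(r_t-1)q < t \le r_t q - 1$ (so that $\g_t$ is produced by the recursive branch of \eqref{eq:g:spider}), I will condition on the history $\mathcal{F}_t$. This $\sigma$-algebra is generated by all randomness before the mini-batch $\xi_t$ is drawn. Consequently $\x_t$, $\x_{t-1}$, and $\g_{t-1}$ are all $\mathcal{F}_t$-measurable. The one subtlety is that $\x_t$ depends on $L_{t-1}$ and $\g_{t-1}$ but not on $\xi_t$; this holds because $\x_t$ is computed in step S3 of iteration $t-1$, before $\xi_t$ is sampled. The adaptive stepsize therefore causes no measurability issue.

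Set $\mathbf{e}_t := \g_t - \nabla f(\x_t)$. Using the recursion, I decompose it as $\mathbf{e}_t = \mathbf{e}_{t-1} + \mathbf{u}_t$, where
\[
\mathbf{u}_t := \big(\nabla f(\x_t;\xi_t) - \nabla f(\x_{t-1};\xi_t)\big) - \big(\nabla f(\x_t) - \nabla f(\x_{t-1})\big).
\]
The mini-batch is sampled uniformly, so $\EEE[\mathbf{u}_t \mid \mathcal{F}_t] = \zero$. Expanding the square, the cross term $2\langle \mathbf{e}_{t-1}, \mathbf{u}_t\rangle$ vanishes in conditional expectation, which leaves
\[
\EEE[\|\mathbf{e}_t\|^2\mid\mathcal{F}_t] = \|\mathbf{e}_{t-1}\|^2 + \EEE[\|\mathbf{u}_t\|^2\mid\mathcal{F}_t].
\]

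To bound the variance term, write $\mathbf{u}_t = \frac{1}{b}\sum_{i\in\xi_t} \mathbf{w}_i$, where $\mathbf{w}_i := \nabla f_i(\x_t)-\nabla f_i(\x_{t-1}) - (\nabla f(\x_t)-\nabla f(\x_{t-1}))$ and each $\mathbf{w}_i$ is mean zero. If $\xi_t$ is drawn i.i.d. with replacement, the summands are independent, so the variance of the average is $\frac{1}{b}\EEE_i\|\mathbf{w}_i\|^2$. If it is drawn without replacement, the variance is only smaller. The variance is further bounded by the second moment: $\EEE_i\|\mathbf{w}_i\|^2 \le \EEE_i\|\nabla f_i(\x_t)-\nabla f_i(\x_{t-1})\|^2$. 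Assumption \ref{ass:regularity:finite:sum} bounds the latter by $L^2\|\x_t-\x_{t-1}\|^2$. Combining these steps gives
\[
\EEE[\|\mathbf{e}_t\|^2\mid\mathcal{F}_t] - \|\mathbf{e}_{t-1}\|^2 \le \tfrac{L^2}{b}\|\x_t-\x_{t-1}\|^2,
\]
and taking total expectation yields the stated inequality.

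It remains to treat the first index of each epoch, $t=(r_t-1)q$. There $\g_t=\nabla f(\x_t)$, so the left-hand side equals $-\|\mathbf{e}_{t-1}\|^2\le 0$, and the bound holds trivially. The main point requiring care is the measurability and unbiasedness step, i.e.\ that the adaptive quantities $L_t$ and $\bar\eta_{t-1}$ do not correlate $\x_t$ with $\xi_t$. I expect it to be the only genuine obstacle, and it is resolved by the order of sampling in Algorithm~\ref{alg:main}.
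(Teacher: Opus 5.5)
Your proposal is correct. It is essentially the standard argument behind Lemma~1 of \cite{fang2018spider}, which the paper cites without reproducing a proof: the decomposition $\g_t-\nabla f(\x_t)=(\g_{t-1}-\nabla f(\x_{t-1}))+\mathbf{u}_t$ with a conditionally mean-zero increment, then the $\tfrac{1}{b}$ mini-batch variance bound and individual $L$-smoothness. Your conditional-expectation version is the right reading of the statement's mixed notation, and the epoch-start case plus the measurability check on the adaptive $L_t$ and $\bar\eta_{t-1}$ are exactly the details needed to transfer the cited result to ALFCG-FS.
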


Telescoping this recursive bound over a single epoch yields a cumulative bound on the gradient estimation error, ensuring it remains controlled by the trajectory's movement.

\begin{lemma} \label{lemma:f:nablaf}
(Proof in Appendix \ref{app:lemma:f:nablaf}) For all $t$ satisfying $(r_t-1)q \leq t\leq r_t q-1$, we have: $\EEE[\|  \g_t - \nabla f(\x_t)    \|_2^2] \leq \tfrac{L^2}{b} \sum_{i= (r_{t}-1)q }^{t-1}\EEE [  \|\x_{i+1} - \x_{i}\|_2^2 ]$.
\end{lemma}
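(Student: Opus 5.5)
The plan is to telescope the one-step recursion of Lemma~\ref{eq:spider:estimator} over the current epoch, using the fact that the error is exactly zero at the start of every epoch. Fix $t$ with $(r_t-1)q \le t \le r_tq-1$ and set $t_0 := (r_t-1)q$. Because $\mathrm{mod}(t_0,q)=0$, the SPIDER rule \eqref{eq:g:spider} gives $\g_{t_0} = \nabla f(\x_{t_0})$ deterministically. Hence $\|\g_{t_0}-\nabla f(\x_{t_0})\|_2^2 = 0$, and the claim holds trivially at $t=t_0$, where the right-hand sum is empty.

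For each index $j$ with $t_0+1 \le j \le t$, I will apply Lemma~\ref{eq:spider:estimator}. It is a conditional statement given the history up to $j-1$. Taking total expectation via the tower property gives
\[
\EEE[\|\g_j-\nabla f(\x_j)\|_2^2] - \EEE[\|\g_{j-1}-\nabla f(\x_{j-1})\|_2^2] \le \tfrac{L^2}{b}\EEE[\|\x_j-\x_{j-1}\|_2^2].
\]
This step is legitimate because $\x_j$ is determined by $\g_{j-1}$ and $L_{j-1}$ and is therefore measurable with respect to the past. The minibatch $\xi_j$ is drawn fresh, so the unbiasedness of $\nabla f(\x_j;\xi_j)-\nabla f(\x_{j-1};\xi_j)$, combined with Assumption~\ref{ass:regularity:finite:sum}, yields the bound. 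One caveat: the index $j=t_0$ must not be fed into Lemma~\ref{eq:spider:estimator}, since at that index the estimator is reset rather than updated recursively.

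Summing these inequalities over $j=t_0+1,\dots,t$, the left-hand side telescopes to $\EEE[\|\g_t-\nabla f(\x_t)\|_2^2] - \EEE[\|\g_{t_0}-\nabla f(\x_{t_0})\|_2^2]$, and the subtracted term is zero by the first paragraph. On the right-hand side, reindexing with $i=j-1$ gives $\tfrac{L^2}{b}\sum_{i=t_0}^{t-1}\EEE[\|\x_{i+1}-\x_i\|_2^2]$, which is exactly the stated bound.

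The argument has no real obstacle. The only step needing care is passing from the conditional form of Lemma~\ref{eq:spider:estimator} to total expectations. This requires checking that the adaptive quantities $L_j$ and $\bar\eta_j$ depend only on past samples, so that $\xi_j$ remains independent of $\x_j$ and $\x_{j-1}$.
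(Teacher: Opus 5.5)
Your proposal is correct and follows the paper's proof. Both telescope the one-step SPIDER bound of Lemma~\ref{eq:spider:estimator} over $j=(r_t-1)q+1,\dots,t$, use the epoch reset $\g_{(r_t-1)q}=\nabla f(\x_{(r_t-1)q})$ to make the initial error vanish, and reindex the right-hand sum; your explicit tower-property and measurability check is a useful clarification of a step the paper leaves implicit, since the cited lemma mixes conditional and total expectations.
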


Analyzing variance reduction often involves complex nested summations. To bound these error terms alongside our adaptive estimates $L_t$, we provide the following technical lemma to reduce double summations into single sums.

\begin{lemma} \label{lemma:sum:y:two}
(Proof in Appendix \ref{app:lemma:sum:y:two})
Let $\{L_t,M_t\}_{t=0}^{\infty}$ be two nonnegative sequences. Assume $1\leq \tfrac{L_{t+1}}{L_t}\leq \kappa$, where $\kappa\geq 1$. For all $t$ with $(r_t-1)q \leq t\leq r_t q-1$ and $T\geq 0$, we have:

\begin{enumerate}[label=\textbf{(\alph*)}, leftmargin=20pt, itemsep=2pt, topsep=1pt, parsep=0pt, partopsep=0pt]

\item $\sum_{j=(r_{t}-1)q}^{t}  [\sum_{i=(r_{j}-1)q}^{j-1} M_i] \leq (q-1) \sum_{i=(r_{t}-1)q}^{t-1} M_i$.

\item $\sum_{j={(r_t-1)q}}^t  \sum_{i=(r_{j}-1)q}^{j-1}  \tfrac{M_i}{L_j}  \leq q \kappa^{q-1} \sum_{i=(r_{t}-1)q}^{t-1} \tfrac{M_i}{L_i}$.

\item $\sum_{t=0}^{T}  [\sum_{i=(r_{t}-1)q}^{t-1} M_i] \leq (q-1) \sum_{t=1}^{T} M_t$.

\end{enumerate}

\end{lemma}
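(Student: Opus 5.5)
The plan is to prove all three parts by exchanging the order of summation and then counting how many times each $M_i$ appears. Fix $t$ and write $s:=(r_t-1)q$ for the start of its epoch. Every $j$ with $s\le j\le t$ lies in the same epoch, so $r_j=r_t$ and $(r_j-1)q=s$. The inner sums in (a) and (b) therefore all start at the common index $s$ and run to $j-1$.

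\textbf{Part (a).} Swapping the sums gives
\[\textstyle\sum_{j=s}^{t}\sum_{i=s}^{j-1}M_i=\sum_{i=s}^{t-1}M_i\cdot\#\{j:\ i+1\le j\le t\}=\sum_{i=s}^{t-1}(t-i)M_i.\]
Since $t\le r_tq-1=s+q-1$ and $i\ge s$, we have $t-i\le q-1$. Nonnegativity of the $M_i$ then yields (a).

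\textbf{Part (b).} The same swap gives $\sum_{i=s}^{t-1}M_i\sum_{j=i+1}^{t}L_j^{-1}$, so it remains to bound the inner sum. The hypothesis $L_{t+1}/L_t\ge 1$ makes the sequence $\{L_t\}$ nondecreasing, hence $L_j\ge L_i$ and $1/L_j\le 1/L_i$ for $j>i$. This gives $\sum_{j=i+1}^{t}L_j^{-1}\le (t-i)/L_i\le (q-1)/L_i\le q\kappa^{q-1}/L_i$, using $\kappa\ge1$. So the factor $\kappa^{q-1}$ in (b) is slack; it would only be needed if one used the upper bound $L_i\le\kappa^{j-i}L_j$. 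I will state the monotonicity route and mention that the $\kappa$-route also works, since $j-i\le q-1$.

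\textbf{Part (c).} Apply the same counting to the whole horizon. For fixed $i$, the pair $(t,i)$ contributes exactly when $i\le t-1$, $t\le T$, and $t$ is in the same epoch as $i$. The last condition holds because $(r_t-1)q\le i<t$ forces $r_i=r_t$. The admissible $t$ are therefore $i+1,\dots,\min(T,r_iq-1)$, and there are at most $q-1$ of them. Every index with a contribution satisfies $1\le t\le T$, and so $0\le i\le T-1$. Hence
\[\textstyle\sum_{t=0}^{T}\sum_{i=(r_t-1)q}^{t-1}M_i\le (q-1)\sum_{i=0}^{T-1}M_i.\]
This does not directly give the stated bound $(q-1)\sum_{t=1}^{T}M_t$, whose index range is shifted by one. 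That index bookkeeping is the main point to resolve. The bound I derive is $(q-1)\sum_{t=0}^{T-1}M_t$, and the stated right-hand side differs from it in general; for example, $M_0>0$ with all other $M_i=0$ gives a positive left side and a zero right side. My first step will be to check how the lemma is applied later, to see whether it is used with $M_i=\|\x_{i+1}-\x_i\|^2$-type terms indexed so that the shift is harmless. If it is not, I will prove (c) in the form $(q-1)\sum_{t=0}^{T-1}M_t$, which is what the counting argument actually supports.
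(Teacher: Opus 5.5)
Your proposal is correct, and part (a) is exactly the paper's argument. Part (b) takes a different and sharper route. The paper first enlarges each inner sum $\sum_{i=s}^{j-1}$ to the common range $\sum_{i=s}^{t-1}$, so the double sum factorizes as $(\sum_{j=s}^{t}1/L_j)(\sum_{i=s}^{t-1}M_i)$. This brings in pairs with $j\le i$, where monotonicity goes the wrong way. The paper therefore has to use $1/L_j\le\kappa^{q-1}/L_t\le\kappa^{q-1}/L_i$ and count up to $q$ values of $j$. Your exact exchange keeps only $j\ge i+1$, so $L_{t+1}\ge L_t$ alone gives the constant $q-1$, and the upper ratio bound $\kappa$ is not needed.

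This matters downstream. In Lemma~\ref{lemma:bound:F:ncvx:VR} the lemma's constant enters $L'$ and $\omega$ as $q\kappa^{q}/b$ and $q\kappa^{q+1}/b$. With $b=q=\sqrt N$ these are $\kappa^{\sqrt N}=(1+\rho D)^{\sqrt N}$ and $\kappa^{\sqrt N+1}$. With your constant they become at most $\kappa$ and $\kappa^2$, independent of $N$.

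On (c), your diagnosis is right. Take $q\ge 2$, $T\ge 1$, $M_0>0$ and $M_i=0$ otherwise. Then the $t=1$ term equals $M_0$, while the stated right-hand side vanishes. These conditions are needed: for $q=1$ or $T=0$ both sides are zero.

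The paper's own proof of (c) in fact concludes $(q-1)\sum_{t=0}^{T}M_t$, not the stated $\sum_{t=1}^{T}$. Its first step also writes the multiplicity $t-(r_t-1)q$ while treating the summation index $t$ as fixed. Your per-$i$ count is the rigorous version of that step and yields the sharper $(q-1)\sum_{t=0}^{T-1}M_t$, which implies the paper's proved bound. Since (c) is never invoked later (the finite-sum analysis uses only (a) and (b)), proving it in your corrected form loses nothing.
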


Equipped with these auxiliary bounds, we establish the uniform boundedness of the adaptive parameters for ALFCG-FS. The following lemma is the stochastic counterpart to Lemma \ref{lemma:bound:F:ncvx}, ensuring that the step size estimates and surrogate objective values remain stable despite stochastic noise.

\begin{lemma} \label{lemma:bound:F:ncvx:VR}
(Proof in Appendix \ref{app:lemma:bound:F:ncvx:VR}) For all $t\geq 0$, we have the following results.

\begin{enumerate}[label=\textbf{(\alph*)}, leftmargin=20pt, itemsep=1pt, topsep=1pt, parsep=0pt, partopsep=0pt]

\item It holds $\EEE[L_{t}]\leq L' + 4\rho^2 \max_{i=0}^{t-1} F^+_i$, where $L' :=  4 \rho^3 D^2  +   4 L\kappa + 16 L^2 \tfrac{q \kappa^q}{b \rho} + \rho$.

\item $\EEE[F^+_t]\leq \overline{F}$, where $\overline{F} :=2 (F^+_0 + \rho D^2  +  \omega |\ln(\rho)| + \omega |\ln(\lambda)| +  \omega\lambda L')$, $\omega :=\tfrac{L \kappa^2}{\rho^2} + \tfrac{4 L^2}{b \rho^3} q \kappa^{q+1}$.

\item $\EEE[L_t]\leq \overline{L}$, where $\overline{L}:= L' + 4\rho^2 \overline{F}$.

\item $\EEE[\sum_{t=0}^{T-1} \gamma_t\GG(\x_t)] \leq \overline{F}$, where $\gamma_t = \tfrac{L_0}{L_t \sqrt{T+1}}$.

\end{enumerate}

\end{lemma}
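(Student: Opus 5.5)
We mirror the deterministic argument of Lemma \ref{lemma:bound:F:ncvx}, with the SPIDER error $s_t^2$ absorbed using Lemma \ref{lemma:f:nablaf} and Lemma \ref{lemma:sum:y:two}. Throughout, $L_t$ is nondecreasing and satisfies $L_{t+1}/L_t\le\kappa$ by Lemma \ref{lemma:kappa}, so Lemma \ref{lemma:sum:y:two} applies. We also use $\|\x_{i+1}-\x_i\|^2=\delta_i^2/L_i^2$.

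\textbf{Step 1: part (a).} Multiply the descent inequality (\ref{eq:analysis:first}) by $L_t$ and drop the nonnegative gap term. This gives
\[
\tfrac{\delta_t^2}{4}\le L_t(F_t-F_{t+1})+\tfrac{L\delta_t^2}{2L_t}+2s_t^2+\gamma_t^2L_t^2D^2 .
\]
\begin{itemize}
\item By Lemma \ref{lemma:f:nablaf}, $s_t^2\le \tfrac{L^2}{b}\sum_{i=(r_t-1)q}^{t-1}\delta_i^2/L_i^2$. Summing over $t$ with Lemma \ref{lemma:sum:y:two}(c) bounds $\sum_t s_t^2$ by $\tfrac{L^2 q}{b}\sum_t \delta_t^2/L_t^2$.
\item Since $\gamma_t L_t=L_0/\sqrt{T+1}$, we have $\sum_t\gamma_t^2L_t^2D^2\le\rho^2D^2$.
\item For the telescoping term, Abel summation with $L_t$ nondecreasing gives $\sum_t L_t(F_t-F_{t+1})\le L_{t+1}\max_{i\le t}F_i^+$ up to boundary terms.
\item The terms $\sum_t\delta_t^2/L_t$ and $\sum_t\delta_t^2/L_t^2$ are bounded by logarithmic quantities in $L_{t+1}$. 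The reason is $\sum_t \delta_t^2/(1+\sum_{i<t}\delta_i^2)\lesssim \kappa^2\ln(\cdot)$, because $L_{t+1}^2/\rho^2=1+\sum_{i\le t}\delta_i^2$.
\end{itemize}
Then $\sum_{i\le t}\delta_i^2=L_{t+1}^2/\rho^2-1$. Substituting yields a quadratic-type inequality in $L_{t+1}$:
\[
L_{t+1}^2/(4\rho^2)\lesssim L_{t+1}\max_i F_i^+ + (L\kappa+\tfrac{L^2q\kappa^q}{b\rho})L_{t+1}+\rho^2D^2 .
\]
Dividing by $L_{t+1}$ gives the stated $L'$, where the $\kappa^q$ factor arises from converting $1/L_j$ to $1/L_i$ within an epoch (Lemma \ref{lemma:sum:y:two}(b)). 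In expectation the argument is identical, by taking expectations before summing.

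\textbf{Step 2: part (b).} Telescope (\ref{eq:analysis:first}) directly from $0$ to $t-1$. The remaining error terms are $\sum \tfrac{L\delta^2}{2L^2}+\tfrac{2s^2}{L}$. Using Lemma \ref{lemma:sum:y:two}(b) for the $s^2/L_t$ term, these are at most $\omega\sum_i \delta_i^2/(\rho^2+\sum_{j<i}\delta_j^2)$ scaled suitably. This is at most $\omega\ln(L_t^2/\rho^2)$ (up to constants), and the $\gamma^2L D^2$ term contributes $\rho D^2$. Next apply $\ln x\le \lambda x-\ln\lambda-1$ with $\lambda=1/(8\omega\rho^2)$ together with part (a). This gives
\[
F_t^+\le F_0^++\rho D^2+\omega(|\ln\rho|+|\ln\lambda|+\lambda L')+\tfrac12\max_{i<t}F_i^+ .
\]
An induction on $\max_{i\le t}F_i^+$ then yields $F_t^+\le\overline F$.

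\textbf{Steps 3 and 4: parts (c) and (d).} Part (c) follows by plugging (b) into (a). Part (d) follows from the same telescoping as in Step 2, now keeping $\sum\gamma_t\GG(\x_t)$; the right side is bounded by $\overline F$.

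\textbf{Main obstacle.} The hardest point is rigor in expectation. $L_t$ is random and correlated with $s_t$, so bounds like $\EEE[\ln L_t]$ and $\EEE[\max_i F_i^+]$ need care; we would use Jensen's inequality (concavity of $\ln$) and bound the maximum by the same inductive argument applied to expectations. A second difficulty is keeping the constants, especially the $\kappa^q$ factor, consistent with the stated $L'$ and $\omega$.
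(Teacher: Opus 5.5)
Your outline is the paper's proof. In (a) you multiply (\ref{eq:analysis:first}) by $L_t$, drop the gap, telescope with Lemma \ref{eq:ab:sort}, obtain an inequality quadratic in $L_T$, and divide by $L_T$. In (b) you telescope directly, bound the $\delta_t^2/L_{t+1}^2$ sum by a logarithm via Lemma \ref{lemma:online:ppp}, linearize with $\lambda=1/(8\omega\rho^2)$, insert (a), and close with Lemma \ref{lemma:A:alpha}. Summing the SPIDER error once with Lemma \ref{lemma:sum:y:two}(c), instead of epoch by epoch as the paper does, is fine. Since you multiplied through by $L_t$, no $1/L_j$ weights appear in (a), so Lemma \ref{lemma:sum:y:two}(b) plays no role there. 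Your route gives a factor $\kappa$ rather than $\kappa^{q}$ in the last term of $L'$, so the stated constant holds a fortiori.

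One step in Step 1 is wrong as written: $\sum_t\delta_t^2/L_t$ is not logarithmic in $L_{t+1}$. The reason you give, $\sum_t\delta_t^2/(1+\sum_{i<t}\delta_i^2)\lesssim\kappa^2\ln(\cdot)$, concerns only $\rho^2\sum_t\delta_t^2/L_t^2$. For $\sum_t\delta_t^2/L_t$ it fails: with $\delta_t\equiv 1$ one has $L_t=\rho\sqrt{1+t}$, so the sum grows like $2\sqrt{T}/\rho$, while $\ln L_T$ grows like $\tfrac12\ln T$. The correct tool, used in the paper and actually required by your own displayed quadratic inequality, is Lemma \ref{lemma:sqrt:2}:
\[
\sum_{t<T}\frac{\delta_t^2}{L_t}\;\le\;\kappa\sum_{t<T}\frac{\delta_t^2}{L_{t+1}}\;\le\;\frac{2\kappa}{\rho}\Bigl(1+\sum_{t<T}\delta_t^2\Bigr)^{1/2}\;=\;\frac{2\kappa}{\rho^2}\,L_T .
\]
This bound is linear in $L_T$. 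After multiplying by $4\rho^2/L_T$ it yields exactly the $4L\kappa$ in $L'$. For the SPIDER term you may use either the paper's $1/L_t^2\le 1/(\rho L_t)$ followed by the same lemma, or your logarithmic bound. With this substitution your display and the rest of Step 1 go through.

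On the expectation issue you flag, the paper does not resolve it either. It multiplies by $4\rho^2/L_T$ inside $\mathbb{E}[\cdot]$ and applies Lemma \ref{lemma:A:alpha} to $A_t=F_t^+$ under the expectation, so your outline is at the same level of rigor. Note, however, that Jensen for $\ln$ would not by itself close your proposed induction on expectations. Part (a) produces $\mathbb{E}[\max_i F_i^+]$, which dominates $\max_i\mathbb{E}[F_i^+]$ rather than being dominated by it. Lemma \ref{lemma:A:alpha} needs the latter.
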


Finally, by combining these uniform bounds with the variance-reduction properties of SPIDER, we characterize the total iteration complexity for ALFCG-FS.

\begin{theorem}
\label{theorem:VR}
(Proof in Appendix \ref{app:theorem:VR}) We have the following results.

\begin{enumerate}[label=\textbf{(\alph*)}, leftmargin=20pt, itemsep=1pt, topsep=1pt, parsep=0pt, partopsep=0pt]

\item The expected average Frank-Wolfe gap satisfies $\EEE[\tfrac{1}{T+1}\sum_{t=0}^T \GG(\x_t)] \leq  \tfrac{C}{\sqrt{T+1}}$, where $C:= \tfrac{\overline{F} \cdot \overline{L} }{\rho}$. Consequently, for any $\epsilon > 0$, the algorithm finds an $\epsilon$-approximate solution in expectation (i.e., $\mathbb{E}[\min_{0 \le t \le T} \mathcal{G}(\mathbf{x}_t)] \leq \epsilon$) within at most $T = \lceil C^2 \epsilon^{-2} -1 \rceil$ iterations.

\item Assume $b = q =\sqrt{N}$. The total iteration complexity in expectation required to find an $\epsilon$-approximate critical point is given by $\OO(N + \sqrt{N}\epsilon^{-2})$.

\end{enumerate}
\end{theorem}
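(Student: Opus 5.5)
Part (a) follows from Lemma \ref{lemma:bound:F:ncvx:VR}(d) once the random weights $\gamma_t = \tfrac{L_0}{L_t\sqrt{T+1}}$ are removed. The plan is to mimic the deterministic proof of Theorem \ref{theorem:iter:complexity:ncvx}, so that $\tfrac{1}{T+1}\sum_{t=0}^T\GG(\x_t)$ is controlled by $\tfrac{\overline{F}\,\overline{L}}{\rho\sqrt{T+1}}$, where $L_0=\rho$.

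By Lemma \ref{lemma:bound:F:ncvx}-type monotonicity, $L_t \le L_T$ for $t\le T$. Hence $\gamma_t \ge \tfrac{\rho}{L_T\sqrt{T+1}}$, which gives
\[
\tfrac{1}{T+1}\sum_{t=0}^T \GG(\x_t)\le \tfrac{L_T}{\rho\sqrt{T+1}}\sum_{t=0}^T\gamma_t\GG(\x_t).
\]
In the deterministic case I would bound each factor separately, using $L_T\le\overline{L}$ and $\sum_t\gamma_t\GG(\x_t)\le\overline{F}$.

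In expectation, the main obstacle is that $L_T$ and $\sum_t\gamma_t\GG(\x_t)$ are correlated, and $\EEE[XY]\le\EEE[X]\EEE[Y]$ fails in general. I would try three routes in order.

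First, I would check whether the proof of Lemma \ref{lemma:bound:F:ncvx:VR} actually gives a pathwise, or conditional, bound $\sum_t\gamma_t\GG(\x_t)\le \Phi_T$. Here $\Phi_T$ would be the telescoped quantity, namely $F_0^+ - F_{T+1}^+$ plus the error terms from Lemma \ref{lemma:descent:ncvx}. If the error terms are instead bounded through $\omega\ln(L_{T+1}/\rho)$-type quantities, then $\sum_t\GG(\x_t)$ is bounded pathwise by a function of $L_T$, and only $\EEE[L_T]$ and expected log terms would need control.

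Second, failing that, I would apply the AM-GM split $L_T\cdot S\le \tfrac{\lambda}{2}L_T^2+\tfrac{1}{2\lambda}S^2$. This needs second-moment bounds, which are less attractive.

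Third, I would accept the argument as in the paper's style: interpret the bound with $\overline{L}$ as an almost-sure bound on $L_T$ along the analysis, and take expectations. The constant $C=\overline{F}\,\overline{L}/\rho$ suggests this is the intended route.

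Given the averaged bound, $\EEE[\min_t\GG(\x_t)]\le \EEE[\tfrac{1}{T+1}\sum_t\GG(\x_t)]\le \tfrac{C}{\sqrt{T+1}}\le\epsilon$ whenever $T+1\ge C^2\epsilon^{-2}$, which gives $T=\lceil C^2\epsilon^{-2}-1\rceil$.

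For (b), I would count gradient evaluations. There are $\lceil (T+1)/q\rceil$ full-gradient checkpoints, each costing $N$. Every other iteration costs $2b$ component gradients. The total is at most $N(1+T/q)+2bT$. With $b=q=\sqrt N$ this is $N+\sqrt N T + 2\sqrt N T = \OO(N+\sqrt N T)$.

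It then remains to check that $C$ is $\OO(1)$ in $N$ under this choice. The constants $L'$ and $\omega$ contain $q\kappa^q/b$, and with $b=q$ this reduces to $\kappa^{\sqrt N}$, which is not $N$-independent in general. I would handle this by treating $\rho$ as scaled so that $\rho D\le 1/\sqrt N$, which makes $\kappa^q\le e$. Alternatively, following the paper's convention, I would regard $\kappa^q$ as absorbed into constants. Substituting $T=\OO(\epsilon^{-2})$ then yields $\OO(N+\sqrt N\epsilon^{-2})$.
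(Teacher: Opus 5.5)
\textbf{Part (a).} Your route~3 is exactly the paper's proof. It writes $\sum_{t}\GG(\x_t)\le \overline{F}/\gamma_{T-1}=\overline{F}L_T\sqrt{T+1}/L_0\le C\sqrt{T+1}$, which treats Lemma~\ref{lemma:bound:F:ncvx:VR}(c),(d) as almost-sure bounds. Your diagnosis is right. Those two parts control $\EEE[L_T]$ and $\EEE[\sum_t\gamma_t\GG(\x_t)]$ separately, but the argument needs $\EEE[L_T\sum_t\gamma_t\GG(\x_t)]$. So committing to route~3 leaves a genuine hole, and the paper's proof has the same hole.

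Route~1 does not close it. The SPIDER error enters the telescoped descent inequality through $\sum_t 2s_t^2/L_t$, which is controlled only in expectation (Lemma~\ref{lemma:f:nablaf}). Hence there is no pathwise bound on $\sum_t\gamma_t\GG(\x_t)$ in terms of $L_t$ alone, and multiplying by $L_T$ recreates the correlation. Route~2 needs second moments that nothing available supplies.

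What works with first moments only is changing the weights. The proof of Lemma~\ref{lemma:descent:ncvx} is pathwise and valid for any $\gamma_t\in(0,1]$, and the algorithm never uses $\gamma_t$. So take the deterministic choice $\gamma_t\equiv (T+1)^{-1/2}$ and sum. After taking expectations:
\begin{itemize}
\item the only term that grows with $L_t$ is $\tfrac{D^2}{T+1}\sum_{t=0}^T\EEE[L_t]\le D^2\overline{L}$;
\item the $L\delta_t^2/L_t^2$ and $s_t^2/L_t$ terms are bounded, as in the proof of Lemma~\ref{lemma:bound:F:ncvx:VR}(b), by multiples of $\EEE[\ln(L_{T+1}/\rho)]\le \ln(\overline{L}/\rho)$ (Jensen).
\end{itemize}
This gives $\EEE[\tfrac{1}{T+1}\sum_t\GG(\x_t)]=\OO((T+1)^{-1/2})$. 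The constant differs from $\overline{F}\,\overline{L}/\rho$, but it suffices for the complexity claim.

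\textbf{Part (b).} Your gradient count is the paper's; charging $2b$ per inner step is actually more accurate than the paper's $b$. The paper then simply treats $C$ as $N$-free. So your objection about $\kappa^{q}=(1+\rho D)^{\sqrt N}$ in $L'$ and $\omega$ applies to its proof too, and your fallback of absorbing it into constants is the same unjustified step.

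Your rescaling of $\rho$ fails. With $\lambda=1/(8\rho^2\omega)$ one has $\overline{F}\ge 2\omega\lambda L'=L'/(4\rho^2)$ and $L',\overline{L}\ge 4L$. Hence $C=\overline{F}\,\overline{L}/\rho\ge 4L^2/\rho^3$. Imposing $\rho D\le N^{-1/2}$ then forces $C\ge 4L^2D^3N^{3/2}$, which makes $\sqrt{N}\,C^2\epsilon^{-2}$ polynomially worse in $N$.

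The real repair is in the constants. Since $L_t$ is non-decreasing, $1/L_j\le 1/L_i$ for $i\le j-1$, hence
\[
\textstyle\sum_{j=(r_t-1)q}^{t}\sum_{i=(r_j-1)q}^{j-1}\tfrac{M_i}{L_j}\le (q-1)\sum_{i=(r_t-1)q}^{t-1}\tfrac{M_i}{L_i}.
\]
The factor $\kappa^{q-1}$ in Lemma~\ref{lemma:sum:y:two}(b) appears only because the inner sum is enlarged to $i\le t-1$. Also, the proof of Lemma~\ref{lemma:bound:F:ncvx:VR}(a) needs only Lemma~\ref{lemma:sum:y:two}(a). Redoing the constants replaces $q\kappa^{q}$ by $(q-1)\kappa$ in $L'$ and $q\kappa^{q+1}$ by $(q-1)\kappa^2$ in $\omega$. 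With $b=q$ these are $N$-free, and then $T=\OO(C^2\epsilon^{-2})$ yields the claimed $\OO(N+\sqrt{N}\epsilon^{-2})$.
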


\begin{remark}
Moving beyond fixed or diminishing step sizes, ALFCG adapts to local geometry to provide the first adaptive Lipschitz-free method in the finite-sum setting without sacrificing the optimal $\mathcal{O}(N+\sqrt{N}\epsilon^{-2})$ complexity.
\end{remark}

\subsection{Analysis for ALFCG-MVR1}
\label{sect:iter:MVR1}

This subsection analyzes the single-batch variant, ALFCG-MVR1, under the average smoothness assumption (Assumption \ref{ass:regularity:expectation}(c-1)).

We provide a high-level overview of the proof strategy for Lemma \ref{lemma:bound:MVR1}. First, we establish a uniform bound $F(\mathbf{x}) \leq \overline{F}$ leveraging the Lipschitz continuity of $F(\cdot)$ and the finite diameter of $\mathcal{X}$. Second, utilizing the exponential moving average update in (\ref{eq:g:MVR1}) and the bounded variance assumption, we derive a uniform bound for $\mathbb{E}[\|\mathbf{g}_t\|_2^2]$. Third, incorporating the fundamental descent inequality in (\ref{eq:analysis:first}), we bound the expected displacement $\mathbb{E}[\delta_t^2] \leq \bar{\delta}^2$, which subsequently controls the growth ratio $L_{t+1}/L_t$. We now formalize this roadmap in Lemma \ref{lemma:bound:MVR1}.

\begin{lemma} \label{lemma:bound:MVR1}(Proof in Appendix \ref{app:lemma:bound:MVR1}) For all $t\geq 0$, the following properties hold:

\begin{enumerate}[label=\textbf{(\alph*)}, leftmargin=20pt, itemsep=1pt, topsep=1pt, parsep=0pt, partopsep=0pt]

\item $F(\x)\leq\overline{F}$, where $\overline{F} = F(\x_*) + 2 G D$.

\item $\EEE[\|\s_t\|_2^2] \leq \smash{\bar{s}}^2$, where $\overline{s}:= 3 (\sigma+G)$.

\item $\EEE[\delta_t^2] \leq \smash{\bar{\delta}}^2$, where $\overline{\delta}:=  4 L D + 26 G  + 8 \sigma + 3 \rho D$.

\item $\tfrac{L_{t+1}}{L_{t}} \leq \kappa$, where $\kappa = 1 + \overline{\delta}$.

\end{enumerate}

\end{lemma}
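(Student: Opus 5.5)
For part (a) I would use only that $F$ is Lipschitz on $\mathcal{X}$. Fix $\x\in\mathcal{X}$. By the mean value theorem and $\|\nabla f\|\le G$ (Assumption \ref{ass:regularity:expectation}(b)) we get $f(\x)-f(\x_*)\le G\|\x-\x_*\|$. By convexity of $h$ and the bound $\|\boldsymbol{\zeta}\|\le G$ on subgradients at $\x$ we get $h(\x)-h(\x_*)\le \langle \boldsymbol{\zeta},\x-\x_*\rangle\le G\|\x-\x_*\|$. Adding these and using $\|\x-\x_*\|\le D$ (Assumption \ref{ass:set_and_solution}) gives $F(\x)\le F(\x_*)+2GD$.

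For part (b) I would first bound $\g_t$ in $L^2(\mathbb{P})$ and then pass to $\s_t=\g_t-\nabla f(\x_t)$. Since $\alpha_t\in(0,1]$ and $\g_{-1}=\mathbf{0}$, the update \eqref{eq:g:MVR1} is a convex combination of $\g_{t-1}$ and $\nabla f(\x_t;\xi_t)$. Minkowski's inequality in $L^2(\mathbb{P})$ then gives
\[
\sqrt{\EEE\|\g_t\|^2}\le (1-\alpha_t)\sqrt{\EEE\|\g_{t-1}\|^2}+\alpha_t\sqrt{\EEE\|\nabla f(\x_t;\xi_t)\|^2}.
\]
Here $\alpha_t$ is $\mathcal{F}_{t-1}$-measurable, so it can be handled conditionally. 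Assumption \ref{ass:regularity:expectation}(a)--(b) gives $\sqrt{\EEE\|\nabla f(\x_t;\xi_t)\|^2}\le \sigma+G$, and induction yields $\sqrt{\EEE\|\g_t\|^2}\le \sigma+G$. Hence $\sqrt{\EEE\|\s_t\|^2}\le \sigma+2G\le 3(\sigma+G)$.

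For part (c) I would avoid the descent lemma and argue from the closed form of $\bar\eta_t$. Write $N_t:=h(\x_t)-h(\v_t)-\langle\g_t,\v_t-\x_t\rangle\ge 0$. If $\bar\eta_t<1$, then $\delta_t=L_t\bar\eta_t\|\v_t-\x_t\|=N_t/\|\v_t-\x_t\|$. If $\bar\eta_t=1$, then $L_t\|\v_t-\x_t\|^2\le N_t$, so again $\delta_t\le N_t/\|\v_t-\x_t\|$. Since $h$ is $G$-Lipschitz on $\mathcal{X}$ (bounded subgradients), $N_t\le (G+\|\g_t\|)\|\v_t-\x_t\|$. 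This gives the pathwise bound
\[
\delta_t\le G+\|\g_t\|\le 2G+\|\s_t\|.
\]
Minkowski and (b) then give $\sqrt{\EEE\delta_t^2}\le 5G+3\sigma$, which lies below the stated $\overline\delta$. The terms $LD$ and $\rho D$ are slack, and I may add them to absorb the degenerate case $\v_t=\x_t$.

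For part (d) write $A_t:=1+\sum_{i<t}\delta_i^2$ and $B_t:=1+\sum_{i<t}(\beta+\delta_i^2)$, so that $L_t^2=\rho^2A_tB_t^{1/2}$. Then
\[
\tfrac{L_{t+1}^2}{L_t^2}=\big(1+\tfrac{\delta_t^2}{A_t}\big)\big(1+\tfrac{\beta+\delta_t^2}{B_t}\big)^{1/2}.
\]
Using $A_t\ge 1$, together with $B_t\ge 1+t\beta$ (so $\beta/B_t\le 1$ for $t\ge 1$), this is bounded by a polynomial in $\delta_t$, and hence by $(1+\delta_t)^2$ up to constants.

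The main obstacle is that (d) is a pathwise statement, while (c) controls $\delta_t$ only in expectation. The deterministic bound $\delta_t\le G+\|\g_t\|$ is uniform only if the stochastic gradients, and hence $\g_t$, are almost surely bounded. My first attempt would be to read the bounded-gradient condition as holding for the sample gradients, or alternatively to interpret $\kappa$ in expectation, consistent with how (d) is later used. I would then check separately whether the $t=0$ case, where $\beta/B_0$ may be large, needs $\beta$ absorbed into the constant.
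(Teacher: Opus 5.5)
Your part (a) is the paper's argument. Your part (c) takes a genuinely different and better route. The paper multiplies the descent inequality \eqref{eq:analysis:first} by $4L_t$ and bounds $2L\delta_t^2/L_t\le 2LD\delta_t$, $4L_t(F_t^+-F_{t+1}^+)\le 8G\delta_t$, $4\gamma_t^2L_t^2D^2\le 4\rho^2D^2$, and $8s_t^2$ via (b). It then solves the resulting quadratic inequality in $\sqrt{\EEE[\delta_t^2]}$, which is where $LD$ and $\rho D$ enter $\bar\delta$. Your closed-form argument instead gives the sharper and \emph{pathwise} bound $\delta_t\le G+\|\g_t\|\le 2G+s_t$; the case $\v_t=\x_t$ simply gives $\delta_t=0$, so nothing needs absorbing. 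That pathwise bound is exactly what (d) would need. Nevertheless, there are two genuine gaps.

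\textbf{Part (b).} The Minkowski step fails because $\alpha_t$ is random. It is a function of $\delta_0,\dots,\delta_{t-1}$, hence of $\xi_0,\dots,\xi_{t-1}$ and of $\g_{t-1}$ itself, so $(1-\alpha_t)$ and $\alpha_t$ cannot be pulled out of $L^2(\mathbb{P})$ norms. Conditioning on $\mathcal{F}_{t-1}$ only yields $\sqrt{\EEE[\|\g_t\|^2\mid\mathcal{F}_{t-1}]}\le(1-\alpha_t)\|\g_{t-1}\|+\alpha_t(\sigma+G)$. To continue the induction you would need $\|\g_{t-1}\|\le\sigma+G$ almost surely, not merely in mean square. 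There is even a mechanism working against the naive bound: a large $\|\g_{t-1}\|$ allows a large $\delta_{t-1}$, which makes $\alpha_t$ small and lets the large value persist. The paper's proof has the same defect, since it uses $\EEE[\max_i X_i]\le\max_i\EEE[X_i]$. What is valid is the pathwise bound $\|\g_t\|\le\max_{i\le t}\|\nabla f(\x_i;\xi_i)\|$. So (b) and (c) hold, even almost surely, under an a.s.\ bound on the sample gradients, e.g.\ $\|\nabla f(\x;\xi)-\nabla f(\x)\|\le\sigma$ a.s., which gives $s_t\le 2G+\sigma$ and $\delta_t\le 2G+\sigma$.

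\textbf{Part (d).} This part is not proved, and your reduction is wrong. The ratio $L_{t+1}^2/L_t^2=(1+\delta_t^2/A_t)(1+(\beta+\delta_t^2)/B_t)^{1/2}$ grows like $\delta_t^3$ when $A_t,B_t=O(1)$, not like $(1+\delta_t)^2$, and it carries a $\beta$ factor.

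Your concern about $t=0$ is in fact decisive. We have $L_1/L_0=(1+\delta_0^2)^{1/2}(1+\beta+\delta_0^2)^{1/4}\ge(1+\beta)^{1/4}$ even when $\x_1=\x_0$. Take $f\equiv h\equiv 0$ and $\sigma=0$, where $L$, $G$ and $\rho$ may be arbitrarily small. Then $1+\bar\delta$ is close to $1$ while $L_1/L_0\ge(1+\beta)^{1/4}$. So for $\beta>0$ the $\beta$-free constant $\kappa=1+\bar\delta$ cannot be established by any argument.

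The paper reaches it only through the incorrect bound $L_{t+1}^2/L_t^2\le(1+\delta_t)\sqrt{1+\delta_t}$, which drops $\beta$ and the square on $\delta_t$. It then substitutes the in-expectation quantity $\bar\delta$ for $\delta_t$, which is precisely the conflation you flagged. Reading $\kappa$ ``in expectation'' does not rescue it, because later lemmas use the ratio bound pathwise inside expectations (e.g.\ $\sum_t\delta_t^2/L_t\le\kappa\sum_t\delta_t^2/L_{t+1}$). A provable version needs a.s.\ bounded stochastic gradients, so that your (c) gives $\delta_t\le\hat\delta$ a.s. One can then take $\kappa=(1+\hat\delta^2)^{1/2}(1+\beta+\hat\delta^2)^{1/4}$.
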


Next, leveraging the EMA update rule in (\ref{eq:g:MVR1}) and the average smoothness assumption, we bound the cumulative estimation error $\mathbb{E}[\sum_{t=0}^T \|\mathbf{s}_t\|_2^2]$ via the following lemma.

\begin{lemma}
\label{lemma:MVR1:S}(Proof in Appendix \ref{app:lemma:MVR1:S}) For all $T\geq t\geq 0$, we have:

\begin{enumerate}[label=\textbf{(\alph*)}, leftmargin=20pt, itemsep=1pt, topsep=1pt, parsep=0pt, partopsep=0pt]

\item We have: $\EEE[s^2_{t}] \leq \EEE[(1-\alpha_{t}) s^2_{t-1} + B_{t}]$, where $B_t := \alpha_{t}^2 \sigma^2 + \tfrac{L^2}{\alpha_{t}}\|\x_{t}-\x_{t-1}\|_2^2$.

\item  We have: $\EEE[\sum_{t=0}^T s_t^2] \leq \big( \smash{\bar{s}}^2 + \overline{B} \big) \cdot \EEE[  \Gamma^{1/2} + \beta^{1/2} T^{1/2} ]$, where $\overline{B}:=\sigma^2 ( 1 + \frac{\ln(1 + \beta T)}{\beta} )  + \tfrac{L^2 \kappa^2}{\rho^2} \ln (1 + T \smash{\bar{\delta}}^2 )$, and $\Gamma:= 1 + \sum_{i=0}^T \delta_i^2$.

\end{enumerate}

\end{lemma}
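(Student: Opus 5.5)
For part (a) I will run the standard EMA variance recursion. Write
$\g_t - \nabla f(\x_t) = (1-\alpha_t)(\g_{t-1}-\nabla f(\x_{t-1})) + (1-\alpha_t)(\nabla f(\x_{t-1})-\nabla f(\x_t)) + \alpha_t(\nabla f(\x_t;\xi_t)-\nabla f(\x_t))$.

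The last term has zero conditional mean given the past. Its conditional second moment is at most $\alpha_t^2\sigma^2$, so the cross terms vanish in expectation. For the remaining deterministic part, apply Young's inequality $\|a+b\|^2 \le (1+c)\|a\|^2 + (1+1/c)\|b\|^2$ with $c = \alpha_t/(1-\alpha_t)$. This gives $(1-\alpha_t)^2(1+c) = 1-\alpha_t$ on the $s_{t-1}^2$ term. On the drift term it gives $(1-\alpha_t)^2(1+1/c) = (1-\alpha_t)^2/\alpha_t \le 1/\alpha_t$. Combining with average smoothness, $\|\nabla f(\x_{t-1})-\nabla f(\x_t)\|\le L\|\x_t-\x_{t-1}\|$, yields exactly $B_t$. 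At $t=0$ we have $\x_{-1}=\x_0$, and $\alpha_0 = 1$ from the definition, so the bound holds trivially.

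For part (b), unroll the recursion using $\alpha_t$ nonincreasing. First divide (a) by $\alpha_t$ and rearrange:
$s_t^2 \le \big(s_{t-1}^2 - s_t^2\big)\tfrac{1}{\alpha_t}\cdot(\cdot) + \tfrac{B_t}{\alpha_t}$, i.e. $\alpha_t s_{t-1}^2 \le s_{t-1}^2 - s_t^2 + B_t$. So $s_{t-1}^2 \le \tfrac{s_{t-1}^2-s_t^2}{\alpha_t} + \tfrac{B_t}{\alpha_t}$.

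Summing over $t$ and using Abel summation with $1/\alpha_t$ nondecreasing, the telescoping part is bounded by $\max_t s_t^2 \cdot \tfrac{1}{\alpha_{T+1}}$. I will control it in expectation by $\bar s^2\, \EEE[1/\alpha_{T+1}]$ using Lemma~\ref{lemma:bound:MVR1}(b). If a pointwise maximum causes trouble, I will instead bound the sum $\sum_t (1/\alpha_{t+1}-1/\alpha_t)\,s_t^2$ termwise by conditioning, or accept the cruder bound through $\bar s$.

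Now $1/\alpha_{T+1} = (1+\sum_{i\le T}(\beta+\delta_i^2))^{1/2} \le \Gamma^{1/2} + \beta^{1/2}(T+1)^{1/2}$, which matches the target factor up to harmless constants. The remaining sum $\sum_t B_t/\alpha_t$ should be $\le \overline B\,(1/\alpha_{T+1})$. Pull out $1/\alpha_{T+1} \ge 1/\alpha_t$ and bound the remaining sums as follows.

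The noise part is $\sum_t \alpha_t^2\sigma^2 \cdot \alpha_t^{-1}\cdot\alpha_{T+1}$. Rather than this, I will directly show $\sum_t \alpha_t \sigma^2 \le$ something, but that grows like $\sqrt{T}$, not log. So instead use $\sum_t \alpha_t^2 \le \sum_t \tfrac{1}{1+\beta t} \le 1 + \tfrac{\ln(1+\beta T)}{\beta}$. This gives $\sum_t \alpha_t\sigma^2 \le \tfrac{1}{\alpha_{T+1}}\sum_t\alpha_t^2\sigma^2$, using $\alpha_t\ge\alpha_{T+1}$ reversed appropriately: $\alpha_t = \alpha_t^2/\alpha_t \le \alpha_t^2/\alpha_{T+1}$.

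The movement part is $\sum_t \tfrac{L^2}{\alpha_t^2}\|\x_t-\x_{t-1}\|^2$. Write $\|\x_t-\x_{t-1}\|^2 = \delta_{t-1}^2/L_{t-1}^2$, and use $L_{t-1}^2 \ge \rho^2 (1+\sum_{i<t-1}\delta_i^2)\,\alpha_{t-1}^{-1}$ together with $L_t/L_{t-1}\le\kappa$ from Lemma~\ref{lemma:bound:MVR1}(d). This reduces it to $\tfrac{L^2\kappa^2}{\rho^2}\cdot\tfrac{1}{\alpha_{T+1}}\sum_t \tfrac{\delta_{t-1}^2}{1+\sum_{i\le t-1}\delta_i^2}$. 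The standard log-sum inequality bounds this by $\ln(1+\sum\delta_i^2)$, and Jensen's inequality with $\EEE\delta_i^2\le\bar\delta^2$ gives $\ln(1+T\bar\delta^2)$.

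The main obstacle is the bookkeeping of $\alpha$-powers in the movement term so that exactly one factor $1/\alpha_{T+1}$ survives. It also requires handling expectations of products of the random $1/\alpha_{T+1}$ with the sums. I plan to keep everything pathwise until the final step and then take expectations, using concavity of $\ln$ and the uniform bound $\bar s^2$.
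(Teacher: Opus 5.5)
Your part (a) is the paper's argument: the same three-term split, with a different Young weight that yields the same $B_t$. Part (b) also uses the paper's ingredients: Lemma~\ref{eq:ab:sort}, $\sum_t\alpha_t^2\le 1+\ln(1+\beta T)/\beta$, the identity $\alpha_tL_t^2=\rho^2(1+\sum_{i<t}\delta_i^2)$ with the ratio bound $\kappa$, Lemma~\ref{lemma:online:ppp}, and Jensen. Only the ordering differs:
\begin{itemize}
\item you divide by $\alpha_t$ first and pull $1/\alpha_{T+1}$ out of every $B_t/\alpha_t$;
\item the paper sums $\alpha_t s_t^2\le B_t+(1-\alpha_t)(s_{t-1}^2-s_t^2)$, Abel-summing the last term with weights $1-\alpha_t$. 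It bounds $\mathbb{E}\sum_t B_t\le\overline{B}$ as a standalone expectation, so Jensen is legitimately applied there, and multiplies by $1/\alpha_T$ only at the end.
\end{itemize}

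Two bookkeeping points. First, dividing (a) by $\alpha_t$ needs its conditional form. Your derivation does supply this, because $\alpha_t$, $B_t$ and $s_{t-1}$ are functions of $\xi_0,\dots,\xi_{t-1}$. Second, summing your ``$s_{t-1}^2\le\cdots$'' form over $t=1,\dots,T+1$ produces $1/\alpha_{T+1}$, hence $\beta^{1/2}(T+1)^{1/2}$ and $\ln(1+(T+1)\bar\delta^2)$. That is not within a constant factor of the stated bound (take $T=0$ and $\beta$ large). Instead sum $s_t^2\le(\tfrac{1}{\alpha_t}-1)(s_{t-1}^2-s_t^2)+\tfrac{B_t}{\alpha_t}$ over $t=0,\dots,T$, as in the paper's proof of Lemma~\ref{lemma:MVR2:S}(b). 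This involves only $\alpha_0,\dots,\alpha_T$ and gives the stated indices.

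The genuine gap is the final step, which you flag but do not close. Pathwise you get
\[
\textstyle\sum_{t=0}^T s_t^2\le\alpha_{T+1}^{-1}\Big(\max_{t\le T}s_t^2+\sigma^2c_T+\tfrac{L^2\kappa^2}{\rho^2}\ln\big(1+\sum_{i\le T}\delta_i^2\big)\Big)
\]
with $c_T$ deterministic. The claim then requires two inequalities:
\begin{itemize}
\item $\mathbb{E}[\alpha_{T+1}^{-1}\max_t s_t^2]\le\bar s^2\,\mathbb{E}[\alpha_{T+1}^{-1}]$;
\item $\mathbb{E}[\alpha_{T+1}^{-1}\ln(1+\sum_i\delta_i^2)]\le\mathbb{E}[\alpha_{T+1}^{-1}]\ln(1+(T+1)\bar\delta^2)$.
\end{itemize}
Neither follows from concavity of $\ln$ or from Lemma~\ref{lemma:bound:MVR1}(b):
\begin{itemize}
\item $\alpha_{T+1}$ is a function of the very $\delta_i$ inside the logarithm;
\item $\delta_t$ and $s_t$ are both driven by $\mathbf{g}_t$;
\item $\bar s^2$ bounds $\max_t\mathbb{E}s_t^2$, not $\mathbb{E}\max_t s_t^2$.
\end{itemize}
Your fallback of handling $\mathbb{E}[(\alpha_{t+1}^{-1}-\alpha_t^{-1})s_t^2]$ ``termwise by conditioning'' fails for the same reason, since both factors depend on $\xi_t$.

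The paper's proof makes the same two moves without justification. It bounds $\mathbb{E}[\max_t s_{t-1}^2]$ by $\bar s^2$, and it passes from a bound on $\mathbb{E}\sum_t\alpha_t s_t^2$ to $(\bar s^2+\overline{B})\,\mathbb{E}[1/\alpha_T]$. So following it will not supply the missing argument.

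Closing the gap needs extra structure, for example an almost-sure bound $\|\nabla f(\mathbf{x};\xi)\|\le G'$. Then $\|\mathbf{g}_t\|\le G'$ and $s_t\le G+G'$, and the step-size formula gives $\delta_t\le G+\|\mathbf{g}_t\|\le G+G'$. The factors multiplying $\alpha_{T+1}^{-1}$ become deterministic and the expectations separate, with $\bar s$ and $\bar\delta$ replaced by these almost-sure constants.
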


The following lemma establishes joint bounds for $1 + \sum_{t=0}^T \mathbb{E}[\delta_t^2]$ and $\sum_{t=0}^T \mathbb{E}[s_t^2]$. Employing the case-based analysis strategy from STORM+ \cite{levy2021stormplus}, we consider two regimes: $1 + \sum_{t=0}^T \mathbb{E}[\delta_t^2] \geq 48 \sum_{t=0}^T \mathbb{E}[s_t^2]$ and its complement. In the first regime, we apply the fundamental descent inequality (\ref{eq:analysis:first}) to bound the cumulative step difference $\sum_{t=0}^T \delta_t^2$; in the second, we leverage Lemma \ref{lemma:MVR1:S}(b) to bound the cumulative error $\sum_{t=0}^T s_t^2$. This leads to the following lemma.

\begin{lemma} \label{lemma:lemma:MVR1:S:delta}(Proof in Appendix \ref{app:lemma:lemma:MVR1:S:delta}) Let $\gamma_t = \tfrac{L_0}{2L_t} \big(\tfrac{\beta^{1/4}}{1+\beta^{1/4}} (T+1)^{-1/4} + (T+1)^{-1/2} \big)$. We define $C_1:= 6 D^2 \rho^2 + 12^2 \rho^2 \overline{F}^2$, $Q_1 := 81 ( 1 + 4\rho \overline{F} + 2 D^2 \rho^2 + 4 L\kappa/\rho)^4$, $C_2 := 96 (  \smash{\bar{s}}^2 + \overline{B})$, and $Q_2 := 96^2 (  \smash{\bar{s}}^2 + \overline{B})^2$. We define $C_0:=\max(2C_1, C_2)$, and $Q_0:=\max(2Q_1, Q_2)$. For all $T\geq 0$, we have the following results.

\begin{enumerate}[label=\textbf{(\alph*)}, leftmargin=20pt, itemsep=1pt, topsep=1pt, parsep=0pt, partopsep=0pt]

\item $\ts 1+ \sum_{t=0}^T \EEE[\delta_t^2] \leq \Omega$ and $\sum_{t=0}^T s_t^2 \leq\Omega$, where $\Omega:= C_0 \max\left(\beta^{1/2},\vartheta^2\right) (T+1)^{1/2} + Q_0$.

\item $\ts \sum_{t=0}^T \EEE[L_t\gamma_t\GG(\x_t)] \leq \Omega$. In particular, if we set $\beta =\nu \sigma^2$ with $\nu=\OO(1)$, we have: $\ts \sum_{t=0}^T \EEE[L_t\gamma_t\GG(\x_t) ] \leq C_0 \cdot \nu^{1/2} \sigma \cdot (T+1)^{1/2} +Q_0$.

\end{enumerate}

\end{lemma}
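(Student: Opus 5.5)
We follow the case split described before the statement. Write $\Delta := 1+\sum_{t=0}^T \EEE[\delta_t^2]$ and $S:=\sum_{t=0}^T \EEE[s_t^2]$. The starting point is Lemma \ref{lemma:descent:ncvx}, multiplied through by $L_t$:
\begin{align*}
\EEE\big[L_t\gamma_t\GG(\x_t) + L_t(F(\x_{t+1})-F(\x_t)) + \tfrac{\delta_t^2}{4}\big] \le \EEE\big[\tfrac{L\delta_t^2}{2L_t} + 2s_t^2 + \gamma_t^2L_t^2D^2\big].
\end{align*}
We then bound each term.
\begin{itemize}
\item Since $L_t\gamma_t = \tfrac{L_0}{2}(\cdots)$ is deterministic and $\le \rho$, the last term is $\gamma_t^2L_t^2D^2 \le \tfrac{\rho^2 D^2}{2}\big(\tfrac{\beta^{1/2}}{(1+\beta^{1/4})^2}(T+1)^{-1/2} + (T+1)^{-1}\big)$. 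Summed over $t$, it is $\OO(\rho^2D^2(\beta^{1/2}\wedge 1)(T+1)^{1/2}+\rho^2D^2)$, which is where the $\beta^{1/2}(T+1)^{1/2}$ term in $\Omega$ comes from.
\item For the telescoping term $\sum_t L_t(F(\x_{t+1})-F(\x_t))$, we use Abel summation and $L_{t+1}\ge L_t$ together with Lemma \ref{lemma:bound:MVR1}(a), i.e.\ $F-F_*\le 2GD\le\overline{F}$. This gives $\sum_t L_t(F_{t+1}-F_t)\ge -\overline{F} L_{T+1}$ up to constants.
\item By the definition of $L_t$ and $\alpha_t$, $L_{T+1}\le \rho\,\Delta_{\mathrm{path}}^{1/2}(1+\beta T+\Delta_{\mathrm{path}})^{1/4}$ with $\Delta_{\mathrm{path}}=1+\sum\delta_i^2$. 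Young's inequality turns $\overline{F}L_{T+1}$ into $\tfrac{1}{16}\Delta + c\rho^2\overline{F}^2 + c'(\rho\overline{F})^{4}\ldots$ plus a $\beta^{1/2}T^{1/2}$ piece, which matches the shapes of $C_1$ and $Q_1$.
\item The term $\tfrac{L\delta_t^2}{2L_t}$ is the usual adaptive-stepsize difficulty. Since $\tfrac{\delta_t^2}{L_t}$ is comparable to $\rho^{-1}\cdot$(an increment of $\Delta_{\mathrm{path}}$) divided by $\Delta_{\mathrm{path}}^{1/2}$, and using $L_{t+1}\le\kappa L_t$ from Lemma \ref{lemma:bound:MVR1}(d), we get $\sum_t \tfrac{L\delta_t^2}{L_t}\le \tfrac{2L\kappa}{\rho}\Delta^{1/2}$. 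This is absorbed by Young's inequality into $\tfrac1{16}\Delta + \OO((L\kappa/\rho)^2)$, accounting for the $4L\kappa/\rho$ inside $Q_1$.
\end{itemize}

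Collecting these bounds gives $\tfrac14\Delta + \sum_t\EEE[L_t\gamma_t\GG] \le \tfrac18\Delta + 2S + C_1'\max(\beta^{1/2},1)(T+1)^{1/2} + Q_1'$. Note that $\Delta$ and the square-root terms must be handled in expectation. Jensen's inequality ($\EEE[X^{1/2}]\le \EEE[X]^{1/2}$, and likewise for the $1/4$ power) lets us replace path quantities by $\Delta$.

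We then split into two regimes.
\begin{itemize}
\item \textbf{Case $\Delta \ge 48 S$.} The $2S$ term is at most $\Delta/24$ and is absorbed, giving $\Delta\le 2C_1(\cdot)(T+1)^{1/2}+2Q_1$. Lemma \ref{lemma:MVR1:S}(b) then bounds $S$ by the same quantity (or $S\le\Delta/48$ directly).
\item \textbf{Case $\Delta< 48S$.} Lemma \ref{lemma:MVR1:S}(b) gives $S\le(\bar s^2+\overline B)(\EEE\Gamma^{1/2}+\beta^{1/2}T^{1/2})\le(\bar s^2+\overline B)((48S)^{1/2}+\beta^{1/2}T^{1/2})$. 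Solving this quadratic inequality in $S^{1/2}$ yields $S\le 96(\bar s^2+\overline B)\beta^{1/2}(T+1)^{1/2}+96^2(\bar s^2+\overline B)^2$, i.e.\ $C_2,Q_2$, and $\Delta<48S$ is bounded by the same form.
\end{itemize}
Taking maxima gives $\Omega$ with $C_0,Q_0$, and part (a) follows. The symbol $\vartheta$ is presumably the constant $\beta^{1/4}/(1+\beta^{1/4})$-type factor (or $1$) coming from $\gamma_t$; we track it from the $\gamma_t^2L_t^2$ sum.

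For (b), we return to the collected inequality before the case split. We drop $\tfrac18\Delta\ge0$ and bound the right side by $2S+$const$\cdot\Omega$, both of which are $\OO(\Omega)$ by (a), giving $\sum_t\EEE[L_t\gamma_t\GG]\le\Omega$ after adjusting constants. For the specialization, substitute $\beta=\nu\sigma^2$, so $\beta^{1/2}=\nu^{1/2}\sigma$, and check that $\overline{B}$ stays bounded: $\sigma^2\ln(1+\beta T)/\beta = \ln(1+\nu\sigma^2T)/\nu$, which is only logarithmic.

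The main obstacle is the Abel-summation step for $\sum L_t(F_{t+1}-F_t)$ with the $\alpha_t^{-1/2}$ factor in $L_t$. We need $L_{T+1}$ to grow only like $\Delta^{1/2}$ times $(1+\beta T+\Delta)^{1/4}$, and then absorb it by Young's inequality into $\Delta$ while producing exactly a $\beta^{1/2}(T+1)^{1/2}$ remainder. The exponent bookkeeping there, i.e.\ the fourth-power constant $Q_1$, is where we would be most careful.
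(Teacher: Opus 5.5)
Your proposal is correct and follows essentially the paper's route. The shared steps are the descent lemma multiplied by $L_t$, Abel summation with $L_T\le\rho(\Gamma^{3/4}+T^{1/4}\beta^{1/4}\Gamma^{1/2})$, the bound $\sum_t\delta_t^2/L_t\le\frac{2\kappa}{\rho}\Gamma^{1/2}$, the threshold-$48$ case split, and Lemma~\ref{lemma:MVR1:S}(b) in the noise-dominated case. Your Young's inequality stands in for the paper's step ``$x\le ax^p\Rightarrow x\le a^{1/(1-p)}$'', and your case split on the deterministic quantities $\Delta,S$ via Jensen is, if anything, cleaner than the paper's.

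Two bookkeeping fixes are needed. First, since $\vartheta=\beta^{1/4}/(1+\beta^{1/4})$ gives $\vartheta^2\le\beta^{1/2}$, the coefficient in your collected inequality should be $\max(\beta^{1/2},\vartheta^2)=\beta^{1/2}$, not $\max(\beta^{1/2},1)$; this is exactly what yields the $\nu^{1/2}\sigma$ form in (b). Second, in case (ii) you should solve the quadratic for $\Delta$ (i.e.\ for $48S$), which gives $\Delta\le Q_2+C_2\beta^{1/2}T^{1/2}$ directly. Your stated bound on $S$ alone only gives $\Delta<48(Q_2+C_2\beta^{1/2}(T+1)^{1/2})$.
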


Finally, we state the main complexity result for ALFCG-MVR1.
\begin{theorem}
\label{theorem:MVR1}
(Proof in Appendix \ref{app:theorem:MVR1}) If we set $\beta =\nu \sigma^2$ with $\nu=\OO(1)$, we have:
\beq
\ts \EEE[\tfrac{1}{T+1}\sum_{t=0}^T \GG(\x_t)] \leq \tilde{\mathcal{O}}\left( \nu^{1/2} (1+\sigma^{1/2})\sigma^{1/2} T^{-1/4} + T^{-1/2} \right).\nn
\eeq
\noi Consequently, for any $\epsilon > 0$, ALFCG-MVR1 finds an $\epsilon$-approximate solution in expectation (i.e., $\mathbb{E}[\min_{0 \le t \le T} \mathcal{G}(\mathbf{x}_t)] \leq \epsilon$) within at most $\tilde{\mathcal{O}}\big( \frac{\sigma^2 (1+\sigma^{1/2})^4}{\epsilon^4} + \frac{1}{\epsilon^2} \big)$ iterations.

\end{theorem}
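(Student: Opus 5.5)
The plan is to turn the weighted bound of Lemma \ref{lemma:lemma:MVR1:S:delta}(b) into a bound on the plain average of the gaps. By construction, $L_t\gamma_t = \tfrac{L_0}{2}\big(\tfrac{\beta^{1/4}}{1+\beta^{1/4}}(T+1)^{-1/4} + (T+1)^{-1/2}\big) =: \tfrac{\rho}{2}\,c_T$, where we use $L_0=\rho$. This weight is deterministic and does not depend on $t$. So Lemma \ref{lemma:lemma:MVR1:S:delta}(b) reads
\[
\tfrac{\rho}{2}\, c_T \sum_{t=0}^T \EEE[\GG(\x_t)] \leq \Omega .
\]
This is the same division step used for Theorem \ref{theorem:VR}, but here it is simpler: the random factor $L_t$ cancels exactly, so we need no uniform bound $\overline{L}$. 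Dividing by $(T+1)\tfrac{\rho}{2}c_T$ gives
\[
\EEE\Big[\tfrac{1}{T+1}\sum_{t=0}^T \GG(\x_t)\Big] \leq \frac{2\Omega}{\rho\,(T+1)\,c_T}.
\]

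Next we substitute $\Omega = C_0\max(\beta^{1/2},\vartheta^2)(T+1)^{1/2}+Q_0$ and split according to which term of $c_T$ dominates. We use $c_T \geq \tfrac{\beta^{1/4}}{1+\beta^{1/4}}(T+1)^{-1/4}$ on the part $C_0\max(\beta^{1/2},\vartheta^2)(T+1)^{1/2}$. We use $c_T\geq (T+1)^{-1/2}$ on $Q_0$, and also on the leading part whenever $\beta$ is small.

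\begin{itemize}
\item With $\beta=\nu\sigma^2$, the first estimate gives $\tfrac{2C_0}{\rho}\,\beta^{1/2}(1+\beta^{1/4})\beta^{-1/4}(T+1)^{-1/4} = \OO\big(\nu^{1/4}\sigma^{1/2}(1+\nu^{1/4}\sigma^{1/2})T^{-1/4}\big)$. Since $\nu=\OO(1)$, this is absorbed into $\nu^{1/2}(1+\sigma^{1/2})\sigma^{1/2}T^{-1/4}$ up to constants; here I would invoke the specialized form in Lemma \ref{lemma:lemma:MVR1:S:delta}(b), namely $C_0\nu^{1/2}\sigma(T+1)^{1/2}+Q_0$.
\item The $Q_0$ part gives $\tfrac{2Q_0}{\rho}(T+1)^{-1/2}$.
\end{itemize}

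The constants $C_0$ and $Q_0$ contain $\overline{B}$, which grows like $\ln(1+\beta T)/\beta$ and $\ln(1+T\bar\delta^2)$. These are polylogarithmic in $T$ and are hidden in $\tilde{\OO}$. The factor $\sigma^2/\beta=1/\nu$ is treated as $\OO(1)$ under the stated choice of $\nu$.

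The step I expect to be delicate is making sure the $\vartheta^2$ branch of the max does not leave a stray $T^{-1/4}$ term when $\sigma\to0$. I would rely on Lemma \ref{lemma:lemma:MVR1:S:delta}(b) as stated, which already resolves the max under $\beta=\nu\sigma^2$. When $\beta$ is tiny I would bound the whole quotient using $c_T\geq (T+1)^{-1/2}$. Then the leading term becomes $\OO(\nu^{1/2}\sigma)$, and this is at most the $T^{-1/4}$ term times $T^{1/4}$... To keep it clean, I would take the minimum of the two estimates, which yields the claimed sum form.

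Finally, the complexity follows from $\EEE[\min_t\GG(\x_t)]\leq \EEE[\tfrac{1}{T+1}\sum_t\GG(\x_t)]$. We make each of the two terms at most $\epsilon/2$. The first requires $T\gtrsim \nu^2\sigma^2(1+\sigma^{1/2})^4\epsilon^{-4}$, and the second requires $T\gtrsim\epsilon^{-2}$, up to logarithmic factors. This gives $\tilde{\OO}\big(\sigma^2(1+\sigma^{1/2})^4\epsilon^{-4}+\epsilon^{-2}\big)$ iterations.
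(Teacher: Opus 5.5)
Your proposal is correct and follows the paper's proof. Both note that $L_t\gamma_t=\frac{\rho}{2}\big(\vartheta(T+1)^{-1/4}+(T+1)^{-1/2}\big)$ is deterministic, divide the bound of Lemma~\ref{lemma:lemma:MVR1:S:delta}(b) by it, and split the quotient via $\frac{a+b}{c+d}\le\frac{a}{c}+\frac{b}{d}$ (pairing the leading term with $\vartheta(T+1)^{-1/4}$ and $Q_0$ with $(T+1)^{-1/2}$), treating $\nu$ as a constant just as informally as the paper does. The $\vartheta^2$ branch you flagged as delicate is a non-issue: $\vartheta^2=\beta^{1/2}/(1+\beta^{1/4})^2\le\beta^{1/2}$, so the max is simply $\beta^{1/2}=\nu^{1/2}\sigma$, and the ``minimum of two estimates'' detour can be dropped.
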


\begin{remark} \ding{182} The logarithmic factors in Theorem \ref{theorem:MVR1} arise solely from the bound $\overline{B} \leq \tilde{\mathcal{O}}(1)$. \ding{183} The complexity established in Theorem \ref{theorem:MVR1} improves upon existing methods, which typically yield $\mathcal{O}(\epsilon^{-4})$, by explicitly decoupling the noise variance from the convergence rate. Consequently, in the noiseless limit ($\sigma \to 0$), our bound recovers the nearly optimal rate of $\tilde{\mathcal{O}}(\epsilon^{-2})$.
\end{remark}

\subsection{Analysis for ALFCG-MVR2}
\label{sect:iter:MVR2}

This subsection analyzes the two-batch variant, ALFCG-MVR2, under the individual smoothness assumption (Assumption \ref{ass:regularity:expectation}(c-2)).

\textbf{Variance Stability}. A key distinction of the MVR2 update is that, unlike MVR1, the boundedness of the estimator $\g_t$ is not inherently guaranteed by the boundedness of stochastic gradients. Consequently, we introduce a mild assumption on the variance stability.

\begin{assumption} \label{ass:bound:s}
The variance of the gradient estimator is uniformly bounded, i.e., there exists a constant $\overline{s} > 0$ such that $\mathbb{E}[\|\g_t - \nabla f(\x_t)\|_2^2] \leq \smash{\bar{s}}^2$ for all $t \geq 0$.
\end{assumption}

\begin{remark} \ding{182} Assumption \ref{ass:bound:s} is mild as Lemma \ref{lemma:lemma:MVR2:S:delta} confirms that the average variance vanishes asymptotically. Theoretical proof of this uniform bound in the adaptive Lipschitz-free setting remains an open challenge. \ding{183} Figure \ref{fig:7} in the Appendix plots the evolution of $s_t^2 := \|\g_t - \nabla f(\x_t)\|_2^2$ for ALFCG-MVR2 on both the nuclear norm and $\ell_p$-norm ball constraint problems. The results demonstrate that Assumption \ref{ass:bound:s} is empirically valid and that $\|\s_t\|_2^2$ generally decreases over iterations.
\end{remark}

To manage the specific parameter decay in MVR2, we establish the following property for the auxiliary sequence $\alpha_t$.

\begin{lemma} \label{lemma:alpha:2:3}
(Proof in Appendix \ref{app:lemma:alpha:2:3})
Let $\alpha_t := \min_{0\le k\le t}\hat\alpha_k$, where $\hat\alpha_t := \Big(\frac{1+\max_{i=0}^t u_i}{1+\sum_{i=0}^t u_i}\Big)^p$, $u_i:=\beta +  L_{i}^2 \|\x_{i+1}-\x_{i}\|^2\geq 0$, and $p = 2/3$. Then for all $t\geq 0$, the sequence satisfies $0\leq \tfrac{1}{\alpha_{t+1}} - \tfrac{1}{\alpha_t} \leq p$.
\end{lemma}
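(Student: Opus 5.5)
The plan is to reduce the claim to a one-step bound on the unclipped sequence $\hat\alpha_t$, and then get that bound from the elementary ratio structure of $\hat\alpha_t$ together with concavity of $x\mapsto x^p$.

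\textbf{Step 1: the lower bound and the reduction.} Since $\alpha_{t+1}=\min(\alpha_t,\hat\alpha_{t+1})\le \alpha_t$ and all quantities are positive, $\tfrac{1}{\alpha_{t+1}}-\tfrac{1}{\alpha_t}\ge 0$. For the upper bound, write $\tfrac{1}{\alpha_{t+1}}=\max\big(\tfrac{1}{\alpha_t},\tfrac{1}{\hat\alpha_{t+1}}\big)$. This gives
\[
\tfrac{1}{\alpha_{t+1}}-\tfrac{1}{\alpha_t}=\max\Big(0,\tfrac{1}{\hat\alpha_{t+1}}-\tfrac{1}{\alpha_t}\Big).
\]
Because $\alpha_t\le\hat\alpha_t$, we have $\tfrac{1}{\alpha_t}\ge\tfrac{1}{\hat\alpha_t}$, so the right-hand side is at most $\max\big(0,\tfrac{1}{\hat\alpha_{t+1}}-\tfrac{1}{\hat\alpha_t}\big)$. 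It therefore suffices to show $\tfrac{1}{\hat\alpha_{t+1}}-\tfrac{1}{\hat\alpha_t}\le p$.

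\textbf{Step 2: one-step growth of the ratio.} Set $S_t:=1+\sum_{i=0}^t u_i$, $M_t:=1+\max_{i\le t}u_i$ and $r_t:=S_t/M_t$, so that $\tfrac{1}{\hat\alpha_t}=r_t^p$. Note that $r_t\ge 1$, because $S_t\ge M_t$ (all $u_i\ge 0$). We have $M_{t+1}\ge M_t$, and also $u_{t+1}\le M_{t+1}-1<M_{t+1}$. Hence
\[
r_{t+1}=\frac{S_t+u_{t+1}}{M_{t+1}}\le \frac{S_t}{M_t}+\frac{u_{t+1}}{M_{t+1}}\le r_t+1 .
\]

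\textbf{Step 3: concavity.} Since $0<p<1$, the map $x\mapsto x^p$ is concave and increasing on $[1,\infty)$. Therefore
\[
r_{t+1}^p-r_t^p\le (r_t+1)^p-r_t^p\le p\,r_t^{p-1}\le p,
\]
where the last inequality uses $r_t\ge1$ and $p-1<0$. With $p=2/3$ this gives the claim.

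There is no real obstacle here. The only points needing care are the reduction in Step 1, where the clipping by the running minimum must be shown never to enlarge increments, and the base index $t=0$. At $t=0$ we have $r_0=1$ and hence $\hat\alpha_0=\alpha_0=1$, and the same inequalities apply verbatim.
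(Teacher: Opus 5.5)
Your proof is correct and follows essentially the same route as the paper: you reduce via $1/\alpha_{t+1}=\max(1/\alpha_t,\,r_{t+1}^p)$ and $1/\alpha_t\ge r_t^p$ to bounding $r_{t+1}^p-r_t^p$, show that $r_t$ grows by at most $1$ per step, and use that $x\mapsto x^p$ has slope at most $p$ on $[1,\infty)$. The differences are cosmetic. Your one-line estimate $r_{t+1}\le S_t/M_t+u_{t+1}/M_{t+1}\le r_t+1$ replaces the paper's case split on whether $u_{t+1}$ exceeds the running maximum, and you use the tangent-line inequality for the concave map $x^p$ where the paper uses the mean value theorem.
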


Similar to the MVR1 analysis in Lemma \ref{lemma:bound:MVR1}, we provide a high-level overview of the proof strategy for Lemma \ref{lemma:bound:MVR2}. First, we establish a uniform bound $F(\mathbf{x}) \leq \overline{F}$ leveraging the Lipschitz continuity of $F(\cdot)$ and the finite diameter of $\mathcal{X}$. Second, utilizing Assumption \ref{ass:bound:s} and the fundamental descent Inequality (\ref{eq:analysis:first}), we bound the expected displacement $\mathbb{E}[\delta_t^2] \leq \bar{\delta}^2$, which in turn controls the growth ratio $L_{t+1}/L_t$. We now formalize this roadmap in Lemma \ref{lemma:bound:MVR2}.

\begin{lemma} \label{lemma:bound:MVR2}
(Proof in Appendix \ref{app:lemma:bound:MVR2}) For all $t\geq 0$, we have the following results.

\begin{enumerate}[label=\textbf{(\alph*)}, leftmargin=20pt, itemsep=1pt, topsep=1pt, parsep=0pt, partopsep=0pt]

\item $F(\x)\leq\overline{F}$, where $\overline{F} = F(\x_*) + 2 G D$.

\item $\EEE[\delta_t^2]\leq \smash{\bar{\delta}}^2$, where $\overline{\delta}:= 8 \big( L D + C + \overline{s} + \rho D \big) $.

\item $\tfrac{L_{t+1}}{L_{t}} \leq \kappa$, where $\kappa = ( 2 + 2 \smash{\bar{\delta}}^2/\rho)^{1/2}$.

\end{enumerate}

\end{lemma}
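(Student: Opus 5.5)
Part (a) follows the MVR1 argument in Lemma \ref{lemma:bound:MVR1}(a). By Assumption \ref{ass:regularity:expectation}(b), $\|\nabla f\|\le G$ on $\mathcal{X}$ and every subgradient of $h$ has norm at most $G$, so both $f$ and $h$ are $G$-Lipschitz on $\mathcal{X}$. Hence for any $\x\in\mathcal{X}$ we have $F(\x)-F(\x_*)\le 2G\|\x-\x_*\|\le 2GD$, using Assumption \ref{ass:set_and_solution}. This part involves no stochasticity.

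For part (b), I would bound $\delta_t$ directly from the step rule rather than through the telescoped descent inequality, since only a per-iteration bound is needed. By S3, $\x_{t+1}-\x_t=\bar\eta_t(\v_t-\x_t)$, so
\[
\delta_t = L_t\bar\eta_t\|\v_t-\x_t\| \le \frac{h(\x_t)-h(\v_t)-\la \g_t,\v_t-\x_t\ra}{\|\v_t-\x_t\|}
\]
whenever $\bar\eta_t<1$. When $\bar\eta_t=1$, the quantity $L_t\|\v_t-\x_t\|$ is itself at most the same ratio, by the definition of the minimum. The numerator splits into two pieces. The first is $h(\x_t)-h(\v_t)\le G\|\v_t-\x_t\|$. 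The second is $-\la \g_t,\v_t-\x_t\ra\le (\|\nabla f(\x_t)\|+s_t)\|\v_t-\x_t\|$. This gives $\delta_t\le 2G+s_t$, hence $\EEE[\delta_t^2]\le 2(2G)^2+2\bar s^2$ by Assumption \ref{ass:bound:s}. This sits well inside $\bar\delta^2$ with $\bar\delta=8(LD+C+\bar s+\rho D)$, provided $C$ is read as a $G$-type constant; the $LD$ and $\rho D$ slack terms come from routing through Lemma \ref{lemma:descent:ncvx} instead, which I would fall back on if a term like $L_t D$ is needed. Under that fallback, I would rearrange \eqref{eq:analysis:first} with $\gamma_t$ small, using $\tfrac{\delta_t^2}{4L_t}-\tfrac{\delta_t^2 L}{2L_t^2}$ and the bound $F(\x_t)-F(\x_{t+1})\le 2GD$ from (a), to solve for $\delta_t^2$.

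For part (c), write $S_t:=1+\sum_{i<t}\delta_i^2$, so that $L_t=\rho S_t^{1/2}\alpha_t^{-1/4}$. Then
\[
\frac{L_{t+1}}{L_t}=\Big(\frac{S_t+\delta_t^2}{S_t}\Big)^{1/2}\Big(\frac{\alpha_t}{\alpha_{t+1}}\Big)^{1/4}.
\]
For the $\alpha$ factor, Lemma \ref{lemma:alpha:2:3} gives $1/\alpha_{t+1}\le 1/\alpha_t+2/3$, and since $\alpha_t\le1$ this yields $\alpha_t/\alpha_{t+1}\le 1+\tfrac23\alpha_t\le 2$. For the $S$ factor, $S_t\ge1$ gives a factor of at most $(1+\delta_t^2)^{1/2}$. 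To obtain the stated form $\kappa=(2+2\bar\delta^2/\rho)^{1/2}$, I would bound $\delta_t^2$ deterministically rather than in expectation, which fits the pathwise argument of the MVR1 lemma. This deterministic bound is the main obstacle: $\delta_t\le 2G+s_t$ is pathwise, but $s_t$ is only controlled in mean square. I would therefore define $\bar\delta$ as a pathwise bound, or interpret the ratio bound in the same expected or almost-sure sense used in Lemma \ref{lemma:bound:MVR1}(d), and follow that lemma's treatment. The $1/\rho$ factor would come from bounding $\delta_t^2/S_t$ via $S_t\ge \rho^{-1}$-type normalization, since $L_0=\rho$. Combining, $\alpha_t/\alpha_{t+1}\le 2$ and $1+\delta_t^2/S_t\le 1+\bar\delta^2/\rho$ give $(L_{t+1}/L_t)^2\le 2+2\bar\delta^2/\rho$.
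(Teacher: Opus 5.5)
Part (a) is the paper's argument. In (b) you take a genuinely different route. The paper multiplies (\ref{eq:analysis:first}) by $4L_t$ and uses $\delta_t\le L_tD$ to get $2L\delta_t^2/L_t\le 2LD\,\delta_t$. It bounds $4\gamma_t^2L_t^2D^2\le 4\rho^2D^2$ and $4L_t(F^+_t-F^+_{t+1})\le 4C\delta_t$, so $C$ plays the role of the Lipschitz constant $2G$ of $F$. It then solves the resulting quadratic inequality in $\delta_t$, which is where the $LD$ and $\rho D$ terms of $\bar\delta$ come from. You instead read the bound off the closed-form step. Since $\bar\eta_t\le N_t/(L_t\|\v_t-\x_t\|^2)$ with $N_t:=h(\x_t)-h(\v_t)-\la\g_t,\v_t-\x_t\ra\ge 0$, you get $\delta_t\le N_t/\|\v_t-\x_t\|\le G+\|\g_t\|\le 2G+s_t$ on every path, hence $\EEE[\delta_t^2]\le(2G+\bar s)^2\le\bar\delta^2$. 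This is correct and shorter. It needs neither $L$-smoothness nor Lemma \ref{lemma:descent:ncvx}, and gives a constant free of $L$, $\rho$, $D$; the paper's route only reuses the MVR1 template. Your fallback, as sketched with $F(\x_t)-F(\x_{t+1})\le 2GD$, would leave a term $8GDL_t$ with $L_t$ unbounded; the paper uses $2G\|\x_t-\x_{t+1}\|$ so that $L_t$ is absorbed into $\delta_t$. You do not need the fallback.

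Part (c) has a genuine gap. You use the paper's factorization $(L_{t+1}/L_t)^2=(1+\delta_t^2/S_t)(\alpha_t/\alpha_{t+1})^{1/2}$ and handle the second factor correctly via Lemma \ref{lemma:alpha:2:3}. The first factor, however, needs $\delta_t^2\le\bar\delta^2$ on every sample path, and (b) gives this only in expectation.

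Deferring to Lemma \ref{lemma:bound:MVR1}(d) does not close this. There, as in the paper's proof of this very part, $\bar\delta$ is simply substituted for $\delta_t$ as if it were a deterministic bound, so you would inherit an unjustified step rather than an argument. Neither of your two exits works under the stated hypotheses:
\begin{itemize}
\item A pathwise $\bar\delta$ is not available. Assumption \ref{ass:bound:s} controls $s_t$ only in mean square, and the diameter bound that works for ALFCG-FS now yields only $\delta_t^2/S_t\le\rho^2D^2\alpha_t^{-1/2}$, which degenerates as $\alpha_t\to 0$.
\item An in-expectation version of (c) is not the claim. It also would not support the later uses, where $L_{t+1}/L_t\le\kappa$ is applied to random terms before taking expectations (Lemma \ref{lemma:MVR2:S}(b), inequality (\ref{eq:step2:MVR2})).
\end{itemize}
Your own inequality $\delta_t\le 2G+s_t$ pinpoints what is missing: an almost-sure bound on $s_t$, equivalently on $\|\g_t\|$. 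If Assumption \ref{ass:bound:s} is strengthened to $s_t\le\bar s$ a.s.\ and $\rho\le 1$, then (c) follows at once.

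Separately, your source for the $1/\rho$ is wrong. The claim $S_t\ge\rho^{-1}$ fails at $t=0$ whenever $\rho<1$, including the default $\rho=10^{-5}$. What holds is $S_t\ge 1$, which gives $1+\delta_t^2\le 1+\bar\delta^2/\rho$ only when $\rho\le 1$. The paper's $1/\rho$ comes from writing $L_t^2\alpha_t^{1/2}$ as $L_0^2+\rho\sum_{i<t}\delta_i^2$ instead of $\rho^2(1+\sum_{i<t}\delta_i^2)$. For $\rho>1$ the stated $\kappa$ can in fact fail. Take $h\equiv 0$, exact gradients, and a linear $f$ with $\|\nabla f\|=G>1$ directed along a segment of length $D$, started at the endpoint where $f$ is largest, with $\rho D=2G$. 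This gives $L_1/L_0=\sqrt{1+G^2}$, while $\kappa^2\to 2$ as $D\to 0$.
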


Next, leveraging the STORM-like update rule in (\ref{eq:g:MVR2}) and the individual smoothness assumption, we derive the cumulative estimation error $\mathbb{E}[\sum_{t=0}^T \|\mathbf{s}_t\|_2^2]$ using the following lemma.

\begin{lemma} \label{lemma:MVR2:S}
(Proof in Appendix \ref{app:lemma:MVR2:S}) For all $t\geq0$, we obtain:

\begin{enumerate}[label=\textbf{(\alph*)}, leftmargin=20pt, itemsep=1pt, topsep=1pt, parsep=0pt, partopsep=0pt]

\item $\EEE[s^2_{t} ]\leq \EEE[ (1-\alpha_{t}) s^2_{t-1} + B_{t}]$, where $B_{t}:= 2 \alpha_{t}^2 \sigma^2  + 8 L^2 \|\x_{t}-\x_{t-1}\|_2^2$.

\item $\EEE[\sum_{t=0}^T s_t^2] \leq \EEE[ \dot{B} (\sigma^2 + \frac{\sigma^2}{\beta} + \sigma^2\beta^{-2/3} T^{1/3} )+ \ddot{B}  \left( 1 + \sum_{i=0}^{T} \delta_i^2 \right)^{1/3} + \ddot{B} T^{1/3} \beta^{1/3}]$, where $\dot{B}:= 18  (\beta + \smash{\bar{\delta}}^2+1)$, and $\ddot{B} := 40  L^2 \kappa^2 \rho^{-1} \ln(1 + \smash{\bar{\delta}}^2 T)$.
\end{enumerate}
\end{lemma}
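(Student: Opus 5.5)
We prove (a) from a one-step error decomposition of the STORM-type update \eqref{eq:g:MVR2}. For (b) we convert the contraction in (a) into a telescoping sum with weights $1/\alpha_t$ and then bound the two resulting series. Throughout, let $\mathcal{F}_{t-1}$ be the $\sigma$-field generated by $\xi_0,\dots,\xi_{t-1}$. Then $\x_t$, $L_{t-1}$ and $\alpha_t$ are $\mathcal{F}_{t-1}$-measurable, since $\alpha_t$ only uses $u_i$ for $i\le t-1$, while $\xi_t$ is fresh.

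\textbf{Part (a).} Writing $\s_t=\g_t-\nabla f(\x_t)$, the update \eqref{eq:g:MVR2} gives
\begin{align*}
\s_t=(1-\alpha_t)\s_{t-1}+\alpha_t\big(\nabla f(\x_t;\xi_t)-\nabla f(\x_t)\big)+(1-\alpha_t)Z_t,
\end{align*}
where $Z_t:=\nabla f(\x_t;\xi_t)-\nabla f(\x_{t-1};\xi_t)-\nabla f(\x_t)+\nabla f(\x_{t-1})$.

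\emph{Conditional means.} Both noise terms have zero conditional mean given $\mathcal{F}_{t-1}$, so the cross terms with $\s_{t-1}$ vanish in expectation.

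\emph{Noise bound.} We apply $\|a+b\|^2\le 2\|a\|^2+2\|b\|^2$ to the noise. The first piece is controlled by Assumption \ref{ass:regularity:expectation}(a). For the second, individual smoothness (c-2) gives $\EEE[\|Z_t\|^2\mid\mathcal{F}_{t-1}]\le L^2\|\x_t-\x_{t-1}\|^2$, because a variance is at most the second moment.

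\emph{Conclusion.} Using $(1-\alpha_t)^2\le 1-\alpha_t$, this yields $\EEE[s_t^2]\le\EEE[(1-\alpha_t)s_{t-1}^2+2\alpha_t^2\sigma^2+2L^2\|\x_t-\x_{t-1}\|^2]$, which is stronger than the stated constant $8L^2$.

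\textbf{Telescoping for (b).} Rearranging (a) gives $s_{t-1}^2\le (s_{t-1}^2-s_t^2)/\alpha_t+B_t/\alpha_t$ in expectation. We sum over $t$ and use Abel summation:
\begin{align*}
\textstyle\sum_t (s_{t-1}^2-s_t^2)/\alpha_t\le s_{-1}^2/\alpha_0+\sum_t s_t^2\big(\tfrac1{\alpha_{t+1}}-\tfrac1{\alpha_t}\big).
\end{align*}
By Lemma \ref{lemma:alpha:2:3} the last increments are at most $2/3$. The boundary terms are controlled by Assumption \ref{ass:bound:s} and $\alpha_0\ge$ a constant depending on $\beta,\bar\delta$, which is where $\dot B\sigma^2(1+1/\beta)$ will come from. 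Absorbing $\tfrac23\sum s_t^2$ into the left-hand side leaves
\begin{align*}
\textstyle\sum_t s_t^2\lesssim \text{const}+3\sum_t B_t/\alpha_t=\text{const}+6\sigma^2\sum_t\alpha_t+24L^2\sum_t\|\x_t-\x_{t-1}\|^2/\alpha_t .
\end{align*}

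\textbf{The $\sigma^2$ series.} We use $\alpha_t\le\hat\alpha_t$ together with $u_i\ge\beta$. Since $\sum_{i<t}u_i\ge\beta t$, we get $\alpha_t\le\big((1+\max_i u_i)/(1+\beta t)\big)^{2/3}$. The numerator is bounded in expectation through $\beta+\bar\delta^2$ from Lemma \ref{lemma:bound:MVR2}(b), at the price of a crude constant inside $\dot B$. Summing $(1+\beta t)^{-2/3}$ gives $\lesssim 1+\beta^{-2/3}T^{1/3}$, which matches the $\sigma^2\beta^{-2/3}T^{1/3}$ term.

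\textbf{The displacement series.} We use three facts. First, $1/\alpha_t=\max_j 1/\hat\alpha_j\le(1+\sum_{i<t}u_i)^{2/3}$, since the numerator of $\hat\alpha_j$ is at least $1$ and the denominator is monotone. Second, $\|\x_t-\x_{t-1}\|=\delta_{t-1}/L_{t-1}\le\kappa\delta_{t-1}/L_t$ by Lemma \ref{lemma:bound:MVR2}(c). Third, $L_t^2=\rho^2(1+S_{t})\alpha_t^{-1/2}$ with $S_t=\sum_{i<t}\delta_i^2$. Together these give
\begin{align*}
L^2\|\x_t-\x_{t-1}\|^2/\alpha_t\le \tfrac{L^2\kappa^2}{\rho^2}\,\delta_{t-1}^2\,\alpha_t^{-1/2}/(1+S_t).
\end{align*}
Next we use $\alpha_t^{-1/2}\le(1+\sum u)^{1/3}\le (1+S_t)^{1/3}+(\beta t)^{1/3}$. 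The first piece contributes $\sum\delta_{t-1}^2/(1+S_t)^{2/3}\lesssim(1+\sum_i\delta_i^2)^{1/3}$. The second contributes $(\beta T)^{1/3}\sum\delta_{t-1}^2/(1+S_t)\lesssim(\beta T)^{1/3}\ln(1+S_{T+1})$. Taking expectations and applying Jensen with $\EEE[\delta_i^2]\le\bar\delta^2$ bounds the logarithm by $\ln(1+\bar\delta^2T)$, which gives the $\ddot B$ terms.

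\textbf{Main obstacle.} The delicate step is the index mismatch in these ratio sums: $\delta_{t-1}^2$ is not included in $S_t$ in the intended way. We will handle it by the standard bound $\sum_t a_t/(1+\sum_{i\le t}a_i)^{p}\le\frac{1}{1-p}(1+\sum a_i)^{1-p}$, using $\delta_{t-1}^2\le\bar\delta^2$-type control (absorbed into the $\ln(1+\bar\delta^2T)$ and constant factors) to shift indices. We must also track the dependence of random $\alpha_t$, $L_t$ on the trajectory carefully when passing expectations through products; where needed we bound pathwise first and take expectations last.
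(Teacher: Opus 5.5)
Your part (a) and the skeleton of part (b) follow the paper's proof:
\begin{itemize}
\item the same splitting of $\s_t$ into $(1-\alpha_t)\s_{t-1}$ plus the conditionally centred noise $\alpha_t\boldsymbol{\epsilon}_t+(1-\alpha_t)Z_t$ (your variance-versus-second-moment argument for $Z_t$ even gives $2L^2$ instead of $8L^2$);
\item the same division by $\alpha_t$, Abel summation, and use of Lemma~\ref{lemma:alpha:2:3} to absorb $\tfrac23\sum_t s_t^2$, giving $\sum_t s_t^2\le 3\sum_t B_t/\alpha_t$;
\item the same bound $\alpha_t\lesssim(1+\beta+\bar\delta^2)(1+\beta t)^{-2/3}$ for the $\sigma^2$-series.
\end{itemize}

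You depart from the paper only in the displacement series. The paper bounds $\alpha_t^{-1/2}\le\alpha_T^{-1/2}$, pulls it out of the sum, and applies Lemma~\ref{lemma:online:ppp} to get $\ln(\Gamma)\,(\Gamma+\beta T)^{1/3}$. It then replaces $\ln\Gamma$ by $\ln(1+\bar\delta^2T)$ inside the expectation of a product of two random quantities, which Jensen does not justify. Your termwise split $\alpha_t^{-1/2}\le(1+S_t)^{1/3}+(\beta t)^{1/3}$ avoids this:
\begin{itemize}
\item the $\Gamma^{1/3}$ part goes through the power-sum inequality with exponent $2/3$, with no logarithm at all;
\item the logarithm multiplies only the deterministic $(\beta T)^{1/3}$, so the Jensen step is legitimate.
\end{itemize}
Together with your explicit filtration, which makes dividing (a) by the $\mathcal{F}_{t-1}$-measurable $\alpha_t$ rigorous via the conditional form of (a), this is the cleaner route. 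What it yields is $c\,L^2\kappa^2\rho^{-2}\bigl[\Gamma^{1/3}+\ln(1+\bar\delta^2T)(\beta T)^{1/3}\bigr]$. This has the statement's shape, but with two differences: $\rho^{-2}$ (which is what $L_t^2=\rho^2(1+S_t)\alpha_t^{-1/2}$ actually gives) instead of the $\rho^{-1}$ in $\ddot B$, and a log-free coefficient on $\Gamma^{1/3}$. You match the claim up to the form of the constant, and should say so rather than claim you recover $\ddot B$ literally.

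Three local points need correcting.

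\textbf{The boundary term.} It is not the source of $\dot B\sigma^2(1+1/\beta)$; it vanishes. Since $\hat\alpha_0=1$ (empty sum and max), $\alpha_0=1$, so your $s_{-1}^2/\alpha_0$ cancels the $s_{-1}^2$ on your left-hand side. The paper's rearrangement $s_t^2\le(\tfrac1{\alpha_t}-1)(s_{t-1}^2-s_t^2)+B_t/\alpha_t$ makes this $(\tfrac1{\alpha_0}-1)s_{-1}^2=0$ and puts $\sum_{t=0}^T s_t^2$ directly on the left. Bounding the term through Assumption~\ref{ass:bound:s} instead would leave a $\sigma$-independent additive constant that the stated bound does not contain (e.g.\ at $\sigma=0$). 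Note also $s_{-1}=\|\nabla f(\x_0)\|$ because $\g_{-1}=\zero$, an index that assumption does not cover. The $\sigma^2$ term actually comes from the leading $1$ in $\sum_t(1+\beta t)^{-2/3}$, and the $\sigma^2/\beta$ term is slack in the paper's integral estimate.

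\textbf{The ``main obstacle''.} It does not exist. With $S_t=\sum_{i<t}\delta_i^2$, the numerator $\delta_{t-1}^2$ is already the last summand of $S_t$. So $\sum_t\delta_{t-1}^2/(1+S_t)^p$ is exactly of the form $\sum_j a_j/(1+\sum_{i\le j}a_i)^p$; that alignment is precisely what the shift from $L_{t-1}$ to $L_t$ via $\kappa$ buys. Drop the proposed fallback, which would be invalid anyway: $\bar\delta^2$ bounds $\EEE[\delta_t^2]$, not $\delta_t^2$ pathwise.

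\textbf{The $\sigma^2$-series.} This step inherits a weakness of the paper's step (1). Lemma~\ref{lemma:bound:MVR2}(b) controls each $\EEE[\delta_i^2]$ separately, not $\max_{i<t}\delta_i^2$ inside the random ratio defining $\hat\alpha_t$. Replacing that numerator by $1+\beta+\bar\delta^2$ is therefore as heuristic in your proof as in the paper's.
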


Following the case-based analysis used in STORM+ \cite{levy2021stormplus}, we decouple the interaction between cumulative displacement $1 + \sum_{t=0}^T \mathbb{E}[\delta_t^2]$ and cumulative variance $\sum_{t=0}^T \mathbb{E}[s_t^2]$ by examining the regimes where $1 + \sum \mathbb{E}[\delta_t^2]$ dominates $48 \sum \mathbb{E}[s_t^2]$ and its complement. In the first case, we utilize the upper bound of $\sum_{t=0}^T \delta_t^2$ derived from the fundamental descent Inequality (\ref{eq:analysis:first}). In the second case, we leverage the cumulative variance bound $\sum_{t=0}^T s_t^2$ from Lemma \ref{lemma:MVR2:S}(b). These results are formally presented below.

\begin{lemma} \label{lemma:lemma:MVR2:S:delta} (Proof in Appendix \ref{app:lemma:lemma:MVR2:S:delta}) Let $\gamma_t = \tfrac{L_0}{2 L_t} \big(\tfrac{\beta^{1/6}}{1+\beta^{1/6}} (T+1)^{-1/3} +(T+1)^{-1/2} \big)$. We define $Y_1:=12 D^2 \rho^2  + 12^2  \overline{F}^2\rho^2$, $Z_1 := 27 ( 1 + 4  \overline{F}\rho + 4 D^2 \rho^2 + \tfrac{4L\kappa}{\rho} )^3$, $Y_2:= 96 ( \dot{B}  + \ddot{B} )$, and $Z_2:=(48 \ddot{B})^{3/2} + 96 \dot{B}  (\sigma^2+ \tfrac{\sigma^2}{\beta} )$. We let $Y_0:=\max(2Y_1, Y_2)$, and $Z_0:=\max(2Z_1, Z_2)$. For all $T\geq 0$, we have the following results.

\begin{enumerate}[label=\textbf{(\alph*)}, leftmargin=20pt, itemsep=1pt, topsep=1pt, parsep=0pt, partopsep=0pt]

\item $1 + \sum_{t=0}^T\EEE[ \delta_t^2]\leq \Omega$, $\sum_{t=0}^T \EEE[s_t^2]\leq\Omega$, where $\Omega:=Y_0 \max\left(\beta^{1/3},\sigma^2 \beta^{-2/3} \right) (T+1)^{1/3} +Z_0$.

\item $\sum_{t=0}^T\EEE[L_t \gamma_t \GG(\x_t)]\leq \Omega$. In particular, if we set $\beta =\nu \sigma^2$ with $\nu=\OO(1)$, we have: $\ts \sum_{t=0}^T \EEE[L_t\gamma_t \GG(\x_t)]  \leq  Y_0 \max( \nu^{1/3}, \nu^{-2/3} )  \sigma^{2/3} (T+1)^{1/3} +Z_0$.

\end{enumerate}

\end{lemma}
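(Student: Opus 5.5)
The plan mirrors the STORM+ case analysis already used for Lemma \ref{lemma:lemma:MVR1:S:delta}, now with exponents $1/3$ instead of $1/2$. Write $\Delta:=1+\sum_{t=0}^T\EEE[\delta_t^2]$ and $S:=\sum_{t=0}^T\EEE[s_t^2]$, and split into the regimes $\Delta\ge 48S$ and $\Delta<48S$.

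\textbf{Regime $\Delta\ge 48S$: control $\Delta$ from the descent inequality.} Multiply (\ref{eq:analysis:first}) by $L_t$ and sum over $t$, which gives
\begin{align*}
\sum_t \EEE\big[L_t\gamma_t\GG(\x_t) + \tfrac14\delta_t^2\big] \le \sum_t\EEE\big[L_t(F(\x_t)-F(\x_{t+1})) + \tfrac{L}{2L_t}\delta_t^2 + 2s_t^2 + \gamma_t^2L_t^2D^2\big].
\end{align*}
I would handle the right-hand side term by term.
\begin{itemize}
\item \emph{Telescoping term.} Sum by parts, using that $L_t$ is nondecreasing and $F^+_t\le \overline{F}-F_*\le 2GD$ from Lemma \ref{lemma:bound:MVR2}(a). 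This bounds it by $\overline F\,\EEE[L_{T+1}]$.
\item \emph{Size of $L_{T+1}$.} By definition, $L_{T+1}\le \rho\,\Delta^{1/2}\alpha_{T+1}^{-1/4}$. Lemma \ref{lemma:alpha:2:3} gives $\alpha_{T+1}^{-1}\le 1+\tfrac23(T+1)$. One also has the bound $\alpha^{-1}\lesssim (\text{sum}/\max)^{2/3}$, so $\alpha_{T+1}^{-1/4}\lesssim (\Delta+\beta T)^{1/6}$ up to constants.
\item \emph{Curvature term.} Since $L_t$ grows, $\sum_t\tfrac{L}{2L_t}\delta_t^2$ should be at most $\tfrac{L\kappa}{\rho}\Delta^{1/2}$, via $\sum a_t/\sqrt{1+\sum_{i<t}a_i}\le 2\kappa\sqrt{\cdot}$ with Lemma \ref{lemma:bound:MVR2}(c).
\item \emph{Variance term.} $2S\le \Delta/24$, which is absorbed into the left-hand side.
\item \emph{Step-size term.} With $\gamma_t=\tfrac{L_0}{2L_t}(\cdot)$, $\gamma_t^2L_t^2D^2=\tfrac{\rho^2D^2}{4}(\cdot)^2$, which sums to $\lesssim \rho^2D^2\big(\beta^{1/3}(T+1)^{1/3}+1\big)$.
\end{itemize}
This yields a self-bounding inequality
\begin{align*}
\Delta \lesssim \overline F\rho\,\Delta^{1/2}(\Delta+\beta T)^{1/6} + \tfrac{L\kappa}{\rho}\Delta^{1/2} + D^2\rho^2\beta^{1/3}T^{1/3}+\dots
\end{align*}
I would split according to whether $\Delta\ge\beta T$ or not. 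In the first case $\Delta\le c\,\Delta^{2/3}+\dots$, so Young's inequality gives $\Delta\le Z_1$. In the second case Young's inequality gives $\Delta\le Y_1\beta^{1/3}(T+1)^{1/3}$. Since $S\le\Delta/48$, $S$ obeys the same bound.

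\textbf{Regime $\Delta<48S$: control $S$ from the variance recursion.} Take expectations in Lemma \ref{lemma:MVR2:S}(b) and apply Jensen ($x\mapsto x^{1/3}$ is concave) to get
\begin{align*}
S\le \dot B\big(\sigma^2+\tfrac{\sigma^2}{\beta}+\sigma^2\beta^{-2/3}T^{1/3}\big)+\ddot B(48S)^{1/3}+\ddot B\beta^{1/3}T^{1/3}.
\end{align*}
Young's inequality applied to $\ddot B(48S)^{1/3}$ absorbs it into $S/2$ plus $(48\ddot B)^{3/2}$. This produces $S\le Y_2\max(\beta^{1/3},\sigma^2\beta^{-2/3})(T+1)^{1/3}+Z_2$, and then $\Delta<48S$ gives the bound on $\Delta$. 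The factors $2$ in $Y_0$ and $Z_0$ absorb the residual constants. Combining the two regimes proves (a).

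\textbf{Part (b).} Return to the summed inequality above. Every right-hand term has already been bounded by $O(\Omega)$, so $\sum_t\EEE[L_t\gamma_t\GG(\x_t)]\le\Omega$. Substituting $\beta=\nu\sigma^2$ gives $\max(\beta^{1/3},\sigma^2\beta^{-2/3})=\max(\nu^{1/3},\nu^{-2/3})\sigma^{2/3}$.

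\textbf{Main obstacle.} The delicate step is the telescoping term. The product $L_t\cdot F$ couples $\Delta$ with the random $\alpha_{T+1}^{-1/4}$ inside expectations. I would first try bounding $\alpha_{T+1}^{-1/4}\le (1+\tfrac23(T+1))^{1/4}$ deterministically. If that gives the wrong exponent, I would instead use the ratio form of $\hat\alpha$ to obtain the $(\Delta+\beta T)^{1/6}$ scaling, and then apply Hölder or Jensen to $\EEE[\Delta_{\rm rand}^{1/2}(\cdot)^{1/6}]$, which gives the cubic constant $Z_1$.
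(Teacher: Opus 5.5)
Your proposal follows essentially the paper's argument: the same STORM+-style split at threshold $48$, the same bounds $L_T\le\rho\,\Gamma^{1/2}(\Gamma+\beta T)^{1/6}$ and $\sum_t\delta_t^2/L_t\le\tfrac{2\kappa}{\rho}\Gamma^{1/2}$, self-bounding through the $\ddot{B}\,\Gamma^{1/3}$ term of Lemma~\ref{lemma:MVR2:S}(b) when the variance dominates, and a return to the summed descent inequality for (b). Splitting on the deterministic quantities $\Delta,S$ and moving expectations through the concave maps $x\mapsto x^{1/2}(x+c)^{1/6}$ and $x\mapsto x^{1/3}$ by Jensen is, if anything, cleaner than the paper, which splits on the random event $24\sum_t s_t^2\le\tfrac12\Gamma$ and then uses the expectation-only bound of Lemma~\ref{lemma:MVR2:S}(b) inside that event.

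One bookkeeping fix is needed in the second regime. Apply Young to $X:=48S$, namely $48\ddot{B}X^{1/3}\le\tfrac23(48\ddot{B})^{3/2}+\tfrac13X$. This gives $48S\le Z_2+Y_2\max(\beta^{1/3},\sigma^2\beta^{-2/3})(T+1)^{1/3}$ directly. As written, you derive $S\le Y_2(\cdot)+Z_2$ and then invoke $\Delta<48S$, which loses a factor $48$ that the $2$'s in $Y_0,Z_0$ cannot absorb. The sharper bound $S\le\Omega/48$ is also what keeps your $2S$ term below $\Omega$ in part (b).
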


Finally, we establish the expected iteration complexity for ALFCG-MVR2.

\begin{theorem} \label{theorem:MVR2}
(Proof in Appendix \ref{app:theorem:MVR2}) If we set $\beta =\nu \sigma^2$ with $\nu=\OO(1)$, we have:
\beq
\ts \EEE[\tfrac{1}{T+1}\sum_{t=0}^T \GG(\x_t) ] \leq \tilde{\OO} \left( (\nu + \tfrac{1}{\nu}) (1+\sigma^{1/3}) \sigma^{1/3} T^{-1/3} + T^{-1/2} \right).\nn
\eeq
\noi Consequently, for any $\epsilon > 0$, ALFCG-MVR2 finds an $\epsilon$-approximate solution in expectation (i.e., $\mathbb{E}[\min_{0 \le t \le T} \mathcal{G}(\mathbf{x}_t)] \leq \epsilon$) within at most $\tilde{\mathcal{O}}\left( \frac{\sigma(1+\sigma^{1/3})^3}{\epsilon^3} + \frac{1}{\epsilon^2} \right)$ iterations.
\end{theorem}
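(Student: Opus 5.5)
The plan is to turn Lemma \ref{lemma:lemma:MVR2:S:delta}(b) into a bound on the plain average of the gaps, in the same way Theorem \ref{theorem:MVR1} is obtained from Lemma \ref{lemma:lemma:MVR1:S:delta}. Take $\gamma_t$ as in Lemma \ref{lemma:lemma:MVR2:S:delta}. Then
\[
L_t\gamma_t=\tfrac{L_0}{2}\,a_T,\qquad a_T:=\tfrac{\beta^{1/6}}{1+\beta^{1/6}}(T+1)^{-1/3}+(T+1)^{-1/2},
\]
and this quantity is deterministic and independent of $t$. This cancellation of the random $L_t$ is exactly why $\gamma_t$ was chosen with the factor $1/L_t$. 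Lemma \ref{lemma:lemma:MVR2:S:delta}(b) therefore reads
\[
\tfrac{L_0}{2}\,a_T\sum_{t=0}^T\EEE[\GG(\x_t)]\le Y_0\max(\nu^{1/3},\nu^{-2/3})\,\sigma^{2/3}(T+1)^{1/3}+Z_0 .
\]
Dividing by $(T+1)$ and using $L_0=\rho$, we get
\[
\EEE\Big[\tfrac{1}{T+1}\sum_{t=0}^T\GG(\x_t)\Big]\le\frac{2}{\rho\,(T+1)a_T}\Big(Y_0\max(\nu^{1/3},\nu^{-2/3})\sigma^{2/3}(T+1)^{1/3}+Z_0\Big).
\]

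Next, lower bound $(T+1)a_T$ by each of its two terms separately. For the $\sigma^{2/3}$ term, use $(T+1)a_T\ge \tfrac{\beta^{1/6}}{1+\beta^{1/6}}(T+1)^{2/3}$. With $\beta=\nu\sigma^2$, this yields
\[
\frac{\sigma^{2/3}(T+1)^{1/3}}{(T+1)a_T}\le \frac{1+\nu^{1/6}\sigma^{1/3}}{\nu^{1/6}}\,\sigma^{1/3}(T+1)^{-1/3}.
\]
Combined with $\max(\nu^{1/3},\nu^{-2/3})$, the $\nu$-dependence is absorbed by $\nu+1/\nu$, using elementary inequalities such as $\nu^{1/3}\nu^{-1/6}\le \nu+\nu^{-1}$ and $\nu^{-2/3}\nu^{-1/6}\le\nu+\nu^{-1}$. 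This produces the $(\nu+\tfrac1\nu)(1+\sigma^{1/3})\sigma^{1/3}T^{-1/3}$ term. For the constant $Z_0$, use $(T+1)a_T\ge(T+1)^{1/2}$, which gives the $T^{-1/2}$ term. It remains to check that $Y_0,Z_0$ are $\tilde{\OO}(1)$ in $T$. Here $\dot B$ is constant, $\ddot B$ carries only $\ln(1+\bar\delta^2T)$, and $\kappa,\bar\delta,\overline F$ are constants by Lemma \ref{lemma:bound:MVR2}. The $\sigma^2/\beta=1/\nu$ and $\beta=\nu\sigma^2$ pieces inside $\dot B,Z_2$ are absorbed into the $\tilde{\OO}$ together with the $\nu,\sigma$ polynomial factors. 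This is the bookkeeping step I expect to be the main obstacle: every $\sigma$-dependence hidden in $Y_0,Z_0$ has to be controlled, for example $\dot B\sigma^2(1+1/\nu)$ must be bounded by $\OO(1)$-type quantities without spoiling the $\sigma\to0$ limit. I would first verify that all such terms enter only through $Z_0$, which multiplies $T^{-1/2}$.

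For the complexity, $\EEE[\min_t\GG(\x_t)]$ is at most the average, so it suffices to make each of the two terms at most $\epsilon/2$. The first requires $T\gtrsim \sigma(1+\sigma^{1/3})^3\epsilon^{-3}$ up to constants and logarithms, and the second requires $T\gtrsim\epsilon^{-2}$. The logarithmic dependence of $\ddot B$ on $T$ is handled in the usual way: $T\ge c\ln T/\epsilon^3$ holds for $T=\tilde{\OO}(\epsilon^{-3})$. Summing the two requirements gives the claimed $\tilde{\OO}\big(\sigma(1+\sigma^{1/3})^3\epsilon^{-3}+\epsilon^{-2}\big)$.
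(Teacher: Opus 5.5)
Your proposal is correct and is essentially the paper's proof. It uses the same cancellation $L_t\gamma_t=\tfrac{\rho}{2}\big(\vartheta(T+1)^{-1/3}+(T+1)^{-1/2}\big)$ in Lemma~\ref{lemma:lemma:MVR2:S:delta}(b), splits the quotient exactly as the paper's $\tfrac{a+b}{c+d}\le\tfrac{a}{c}+\tfrac{b}{d}$ step, absorbs the $\nu$-powers into $\nu+\tfrac{1}{\nu}$, and treats $Y_0,Z_0$ (including the $\ln T$ from $\ddot B$ and the $\beta$-dependence of $\dot B$) as $\tilde{\OO}(1)$ constants, as the paper does. Your handling of $1/\vartheta=(1+\nu^{1/6}\sigma^{1/3})/(\nu^{1/6}\sigma^{1/3})$ is slightly more careful than the paper's, which silently replaces $1+\nu^{1/6}\sigma^{1/3}$ by $1+\sigma^{1/3}$.
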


\begin{remark} \ding{182} The logarithmic factors in Theorem \ref{theorem:MVR2} originate solely from the bound $\ddot{B}\leq \tilde{\mathcal{O}}(1)$. \ding{183} \textit{Comparison to STORM}. The complexity bound in Theorem \ref{theorem:MVR2} matches the $\mathcal{O}(T^{-1/2}+\sigma^{1/3} T^{-1/3})$ rate of STORM \cite{cutkosky2019momentum} up to logarithmic factors. Notably, while those methods are designed for unconstrained smooth stochastic optimization, our bound achieves this rate in a projection-free, constrained setting. \ding{184} \textit{Noise-Adaptivity}. Theorem \ref{theorem:MVR2} parallels the results of Theorem \ref{theorem:MVR1}. By setting $\beta \propto \sigma^2$, our bound becomes noise-adaptive, successfully recovering the nearly optimal iteration complexity of $\tilde{\mathcal{O}}(\epsilon^{-2})$ as the noise level vanishes ($\sigma \to 0$). \ding{185} \textit{Comparison of MVR1 vs. MVR2}. We observe that the bound for ALFCG-MVR1, $\tilde{\mathcal{O}}(\sigma^{1/2} T^{-1/4} + T^{-1/2})$, provides a tighter dependency on the noise parameter $\sigma$ compared to the $\tilde{\mathcal{O}}(\sigma^{1/3} T^{-1/3} + T^{-1/2})$ bound of ALFCG-MVR2 in the low-noise regime where $\sigma \to 0$.
\end{remark}

\section{Conclusions}
\label{sect:conc}

In this paper, we introduced ALFCG, an adaptive projection-free framework for stochastic nonconvex optimization. By leveraging a novel self-normalized smoothness accumulator, ALFCG effectively estimates local geometry without requiring global Lipschitz constants or expensive line searches. We instantiated this framework with three variants: ALFCG-FS for finite-sum problems, and ALFCG-MVR1/ALFCG-MVR2 for stochastic expectation problems. Theoretically, we proved that ALFCG-FS achieves a complexity of $\mathcal{O}(N + \sqrt{N}\epsilon^{-2})$, while ALFCG-MVR1 and ALFCG-MVR2 achieve $\tilde{\mathcal{O}}(\sigma^2\epsilon^{-4} + \epsilon^{-2})$ and $\tilde{\mathcal{O}}(\sigma\epsilon^{-3} + \epsilon^{-2})$, respectively. A distinguishing feature of our analysis is the unified convergence bound, which seamlessly interpolates between stochastic and deterministic regimes, recovering the nearly optimal $\tilde{\mathcal{O}}(\epsilon^{-2})$ rate as the noise level vanishes ($\sigma \to 0$). Empirically, extensive experiments on multiclass classification constrained by nuclear norm and $\ell_p$ balls demonstrate that ALFCG generally outperforms state-of-the-art conditional gradient baselines in terms of computational efficiency.

\clearpage
\normalem

\bibliographystyle{plain}
\bibliography{mybib}

\clearpage
\normalem

\noi \appendix
\noi {\huge Appendix}

\noi The appendix is structured as follows:

\noi Appendix~\ref{app:sect:notation} defines the notation used throughout the paper.

\noi Appendix~\ref{app:sect:app} describes the motivating applications.

\noi Appendix~\ref{app:sect:existing} reviews related conditional gradient methods.

\noi Appendix~\ref{app:sect:support} collects the necessary supporting lemmas.

\noi  Appendices~\ref{app:sect:iter:Init}, \ref{app:sect:iter:FS}, \ref{app:sect:iter:MVR1}, and \ref{app:sect:iter:MVR2} provide the proofs for Sections~\ref{sect:init}, \ref{sect:iter:FS}, \ref{sect:iter:MVR1}, and \ref{sect:iter:MVR2}, respectively.

\noi Appendix~\ref{app:sect:exp} details the experiment results.

\section{Notation}
\label{app:sect:notation}

Throughout this paper, Boldfaced lowercase letters denote vectors and uppercase letters denote real-valued matrices. We use the following notation in this paper.

\begin{itemize}[leftmargin=20pt,itemsep=0.2ex]

\item $[n]$: $\{1,2,...,n\}$;

\item $\|\mathbf{x}\|$: Euclidean norm: $\|\mathbf{x}\|=\|\mathbf{x}\|_2 = \sqrt{\la \mathbf{x},\mathbf{x}\ra}$;

\item $\la \mathbf{x}, \mathbf{y} \ra$: Euclidean inner product of vectors $\mathbf{x}$ and $\mathbf{y}$;

\item $\mathcal{X}$: The compact convex constraint set;

\item $D$: Diameter of the constraint set $\mathcal{X}$, i.e., $D = \sup_{\mathbf{x}, \mathbf{y} \in \mathcal{X}} \|\mathbf{x} - \mathbf{y}\|$;

\item $\mathcal{G}(\mathbf{x})$: (generalized) Frank-Wolfe gap,
$\mathcal{G}(\mathbf{x}) := \max_{\mathbf{v}\in\mathcal{X}}\{\langle \nabla f(\mathbf{x}),\mathbf{x}-\mathbf{v}\rangle + h(\mathbf{x})-h(\mathbf{v})\}$;

\item $\text{LMO}(\cdot)$: Linear Minimization Oracle, $\text{LMO}(\mathbf{d}) \in \arg\min_{\mathbf{v} \in \mathcal{X}} \langle \mathbf{d}, \mathbf{v} \rangle + h(\v)$;

\item $\sigma^2$: The variance bound of the stochastic gradient with $\mathbb{E}\|\nabla f(\mathbf{x};\xi)-\nabla f(\mathbf{x})\|^2\le \sigma^2$;

\item $G$: uniform bound on gradients/subgradients (as in Assumption~\ref{ass:regularity:expectation});

\item $\mathbf{g}_t$: gradient estimator (SPIDER / MVR1 / MVR2 depending on the variant);

\item $\mathbf{s}_t := \mathbf{g}_t-\nabla f(\mathbf{x}_t)$: gradient estimation error;
$s_t:=\|\mathbf{s}_t\|_2$;

\item $L$: (unknown) smoothness constant in assumptions;

\item $L_t$: adaptive curvature estimate used in the quadratic model;

\item $\delta_t:=L_t\|\mathbf{x}_{t+1}-\mathbf{x}_t\|$: scaled step difference;

\item $\alpha_t\in(0,1]$: momentum/VR weight in MVR1/MVR2;

\item $\kappa$: growth factor such that $L_t/L_{t-1}\le \kappa$ (when applicable);

\item $q$: epoch length in SPIDER-type estimator;

\item $b$: mini-batch size;

\item $\mathbb{E}[\cdot]$: Expectation operator with respect to the underlying random variables;

\item $\tilde{\mathcal{O}}(\cdot)$: Big-O notation that suppresses polylogarithmic factors;

\item $\|\mathbf{X}\|_{\fro}$: Frobenius norm, $\|\mathbf{X}\|_{\fro}=\sqrt{\sum_{i,j} \X_{ij}^2}$;

\item $\|\mathbf{X}\|_{*}$: nuclear norm (sum of singular values).

\end{itemize}

\section{Motivating Applications}
\label{app:sect:app}
Many machine learning problems can be framed as instances of Problem (\ref{eq:main}). A primary advantage of the Frank-Wolfe algorithm in these settings is its projection-free nature. By leveraging a Linear Minimization Oracle (LMO) instead of a metric projection, Frank-Wolfe often achieves significantly lower per-iteration costs, especially when the constraint set is complex. We highlight two representative examples below: optimization over the nuclear norm ball and the $\ell_p$ ball.

\subsection{Optimization over the Nuclear Norm Ball}

The nuclear norm constraint is fundamental in multi-class classification, multi-task learning, and matrix completion, where the objective is to recover a parameter matrix with a low-rank structure. Consider a set of i.i.d. labeled training data $\{(\hat{\mathbf{x}}_i, y_i)\}_{i=1}^N \subset \mathbb{R}^d \times \mathcal{Y}$, where $\mathcal{Y} = \{1, 2, \dots, c\}$ denotes the class labels. The model parameter is a matrix $\mathbf{W} \in \mathbb{R}^{c \times d}$. We aim to minimize the multinomial logistic loss subject to a nuclear norm constraint:
\begin{equation} \label{eq:app:nuclear}
\ts\min_{\mathbf{W} \in \mathbb{R}^{c \times d}} \frac{1}{N} \sum_{i=1}^N \mathcal{L}_i(\mathbf{W}) \quad \text{s.t.} \quad \|\mathbf{W}\|_* \leq \delta_1,
\end{equation}
where $\delta_1\geq 0$, and $\mathcal{L}_i(\mathbf{W}) = \log\big( 1 + \sum_{l \in \mathcal{Y} \setminus \{y_i\}} \exp(\mathbf{w}_l^\top \hat{\mathbf{x}}_i - \mathbf{w}_{y_i}^\top \hat{\mathbf{x}}_i) \big)$ denotes the loss for the $i$-th example. Here, $\mathbf{w}_l$ denotes the $l$-th row of $\mathbf{W}$. The nuclear norm constraint promotes a low-rank weight matrix. Crucially, while the Euclidean projection onto the nuclear norm ball requires a full Singular Value Decomposition (SVD), the LMO only requires computing the top singular pair (e.g., via the Power Method), which is computationally far cheaper for high-dimensional matrices.

\subsection{Optimization over the $\ell_p$ Ball}This framework encompasses a wide array of regularized Generalized Linear Models (GLMs), including high-dimensional regression and sparse classification. The problem is formulated as:
\begin{equation} \label{eq:app:l3}
\ts\min_{\mathbf{x} \in \mathbb{R}^d} \frac{1}{N} \sum_{i=1}^N \ell_i(\mathbf{x}) \quad \text{s.t.} \quad \|\mathbf{x}\|_p \leq \delta_2,
\end{equation}
where $\delta_2\geq 0$, and $\ell_i$ is a smooth convex loss function associated with $i$-th example (e.g., squared or logistic loss). For general $p \in (1, \infty)$ (e.g., $p=3$), the Euclidean projection onto the $\ell_p$ ball lacks a closed-form solution and is computationally expensive. In contrast, the LMO for $\ell_p$ balls is efficient and available in closed form via Holder's inequality (see e.g., \cite{ito2023parameter}), making Frank-Wolfe particularly well-suited for this class of problems.

\section{Comparison of Existing Conditional Gradient Methods} \label{app:sect:existing}

We provide a comparative analysis of existing conditional gradient methods in Table \ref{tab:1}, categorized by their optimization settings.

\begin{table*}[!h]
  \centering
  \caption{Comparison of existing conditional gradient methods for solving Problem (\ref{eq:main}). The notation $\tilde{\mathcal{O}}(\cdot)$ hides polylogarithmic factors, while $\mathcal{O}(\cdot)$ hides constants. ALF: Adaptive Lipschitz-Free analytic step size from quadratic upper model. Local Geometry-Aware: Stepsize depends on iterate differences / curvature estimate. }
\scalebox{0.8}{
\begin{tabular}{|p{6.8cm}|p{2.1cm}|p{2.5cm}|p{2.5cm}|}
\hline
\multicolumn{4}{|c|}{\textbf{Deterministic Setting}}
\\ \hline
 Reference &  stepsize $\eta_t$ & $f$-Value Free   &     Complexity  \\
\hline
FW-Openloop \cite{FrankWolfe1956} &    $\tfrac{2}{t+2}$ & \ding{51}    &     $\mathcal{O}(\epsilon^{-2})$ \\
 FW-ShortStep \cite{ICMLjaggi13} & constant & \ding{51}    &     $\mathcal{O}(\epsilon^{-2})$ \\
 FW-Sliding \cite{lan2017conditionalSIOPT} & constant & \ding{51}    &     $\mathcal{O}(\epsilon^{-2})$ \\
 FW-Momentum \cite{li2021heavy} & $\tfrac{2}{t+2}$ & \ding{51}    &     $\mathcal{O}(\epsilon^{-2})$ \\
 FW-Armijo \cite{ochs2019model} & backtracking & \ding{55}  &  $\mathcal{O}(\epsilon^{-2})$ \\
 FW-ParaFree \cite{ito2023parameter} & backtracking & \ding{55}   &  $\mathcal{O}(\epsilon^{-2})$ \\
 ALFCG-D [ours] & ALF  & \ding{51}    &     $\mathcal{O}(\epsilon^{-2})$ \\
\hline
\end{tabular}}

\scalebox{0.8}{
\begin{tabular}{|p{6.8cm}|p{2.9cm}|p{2.9cm}|p{3.1cm}|}
\hline
\multicolumn{4}{|c|}{\textbf{Finite-Sum Setting}}
\\ \hline 
 Reference & stepsize $\eta_t$ & Local Geometry-Aware   &     Complexity  \\
\hline
 SVFW \cite{reddi2016stochastic} & constant & \ding{55}    &   $\mathcal{O}(N + N^{2/3}\epsilon^{-2})$   \\
 SAGA-FW \cite{reddi2016stochastic} & constant & \ding{55}   &    $\mathcal{O}(N + N^{2/3}\epsilon^{-2})$   \\
 SPIDER-CG \cite{yurtsever19bSpiderCG} & constant & \ding{55}   &     $\mathcal{O}(N + \sqrt{N}\epsilon^{-2})$ \\
  OneSample-FW \cite{zhang2020oneOneSample}  & $\tfrac{1}{(t+1)^p},~p\in(0,1)$ & \ding{55} &    $\mathcal{O}(\epsilon^{-2})$   \\
 SARAH-FW \cite{beznosikov2024sarah} & constant & \ding{55}   &     $\mathcal{O}(N + \sqrt{N}\epsilon^{-2})$  \\
 ALFCG-FS [ours]  & ALF  & \ding{51} & $\mathcal{O}(N+\sqrt{N}\epsilon^{-2})$ \\
\hline
\end{tabular}}

\scalebox{0.8}{
\begin{tabular}{|p{6.8cm}|p{2.5cm}|p{2.9cm}|p{2.3cm}|p{2.5cm}|}
\hline
\multicolumn{5}{|c|}{\textbf{Expectation Setting}}
\\ \hline
Reference & Smoothness Assumption & stepsize $\eta_t$   & Mini-Batch   &     Complexity  \\
\hline
 SFW \cite{reddi2016stochastic} & average  & constant     & \ding{55} &    $\mathcal{O}(\epsilon^{-4})$ \\
 SFW-GB \cite{hazan2016variance} & average  &constant &  \ding{55} &     $\mathcal{O}(\epsilon^{-4})$ \\
OneSample-STORM \cite{zhang2020oneOneSample} &individual   & $\tfrac{1}{(t+1)^p},~p\in(0,1)$& \ding{51} &     $\mathcal{O}(\epsilon^{-3})$ \\
OneSample-EMA \cite{zhang2020oneOneSample} & average  & $\tfrac{1}{(t+1)^p},~p\in(0,1)$ &  \ding{51} &     $\mathcal{O}(\epsilon^{-4})$ \\
 SRFW\cite{tang2022high} & average  & $\tfrac{1}{(t+1)^p},~p\in(0,1)$ &  \ding{51} &     NA$^a$ \\
ALFCG-MVR1 [ours] & average  &  ALF &\ding{51}    & $\tilde{\mathcal{O}}\big( \sigma^2 \epsilon^{-4} + \epsilon^{-2} \big)$ \\
 ALFCG-MVR2 [ours] & individual & ALF &  \ding{51}  &  $\tilde{\mathcal{O}}\big( \sigma\epsilon^{-3} + \epsilon^{-2}\big)$ \\
\hline
\end{tabular}}
\label{tab:1}
\par
\vspace{1pt}
\scalebox{0.95}{\begin{tabular}{@{}p{\linewidth}@{}}
\footnotesize Note $a$: This paper focuses exclusively on the convex setting where $f(\mathbf{x})$ is convex. \\
\end{tabular}}
\label{tab:comparision}
\end{table*}

\section{Supporting Lemmas}\label{app:sect:support}

This section details a collection of useful lemmas, each of which is independent of context and methodology.

\begin{lemma}\label{lemm:ln:linear}For all $x,\lambda> 0$, we have:
\beq
\ln(x) ~\leq ~\lambda x + |\ln(\lambda)|.\nn
\eeq

\begin{proof} Since $\ln(\cdot)$ is concave, for any $x_0>0$, $\ln(x)\leq \ln(x_0) + \tfrac{1}{x_0}(x-x_0)$. Choosing $x_0=1/\lambda$ yields $\ln(x) \le -\ln(\lambda) + \lambda x - 1$, leading to $\ln(x) \leq \lambda x + |\ln(\lambda)|$.

\end{proof}

\end{lemma}

\begin{lemma}\label{eq:ab:sort}Let $\{A_t\}_{t=0}^{n}$ and $\{C_t\}_{t=0}^{n+1}$ be two nonnegative sequences, with $\{A_t\}_{t=0}^n$ non-decreasing. Then we have
\beq
\ts \sum_{t=0}^n A_t (C_t - C_{t+1})~\leq~ A_n (\max_{t=0}^n C_t).\nn
\eeq

\begin{proof} We have: $\ts \sum_{t=0}^n A_t (C_t - C_{t+1} ) \nn =\ts A_0 C_0 - A_n C_{n+1} +  \sum_{t=1}^n (A_t - A_{t-1}) C_t  \overset{\step{1}}{\leq} \ts   A_0 C_0 +  \sum_{t=1}^n (A_t - A_{t-1}) C_t\overset{\step{2}}{\leq} \ts  A_0 \max_{t=0}^n (C_t) + (\max_{j=0}^n C_j) \sum_{t=1}^n ( A_t - A_{t-1})  =\ts  A_n (\max_{t=0}^n C_t)$, where step \step{1} uses $A_n,C_{n+1}\geq 0$; step \step{2} uses $\{A_t\}_{t=1}^n$ is non-decreasing.

\end{proof}

\end{lemma}

\begin{lemma} \label{lemma:A:alpha} Let $\{A_t\}_{t=0}^T$ be a nonnegative sequence satisfying $\ts A_{T+1}\,\le\,\zeta+\alpha \max_{t=0}^T A_t$, where $\zeta\ge 0$ and $\alpha\in[0,1)$. Then, for all $t \ge 0$,
$$
A_t~ \leq~ \max (A_0,\tfrac{\zeta}{1-\alpha}).
$$

\begin{proof} Define $M_t:= \max_{i=0}^t A_i$, and $\overline{A}:= \max\{A_0,\tfrac{\zeta}{1-\alpha}\}$.

\noi We prove by induction that $M_t\le \overline{A}$ for all $t\geq 0$.

\paragraph{Base case.} $M_t\le \overline{A}$ holds trivially for $t=0$.

\paragraph{Inductive step.} Fix some $t$ and assume that $M_t\le \overline{A}$ holds for $t-1$. We have:
\beq
M_{t}\,=\,\max(M_{t-1},A_{t})~\overset{\step{1}}{\le}~\max(\overline{A},\zeta+\alpha \overline{A})~\overset{\step{2}}{\leq}~\overline{A}, \nn
\eeq
\noi where step \step{1} uses $M_{t-1}\leq \overline{A}$ and the condition of this lemma that $A_{T+1}\le \zeta+\alpha \max_{t=0}^T A_t$; step \step{2} uses $\zeta+\alpha \overline{A}\le \overline{A}$, which is implied by $\tfrac{\zeta}{1-\alpha}\leq \overline{A}$.

\end{proof}

\end{lemma}

\begin{lemma}
\label{lemma:sqrt:2} Assume $\{A_t\}_{t=0}^{T}$ is a nonnegative sequence. Then
\beq
\ts \sum_{t=0}^T \frac{A_t}{\sqrt{1 + \sum_{i=0}^t A_i}} ~\leq~2\sqrt{1 + \sum_{i=0}^T A_i}.\nn
\eeq

\begin{proof} This lemma extends the result of Lemma 5 in \cite{McMahanS10}.

\noi Define $S_t := 1 + \sum_{i=0}^t A_i$. We finish the proof using induction.

\paragraph{Auxiliary Inequality.} Let $h(x) :=  \tfrac{x}{\sqrt{y}} + 2 \sqrt{y-x} - 2 \sqrt{y} $, where $y> x\geq 0$. Since $\nabla h(x) = \tfrac{1}{\sqrt{y}} -\tfrac{1}{\sqrt{y-x}}\leq 0$ and $h(0)=0$, we conclude that $h(x)$ is non-increasing for all $x\geq 0$, leading to:
\beq \label{eq:sqrt:2:aux}
h(x) ~:= ~ \tfrac{x}{\sqrt{y}} + 2 \sqrt{y-x} - 2 \sqrt{y}~\leq ~0,\quad \forall y> x\geq 0.
\eeq

\paragraph{Base case.} The lemma holds trivially with $T = 0$.

\paragraph{Induction step.} Fix some $T$ and assume that the lemma holds for $T-1$. We derive:
\beq
\ts \sum_{t=0}^T {A_t}/{\sqrt{S_t}} ~\overset{\step{1}}{\leq}~ A_T / {\sqrt{S_T}}+ 2 \sqrt{S_{T-1}} ~\overset{\step{2}}{=}~ {A_T}/{\sqrt{S_T}}+ 2 \sqrt{S_T - A_T } ~\overset{\step{3}}{\leq}~ 2 \sqrt{S_T}, \nn
\eeq
\noi where step \step{1} uses the inductive hypothesis for $T-1$, namely that $\sum_{t=0}^{T-1} {A_t}/{\sqrt{S_t}} \leq 2 \sqrt{S_{T-1}}$; step \step{2} uses $S_T := 1+\sum_{t=0}^T A_t$; step \step{3} uses Inequality (\ref{eq:sqrt:2:aux}) with $y=S_T$ and $x=A_T$.

\end{proof}
\end{lemma}

\begin{lemma} \label{lemma:online:ppp} Assume $\{A_t\}_{t=0}^{T}$ is a nonnegative sequence. Then
\[
\ts \sum_{t=0}^T \tfrac{A_t}{ 1 + \sum_{i=0}^t A_i} ~\leq~\ln( 1 + \sum_{i=0}^T A_i ).
\]
\begin{proof} This lemma extends the result of Lemma 3.2 in \cite{ward2020adagrad}.

\noi Define $S_t := 1+\sum_{i=0}^t A_i$.


\paragraph{Auxiliary inequality.} Let $h(x) := x + \ln(1-x )$, where $x\in [0,1)$. Since $h(0)=0$ and $h'(x) = -\tfrac{x}{1-x}\leq 0$, we have
\beq \label{lemma:online:ppp:aux}
h(x)~:= ~x + \ln(1 - x) \leq 0,\quad\forall x \in [0,1).
\eeq

\paragraph{Base case.} For $T=0$, we have $A_0=S_0$, resulting in:
\beq
 \tfrac{A_0}{1 + A_0} - \ln(1+ A_0)~\overset{}{=}~\tfrac{A_0}{1 + A_0} + \ln(1 - \tfrac{A_0}{1 + A_0}) ~ \overset{\step{1}}{\leq}~0,\nn
\eeq
\noi where step \step{1} uses Inequality (\ref{lemma:online:ppp:aux}) that $h(x)\leq 0$ for all $x\in [0,1)$ with $x=\tfrac{A_0}{1 + A_0}$.

\paragraph{Induction step.} Now, fix some $T$ and assume that the claim holds for $T-1$. Then
\beq
\ts \sum_{t=0}^T \tfrac{A_t}{S_t} ~\overset{\step{1}}{\leq}~ \ts \tfrac{A_T}{S_T}+\ln(S_{T-1} ) ~\overset{\step{2}}{\leq}~ \ts - \ln(1 - \tfrac{A_T}{S_T}) + \ln(S_T - A_T) ~\overset{\step{3}}{=}~ \ts \ln(S_T) , \nn
\eeq
\noi where step \step{1} uses the inductive hypothesis $T-1$, namely that $\sum_{t=0}^{T-1}\tfrac{A_{t}}{S_{t}}\leq \ln(S_{T-1})$; step \step{2} uses $S_T := 1+\sum_{t=0}^T A_t$ and Inequality (\ref{lemma:online:ppp:aux}) with $x = \tfrac{A_T}{S_T}$; step \step{3} uses $\ln(a) - \ln(b) = \ln(\tfrac{a}{b})$ for all $a,b>0$.

\end{proof}
\end{lemma}

\section{Proof for Section \ref{sect:iter:Init} } \label{app:sect:iter:Init}

\subsection{Proof of Lemma \ref{lemma:descent:ncvx}}
\label{app:lemma:descent:ncvx}

\begin{proof} Let $\gamma_t \in (0,1]$ be an arbitrary constant.

\noi Define the quadratic surrogate $\PP(\eta) :=  \frac{L_t}{2} \|\v_t-\x_t\|_2^2 \eta^2 + \eta\cdot\left( \la \v_t-\x_t,\g_t\ra + h(\v_t) - h(\x_t) \right)$.

\noi For any $\x_t\in\mathcal{X}$, the generalized Frank-Wolfe gap is defined as $$\mathcal{G}(\x_t) ~:=~ \left\{ \langle \nabla f(\mathbf{x}_t), \mathbf{x}_t - \tilde{\v}_t \rangle + h(\mathbf{x}_t) - h(\tilde{\v}_t) \right\},~\text{where}~\tilde{\v}_t \in \text{LMO}(\nabla f(\x_t)).$$

\paragraph{Bounding the Descent $F(\x_{t+1}) - F(\x_{t})$.} Using the $L$-smoothness of $f(\x)$, we have:
\beq \label{eq:opt:f}
f(\x_{t+1})~\leq~ f(\x_t)+\la\x_{t+1}-\x_t,\nabla f(\x_t)\ra+ \tfrac{L}{2}\|\x_{t+1}-\x_t\|_2^2.
\eeq
\noi By the convexity of $h(\cdot)$, the update $\mathbf{x}_{t+1} = (1-\bar{\eta}_t)\mathbf{x}_t + \bar{\eta}_t \mathbf{v}_t$ satisfies:
\begin{equation} \label{eq:opt:h}
h(\mathbf{x}_{t+1}) ~\leq ~ (1-\bar{\eta}_t) h(\mathbf{x}_t) + \bar{\eta}_t h(\mathbf{v}_t).
\end{equation}
\noi Combining Inequalities (\ref{eq:opt:f}) and (\ref{eq:opt:h}) yields:
\beq \label{eq:333:bound:1}
&& \ts F(\x_{t+1}) - F(\x_{t}) \nn\\
&\leq&\ts \la\x_{t+1}-\x_t,\nabla f(\x_t)\ra + \bar{\eta}_t \left( h(\v_t) - h(\x_t)\right)  + \tfrac{L}{2}\|\x_{t+1}-\x_t\|_2^2 \nn\\
&\overset{\step{1}}{=}&\ts \la \x_{t+1}-\x_t,\nabla f(\x_t) - \g_t \ra  +  \PP(\bar{\eta}_t) +  \big (\tfrac{L}{2} - \tfrac{L_t}{2}\big)\|\x_{t+1}-\x_t\|_2^2  \nn\\
&\overset{\step{2}}{\leq}&\ts \tfrac{L}{2} \|\x_{t+1}-\x_t\|_2^2  - \tfrac{L_t}{4} \|\x_{t+1}-\x_t\|_2^2+ \tfrac{1}{L_t}\|\nabla f(\x_t) - \g_t\|_2^2 +  \PP(\gamma_t)\nn\\
&\overset{\step{3}}{=}&\ts \tfrac{L \delta_t^2}{2 L_t^2}   - \tfrac{ \delta_t^2}{4 L_t}   + \tfrac{s_t^2}{L_t}  +  \PP(\gamma_t),
\eeq
\noi where step \step{1} uses $\bar{\eta}_t  (\v_t-\x_t) =\x_{t+1}-\x_t$, and the definition of $\PP(\bar{\eta}_t)$; step \step{2} uses $\la\mathbf{a},\mathbf{b}\ra\leq \tfrac{L_t}{4}\|\mathbf{a}\|_2^2+\tfrac{1}{L_t}\|\mathbf{b}\|_2^2$ for all $\mathbf{a},\mathbf{b}\in\Rn^n$, and the optimality of $\eta_t$ that $\PP(\bar{\eta}_t)\leq \PP(\gamma_t)$ for all $\gamma_t \in (0,1]$; step \step{3} uses $\delta_t=L_t\|\x_{t+1} - \x_t\|$ and $s_t=\|\nabla f(\x_t) - \g_t\|$.

\paragraph{Key Inequality Bounding $\PP(\gamma_t)$.} We further bound $\PP(\gamma_t)$, as follows:
\beq \label{eq:333:bound:2}
\ts \PP(\gamma_t) &:=&\ts \tfrac{\gamma_t^2 L_t}{2}\|\v_t-\x_t\|_2^2 + \gamma_t \left( \la \v_t-\x_t,\g_t \ra + h(\v_t) - h(\x_t) \right)   \nn\\
&\overset{\step{1}}{\leq}&\ts \tfrac{\gamma_t^2 L_t}{2}D^2  +   \gamma_t \left( \la \tilde{\v}_t-\x_t,\g_t \ra + h(\tilde{\v}_t) - h(\x_t) \right)  \nn\\
&\overset{\step{2}}{=}&\ts  \tfrac{\gamma_t^2 L_t}{2}D^2 -\gamma_t \GG(\x_t)  + \gamma_t \la\tilde{\v}_t-\x_t  ,\g_t - \nabla f(\x_t)\ra   \nn\\
&\overset{\step{3}}{\leq}&\ts \tfrac{\gamma_t^2 L_t}{2}D^2 - \gamma_t \GG(\x_t) + \gamma_t  D  \|\g_t - \nabla f(\x_t)\|  \nn\\
&\overset{\step{4}}{\leq}&\ts \tfrac{\gamma_t^2 L_t}{2}D^2 - \gamma_t \GG(\x_t) + \tfrac{s_t^2}{2 L_t} +\tfrac{\gamma_t^2 L_t}{2}D^2,
\eeq
\noi where step \step{1} uses $\|\v_t-\x_t\|\leq D$ as $\v_t,\x_t\in\mathcal{X}$, and the optimality of $\v_t \in \arg \min_{\v \in \mathcal{X}}\la \v-\x_t, \g_t \ra + h(\v)$ that
\beq
\la \v_t,\g_t\ra + h(\v_t)~\leq~ \la \tilde{\v}_t,\g_t\ra + h(\tilde{\v}_t);\nn
\eeq
\noi step \step{2} uses the definition of $\GG(\x_t)$; step \step{3} uses $\la\mathbf{a},\mathbf{b}\ra\leq\|\mathbf{a}\|\|\mathbf{b}\|$ and $\|\tilde{\v}_t-\x_t\|\leq D$; step \step{4} uses $s_t=\|\g_t-\nabla f(\x_t)\|$ and $ab\leq \tfrac{1}{2}a^2+\tfrac{1}{2}b^2$ for all $a,b\in\Rn$ with $a=\tfrac{s_t}{\sqrt{L_t}}$ and $b=\gamma_t D\sqrt{L_t}$.

Finally, combining Inequalities (\ref{eq:333:bound:1}) and (\ref{eq:333:bound:2}) results in:
\beq
\ts \gamma_t\GG(\x_t)   + F(\x_{t+1})  - F(\x_{t}) + \tfrac{ \delta_t^2}{4 L_t}~ \leq~  \tfrac{L \delta_t^2}{2 L_t^2}     + \tfrac{s_t^2}{L_t}  +\gamma_t^2 L_t D^2 + \tfrac{s_t^2}{2 L_t}   ~\overset{}{\leq} ~  \tfrac{L\delta_t^2}{2 L_t^2} +  \tfrac{2 s_t^2}{L_t}  + \gamma_t^2 L_t D^2. \nn
\eeq

\end{proof}

\section{Proof for Section \ref{sect:iter:FS} }
\label{app:sect:iter:FS}

\subsection{Proof of Lemma \ref{lemma:kappa}}
\label{app:lemma:kappa}

\begin{proof} We now focus on ALFCG-D and ALFCG-FS.

From the definition of $L_{t+1}^2 = \rho^2 \left( 1 + \sum_{i=0}^{t} L_i^2 \|\mathbf{x}_{i+1}-\mathbf{x}_i\|_2^2 \right)$, we isolate the last term of the summation for all $t\geq 0$:
\beq
\ts L_{t+1}^2 ~= ~\underbrace{\ts \rho^2 \big( 1 + \sum_{i=0}^{t-1} L_i^2 \|\mathbf{x}_{i+1}-\mathbf{x}_i\|_2^2 \big)}_{L_t^2} + \rho^2 L_t^2 \|\mathbf{x}_{t+1}-\mathbf{x}_t\|_2^2.\nn
\eeq
\noi This simplifies to the recursive relation: $L_{t+1}^2 = L_t^2 \left( 1 + \rho^2 \|\mathbf{x}_{t+1}-\mathbf{x}_t\|_2^2 \right)$. Dividing both sides by $L_t^2$ yields the squared ratio:
\beq
\tfrac{L_{t+1}^2}{L_t^2} ~=~ 1 + \rho^2 \|\mathbf{x}_{t+1}-\mathbf{x}_t\|_2^2.\nn
\eeq
\noi Taking the square root, we obtain the following bound for all $t \geq 0$:
\beq
\tfrac{L_{t+1}}{L_t} ~=~ \sqrt{1 + \rho^2 \|\mathbf{x}_{t+1}-\mathbf{x}_t\|_2^2} ~\leq ~\sqrt{1 + \rho^2 D^2}\leq 1 + \rho D,\nn
\eeq
\noi where $D$ is the diameter of the constraint set $\mathcal{X}$.

\end{proof}

\subsection{Proof of Lemma \ref{lemma:bound:F:ncvx}}
\label{app:lemma:bound:F:ncvx}

\begin{proof} We now focus on ALFCG-D.

\noi We let $L_{t} = \rho \big( 1 + \sum_{i=0}^{t-1} \delta_i^2 \big)^{1/2}$, where $\delta_i := L_i \|\x_{i+1}-\x_i\|$.

\noi For all $0\leq t\leq T$, we let $\gamma_t = \tfrac{L_0}{L_t \sqrt{T+1}}$. We define $F_t^+:= F(\x_t) - F_*$.

\paragraph{Part (a).} We have from Inequality (\ref{eq:analysis:first}):
\beq
\EEE[\gamma_t\GG(\x_t) + F^+_{t+1} - F^+_t +\tfrac{\delta_t^2}{4 L_t} ]   ~~\leq~~  \EEE[\tfrac{\delta_t^2 L}{2 L_t^2}    +  \tfrac{2 s_t^2}{ L_t} + \gamma_t^2 L_t D^2] ~~\overset{\step{1}}{=}~~\EEE[\tfrac{\delta_t^2 L}{2 L_t^2}  + \tfrac{L_0^2 D^2 }{L_t (T+1)} ],\label{eq:to:be:view:0}
\eeq
\noi where step \step{1} uses $\s_t=\zero$ for the deterministic setting, and $\gamma_t = \tfrac{L_0}{L_{t} \sqrt{T+1}}$. Dropping the non-negative term $\gamma_t\GG(\x_t)$ and multiplying both sides by $L_t$ leads to:
\beq \label{eq:to:tel:D}
\ts 0 ~~\leq~~ \EEE[ L_t ( F^+_{t} - F^+_{t+1}) + \tfrac{L_0^2 D^2}{T+1} + \ts \tfrac{L \delta_t^2}{2 L_{t}}  - \tfrac{\delta_t^2 }{4}].
\eeq
\noi By telescoping Inequality (\ref{eq:to:tel:D}) over $t$ from $0$ to $T$ and dividing both sides by $L_{T+1}$, we obtain:
\beq
\ts 0 & \leq& \ts \tfrac{1}{L_{T+1}}\sum_{t=0}^T [ \tfrac{ L_0^2 D^2 }{ T+1} + L_t (F^+_{t} - F^+_{t+1}) + \tfrac{ L \delta_t^2}{2 L_{t}}  - \tfrac{ \delta_t^2}{4}  ] \nn\\
&\overset{\step{1}}{\leq}& \ts \rho D^2 +  \max_{t=0}^T F^+_{t} + \tfrac{L \kappa}{2 L_{T+1}}  \sum_{t=0}^T \tfrac{\delta_t^2}{ \rho \sqrt{1 + \sum_{i=0}^t \delta_i^2 }} - \tfrac{1}{4 L_{T+1}} \cdot \sum_{t=0}^T \delta_t^2\nn\\
&\overset{\step{2}}{\leq}& \ts \rho D^2 +   \max_{t=0}^T F^+_{t} + \tfrac{L \kappa}{2 L_{T+1}} \cdot \frac{2}{\rho} \sqrt{ 1 + \sum_{t=0}^T \delta_t^2 }    - \tfrac{1}{4\rho^2 L_{T+1}} (  L_{T+1}^2 - \rho^2 )\nn\\
&\overset{\step{3}}{\leq}& \ts \rho D^2 +   \max_{t=0}^T F^+_{t} + \tfrac{ L \kappa}{\rho^2} \ts - \tfrac{L_{T+1} }{4\rho^2}  + \tfrac{1}{4\rho},\nn
\eeq
\noi where step \step{1} uses $\tfrac{L_{t+1}}{L_t}\leq \kappa$, Lemma \ref{eq:ab:sort}, and $L_{t+1}^2 = \rho^2   + \rho^2 \sum_{i=0}^t \delta_i^2$; step \step{2} uses Lemma \ref{lemma:sqrt:2}; step \step{3} uses $\tfrac{L_0}{L_{T+1}}\leq 1$. Multiplying both sides by $4\rho^2$ leads to:
\beq \label{eq:LT1:upperbound:D}
L_{T+1} ~\leq ~ \left(  4\rho^2 \max_{t=0}^T F^+_{t}\right)+\underbrace{\ts \rho + 4 L \kappa +  4 \rho^3 D^2 }_{:= L'}.
\eeq
The upper bound for $L_{T+1}$ is established in Inequality (\ref{eq:LT1:upperbound:D}), but it depends on the unknown variable $\max_{t=0}^T F_t^+$.

\paragraph{Part (b).} We now prove that $(\max_{t=0}^{T} F_t^+)$ is always bounded above by a universal constant $\overline{F}$. In view of Inequality (\ref{eq:to:be:view:0}), we have:
\beq
\gamma_t\GG(\x_t) + F^+_{t+1} ~\leq~ \ts F^+_t  + \tfrac{D^2 \rho L_0 }{ L_t (T+1)} +       \tfrac{L \delta_t^2}{2 L_t^2} ~\leq ~ F^+_t  + \tfrac{D^2 \rho }{ T+1} +  \tfrac{ L\kappa^2 \delta_t^2}{2 L_{t+1}^2}.\nn
\eeq
\noi Summing this inequality over $t$ from $0$ to $T$ yield:
\beq
&& \ts F^+_{T+1} - F^+_{0} - \rho D^2 + \sum_{t=0}^T \gamma_t\GG(\x_t) ~~\leq~~ \ts       \tfrac{L\kappa^2}{2} \sum_{t=0}^T \tfrac{\delta_t^2}{L_{t+1}^2} \nn\\
&\overset{\step{1}}{\leq}& \ts \tfrac{L \kappa^2}{2 \rho^2 } \cdot \sum_{t=0}^T \tfrac{\delta_t^2}{1 +  \sum_{i=0}^t \delta_i^2} ~~\overset{\step{2}}{\leq}~~ \ts  \tfrac{L \kappa^2}{2\rho^2} \cdot  \ln( 1 + \sum_{t=0}^T \delta_t^2 )  ~~\overset{\step{3}}{=} ~~\tfrac{L \kappa^2}{2\rho^2} \cdot  \ln( \tfrac{L_{T+1}^2}{\rho^2})  \nn\\
& = & \ts \tfrac{L \kappa^2}{\rho^2} \cdot  \ln( \tfrac{L_{T+1}}{\rho})  ~~\leq ~~  \tfrac{ L \kappa^2}{\rho^2} (\ln (L_{T+1}) + |\ln(\rho)|),  \nn
\eeq
\noi where step \step{1} uses the definition of $L_{t+1}$; step \step{2} uses Lemma (\ref{lemma:online:ppp}) with $A_t = \delta_t^2$; step \step{3} uses $L_{t}^2 = \rho^2 \big( 1 + \sum_{i=0}^{t-1} \delta_i^2 \big)$.

\noi We define $\omega:=\tfrac{ L \kappa^2}{\rho^2}$. For any $\lambda>0$, we derive:
\beq \label{eq:F:GG:Det}
\ts F^+_{T+1} + \sum_{t=0}^T \gamma_t\GG(\x_t) &\leq&  \ts F^+_{0} + D^2 \rho + \omega |\ln(\rho)|  + \omega \ln (L_{T+1}) \nn\\
&\overset{\step{1}}{\leq}& \ts  F^+_{0} +  D^2\rho     + \omega  |\ln(\rho)|  + \omega |\ln(\lambda)|  + \omega \lambda L_{T+1}  \nn\\
&\overset{\step{2}}{\leq}& \ts F^+_{0} + D^2\rho    + \omega  |\ln(\rho)|  + \omega |\ln(\lambda)|  + \omega \lambda \big( L' + 4\rho^2 \max_{t=0}^T F_t^+ \big) \nn\\
&\overset{\step{3}}{\leq}& \ts \underbrace{\ts F^+_{0} +  D^2\rho    + \omega |\ln(\rho)|  + \omega |\ln(\lambda)|  + \omega \lambda L' }_{:= \zeta}  + \tfrac{1}{2} \max_{t=0}^T F_t^+ ,
\eeq
\noi where step \step{1} uses $\ln(x) \leq |\ln(\lambda)|+\lambda x $ with $x=L_{T+1}$ for all $\lambda>0$; step \step{2} uses $L_{T+1}\leq L' + 4 \rho^2 \max_{t=0}^T F_t^+$; step \step{3} uses the choice $\lambda = \tfrac{1}{8 \omega \rho^2}$. Dropping the nonnegative term $\sum_{t=0}^T \gamma_t\GG(\x_t)$ and apply Lemma \ref{lemma:A:alpha} with $\alpha=\tfrac{1}{2}$ and $A_t=F_{t}^+$, we have:
\beq
F_{t}^+~ \leq~ \max(F_{0}^+,2 \zeta) ~=~ 2 \zeta ~:= ~\overline{F}.\nn
\eeq

\paragraph{Part (c).} We have from Inequality (\ref{eq:LT1:upperbound:D}):
\beq
L_{T+1} ~\leq~ \ts L' +  4 \rho^2 \max_{t=0}^T F_t^+  ~\leq ~ \ts  L'  + 4 \rho^2 \overline{F} ~:=~\overline{L}.\nn
\eeq

\paragraph{Part (d).} We have from Inequality (\ref{eq:F:GG:Det}):
\beq
\ts \sum_{t=0}^T \gamma_t\GG(\x_t) ~\leq~ \ts  \zeta + \tfrac{1}{2}\overline{F}  ~=  ~ \ts  \overline{F}.\nn
\eeq

\end{proof}

\subsection{Proof of Theorem \ref{theorem:iter:complexity:ncvx}}
\label{app:theorem:iter:complexity:ncvx}

\begin{proof} Define $C:=\tfrac{\overline{F} \cdot \overline{L} }{\rho}$ and set $L_0=\rho$.

\noi Fix an integer $T\geq 0$. For all $t = 0,1,\ldots,T$, let $\gamma_t := \tfrac{L_0}{L_t \sqrt{T+1}}$.

Using Lemma \ref{lemma:bound:F:ncvx:VR}(d) and the fact that $\gamma_t\geq \gamma_T$ for all $t\leq T$, we obtain:
\beq
\ts \sum_{t=0}^{T} \GG(\x_t)~ \leq ~ \tfrac{1}{\gamma_{T}} \sum_{t=0}^T \gamma_t \GG(\x_t) ~\leq~ \tfrac{1}{\gamma_{T}} \overline{F} ~ = ~ \overline{F} \cdot \tfrac{L_{T} \sqrt{T+1}}{L_0}~ = C \sqrt{T+1}. \nn
\eeq
\noi This leads to:
\beq
\ts \tfrac{1}{T+1} \sum_{t=0}^{T} \GG(\x_t) ~\leq~ \tfrac{C}{\sqrt{T+1}}.\nn
\eeq
\noi In other words, for any $\epsilon > 0$, the algorithm finds an $\epsilon$-approximate solution (i.e., $\min_{0 \le t \le T}\mathcal{G}(\mathbf{x}_t) \leq \epsilon$) within at most $T = \lceil C^2 \epsilon^{-2} - 1\rceil$ iterations.

\end{proof}

 \subsection{Proof of Lemma \ref{lemma:f:nablaf}}
\label{app:lemma:f:nablaf}

\begin{proof} Using Lemma \ref{eq:spider:estimator}, we have:
\beq
\ts \EEE [\|\g_t- \nabla f(\x_t)\|_2^2] - \|\g_{t-1} - \nabla f(\x_{t-1})\|_2^2 ~\leq~ \ts  \tfrac{L^2}{b}\EEE [ \|\x_t - \x_{t-1}\|_2^2 ].\nn
\eeq
Telescoping the above inequality over $t$ from $(r_{t}-1)q+1$ to $t$, where $t\leq r_{t} q -1$, we obtain:
\beq \label{eq:approximation:vf}
\ts \EEE[\|  \g_t - \nabla f(\x_t)    \|_2^2] &\overset{}{\leq} &  \ts \left(\tfrac{L^2}{b} \sum_{i= (r_{t}-1)q+1 }^{t}\EEE[ \|\x_{i} - \x_{i-1}\|_2^2 ] \right) + \EEE[\|\g_{ (r_t-1)q} - \nabla f (\x_{  (r_t-1)q })\|_2^2]    \nn\\
&\overset{\step{1}}{=} &  \ts \left( \tfrac{L^2}{b} \sum_{i= (r_{t}-1)q }^{t-1} \ts \EEE[ \|\x_{i+1} - \x_{i}\|_2^2 ] \right) + 0,
\eeq
\noi where step \ding{172} uses $\g_{j} = \nabla f (\x_{j})$ when $j$ is a multiple of $q$. Notably, Inequality (\ref{eq:approximation:vf}) holds for every $t$ of the form $t = (r_{t}-1)q$, since at these points we have $\g_t=\nabla f(\x_t)$.

\end{proof}

\subsection{Proof of Lemma \ref{lemma:sum:y:two}}
\label{app:lemma:sum:y:two}

\begin{proof} Let $\{L_t,M_t\}_{t=0}^{\infty}$ be two nonnegative sequences. Assume $1\leq \tfrac{L_{t+1}}{L_t}\leq \kappa$, where $\kappa\geq 1$.

\paragraph{The first auxiliary inequality.} For any integer $t\geq 0$, we obtain:
\beq \label{eq:length:tt}
t -  (r_t -1) q   ~\overset{\step{1}}{=}  ~   t -  ( \lfloor\tfrac{t}{q}\rfloor+1  -1) q  ~=  ~ t -  \lfloor\tfrac{t}{q}\rfloor  q ~ \overset{\step{2}}{\leq} ~  q - 1,
\eeq
\noi where step \step{1} uses $r_t := \lfloor\tfrac{t}{q}\rfloor+1$; step \step{2} uses $t -  \lfloor\tfrac{t}{q}\rfloor  q\leq q-1$ for all integer $t\geq 0$ and $q\geq 1$.

\paragraph{The second auxiliary inequality.} For any $t$ with $t\geq (r_t-1)q$, we have the following results:
\beq \label{eq:vv:rt1:q:rt1q}
\tfrac{ L_t }{ L_{(r_t-1)q} }  ~=  ~ \tfrac{ L_{(r_t-1)q+1}}{ L_{ (r_t-1)q }} \cdot \tfrac{ L_{(r_t-1)q+2} }{ L_{ (r_t-1)q+1 }}   \ldots \cdot \tfrac{ L_t}{L_{t-1}}~~\overset{\step{1}}{\leq}  ~~ \kappa^{q-1},
\eeq
\noi where step \step{1} uses the fact that the product length is at most $\left( t - (r_{t}-1)q  \right)$ and Inequality (\ref{eq:length:tt}).

\paragraph{Part (a).} For all $t$ with $(r_t-1)q \leq t \leq r_t q-1$, we have:
\beq\label{eq:vy:vy:2}
\ts \sum_{j=(r_{t}-1)q}^{t}  \sum_{i=(r_{j}-1)q}^{j-1} M_i &\overset{\step{1}}{=} & \ts \sum_{i=(r_{t}-1)q}^{t-1} (t-i) M_i \nn\\
&\overset{\step{2}}{\leq} & \ts \left( [t-1]- [(r_t-1)q] + 1\right) \cdot \sum_{i=(r_{t}-1)q}^{t-1}  M_i  \nn\\
&\overset{\step{3}}{\leq} & \ts (q-1)\sum_{i=(r_{t}-1)q}^{t-1}M_i ,   \nn
\eeq
\noi where step \step{1} uses basic reduction; step \step{2} uses $i\geq (r_{t}-1)q$; step \step{3} uses Inequality (\ref{eq:length:tt}).

\paragraph{Part (b).} For all $t$ with $(r_t-1)q \leq t \leq r_t q-1$, we have:
\beq \label{eq:vy:vy}
\ts \sum_{j={(r_t-1)q}}^t \left( \tfrac{1}{L_j} \sum_{i=(r_{j}-1)q}^{j-1} M_i  \right) &\overset{\step{1}}{\leq} & \ts \sum_{j={(r_t-1)q}}^t  \left( \tfrac{1}{L_j}  \sum_{i=(r_{j}-1)q}^{t-1}  M_i \right) \nn\\
&\overset{\step{2}}{=} & \ts \sum_{j={(r_t-1)q}}^t \left( \tfrac{1}{L_j} \cdot  \sum_{i=(r_{t}-1)q}^{t-1}  M_i \right) \nn\\
&\overset{\step{3}}{=} & \ts \sum_{i=(r_{t}-1)q}^{t-1}  \left(  M_i \cdot   \sum_{j={(r_t-1)q}}^t \tfrac{1}{L_j}  \right) \nn\\
&\overset{\step{4}}{\leq} & \ts \sum_{i=(r_{t}-1)q}^{t-1}  \left(  M_i \cdot   \sum_{j={(r_t-1)q}}^t \tfrac{\kappa^{q-1}}{L_t}  \right) \nn\\
&\overset{\step{5}}{\leq} & \ts \ts q \kappa^{q-1} \cdot \sum_{i=(r_{t}-1)q}^{t-1} \tfrac{M_i}{L_i},\nn
\eeq
\noi where step \step{1} uses $j\leq t$ for all $j\in [ (r_t-1)q,t]$; step \step{2} uses $r_j=r_t$ for all $j\in [ (r_t-1)q,t]$ with $t\in [(r_t-1)q , r_t q-1]$; step \step{3} uses the fact that $\sum_{j=\underline{j}}^{\overline{j}} (\mathbf{a}_j \sum_{i=\underline{i}}^{ \overline{i}} \mathbf{b}_i) = \sum_{i=\underline{i}}^{ \overline{i}} ( \mathbf{b}_i \sum_{j=\underline{j}}^{  \overline{j}} \mathbf{a}_j )$ for all $\underline{i}\leq\overline{i}$ and $\underline{j}\leq \overline{j}$; step \step{4} uses $L_j\leq L_t$ as $j\leq t$; step \step{5} uses $t- (r_t-1)q \leq q$; step \step{6} uses Inequality (\ref{eq:vv:rt1:q:rt1q}); step \step{7} uses $i\geq (r_t-1)q$.

\paragraph{Part (c).} We have the following results:
\beq\label{eq:vy:vy:2}
\ts \sum_{t=0}^{T}  [\sum_{i=(r_{t}-1)q}^{t-1} M_i]  ~ \overset{\step{1}}{\leq}~ \ts \left( t - (r_{t}-1)q   \right) \sum_{t=0}^{T} M_t\nn~\overset{\step{2}}{\leq}~\ts (q-1) \sum_{t=0}^{T} M_t ,\nn
\eeq
\noi step \step{1} uses the fact that the length of the summation is $\left( t - (r_{t}-1)q\right)$; step \step{2} uses Inequality (\ref{eq:length:tt}).

\end{proof}

\subsection{Proof of Lemma \ref{lemma:bound:F:ncvx:VR}} \label{app:lemma:bound:F:ncvx:VR}

\begin{proof} We define $F^+_t := F(\x_t) - F_*$.

\noi Fix an integer $T\geq 0$. For all $t = 0,1,\ldots,T$, let $\gamma_t := \tfrac{L_0}{L_t \sqrt{T+1}}$.

\noi Let $L_{t} = \rho \big(1 + \sum_{i=0}^{t-1}\delta_i^2\big)^{1/2}$, where $\delta_t = L_t \|\x_{t+1}-\x_t\|$.

\noi For all $t$ with $(r_t-1)q \leq t\leq r_t q-1$, we have from Inequality (\ref{eq:analysis:first}):
\beq \label{eq:hhhh:VR}
\ts \EEE[\gamma_t\GG(\x_t)  + F^+_{t+1} - F^+_t + \tfrac{\delta_t^2}{4 L_t} - \tfrac{\delta_t^2 L}{2 L_t^2} ] &\leq& \EEE[   \tfrac{2\|\s_t\|_2^2}{ L_t}  +  \gamma_t^2 L_t D^2  ] \nn\\
&\overset{\step{1}}{\leq}& \ts \EEE[ \tfrac{2 L^2}{b L_t} \sum_{i=(r_t-1)q}^{t-1} \|\x_{i+1}-\x_{i}\|_2^2  +  \tfrac{ \rho^2 D^2}{L_t (T+1)} ],~~~~~
\eeq
\noi where step \step{1} uses Lemma \ref{lemma:f:nablaf} and $\gamma_t = \tfrac{L_0}{L_t \sqrt{T+1}}$.

\paragraph{Part (a).} Dropping the term $\gamma_t\GG(\x_t)$ and multiplying both sides of Inequality (\ref{eq:hhhh:VR}) by $L_t$ yields:
\beq
0  &\leq& \ts \EEE[ L_t (F^+_t - F^+_{t+1} )  +  \tfrac{\delta_t^2 L }{2 L_{t}} -  \tfrac{\delta_t^2}{ 4 } +  \tfrac{2 L^2}{b} \sum_{i=(r_t-1)q}^{t-1} \|\x_{i+1}-\x_{i}\|_2^2  +  \tfrac{ L_0^2 D^2}{T+1}].\nn
\eeq
\noi Telescoping this inequality over $t$ from $(r_t-1)q$ to $t$ with $t\leq r_t q-1$, we have:
\beq
0 &\leq&\ts \sum_{j=(r_t-1)q}^{ {  t}}  \EEE[ L_j (F^+_j - F^+_{j+1} )  + \tfrac{L}{2} \tfrac{\delta_j^2}{L_{j}} -  \tfrac{\delta_j^2}{4}  +\tfrac{2 L^2}{b} {  \sum_{i= (r_{j}-1)q }^{j-1} \|\x_{i+1}-\x_{i}\|_2^2} + \tfrac{\rho^2 D^2 }{T+1} ]  \nn\\
&\overset{\step{1}}{\leq}  & \ts \sum_{j=(r_t-1)q}^{ {  r_tq-1} } \ts  \underbrace{\ts \EEE[ L_j (F^+_j - F^+_{j+1} )  +\tfrac{L}{2} \tfrac{\delta_j^2}{L_{j}} -  \tfrac{\delta_j^2}{ 4} +\tfrac{2 L^2}{b} {  q \kappa^{q-1} \|\x_{j+1}-\x_j\|_2^2 } + \tfrac{ \rho^2 D^2 }{T+1} ] }_{ \UUU_j },
\eeq
\noi where step \step{1} uses Lemma \ref{lemma:sum:y:two} that $\sum_{j=(r_t-1)q}^t \sum_{i=(r_j-1)q}^{j-1} M_i\leq q \kappa^{q-1}\sum_{j=(r_t-1)q}^{t-1} M_j$, where $M_i=\|\x_{i+1}-\x_i\|_2^2$. We further derive the following results:
\beq
&& \ts r_t = 1,\, 0 ~\leq~ \sum_{j=0}^{q-1}  \UUU_j ;\nn\\
&&\ts r_t = 2,\, 0 ~\leq~ \sum_{j=q}^{2q-1} \UUU_j ; \nn\\
&&\ts r_t = 3,\, 0 ~\leq ~\sum_{j=2q}^{3q-1} \UUU_j;  \nn\\
&& \ts \ldots\nn\\
&& \ts r_t = s,\, 0 ~\leq~  \sum_{j=sq}^{sq-1} \UUU_j . \nn
\eeq
\noi Assume that $T=sq$, where $s\geq 0$ is an integer. Summing all these inequalities together yields:
\beq \label{eq:sum:ineq:quad:stoc:1}
0&\leq& \ts  \sum_{t=0}^{T-1} \UUU_t \nn\\
&\overset{\step{1}}{=}  &  \ts \EEE[\sum_{t=0}^{T-1} L_j (F^+_j - F^+_{j+1} )  +  \tfrac{L}{2} \tfrac{\delta_j^2}{L_{j}} -  \tfrac{\delta_j^2}{4} + \tfrac{2 L^2}{b} q \kappa^{q-1} \|\x_{t+1} - \x_t\|_2^2 + \tfrac{ \rho^2 D^2 }{T+1} ] \nn\\
&\overset{\step{2}}{\leq}  & \ts \EEE[ \tfrac{T \rho^2 D^2}{T+1}  + L_{T} \max_{t=0}^{T-1}F_t^+ +  \tfrac{L\kappa}{2}  \sum_{t=0}^{T-1} \tfrac{\delta_t^2}{L_{t+1}}     -   \tfrac{1}{4} \sum_{t=0}^{T-1} \delta_t^2 + \tfrac{2L^2}{b} q \kappa^{q-1} \tfrac{\delta_t^2}{L_t^2} ] \nn\\
&\overset{\step{3}}{\leq}  & \ts \EEE[  \rho^2 D^2  + L_{T} \max_{t=0}^{T-1}F_t^+ +  \left( \tfrac{L \kappa}{2} + \tfrac{2 L^2}{b} \tfrac{q \kappa^q}{L_0} \right) \sum_{t=0}^{T-1} \tfrac{\delta_t^2}{L_{t+1}}  - \tfrac{1}{4} \sum_{t=0}^{T-1} \delta_t^2 ] \nn\\
&\overset{\step{4}}{\leq}  & \ts \EEE[ \rho^2 D^2 + L_{T} \max_{t=0}^{T-1}F_t^+ +   \ts \left( \tfrac{L \kappa}{2} + \tfrac{2 L^2}{b} \tfrac{q \kappa^q}{L_0} \right)  \cdot \tfrac{2}{\rho^2}L_{T}  -  \tfrac{L_T^2}{4\rho^2}  + \tfrac{1}{4}  ],
\eeq
\noi where step \step{1} uses the definition of $\UUU_t$; step \step{2} uses Lemma \ref{eq:ab:sort} that $\sum_{t=0}^{T-1} L_j(F_j^+-F_{j+1}^+) \leq L_{T-1}\max_{t=0}^{T-1}F_t^+\leq L_{T}\max_{t=0}^{T-1}F_t^+$; step \step{3} uses $\tfrac{L_{t+1}}{L_t}\leq \kappa$, and $\tfrac{1}{L_t}\leq \tfrac{1}{L_0}$; step \step{4} uses
\beq
&&\ts \sum_{t=0}^{T-1} \tfrac{\delta_t^2}{L_{t+1}} ~=~ \tfrac{1}{\rho}\sum_{t=0}^{T-1} \tfrac{\delta_t^2}{ \sqrt{1 + \sum_{j=0}^t \delta_j^2}} ~\leq~ \tfrac{2}{\rho }\sqrt{ 1 + \sum_{t=0}^{T-1} \delta_t^2}~=~ \tfrac{2}{\rho^2} L_{T},\\
&&\ts \tfrac{1}{4}\sum_{t=0}^{T-1}\delta_t^2 ~=~ \tfrac{L_T^2}{4 \rho^2} - \tfrac{1}{4}.  \nn
\eeq

\noi Multiplying both sides by $\tfrac{4\rho^2}{L_T}$ yields:
\beq \label{eq:upper:Lt:FS}
\EEE[L_T] &\leq& \EEE[ 4\rho^2 \max_{t=0}^{T-1}F_t^+ + \rho^2 D^2 \cdot \tfrac{4\rho^2}{L_T}  +   8 \left( \tfrac{L \kappa}{2} + \tfrac{2 L^2}{b} \tfrac{q \kappa^q}{L_0} \right) + \tfrac{\rho^2}{L_T}  ] \nn\\
&\overset{\step{1}}{\leq}  &  \EEE[4\rho^2 \max_{t=0}^{T-1}F_t^+ + \underbrace{ 4 \rho^3 D^2  +    8 \left( \tfrac{L \kappa}{2} + \tfrac{2 L^2}{b} \tfrac{q \kappa^q}{\rho} \right)  + \rho }_{:= L'}],
\eeq
\noi where step \step{1} uses $\tfrac{1}{L_T}\leq \tfrac{1}{L_0} = \tfrac{1}{\rho}$.

The upper bound for $L_{T}$ is established in Inequality (\ref{eq:upper:Lt:FS}), but it depends on the unknown variable $\max_{t=0}^{T-1} F_t^+$.

\paragraph{Part (b).} We now prove that $(\max_{t=0}^{T-1} F_t^+)$ is always bounded above by a universal constant $\overline{F}$. Dropping the non-negative term $\tfrac{\delta_t^2}{4 L_t}$, and telescoping Inequality (\ref{eq:hhhh:VR}) over $t$ from $(r_t-1)q$ to $t$ with $t\leq r_t q-1$, we have:
\beq
0 &\leq & \ts \sum_{j=(r_t-1)q}^t \EEE[ - \gamma_j\GG(\x_j)  + \tfrac{ \rho^2 D^2}{ L_j (T+1)} + F^+_j - F^+_{j+1} + \tfrac{L}{2} \tfrac{\delta_j^2}{L_j^2} + \tfrac{2 L^2}{b} \tfrac{1}{L_j} \sum_{i=(r_j-1)q}^{j-1} \|\x_{i+1}-\x_{i}\|_2^2    ] \nn\\
&\overset{\step{1}}{\leq}& \ts \sum_{j=(r_t-1)q}^{r_t q - 1} \underbrace{\ts \EEE[  - \gamma_j\GG(\x_j)  + \tfrac{ \rho^2 D^2 }{ L_j  (T+1)} + F^+_j - F^+_{j+1} + \tfrac{L}{2} \tfrac{\delta_j^2}{L_j^2} + \tfrac{2 L^2}{b} q \kappa^{q-1} \tfrac{1}{L_j} \|\x_{j+1} - \x_j\|_2^2    }_{\KKK_j} ],\nn
\eeq
\noi where step \step{1} uses Lemma \ref{lemma:sum:y:two}(b) that $\sum_{j=(r_t-1)q}^t \tfrac{1}{L_j} \sum_{i=(r_j-1)q}^{j-1}  M_i\leq q \kappa^{q-1} \sum_{j=(r_t-1)q}^{t-1}  \tfrac{M_j}{L_j}$, where $M_j=\|\x_{j+1} - \x_j\|_2^2$. We further derive the following results:
\beq
&& \ts r_t = 1,\, 0 ~\leq~ \EEE[\sum_{j=0}^{q-1}  \KKK_j ] \nn\\
&&\ts r_t = 2,\, 0 ~\leq~ \EEE[\sum_{j=q}^{2q-1} \KKK_j ]  \nn\\
&&\ts r_t = 3,\, 0 ~\leq~ \EEE[\sum_{j=2q}^{3q-1} \KKK_j]  \nn\\
&& \ts \ldots\nn\\
&& \ts r_t = s,\, 0 ~\leq~ \EEE[ \sum_{j=sq}^{sq-1} \KKK_j ]. \nn
\eeq
\noi Assume that $T=sq$, where $s\geq 0$ is an integer. Summing all these inequalities together yields:
\beq \label{eq:sum:ineq:quad:2:stoc}
0 &\leq& \ts \EEE[\sum_{t=0}^{T-1} \KKK_t ] \nn\\
&\overset{\step{1}}{=} & \ts \EEE[ \sum_{t=0}^{T-1}  F^+_t - F^+_{t+1} - \gamma_t\GG(\x_t)  + \tfrac{ \rho^2 D^2 }{ L_t  (T+1)}  + \tfrac{L}{2} \tfrac{\delta_t^2}{L_t^2} + \tfrac{2 L^2}{b} q \kappa^{q-1} \tfrac{1}{L_t} \|\x_{t+1} - \x_t\|_2^2      ] \nn\\
&\overset{\step{2}}{\leq} & \ts \EEE[ F^+_0 - F^+_{T} - \sum_{t=0}^{T-1} \gamma_t\GG(\x_t)  +  \tfrac{\rho^2 D^2}{ L_0} \tfrac{T}{T+1}    + \big( \tfrac{L \kappa^2}{2} +  \tfrac{2 L^2}{b \rho} q \kappa^{q-1} \kappa^2 \big) \sum_{t=0}^{T-1} \tfrac{\delta_t^2}{L_{t+1}^2}    ] \nn\\
&\overset{\step{3}}{\leq} & \ts\EEE[ F^+_0 - F^+_{T} -\sum_{t=0}^{T-1} \gamma_t\GG(\x_t)  +  \rho D^2    +  \big( \tfrac{L\kappa^2}{2} +  \tfrac{2 L^2}{b \rho} q \kappa^{q+1} \big)  \tfrac{2}{\rho^2} \big(\ln(L_T) + |\ln(\rho)|\big) ] ,\nn
\eeq
\noi where step \step{1} uses the definition of $\KKK^t$; step \step{2} uses $\rho = L_0 \leq L_t$, and $\tfrac{L_{t+1}}{L_t}\leq\kappa$; step \step{3} uses $L^2_{t+1} = \rho^2 \big(1 + \sum_{i=0}^{t}\delta_i^2\big)$ and Lemma \ref{lemma:online:ppp}, leading to:
\beq
\ts \rho^2 \sum_{t=0}^{T-1} \tfrac{\delta_t^2}{ L_{t+1}^2 } ~= ~ \sum_{t=0}^{T-1} \tfrac{\delta_t^2}{ 1 + \sum_{i=0}^{t} \delta_i^2  } ~ \leq ~ \ln ( 1 + \sum_{t=0}^{T-1} \delta_t^2 )  ~ = ~ 2 \ln ( \tfrac{L_T}{\rho} )  ~ \leq ~ 2\ln (L_T) +  2|\ln (\rho)|. \nn
\eeq
We define $\omega :=\big( \tfrac{L\kappa^2}{2} +  \tfrac{2 L^2}{b \rho} q \kappa^{q+1} \big)  \tfrac{2}{\rho^2}$. For any $\lambda>0$, we derive:
\beq \label{eq:FSUM:GG}
&& \ts \EEE[ F_T^+ + \sum_{t=0}^{T-1} \gamma_t\GG(\x_t)] \nn\\
& \leq  & \ts  \EEE[ F^+_0 + \rho D^2  +  \omega |\ln(\rho)|   +    \omega \ln(L_T) ] \nn\\
&\overset{\step{1}}{\leq} & \ts  \ts \EEE[ F^+_0 +  \rho D^2  +  \omega |\ln(\rho)|    + \omega |\ln(\lambda)| +    \omega \lambda L_T ]\nn\\
&\overset{}{\leq} & \ts \EEE[ F^+_0 + \rho D^2  +  \omega |\ln(\rho)| + \omega |\ln(\lambda)| +  \omega\lambda \big( L' + 4 \rho^2 \max_{t=0}^{T-1} F_t^+ \big)]  \nn\\
&\overset{\step{2}}{\leq} & \ts \EEE[\underbrace{ \ts F^+_0 + \rho D^2  +  \omega |\ln(\rho)| + \omega |\ln(\lambda)| +  \omega\lambda L'}_{:= \zeta} + \tfrac{1}{2} \max_{t=0}^{T-1} F_t^+] ,
\eeq
\noi where step \step{1} uses $\ln(x)\leq |\ln(\lambda)|+\lambda x $, where $x = L_T$; step \step{2} uses Inequality (\ref{eq:upper:Lt:FS}) that $L_T \leq L' + 4\rho^2 \max_{t=0}^{T-1}F_t^+$; step \step{3} uses the choice $\lambda  = \tfrac{1}{8 \rho^2 \omega}$. Dropping the non-negative term $\sum_{t=0}^{T-1} \gamma_t\GG(\x_t)$ and applying Lemma \ref{lemma:A:alpha} with $\alpha=\tfrac{1}{2}$ and $A_t=F_{t}^+$, we have:
\beq
\EEE[F_{t}^+] ~ \leq ~ \max(F_{0}^+,2\zeta) ~ = ~ 2\zeta ~ := ~\overline{F}.\nn
\eeq

\paragraph{Part (c).} We have from Inequality (\ref{eq:upper:Lt:FS}):
\beq
\EEE[ L_{T+1}] ~ \leq ~\ts  L' +  4 \rho^2 \max_{t=0}^T F_t^+  ~ \leq ~ \ts L' + 4 \rho^2 \overline{F} ~ := ~ \overline{L}.\nn
\eeq

\paragraph{Part (d).} We have from Inequality (\ref{eq:FSUM:GG}):
\beq
\ts \EEE[ \sum_{t=0}^{T-1} \gamma_t\GG(\x_t)] ~ \leq ~ \zeta + \frac{1}{2} \overline{F} ~ = ~ \overline{F}.\nn
\eeq

\end{proof}

\subsection{Proof of Theorem \ref{theorem:VR}}\label{app:theorem:VR}

\begin{proof} Define $C:=\tfrac{\overline{F} \cdot \overline{L} }{\rho}$ and set $L_0=\rho$.

\noi Fix an integer $T\geq 0$. For all $t = 0,1,\ldots,T$, let $\gamma_t := \tfrac{L_0}{L_t \sqrt{T+1}}$.

\paragraph{Part (a).} Using Lemma \ref{lemma:bound:F:ncvx:VR}(d), we derive:
\beq
\ts \sum_{t=0}^{T} \GG(\x_t) ~ \leq ~ \tfrac{\overline{F}}{\gamma_{T-1}} ~ = ~ \overline{F} \cdot \tfrac{L_{T} \sqrt{T+1}}{L_0}   ~ \leq ~ C \sqrt{T+1}. \nn
\eeq
\noi This leads to:
\beq
\ts \tfrac{1}{T+1} \sum_{t=0}^{T} \GG(\x_t) ~\leq ~ \tfrac{C}{\sqrt{T+1}}.\nn
\eeq

\noi Consequently, for any $\epsilon > 0$, the algorithm finds an $\epsilon$-approximate solution in expectation (i.e., $\mathbb{E}[\min_{0 \le t \le T} \mathcal{G}(\mathbf{x}_t)] \leq \epsilon$) within at most $T = \lceil C^2 \epsilon^{-2}-1\rceil$ iterations.

\paragraph{Part (b).} Let $b$ denote the mini-batch size, and $q$ the period parameter of ALFCG-FS. Suppose the algorithm converges in $T=\lceil C^2 \epsilon^{-2}-1\rceil$ iterations. In each iteration $t$, if $t$ is a multiple of $q$ (i.e., $t \equiv 0 \pmod q$), the full-batch gradient $\nabla f(\x^t)$ is computed with a cost of $\mathcal{O}(N)$, which occurs $\lceil\tfrac{T}{q}\rceil$ times throughout the process. Otherwise, a mini-batch gradient is computed with a cost of $b$, occurring $(T-\lceil\tfrac{T}{q}\rceil)$ times. Consequently, the total complexity is:
\beq
\ts \text{Total Complexity} &=& \ts N\cdot \lceil\tfrac{T}{q}\rceil  + b  \cdot (T-\lceil\tfrac{T}{q}\rceil) \nn\\
&\leq& \ts   N \cdot \frac{T + q}{q} +  b\cdot T \nn\\
& \overset{\step{1}}{\leq } & \ts   N \cdot \frac{T + \sqrt{N}}{\sqrt{N}} + \sqrt{N} \cdot T ~=~ N + 2 \sqrt{N} \cdot T \nn\\
& \overset{\step{2}}{=} & \ts  N  +   2\sqrt{N} \cdot (\tfrac{C^2}{\epsilon^2}-1) ~= ~\OO(N + \sqrt{N}\epsilon^{-2}),\nn
\eeq
\noi where step \step{1} uses the choice that $q = b=\sqrt{N}$; step \step{2} uses $T=\tfrac{C^2}{\epsilon^2} -1$.

\end{proof}

\section{Proof of Section \ref{sect:iter:MVR1}}
\label{app:sect:iter:MVR1}

\subsection{Proof of Lemma \ref{lemma:bound:MVR1}}
\label{app:lemma:bound:MVR1}

\begin{proof} Let $L_{t} = \rho \big(1 + \sum_{i=0}^{t-1} \delta_i^2\big)^{1/2} \alpha_{t}^{-1/2}$ with $L_0=\rho$.

\noi Let $\alpha_t = \big( 1 + \sum_{i=0}^{t-1} (\beta + L_{i}^2 \|\x_{i+1}-\x_{i}\|^2) \big)^{-1/2}$.

\noi Define $s_t := \|\g_t - \nabla f(\x_t)\|$, and $\overline{s} := 3 (\sigma+G)$.

\paragraph{Part (a).} We have the following results:
\beq
F(\x) - F(\x_*) ~\overset{\step{1}}{\leq} ~ 2G\| \x - \x_*\| ~\leq~ 2G D, \nn
\eeq
\noi where step \step{1} uses the Lipschitz continuity of $f(\cdot)$ and $h(\cdot)$. We conclude that $F(\x)\leq F(\x_*) +2 GD:=\overline{F}$.

\paragraph{Part (b).} Using the update rule for the stochastic gradient estimator $\g_t=(1-\alpha_t)\g_{t-1}+\alpha_t\nabla f(\x_t;\xi_t)$, we derive the following result:
\beq
\EEE[\|\g_t\|_2^2] &{\leq}& \ts \EEE[\max \left( \|\g_{-1}\|_2^2, \|\nabla f(\x_t;\xi_t) \|_2^2 \right) ] \nn\\
&\overset{\step{1}}{\leq }& \ts \max_{i=0}^t \EEE[\|\nabla f(\x_i;\xi_i) \|_2^2] ~\overset{\step{2}}{\leq} ~2\sigma^2 + \max_{i=0}^t \EEE[2\|\nabla f(\x_i) \|_2^2]~\overset{\step{3}}{\leq}~2\sigma^2 + 2G^2,\nn
\eeq
\noi where step \step{1} uses the properties of the maximum function and the choice $\g_{-1}=\zero$; step \step{2} uses the triangle inequality and Assumption \ref{ass:regularity:expectation}(a) that $\EEE[\|\nabla f(\x;\xi) -\nabla f(\x)\|_2^2]\leq \sigma^2$; step \step{3} uses the $G$-Lipschitz continuity of $f(\cdot)$. We further derive:
\beq \label{eq:bound:on:s2}
\EEE[\|\s_t\|_2^2] ~ \leq ~ 2 \EEE[\|\g_t\|_2^2 + 2 \EEE[\|\nabla f(\x_t)\|_2^2] ~ \leq ~ 4 (\sigma^2 + G^2) + 2 G^2 ~\leq ~ \smash{\bar{s}}^2.
\eeq

\paragraph{Part (c).} We have from Inequality (\ref{eq:analysis:first}):
\beq
\EEE[\gamma_t\GG(\x_t)   + F^+_{t+1} - F^+_t +\tfrac{\delta_t^2}{4 L_t}  ] ~ \leq ~ \EEE[\tfrac{\delta_t^2 L}{2 L_t^2}    +  \tfrac{2 s_t^2}{L_t} + \gamma_t^2 L_t D^2].\nn
\eeq
Dropping the non-negative term $\gamma_t\GG(\x_t)$ and multiplying both sides by $4 L_t$ yields:
\beq
\EEE[\delta_t^2] &\leq& \EEE[  \tfrac{ 2 L \delta_t^2}{L_t}        +  8s_t^2 + 4 \gamma_t^2 L_t^2 D^2  +  4 L_t(F^+_t  - F^+_{t+1} )  ] \nn\\
& \overset{\step{1}}{\leq} & \EEE[ 2 L \delta_t D       + 8 ( 6 G^2 + 4 \sigma^2 )   + 4 \gamma_t^2 L_t^2 D^2  +  8 L_t G\|\x_t - \x_{t+1}\|  ] \nn\\
& \overset{\step{2}}{\leq} & \EEE[2 L \delta_t D +   ( 48 G^2 + 32 \sigma^2 + 4\rho^2 D^2 )  +    8 G \delta_t ] \nn\\
& \overset{}{\leq} & 2 \EEE[ \max\left( 2 L \delta_t D + 8 G \delta_t,   48 G^2 + 32 \sigma^2   + 4 \rho^2 D^2 \right) ] \nn\\
& \overset{}{\leq} & \max\left( ( 4 L D + 16 G)^2,    2 \cdot (48 G^2 + 32 \sigma^2 + 4 \rho^2 D^2) \right)  \nn\\
& \overset{}{\leq} & \left(  4 L D + 16 G  +  10 G + 8 \sigma + 3 \rho D \right)^2 := \smash{\bar{\delta}}^2, \nn
\eeq
\noi where step \step{1} uses $\delta_t\leq L_t D$, and the fact that $F(\cdot)$ is $2G$-Lipschitz continuous; step \step{2} uses $\gamma_t:=\tfrac{L_0}{2L_t} (\vartheta T^{-1/4} + T^{-1/2}) \leq \tfrac{L_0}{L_t}=\tfrac{\rho}{L_t}$, and $\delta_t=L_t \|\x_t - \x_{t+1}\|$.

\paragraph{Part (d).} Using the fact that $L_{t} = \rho \big(1 + \sum_{i=0}^{t-1} \delta_i^2\big)^{1/2} \alpha_{t}^{-1/2}$ with $L_0=\rho$ and $\alpha_t = \big( 1 + \sum_{i=0}^{t-1} (\beta + L_{i}^2 \|\x_{i+1}-\x_{i}\|^2) \big)^{-1/2}$, we derive the following inequality for all $t\geq 0$:
\beq
\tfrac{L_{t+1}^2}{L_t^2} ~ = ~  \tfrac{ \left(1 + \sum_{i=0}^{t} \delta_i^2\right)  }{  \left(1 + \sum_{i=0}^{t-1} \delta_i^2\right) } \cdot \tfrac{ \big( 1 + \sum_{i=0}^{t} (\beta + \delta_i^2) \big)^{1/2} }{ \big( 1 + \sum_{i=0}^{t-1} (\beta + \delta_{i}^2 ) \big)^{1/2} } ~ \overset{\step{1}}{\leq} ~ \big(1 + \delta_t\big)\cdot \sqrt{1 + \delta_t} ~\leq ~ \big(1+\delta_t\big)^{2}, \nn
\eeq
\noi where step \step{1} uses $\tfrac{1+a}{1+b}\leq 1 + \tfrac{a}{b}$ for all $a\geq 0$ and $b>0$. Taking the square root of both sides yields: $\tfrac{L_{t+1}}{L_t} \leq 1 + \overline{\delta} :=\kappa$.

\end{proof}

\subsection{Proof of Lemma \ref{lemma:MVR1:S}}
\label{app:lemma:MVR1:S}

\begin{proof} We let $L_{t} = \rho \big( 1 + \sum_{i=0}^{t-1} L_{i}^2 \|\x_{i+1}-\x_{i}\|^2 \big)^{1/2}\alpha_{t}^{-1/2}$, where $L_0=\rho$.

\noi We let $\alpha_t = \big( 1 + \sum_{i=0}^{t-1} (\beta + \delta_{i}^2) \big)^{-1/2}$, and $\Gamma:= 1 + \sum_{i=0}^T \delta_i^2$.

\paragraph{Bounding the term $\tfrac{\|\x_t-\x_{t+1}\|_2^2}{\alpha_{t+1}^3}$.} We derive:
\beq \label{eq:bound:xx:alpha3}
\ts && \ts \tfrac{\|\x_t-\x_{t+1}\|_2^2}{\alpha_{t+1}^3} ~=~ \ts \tfrac{\delta_t^2}{L_t^2\alpha_{t+1}^3} ~\leq ~ \tfrac{\kappa^2\delta_t^2}{L_{t+1}^2\alpha_{t+1}^3} ~\leq ~\tfrac{\kappa^2\delta_t^2 }{  \rho^2 \big( 1 + \sum_{i=0}^{t}\delta_i^2\big) \alpha_{t+1}^2}  \nn\\
&=& \ts \tfrac{\kappa^2\delta_t^2 }{  \rho^2 \big( 1 + \sum_{i=0}^{t}\delta_i^2\big) } \cdot \big( 1 + \sum_{i=0}^{t} (\beta + \delta_i^2 ) \big) ~\leq~ \tfrac{\kappa^2 \smash{\bar{\delta}}^2}{\rho^2} \left(1 + \beta (t+1) \right).
\eeq

\paragraph{Part (a).} We derive the following results:
\beq\label{eq:eez:2}
\s_{t} &:= & \g_{t} - \nabla f(\x_{t}) \nn\\
&\overset{\step{1}}{=}&  (1-\alpha_{t}) \g_{t-1} + \alpha_{t} \nabla f(\x_{t};\xi_{t})  - \nabla f(\x_{t}) \nn\\
&\overset{}{=}&  (1-\alpha_{t}) (\s_{t-1} - \nabla f(\x_{t-1}) ) + \alpha_{t} \nabla f(\x_{t};\xi_{t})  + (1-\alpha_{t})\nabla f(\x_{t-1})  - \nabla f(\x_{t}) \nn\\
&\overset{\step{2}}{=}& \underbrace{ \ts (1-\alpha_{t}) \left( \s_{t-1} + \nabla f(\x_{t-1})  - \nabla f(\x_{t}) \right) }_{\triangleq \z_{t} } + \alpha_{t} \left(\nabla f(\x_{t};\xi_{t}) - \nabla f(\x_{t})\right),~~~~~~~~~~~
\eeq
\noi where step \step{1} uses $\g_{t} = (1-\alpha_{t}) \g_{t-1} + \alpha_{t} \nabla f(\x_{t};\xi_t)$; step \step{2} uses $\s_t \triangleq \g_t - \nabla f(\x_t)$. This further leads to the following inequalities:
\beq \label{eq:z:exp:2}
\EEE[\|\z_{t}\|_{2}^2] &\triangleq &  \|  (1-\alpha_{t}) \s_{t-1} + (1 - \alpha_{t})(\nabla f(\x_{t-1})  - \nabla f(\x_{t})) \|_2^2\nn\\
&\overset{\step{1}}{\leq}& (1+\alpha_{t})(1-\alpha_{t})^2 \| \s_{t-1}\|_2^2  + (1+\tfrac{1}{\alpha_{t}})(1-\alpha_{t})^2 \|  \nabla f(\x_{t-1})  - \nabla f(\x_{t}) \|_2^2 \nn\\
&\overset{\step{2}}{\leq}& (1-\alpha_{t}) \| \s_{t-1}\|_2^2  + (1+\tfrac{1}{\alpha_{t}})(1-\alpha_{t})^2 \|  \nabla f(\x_{t-1})  - \nabla f(\x_{t}) \|_2^2 \nn\\
&\overset{\step{3}}{\leq}& (1-\alpha_{t}) \| \s_{t-1}\|_2^2  + (1+\tfrac{1}{\alpha_{t}})(1-\alpha_{t})^2 L^2 \|\x_{t-1}  - \x_{t} \|_2^2 \nn\\
&\overset{\step{4}}{\leq}& (1-\alpha_{t}) \| \s_{t-1}\|_2^2  +  \tfrac{1}{\alpha_{t}} L^2 \|\x_{t-1}  - \x_{t} \|_2^2,
\eeq
\noi where step \step{1} uses $\|\mathbf{a}+\mathbf{b}\|_2^2\leq (1+\gamma)\|\mathbf{a}\|_2^2+(1+\tfrac{1}{\gamma})\|\mathbf{b}\|_2^2$ for all $\gamma=\alpha_{t}$; step \step{2} uses $(1+\alpha_{t})(1-\alpha_{t})\leq 1$; step \step{3} uses $\|\nabla f(\x_{t})  - \nabla f(\x_{t-1})\|\leq L\|\x_t - \x_{t}\|$; step \step{4} uses the following inequalities:
\beq
(1+\tfrac{1}{\alpha_{t}})(1-\alpha_{t})^2 ~= ~\tfrac{1}{\alpha_{t}} (\alpha_{t}+1)(1-\alpha_{t})^2~ =~ \tfrac{1}{\alpha_{t}} (1-\alpha_{t}^2)(1-\alpha_{t}) ~\leq~ \tfrac{1}{\alpha_{t}}.\nn
\eeq
\noi Taking the square of Inequality (\ref{eq:eez:2}) and then taking the expectation gives:
\beq \label{eq:ss:bb}
\EEE[\|\s_{t}\|_{2}^2] &=& \EEE[\| \alpha_{t} (\nabla f(\x_{t};\xi_{t}) - \nabla f(\x_{t})) \|_2^2 + \EEE[ \|\z_{t} \|_2^2] \nn\\
&\overset{\step{1}}{\leq}& \alpha_{t}^2 \sigma^2 + \EEE[ \|\z_{t} \|_2^2] \nn\\
&\overset{\step{2}}{\leq}&\EEE[ \underbrace{\ts \alpha_{t}^2 \sigma^2 + \tfrac{L^2}{\alpha_{t}} \|\x_{t}  - \x_{t-1} \|_2^2}_{:= B_{t}} ] + (1-\alpha_{t}) \EEE[\| \s^{t}\|_2^2],
\eeq
\noi where step \step{1} uses $\EEE[\|\nabla f(\x_{t};\xi_t) - \nabla f(\x_t)\|_2^2 = \sigma^2$; step \step{2} uses Inequalities (\ref{eq:z:exp:2}).

\paragraph{Part (b).} We now bound the term $\EEE[\sum_{t=0}^T B_t]$. We have:
\beq
\ts \EEE[\sum_{t=0}^T B_t]  & = & \ts  \EEE[ \sigma^2 \sum_{t=0}^T \alpha_t^2 +  L^2 \sum_{t=0}^T \frac{1}{\alpha_t}  \|\x_{t} - \x_{t-1}\|_2^2  ] \nn\\
&\overset{\step{1}}{\leq}& \ts \sigma^2 ( 1 + \frac{\ln(1 + \beta T)}{\beta} ) + \EEE[L^2\sum_{t=0}^T \frac{1}{\alpha_t}  \|\x_{t} - \x_{t-1}\|_2^2]  \nn\\
&\overset{\step{2}}{\leq}& \ts \sigma^2 ( 1 + \frac{\ln(1 + \beta T)}{\beta} ) + \EEE[ L^2 \kappa^2 \sum_{t=1}^T \frac{\delta_{t-1}^2}{ \alpha_t L_{t}^2} ]   \nn\\
&\overset{\step{3}}{=}& \ts  \sigma^2 ( 1 + \frac{\ln(1 + \beta T)}{\beta} ) + \EEE[ \tfrac{L^2 \kappa^2}{\rho^2} \sum_{t=0}^{T-1} \frac{\delta_{t}^2}{  1 + \sum_{i=0}^{t} \delta_{i}^2 }   ] \nn\\
&\overset{\step{4}}{\leq}& \ts \sigma^2 ( 1 + \frac{\ln(1 + \beta T)}{\beta} )  +  \EEE[\tfrac{L^2 \kappa^2}{\rho^2} \ln (1 + \sum_{i=0}^{t} \delta_{i}^2 )  ]   \nn\\
&\overset{\step{5}}{\leq}& \ts  \sigma^2 ( 1 + \frac{\ln(1 + \beta T)}{\beta} )  + \tfrac{L^2 \kappa^2}{\rho^2} \ln (1 + T \smash{\bar{\delta}}^2 ) := \overline{B}, \label{eq:B:const}
\eeq
\noi where step \step{1} uses the fact that:
\beq
\ts \sum_{t=0}^T \alpha_t^2 ~= ~\sum_{t=0}^T \tfrac{1}{1 + \sum_{i=0}^{t-1} (\beta + \delta_{i}^2)} ~ \leq ~\sum_{t=0}^T \tfrac{1}{1 + t \beta }  ~\leq~ 1 + \frac{\ln(1 + \beta T)}{\beta};\nn
\eeq
\noi step \step{2} uses $\delta_t := L_t \|\x_{t+1}-\x_t\|$, $\tfrac{L_{t+1}}{L_t}\leq \kappa$, and $\delta_{-1}=0$; step \step{3} uses $L_{t+1} = \rho \big( 1 + \sum_{i=0}^{t} \delta_{i}^2 \big)^{1/2}\alpha_{t+1}^{-1/2}$; step \step{4} uses Lemma \ref{lemma:online:ppp}; step \step{5} uses $\EEE[\delta_t^2] \leq \smash{\bar{\delta}}^2$.

For all $t\geq 0$, we derive the following results from Inequality (\ref{eq:ss:bb}):
\beq
\EEE[s_t^2] \leq \EEE[ B_t + (1-\alpha_t) s_{t-1}^2] ~~\Leftrightarrow~~  \EEE[\alpha_{t} s_t^2 ]\leq  \EEE[B_t + (1-\alpha_t) ( s_{t-1}^2 - s_{t}^2) ]. \nn
\eeq
\noi Summing this inequality over $t$ from $0$ to $T$ yields:
\beq
\ts \EEE[ \sum_{t=0}^T \alpha_t s_t^2] &\leq & \ts \EEE[ \sum_{t=0}^T (1-\alpha_t) ( s_{t-1}^2 - s_{t}^2) + \sum_{t=0}^T B_t] \nn\\
&\overset{\step{1}}{\leq}& \ts \EEE[ \big( \max_{t=0}^{T-1} s_{t-1}^2 \big) +  \sum_{t=0}^T B_t  ] \nn\\
&\overset{\step{2}}{\leq}& \ts \smash{\bar{s}}^2 + \overline{B},  \nn
\eeq
\noi where step \step{1} uses Lemma \ref{eq:ab:sort}; step \step{2} uses Lemma \ref{lemma:bound:MVR1}(b) that $\EEE[\|\s_t\|_2^2] \leq \smash{\bar{s}}^2$, and Inequality (\ref{eq:B:const}). This results in
\beq
\ts \EEE[ \sum_{t=0}^T s_t^2] &\leq& \ts \big( \smash{\bar{s}}^2 + \overline{B} \big) \EEE[\tfrac{1}{\alpha_T} ] ~~\leq ~~ \ts  \big( \smash{\bar{s}}^2 + \overline{B} \big)  \EEE[\sqrt{ 1 + \sum_{i=0}^{T-1} (\beta + \delta_t^2 ) } ]\nn\\
&\leq &  \ts \big( \smash{\bar{s}}^2 + \overline{B} \big) \EEE[\big( \Gamma^{1/2} + \beta^{1/2} T^{1/2} \big)]. \nn
\eeq

\end{proof}

\subsection{Proof of Lemma \ref{lemma:lemma:MVR1:S:delta}}
\label{app:lemma:lemma:MVR1:S:delta}

\begin{proof} We focus on ALFCG-MVR1.

\noi Let $L_{t} = \rho \big(1 + \sum_{i=0}^{t-1} \delta_i^2 \big)^{1/2} \alpha_{t}^{-1/2}$ and $L_0=\rho$.

\noi Define $\gamma_t = \tfrac{L_0}{2L_t} \left( \vartheta (T+1)^{-1/4} + (T+1)^{-1/2} \right)$, where $\vartheta:=\tfrac{\beta^{1/4}}{1+\beta^{1/4}}$.

\noi Define $\Gamma := 1 + \sum_{t=0}^T\delta_t^2$, and $\alpha_t := \big(1 + \sum_{i=0}^{t-1} (\beta + \delta_i^2)\big)^{-1/2}$.

\noi For any solution $\x_t$, we define $F^+_t := F(\x_t) - F_*$.

\paragraph{Key Inequality Bounding $L_T$.} We derive:
\beq \label{eq:MVR1:step1}
L_T~:=~\ts \rho \big( 1 + \sum_{i=0}^{T-1} \delta_i^2 \big)^{1/2}\alpha_{T}^{-1/2} ~\overset{\step{1}}{\leq}~\ts \rho \Gamma^{1/2} \cdot \left( \Gamma + T \beta \right)^{1/4} ~\leq~\ts   \rho \left( \Gamma^{3/4} +  T^{1/4}\beta^{1/4} \Gamma^{1/2}\right),
\eeq
\noi where step \step{1} uses $\Gamma := 1 + \sum_{t=0}^T\delta_t^2$ and the fact that $\alpha_T^{-1/2}\leq \big(1 + \sum_{i=0}^{T-1}(\beta + \delta_i^2)\big)^{1/4} \leq \left(\Gamma + T \beta\right)^{1/4}$.

\paragraph{Key Inequality Bounding $\sum_{t=0}^T \tfrac{\delta_t^2}{L_t}$.} We derive:
\beq \label{eq:MVR1:step2}
\ts \sum_{t=0}^T \tfrac{\delta_t^2}{L_t}&\overset{\step{1}}{\leq}&\ts \kappa \sum_{t=0}^T \tfrac{\delta_t^2}{L_{t+1}}  ~=~  \tfrac{\kappa}{\rho} \sum_{t=0}^T \tfrac{\delta_t^2 \alpha_{t+1}^{1/2} }{ \big( 1 + \sum_{i=0}^{t} \delta_i^2 \big)^{1/2} }  \nn\\
& \leq & \ts \frac{\kappa}{\rho} \sum_{t=0}^T  \tfrac{\delta_t^2}{ \big( 1 + \sum_{i=0}^{t} \delta_i^2 \big)^{1/2} }  ~ \overset{\step{2}}{\leq} ~  \frac{2 \kappa }{\rho } \left( 1 + \sum_{i=0}^{T} \delta_i^2  \right)^{1/2} ~=~ \frac{2 \kappa }{\rho } \Gamma^{1/2},
\eeq
\noi where step \step{1} uses Lemma \ref{app:lemma:bound:MVR1}(d) that $\tfrac{L_{t+1}}{L_{t}} \leq \kappa$ for all $t\geq 0$; step \step{2} uses Lemma \ref{lemma:sqrt:2}.

\paragraph{Part (a).} We have from Lemma \ref{lemma:descent:ncvx}:
\beq
&& \ts \EEE[\gamma_t\GG(\x_t)   + F^+_{t+1} - F^+_t   ]  \nn\\
&\leq& \EEE[\tfrac{\delta_t^2 L}{2 L_t^2} -   \tfrac{\delta_t^2}{4 L_t}   +  \tfrac{2 s_t^2}{L_t} + D^2 L_t \gamma_t^2  ] \nn\\
&=& \EEE [\tfrac{\delta_t^2 L}{2 L_t^2} -   \tfrac{\delta_t^2}{4 L_t}   +  \tfrac{2 s_t^2}{L_t} +   D^2 L_t \tfrac{L_0^2 }{4 L_t^2}  \big( \vartheta (T+1)^{-1/4} + (T+1)^{-1/2} \big)^2  ] \nn\\
&\overset{\step{1}}{\leq} & \EEE [\tfrac{\delta_t^2 L}{2 L_t^2} -   \tfrac{\delta_t^2}{4 L_t}   +  \tfrac{2 s_t^2}{ L_t} +  \tfrac{D^2 \rho^2}{2 L_t} \big(\vartheta^{2} (T+1)^{-1/2}  + (T+1)^{-1} \big)  ], \nn
\eeq
\noi where step \step{1} uses $(a+b)^2\leq 2a^2+2b^2$ for all $a,b\geq 0$. Multiplying both sides by $4 L_t$ yields:
\beq
\ts \EEE[  \delta_t^2 +  4 L_t\gamma_t\GG(\x_t)- 8 s_t^2  ]  ~~\leq~~  \ts  \EEE[ 4 L_t( F^+_t - F^+_{t+1} ) + \tfrac{2 \delta_t^2 L }{L_t}       +  2 D^2 \rho^2 \left(\vartheta^{2} (T+1)^{-1/2} + (T+1)^{-1}\right)   ].  \nn
\eeq
Summing this inequality over $t$ from 0 to $T$ yields:
\beq \label{eq:MVR1:G:d:s}
&&\ts \EEE[\sum_{t=0}^T \delta_t^2 + 4 L_t\gamma_t\GG(\x_t)   -  8 s_t^2] \nn\\
& \leq& \ts  \EEE [ \sum_{t=0}^T 4 L_t( F^+_t -  F^+_{t+1}  ) + \tfrac{2 L \delta_t^2}{L_t}        +  2 D^2 \rho^2 \big(\vartheta^{2} (T+1)^{-1/2}+ (T+1)^{-1}\big)  ]\nn\\
&\overset{\step{1}}{\leq} & \ts \EEE[ 4  \overline{F} L_T + 2 D^2 \rho^2 \left(\vartheta^{2} (T+1)^{1/2}+1\right) +  2 L \cdot \sum_{t=0}^T \tfrac{\delta_t^2}{ L_t } ] \nn\\
&\overset{\step{2}}{\leq} & \ts \EEE[ 4  \overline{F}\rho \left( \Gamma^{3/4} +  T^{1/4}\beta^{1/4} \Gamma^{1/2}\right) + 2D^2\rho^2 \left(\vartheta^{2} (T+1)^{1/2} +  1\right) + \frac{4 L \kappa }{\rho } \Gamma^{1/2}],
\eeq
\noi where step \step{1} uses Lemma \ref{eq:ab:sort}; step \step{2} uses Inequalities (\ref{eq:MVR1:step1}) and (\ref{eq:MVR1:step2}). We further have from Inequality (\ref{eq:MVR1:G:d:s}):
\beq \label{eq:two:case:MVR1:pre}
&& \ts \EEE[ 1 + \big(\sum_{t=0}^T \delta_t^2\big) + \big(\sum_{t=0}^T 4 L_t\gamma_t\GG(\x_t) \big) ] \nn\\
&\leq & \ts \EEE[ 1 + \big(\sum_{t=0}^T 8 s_t^2 \big)  + 4 \rho \overline{F} \left( \Gamma^{3/4} +  T^{1/4}\beta^{1/4} \Gamma^{1/2}\right) + 2D^2\rho^2 \left(\vartheta^{2} (T+1)^{1/2} +  1\right) + \tfrac{4L \kappa }{\rho } \Gamma^{1/2}  ] \nn\\
&\overset{\step{1}}{\leq} & \EEE[ \ts\underbrace{\ts\big( 1 + 4 \rho  \overline{F}+ 2 D^2 \rho^2 + \tfrac{4L \kappa }{\rho } \big) }_{:=Q_0}\Gamma^{3/4} + \big(\sum_{t=0}^T 8 s_t^2 \big)  +  4  \rho\overline{F} T^{1/4}\beta^{1/4} \Gamma^{1/2} + 2D^2\rho^2 \vartheta^{2} (T+1)^{1/2}   ]  \nn\\
&\overset{\step{2}}{\leq} & \ts \EEE[ 3 \max\left( Q_0 \Gamma^{3/4} ,~ \big(\sum_{t=0}^T 8 s_t^2\big) + 2D^2\rho^2 \vartheta^{2} (T+1)^{1/2} ,~  4  \rho\overline{F} T^{1/4}\beta^{1/4} \Gamma^{1/2}  \right) ] ,
\eeq
where step \step{1} uses $\Gamma:=1 + \sum_{t=0}^T\delta_t^2 \geq 1$; step \step{2} uses $a+b+c\leq 3 \max(a,b,c)$ for all $a,b,c\geq 0$. Dropping the term $\big(\sum_{t=0}^T 4 L_t\gamma_t\GG(\x_t) \big)$, we have:
\beq \label{eq:two:case:MVR1}
\EEE[\Gamma] &\leq & \ts \EEE[ 3 \max\left( Q_0 \Gamma^{3/4} ,~ \big(\sum_{t=0}^T 8 s_t^2\big) + 2D^2\rho^2 \vartheta^{2} (T+1)^{1/2},~  4  \rho\overline{F} T^{1/4}\beta^{1/4} \Gamma^{1/2} \right) ] \nn\\
&\overset{\step{1}}{\leq} & \ts   \EEE[\max \left( (3 Q_0)^4 ,~ \big(24 \sum_{t=0}^T s_t^2\big) + 6 D^2\rho^2 \vartheta^{2} (T+1)^{1/2},~  (12  \rho\overline{F} T^{1/4}\beta^{1/4})^2 \right)] \nn\\
&\overset{\step{2}}{\leq} & \ts \EEE[\underbrace{ 81 Q_0^4}_{:=Q_1} + \big(24 \sum_{t=0}^T s_t^2\big) + \underbrace{\ts \big( 6 D^2\rho^2  + 12^2 \rho^2  \overline{F}^2\big) }_{:=C_1} \cdot \max\left( \vartheta^{2},\beta^{1/2} \right)\cdot (T+1)^{1/2}], ~~
\eeq
\noi where step \step{1} uses the fact that $x\leq a x^p$ implies $x \leq a^{1/(1-p)}$ for all $x\geq 0$ and $p\in [0,1)$; step \step{2} uses $\max(a,b,c)\leq a+b+c$.

\noi We discuss two cases for Inequality (\ref{eq:two:case:MVR1}). (i) $24 \sum_{t=0}^T s_t^2\leq \tfrac{1}{2} (1 + \sum_{t=0}^T \delta_t^2)$. We have:
\beq
\ts \EEE[1 + \sum_{t=0}^T\delta_t^2 ] & \leq & \ts \EEE[ Q_1 + \tfrac{1}{2}(1 + \sum_{t=1}^T \delta_t^2)  + C_1 \max(\vartheta^2,\beta^{1/2}) (T+1)^{1/2}  ] \nn\\
&\leq& 2 Q_1 + 2 C_1 \max(\vartheta^2,\beta^{1/2}) (T+1)^{1/2} \nn
\eeq

\noi (ii) We now discuss the case $24 \sum_{t=0}^T s_t^2 > \tfrac{1}{2} (1 + \sum_{t=0}^T \delta_t^2)$. We derive:
\beq
\ts \EEE[ 1 + \sum_{t=0}^T \delta_t^2] &\leq & \ts \EEE[ 48 \sum_{t=0}^T s_t^2] \nn\\
&\leq & 48 \big( \smash{\bar{s}}^2 + \overline{B} \big) \cdot \EEE[ \Gamma^{1/2} + \beta^{1/2} T^{1/2}  ]  \nn\\
&\leq & 96 \big( \smash{\bar{s}}^2 + \overline{B} \big) \cdot \EEE[ \max\left(   \Gamma^{1/2}  , \beta^{1/2} T^{1/2}  \right) ] \nn\\
&\leq &   \max\left(   96^2 \big(  \smash{\bar{s}}^2 + \overline{B} \big)^2 , 96 \big( \smash{\bar{s}}^2 + \overline{B} \big) \cdot \beta^{1/2} T^{1/2}  \right) \nn\\
&\leq &    \underbrace{ 96^2 \big(  \smash{\bar{s}}^2 + \overline{B} \big)^2}_{:=Q_2} + \underbrace{96 \big( \smash{\bar{s}}^2 + \overline{B} \big)}_{:=C_2} \cdot \beta^{1/2} T^{1/2}.  \nn
\eeq

\noi Combining these two cases, we obtain:
\beq
\ts \EEE[1 + \sum_{t=0}^T \delta_t^2] &\leq & \max(2Q_1, Q_2) + \max(2C_1, C_2) \cdot \max(\vartheta^2,\beta^{1/2}) \cdot (T+1)^{1/2} ~~:=~~ \Omega.~~~~ ~~~~ \nn \\
\ts \EEE[\sum_{t=0}^T s_t^2] &\leq&  \ts \max\left( \tfrac{1}{48} (1+\sum_{t=0}^T \delta_t^2), \tfrac{1}{48} ( Q_2 + C_2\beta^{1/2} T^{1/2}) \right) ~~\leq ~~\tfrac{1}{48} \Omega. \nn
\eeq

\paragraph{Part (b).} We have from Inequality (\ref{eq:two:case:MVR1:pre}):
\beq \label{eq:MVR1:sss:3}
&& \ts \EEE[ \sum_{t=0}^T 4  L_t\gamma_t\GG(\x_t) ] \nn\\
&\leq & \ts  \EEE[ \max\left( 3 Q_0 \Gamma^{3/4} ,~ 24 \big(\sum_{t=0}^T s_t^2\big) + 6 D^2\rho^2 \vartheta^{2} (T+1)^{1/2} ,~  12  \rho\overline{F} T^{1/4}\beta^{1/4} \Gamma^{1/2}  \right)  -  \Gamma ] \nn\\
&\overset{\step{1}}{\leq} & \EEE[ \Omega + \tfrac{1}{2}\Omega + \tfrac{1}{2} \Omega + 12  \rho\overline{F} T^{1/4}\beta^{1/4} \Gamma^{1/2}  - \Gamma] \nn\\
&\overset{\step{2}}{\leq} & \Omega + \tfrac{1}{2}\Omega + \tfrac{1}{2} \Omega + \tfrac{1}{4} (12  \rho\overline{F} T^{1/4}\beta^{1/4})^2 \nn\\
&\overset{\step{3}}{\leq} & \Omega + \tfrac{1}{2}\Omega + \tfrac{1}{2} \Omega + \tfrac{1}{8}\Omega  \leq 4 \Omega, \nn
\eeq
\noi where step \step{1} uses $3 Q_0 =  Q_1^{1/4} \leq (2 Q_1)^{1/4} \leq \Omega^{1/4}$, and $6 D^2\rho^2 \vartheta^{2} (T+1)^{1/2} \leq C_1 \max(\vartheta^{2} ,\beta^{1/2}) (T+1)^{1/2} \leq \tfrac{1}{2}\Omega$; step \step{2} uses $-x^2+ax\leq \tfrac{1}{4}a^2$ for all $a = 12  \rho\overline{F} T^{1/4}$ and $x = \Gamma^{1/2}$; step \step{3} uses $\tfrac{1}{4} (12  \rho\overline{F} T^{1/4}\beta^{1/4})^2 \leq \tfrac{1}{4}\cdot C_1 T^{1/2}\beta^{1/2} \leq \tfrac{1}{8}\Omega$. This leads to:
\beq
\ts \sum_{t=0}^T L_t\gamma_t\GG(\x_t)  ~\leq~  \Omega.\nn
\eeq
In particular, if we set $\beta =\nu \sigma^2$ with $\nu=\OO(1)$, and $\vartheta = \tfrac{ \beta^{1/4}}{1 + \beta^{1/4}}$, we have:
\beq
\ts \EEE[\sum_{t=0}^T L_t\gamma_t\GG(\x_t)] &\leq&   \max(2Q_1, Q_2) +  \max(2C_1, C_2) \cdot \max( \vartheta^2,\beta^{1/2} ) \cdot (T+1)^{1/2} \nn\\
&=&  \max(2Q_1, Q_2) +  \max(2C_1, C_2) \cdot \nu^{1/2} \sigma \cdot (T+1)^{1/2}.\nn
\eeq
\end{proof}

\subsection{Proof of Theorem \ref{theorem:MVR1}}
\label{app:theorem:MVR1}

\begin{proof} We let $\gamma_t = \tfrac{L_0}{2L_t} \left( \vartheta (T+1)^{-1/4} + (T+1)^{-1/2} \right)$, where $\vartheta:=\tfrac{\beta^{1/4}}{1+\beta^{1/4}}$ and $\beta = \nu \sigma^2$.

We have from Lemma \ref{lemma:lemma:MVR1:S:delta}(b):
\beq
&& \ts \EEE[\sum_{t=0}^T L_t\gamma_t\GG(\x_t)] \leq \tilde{\OO}(\nu^{1/2} \sigma T^{1/2} + 1)\nn\\
&\overset{\step{1}}{\Rightarrow} & \ts \EEE[ \tfrac{L_0}{2} \left( \vartheta (T+1)^{-1/4} + (T+1)^{-1/2} \right) \sum_{t=0}^T  \GG(\x_t) ] \leq \tilde{\OO}(\nu^{1/2} \sigma T^{1/2} + 1)\nn\\
&\Rightarrow& \ts \EEE[\tfrac{1}{T+1} \sum_{t=0}^T  \GG(\x_t)] \leq \tilde{\OO} \left( \tfrac{\nu^{1/2} \sigma T^{-1/2}+T^{-1}}{  \vartheta T^{-1/4} + T^{-1/2}  } \right) \nn\\
&\overset{\step{2}}{\Rightarrow} & \ts \EEE[\tfrac{1}{T+1} \sum_{t=0}^T  \GG(\x_t)] \leq \tilde{\OO} \left( \tfrac{\nu^{1/2} \sigma }{\sigma^{1/2}} (1+\sigma^{1/2}) T^{-1/4} + T^{-1/2} \right), \nn
\eeq
\noi where step \step{1} uses the choice of $\gamma_t$; step \step{2} uses $\tfrac{a+b}{c+d}\leq \frac{a}{c} + \tfrac{b}{d}$ for all $a,b\geq 0$ and $c,d>0$ and the choice $\vartheta \leq \tfrac{\sigma^{1/2}}{1+\sigma^{1/2}}$.

\end{proof}

\section{Proof for Section \ref{sect:iter:MVR2} }\label{app:sect:iter:MVR2}

\subsection{Proof of Lemma \ref{lemma:alpha:2:3}}
\label{app:lemma:alpha:2:3}

\begin{proof} Assume $u_i\ge 0$, and $p=\frac{2}{3}$. Define $\hat\alpha_t := \Big(\frac{1+\max_{i=0}^t u_i}{1+\sum_{i=0}^t u_i}\Big)^p$, and $\alpha_t := \min_{0\le k\le t}\hat\alpha_k$.

\noi Define $S_t := \sum_{i=0}^t u_i$, $M_t := \max_{i=0}^t u_i$, and $r_t := \tfrac{1+S_t}{1+M_t}\ge 1$. Then $\hat\alpha_t=\Big(\tfrac{1+M_t}{1+S_t}\Big)^p=r_t^{-p}$, leading to $\tfrac{1}{\hat\alpha_t}=r_t^{p}$. Since $\alpha_t=\min_{i=0}^t\hat\alpha_i$, we have $\frac{1}{\alpha_t}=\max_{i=0}^t\tfrac{1}{\hat\alpha_i}
=\max_{i=0}^t r_i^p$.

\paragraph{The lower bound of $\frac{1}{\alpha_{t+1}}-\frac{1}{\alpha_t}$.} Because $\alpha_{t+1}=\min(\alpha_t,\hat\alpha_{t+1})\le \alpha_t$, we get $0\le \frac{1}{\alpha_{t+1}}-\frac{1}{\alpha_t}$.

\paragraph{Reduce the upper bound to a bound on $r_t$.} We have:
\beq \label{eq:rraa}
\tfrac{1}{\alpha_{t+1}}-\tfrac{1}{\alpha_t}  ~\overset{\step{1}}{=} ~\max\Big(0, r_{t+1}^p-\tfrac{1}{\alpha_t}\Big) ~\overset{\step{2}}{\le}~\max\big(0, r_{t+1}^p-r_t^p\big),
\eeq
\noi where step \step{1} uses $\frac{1}{\alpha_{t+1}}=\max \big(\frac{1}{\alpha_t},r_{t+1}^p\big)$; step \step{2} uses $\frac{1}{\alpha_t}=\max_{k\le t} r_k^p \ge r_t^p$. Inequality (\ref{eq:rraa}) implies that if $r_{t+1}\le r_t$, Inequality (\ref{eq:rraa}) becomes $\frac{1}{\alpha_{t+1}}-\frac{1}{\alpha_t}\leq 0$, and the claim holds. Hence it suffices to prove $r_{t+1}^p-r_t^p\le p$ for all $t$ and $r_{t+1}>r_t$.

\paragraph{Show that $0\leq r_{t+1}-r_t\leq 1$ whenever $r_{t+1}>r_t$.} Consider two cases.

$\quad$ \textit{Case A: $u_{t+1}\le M_t$}. Then $M_{t+1}=M_t$, $S_{t+1}=S_t+u_{t+1}$, and $r_{t+1}-r_t=\frac{1+S_t+u_{t+1}}{1+M_t}-\frac{1+S_t}{1+M_t}
=\frac{u_{t+1}}{1+M_t}<1$.

$\quad$ \textit{Case B: $u_{t+1}>M_t$.} Then $M_{t+1}=u_{t+1}$, $S_{t+1}=S_t+u_{t+1}$, and $r_{t+1}= \frac{1+S_t+u_{t+1}}{1+u_{t+1}}=1+\frac{S_t}{1+u_{t+1}}$, $r_t=\frac{1+S_t}{1+M_t}=1+\frac{S_t-M_t}{1+M_t}$. Since $u_{t+1}>M_t$ implies $\frac{1}{1+u_{t+1}}<\frac{1}{1+M_t}$, we have $r_{t+1}-r_t
=\frac{S_t}{1+u_{t+1}}-\frac{S_t-M_t}{1+M_t}
\le \frac{S_t}{1+M_t}-\frac{S_t-M_t}{1+M_t}
=\frac{M_t}{1+M_t}<1$. Thus, whenever $r_{t+1}>r_t$, we indeed have $0<r_{t+1}-r_t<1$.

\paragraph{Apply the mean value theorem to $x^p$.} Let $\phi(x)=x^p$ with $p\in(0,1]$. By the mean value theorem, there exists $\xi\in(r_t,r_{t+1})$ such that $r_{t+1}^p-r_t^p=\phi'(\xi)(r_{t+1}-r_t)=p \xi^{p-1}(r_{t+1}-r_t)$. Because $r_t\ge 1$ and hence $\xi\ge 1$, and since $p-1\le 0$, we have $\xi^{p-1}\le 1$. Therefore, $r_{t+1}^p-r_t^p \le p,(r_{t+1}-r_t) \le p\cdot 1 = p$. This yields $r_{t+1}^p-r_t^p\le p$, and consequently $\frac{1}{\alpha_{t+1}}-\frac{1}{\alpha_t}\le p$.

Combining the lower and upper bounds, we have $0\le \frac{1}{\alpha_{t+1}}-\frac{1}{\alpha_t}\le p=\frac{2}{3}$.

\end{proof}

\subsection{Proof of Lemma \ref{lemma:bound:MVR2}}
\label{app:lemma:bound:MVR2}

\begin{proof} Define $s_t^2 := \|\g_t - \nabla f(\x_t)\|_2^2$.

\paragraph{Part (a).} We have the following results:
\beq
F(\x) - F(\x_*) ~\overset{\step{1}}{\leq} ~ 2 G\| \x - \x_*\| ~\leq~ 2G D, \nn
\eeq
\noi where step \step{1} uses the fact that $F(\cdot)$ is $2G$-Lipschitz continuous. We conclude that $F(\x)\leq F(\x_*)+2GD:=\overline{F}$.

\paragraph{Part (b).} We have from Inequality (\ref{eq:analysis:first}):
\beq
\EEE[\gamma_t\GG(\x_t)   + F^+_{t+1} - F^+_t +\tfrac{\delta_t^2}{4 L_t}  ]  ~\leq~  \EEE[\tfrac{\delta_t^2 L}{2 L_t^2}    +  \tfrac{2 s_t^2}{L_t} + \gamma_t^2 L_t D^2].\nn
\eeq
Dropping the non-negative term $\gamma_t\GG(\x_t)$ and multiplying both sides by $4 L_t$ yields:
\beq
\EEE[\delta_t^2] &\leq& \EEE[  \tfrac{ 2 L \delta_t^2}{L_t}        +  8s_t^2 + 4 \gamma_t^2 L_t^2 D^2  +  4 L_t(F^+_t  - F^+_{t+1} )  ] \nn\\
& \overset{\step{1}}{\leq} & \EEE[ 2 L \delta_t D       + 8 \smash{\bar{s}}^2   + 4 \gamma_t^2 L_t^2 D^2  +  4 L_t C\|\x_t - \x_{t+1}\|  ] \nn\\
& \overset{\step{2}}{\leq} & \EEE[2 L \delta_t D +  8 \smash{\bar{s}}^2 + 4 \tfrac{L_0^2}{L_t^2}  L_t^2 D^2  +    4 L_t C\|\x_t - \x_{t+1}\| ] \nn\\
& \overset{\step{3}}{\leq} & \EEE[2 L \delta_t D +  8 \smash{\bar{s}}^2 + 4 \rho^2 D^2  +    4 C \delta_t ] \nn\\
& \overset{}{\leq} & 2 \EEE[ \max\left( 2 L \delta_t D + 4 C \delta_t,  8 \smash{\bar{s}}^2  + 4 \rho^2 D^2 \right) ] \nn\\
& \overset{}{\leq} & \max\left( ( 4 L D + 8 C)^2,    16 \smash{\bar{s}}^2 + 8 \rho^2 D^2 \right)\nn\\
& \overset{}{\leq} & 64 \big( L D + C + \overline{s} + \rho D \big)^2 := \smash{\bar{\delta}}^2\nn\\
\eeq
\noi where step \step{1} uses $\delta_t\leq L_t D$, Assumption \ref{ass:bound:s} that $\EEE[\|\s_t\|_2^2]\leq \smash{\bar{s}}^2$, and the fact that $F(\cdot)$ is $2G$-Lipschitz continuous; step \step{2} uses $\gamma_t:=\tfrac{L_0}{2L_t} (\vartheta T^{-1/3} + T^{-1/2}) \leq \tfrac{L_0}{L_t}$; step \step{3} uses $\delta_t = L_t \|\x_t - \x_{t+1}\|$.

\paragraph{Part (c).} For all $t\geq 0$, we derive:
\beq
\tfrac{L_{t+1}^2}{L_t^2} ~=~  \tfrac{ (L_0^2 + \rho \sum_{i=0}^{t-1} \delta_i^2) + \rho \delta_t^2   }{  (L_0^2 + \rho \sum_{i=0}^{t-1} \delta_i^2 ) } \cdot \tfrac{  \alpha_{t+1}^{-1/2} }{  \alpha_{t}^{-1/2}  } ~ \leq ~  \left( 1 +  \tfrac{\rho\smash{\bar{\delta}}^2}{L_0^2} \right) \cdot \sqrt{ \tfrac{  \alpha_{t} }{  \alpha_{t+1}  } } ~ \overset{\step{1}}{ \leq } ~  \left( 1 + \tfrac{\smash{\bar{\delta}}^2}{\rho} \right) \cdot \sqrt{  1 + \tfrac{2}{3} }~\leq ~\left( 1 + \tfrac{\smash{\bar{\delta}}^2}{\rho} \right) \cdot 2, \nn
\eeq
\noi where step \step{1} uses Lemma \ref{lemma:alpha:2:3} that $\frac{1}{\alpha_{t+1}} - \frac{1}{\alpha_{t}} \leq p = \tfrac{2}{3}$, leading to $\frac{\alpha_{t}}{\alpha_{t+1}} \leq 1 + \tfrac{2}{3}$.

Taking the square root of both sides yields: $\tfrac{L_{t+1}}{L_t} \leq \sqrt{ 2(1 + \smash{\bar{\delta}}^2/\rho)}:=\kappa$.

\end{proof}

\subsection{Proof of Lemma \ref{lemma:MVR2:S}}
\label{app:lemma:MVR2:S}

\begin{proof} Define $\s_t := \g_t - \nabla f(\x_t)$ and $\boldsymbol{\epsilon}_t = \nabla f(\mathbf{x}_t; \xi_t) - \nabla f(\mathbf{x}_t)$.

\noi Define $\dot{B}:= 18  (1+\beta + \smash{\bar{\delta}}^2)$, and $\ddot{B} := 40  L^2 \kappa^2 \rho^{-1} \ln(1 +  \smash{\bar{\delta}}^2 T)$.

\paragraph{Part (a).} We derive the following equalities:
\beq \label{eq:eez}
\s_t &:= & \g_t - \nabla f(\x_t) \nn\\
&\overset{\step{1}}{=}&  (1-\alpha_t) (\g_{t-1}  -  \nabla f(\x_{t-1};\xi_t) ) +  \boldsymbol{\epsilon}_t \nn\\
&\overset{\step{2}}{=}& (1-\alpha_t) (\g_{t-1} - \nabla f(\x_{t-1})) + (1-\alpha_t) ( \nabla f(\x_{t-1})   -   \nabla f(\x_{t-1};\xi_t) ) +  \boldsymbol{\epsilon}_t    \nn\\
&\overset{\step{3}}{=}&  (1-\alpha_t)\s_{t-1} + \underbrace{\ts (1-\alpha_t) ( \nabla f(\x_{t-1})   -   \nabla f(\x_{t-1};\xi_t) ) +  \boldsymbol{\epsilon}_t }_{:= \z^t},
\eeq
\noi where step \step{1} uses the update rule $\g_t=(1-\alpha_t) (\g_{t-1} - \nabla f(\x_{t-1};\xi_t)) +  \nabla f(\x_t;\xi_t)$; step \step{2} uses $- (1-\alpha_t)\nabla f(\x_{t-1})+(1-\alpha_t)\nabla f(\x_{t-1})=0$; step \step{3} uses $\s_t := \g_t - \nabla f(\x_t)$. This further leads to the following inequalities:
\beq \label{eq:z:exp:1}
&& \EEE[ \| \z^t \|_2^2] \nn\\
&\overset{\step{1}}{=}&  \EEE[ \|  (1-\alpha_t) \left( \nabla f(\x_{t-1})   -   \nabla f(\x_{t-1};\xi_t) \right) + \boldsymbol{\epsilon}_t \|_2^2]\nn\\
&\overset{}{=}& \EEE[ \| \alpha_t \boldsymbol{\epsilon}_t + (1-\alpha_t) \left(  \boldsymbol{\epsilon}_t +  \nabla f(\x_{t-1})   -   \nabla f(\x_{t-1};\xi_t)  \right) \|_2^2]\nn\\
&\overset{\step{2}}{\leq}& 2 \alpha_t^2 \EEE[ \| \boldsymbol{\epsilon}_t \|_2^2 ]  + 2 (1-\alpha_t)^2 \EEE[ \| \boldsymbol{\epsilon}_t  + \nabla f(\x_{t-1})   -  \nabla f(\x_{t-1};\xi_t) \|_2^2]\nn\\
&\overset{\step{3}}{\leq}& 2 \alpha_t^2 \sigma^2  + 4 \EEE[ \| \nabla f(\x_t;\xi_t) -  \nabla f(\x_{t-1};\xi_t) \|_2^2 ] + 4 \EEE[ \|\nabla f(\x_t) - \nabla f(\x_{t-1})    \|_2^2]\nn\\
&\overset{\step{4}}{\leq}& 2 \alpha_t^2 \sigma^2  + (4+4) L^2 \EEE[ \| \x_t -  \x_{t-1} \|_2^2 ],
\eeq
\noi where step \step{1} uses the definition of $\z^t$; step \step{2} uses $\|\a+\b\|_2^2\leq 2\|\a\|_2^2+2\|\b\|_2^2$ for all $\a,\b\in\Rn^n$; step \step{3} uses $\EEE[ \| \nabla f(\x_t;\xi_t) -  \nabla f(\x_t)\|_2^2 ]\leq \sigma^2$, $(1-\alpha_t)^2\leq 1$, and $\|\a+\b\|_2^2\leq 2\|\a\|_2^2+2\|\b\|_2^2$ for all $\a,\b\in\Rn^n$; step \step{4} uses $\|\nabla f(\x_{t-1};\xi_t) - \nabla f(\x_t;\xi_t)\| \leq L\|\x_t-\x_{t-1}\|$, and $\|\nabla f(\x_{t-1}) - \nabla f(\x_t)\|\leq L\|\x_t-\x_{t-1}\|$.

Taking the square of Equality (\ref{eq:eez}) and then taking the expectation gives:
\beq \label{eq:ssb}
\EEE[\|\s_t\|_2^2] &=& \EEE[\| (1-\alpha_t) \s_{t-1}\|_2^2 + \EEE[ \| \z^t \|_2^2] \nn\\
&\overset{\step{1}}{\leq}& (1-\alpha_t) \EEE[\|\s_{t-1}\|_2^2 + \underbrace{2 \alpha_t^2 \sigma^2   + 8 L \EEE[ \| \x_t -  \x_{t-1} \|_2^2 ]}_{:= B_t} ,
\eeq
\noi where step \step{1} uses $(1-\alpha_t)^2 \leq 1-\alpha_t$ for all $\alpha_t\in (0,1)$, and Inequality (\ref{eq:z:exp:1}).

\paragraph{Part (b).} For all $t\geq 0$, we derive from Inequality (\ref{eq:ssb}):
\beq
s_{t}^2 \leq (1 - \alpha_{t}) s_{t-1}^2 + B_{t} &\Rightarrow& \tfrac{1}{\alpha_{t}} s_{t}^2 \leq \tfrac{1}{\alpha_{t}} s_{t-1}^2 - s_{t-1}^2 + \tfrac{1}{\alpha_{t}} B_{t} \nn\\
&\Rightarrow& s_{t-1}^2  \leq \tfrac{1}{\alpha_{t}} (s_{t-1}^2 -s_{t}^2)  + \tfrac{1}{\alpha_{t}} B_{t} \nn\\
&\Rightarrow& s_{t}^2 \leq (\tfrac{1}{\alpha_{t}}  - 1 )(s_{t-1}^2 -s_{t}^2)  + \tfrac{B_{t}}{\alpha_{t}} .\label{eq:key:1:bbbb}
\eeq

\noi We derive the following results from Inequality (\ref{eq:key:1:bbbb}):
\beq
\ts \EEE[\sum_{t=0}^T s_{t}^2 ]&\leq &  \ts \EEE[ \left( \sum_{t=0}^T \frac{B_{t}}{\alpha_{t}}  \right) + \left( \sum_{t=0}^T  (\frac{1}{\alpha_{t}}  - 1 )(s_{t-1}^2 -s_{t}^2) \right)] \nn\\
&\overset{\step{1}}{=}&  \ts \EEE[\left( \sum_{t=0}^T \frac{B_{t}}{\alpha_{t}}  \right) +  \left(\tfrac{1}{\alpha_0} - 1\right) s_{-1}^2  +   \left( 1 - \frac{1}{\alpha_{T+1}} \right) s_T^2  + \sum_{t=0}^T \left( \frac{1}{\alpha_{t+1}} - \frac{1}{\alpha_{t}}\right) \cdot s_{t}^2 ] \nn\\
&\overset{\step{2}}{\leq}&  \ts\EEE[ \left( \sum_{t=0}^T \frac{B_{t}}{\alpha_{t}}  \right)  + p \sum_{t=0}^T s_{t}^2 ]    \nn\\
&\leq & \ts \EEE[\tfrac{1}{1-p} \sum_{t=0}^T \frac{B_{t}}{\alpha_{t}}  ~=~ 3 \sum_{t=1}^T \frac{ B_{t} }{\alpha_{t}} ] ,   \nn
\eeq
\noi where step \step{1} uses the following key identity:
\beq \label{eq:core:id}
\ts \sum_{t=0}^T \left(\frac{1}{\alpha_t} - 1\right) \left( s_{t-1}^2 - s_{t}^2 \right)  - \sum_{t=0}^T \left( \frac{1}{\alpha_{t+1}} - \frac{1}{\alpha_{t}}\right)  s_{t}^2  ~ =   ~  \left( 1 - \tfrac{1}{\alpha_{T+1}} \right) s_T^2 + (\tfrac{1}{a_0}-1)s_{-1}^2;\nn
\eeq
\noi step \step{2} uses $\alpha_1=1$ and $\alpha_t\in(0,1]$; step \step{3} uses $\frac{1}{\alpha_{t+1}} - \frac{1}{\alpha_t}\leq p$.

\noi We define $\Gamma := 1 + \sum_{t=0}^{T}\delta_t^2$. Noticing $B_t:= 2 \alpha_t^2\sigma^2 + 8 L^2 \|\x_t-\x_{t-1}\|_2^2$. We have:
\beq
&&\ts \EEE[3 \sum_{t=0}^T \frac{B_t}{\alpha_t}] \nn\\
& = & \ts \EEE[ 6 \sigma^2 \sum_{t=1}^T \alpha_t + 24 L^2 \sum_{t=0}^T \frac{1}{\alpha_t}  \|\x_{t} - \x_{t-1}\|_2^2  ] \nn\\
&\overset{\step{1}}{\leq}& \ts \EEE[ 6 \sigma^2 (1+\beta + \smash{\bar{\delta}}^2)\sum_{t=0}^T \tfrac{1}{ \left( 1 + t\beta \right)^{2/3} } + 24 L^2 \sum_{t=0}^T \frac{ \delta_{t-1}^2   }{ \alpha_t L_{t-1}^2   }  ] \nn\\
&\overset{\step{2}}{\leq}& \ts \EEE[ 6 \sigma^2 (1+\beta + \smash{\bar{\delta}}^2)  (1+ \frac{3}{\beta}+ 3  \beta^{-2/3} T^{1/3} )+ 24 L^2 \kappa^2 \sum_{t=0}^{T} \frac{ \delta_{t-1}^2  }{ \alpha_{t} L_{t}^2   }  ] \nn\\
&\overset{\step{3}}{\leq}& \ts \EEE[ 18  (1+\beta + \smash{\bar{\delta}}^2) \cdot \sigma^2 (1+ \frac{1}{\beta}+   \beta^{-2/3} T^{1/3} )+ 24 L^2 \kappa^2 \tfrac{1}{\rho } \sum_{t=-1}^{T-1} \frac{ \delta_{t}^2 \alpha_t^{1/2} }{ \alpha_{t+1}  (1 + \sum_{i=0}^t \delta_i^2 ) } ]  \nn\\
&\overset{\step{4}}{\leq}& \ts \EEE[ \dot{B} \sigma^2 (1+ \frac{1}{\beta}+   \beta^{-2/3} T^{1/3} )+ 24 L^2 \kappa^2 \tfrac{5}{3} \tfrac{1}{\rho}\sum_{t=0}^{T} \frac{ \delta_{t}^2  }{ \alpha_{t}^{1/2}  (1 + \sum_{i=0}^t \delta_i^2 ) }  ] \nn\\
&\overset{\step{5}}{\leq}& \ts  \EEE[ \dot{B} \sigma^2 (1+ \frac{1}{\beta}+   \beta^{-2/3} T^{1/3} )+ 24 L^2 \kappa^2 \tfrac{5}{3} \rho^{-1}    \ln (1 + \sum_{t=0}^T \delta_t^2)  \alpha_T^{-1/2} ]\nn\\
&\overset{\step{6}}{\leq}& \ts \EEE[ \dot{B} \sigma^2 (1+ \frac{1}{\beta}+  \beta^{-2/3} T^{1/3} )+ 40  L^2 \kappa^2 \rho^{-1} \ln(1+\smash{\bar{\delta}}^2 T) \cdot \left( \Gamma + T \beta  \right)^{1/3}] \nn\\
&\overset{}{\leq }& \ts \EEE[ \dot{B} \sigma^2 (1+ \frac{1}{\beta}+  \beta^{-2/3} T^{1/3} )+ \ddot{B} \big(\Gamma^{1/3} + T^{1/3} \beta^{1/3}\big)], \nn
\eeq
\noi where step \step{1} uses the definition of $\alpha_t$ and $\delta_t$; step \step{2} uses $\sum_{t=0}^{T} \frac{1}{(1+t\beta)^{2/3}} \le 1 + \frac{3}{\beta} \big( (1+T\beta)^{\frac{1}{3}} - 1 \big) \le 1 + \frac{3}{\beta} \big(  1+T^{1/3}\beta^{1/3} \big)$ with $p=2/3$; step \step{2} uses $L_{t+1}/L_t\leq \kappa$; step \step{3} uses the definition of $L_t$; step \step{4} uses Lemma \ref{lemma:alpha:2:3} that $\tfrac{\alpha_{t+1}}{\alpha_t}\leq 1 + p =\tfrac{5}{3}$; step \step{5} uses Lemma \ref{lemma:online:ppp}; step \step{6} uses the definition of $\alpha_t$.

\end{proof}

\subsection{Proof of Lemma \ref{lemma:lemma:MVR2:S:delta}}
\label{app:lemma:lemma:MVR2:S:delta}

\begin{proof} We let $L_{t} = \rho \left(1 + \sum_{i=0}^{t-1} \delta_i^2\right)^{1/2} \alpha_{t}^{-1/4}$ and $L_0=\rho$.

\noi We let $\gamma_t = \tfrac{L_0}{2 L_t} \left( \vartheta (T+1)^{-1/3} + (T+1)^{-1/2} \right) \in (0,1]$.

\noi We define $\Gamma := 1 + \sum_{t=0}^T\delta_t^2$.

\paragraph{Key Inequality Bounding $L_T$.} We derive:
\beq \label{eq:step1:MVR2}
L_T~\overset{}{:=}~\ts \rho \big( 1 + \sum_{i=0}^{T-1} \delta_i^2 \big)^{1/2}\alpha_{T}^{-1/4} ~\overset{\step{1}}{\leq}~\ts \rho \Gamma^{1/2} \cdot \left( \Gamma + T \beta \right)^{1/6} ~\leq~\ts   \rho \left( \Gamma^{2/3} +  T^{1/6}\beta^{1/6} \Gamma^{1/2}\right),
\eeq
\noi where step \step{1} uses $\Gamma := 1 + \sum_{t=0}^T\delta_t^2$ and $\alpha_T^{-1/4}\leq \big(1 + \sum_{i=0}^{T-1}(\beta + \delta_i^2)\big)^{1/6} \leq \left(\Gamma + T \beta\right)$.

\paragraph{Key Inequality Bounding $\sum_{t=0}^T \tfrac{\delta_t^2}{L_t}$.} We derive:
\beq \label{eq:step2:MVR2}
\ts \sum_{t=0}^T \tfrac{\delta_t^2}{L_t}&\overset{\step{1}}{\leq}&\ts \kappa \sum_{t=0}^T \tfrac{\delta_t^2}{L_{t+1}}  ~=~  \tfrac{\kappa}{\rho} \sum_{t=0}^T \tfrac{\delta_t^2 \alpha_{t+1}^{1/4} }{ \big( 1 + \sum_{i=0}^{t} \delta_i^2 \big)^{1/2} }  \nn\\
& \leq & \ts \frac{\kappa}{\rho} \sum_{t=0}^T  \tfrac{\delta_t^2}{ \big( 1 + \sum_{i=0}^{t} \delta_i^2 \big)^{1/2} }  ~ \overset{\step{2}}{\leq} ~  \frac{2 \kappa }{\rho } \left( 1 + \sum_{i=0}^{T} \delta_i^2  \right)^{1/2} ~=~ \frac{2 \kappa }{\rho } \Gamma^{1/2},
\eeq
\noi where step \step{1} uses Lemma \ref{lemma:bound:MVR2}(c) that $\tfrac{L_{t+1}}{L_{t}} \leq \kappa$ for all $t\geq 0$; step \step{2} uses Lemma \ref{lemma:sqrt:2}.

\paragraph{Part (a).} We have from Lemma \ref{lemma:descent:ncvx}:
\beq
&& \ts \EEE[\gamma_t\GG(\x_t)   + F^+_{t+1} - F^+_t   ] \nn\\
&\leq& \EEE[\tfrac{\delta_t^2 L}{2 L_t^2} -   \tfrac{\delta_t^2}{4 L_t}   +  \tfrac{2 s_t^2}{L_t} + D^2 L_t \gamma_t^2  ] \nn\\
&=& \EEE \left[\tfrac{\delta_t^2 L}{2 L_t^2} -   \tfrac{\delta_t^2}{4 L_t}   +  \tfrac{2 s_t^2}{L_t} +   D^2 L_t \tfrac{L_0^2 }{4 L_t^2}  \left( \vartheta (T+1)^{-1/3} + (T+1)^{-1/2} \right)^2  \right] \nn\\
&\overset{\step{1}}{\leq}& \EEE\left[\tfrac{\delta_t^2 L}{2 L_t^2} -   \tfrac{\delta_t^2}{4 L_t}   +  \tfrac{2 s_t^2}{ L_t} +  \tfrac{D^2 \rho^2 }{2 L_t} \left(\vartheta^{2} (T+1)^{-2/3}  + (T+1)^{-1} \right)  \right], \nn
\eeq
\noi where step \step{1} uses $(a+b)^2\leq 2a^2+2b^2$. Multiplying both sides by $4 L_t$ yields:
\beq
\ts \EEE[\delta_t^2 + 4 \gamma_t\GG(\x_t)  - 8 s_t^2 ] &\leq& \ts  \EEE[ 4 L_t( F^+_t - F^+_{t+1} ) + \tfrac{2 \delta_t^2 L }{L_t}       +  2 D^2 \rho^2 \left(\vartheta^{2} (T+1)^{-2/3} + (T+1)^{-1}\right)   ].  \nn
\eeq
Summing this inequality over $t$ from 0 to $T$ yields:
\beq  \label{eq:MVR2:G:d:s}
&& \ts \EEE[ \sum_{t=0}^T \delta_t^2 + 4 \gamma_t L_t \GG(\x_t) - 8 s_t^2 ]  \nn\\
& \leq& \ts \EEE [  \sum_{t=0}^T  4 L_t( F^+_t -  F^+_{t+1}  ) + \tfrac{2 L \delta_t^2}{L_t}        +  2 D^2L_0^2 \left(\vartheta^{2} (T+1)^{-2/3}+ (T+1)^{-1}\right)    ]\nn\\
&\overset{\step{1}}{\leq}& \ts \EEE\left[  4  \overline{F} L_T + 4 D^2 \rho \left(\vartheta^{2} (T+1)^{1/3}+1\right) +  2 L \cdot \sum_{t=0}^T \tfrac{\delta_t^2}{ L_t } \right] \nn\\
&\overset{\step{2}}{\leq}& \ts \EEE\left[  4  \overline{F}\rho \left( \Gamma^{2/3} +  T^{1/6}\beta^{1/6} \Gamma^{1/2}\right) + 4D^2 \rho^2 \left(\vartheta^{2} (T+1)^{1/3} +  1\right) + 2 L \cdot \frac{2 \kappa }{\rho } \Gamma^{1/2} \right], \nn
\eeq
\noi where step \step{1} uses Lemma \ref{eq:ab:sort}; step \step{2} uses Inequalities (\ref{eq:step1:MVR2},\ref{eq:step2:MVR2}). We further derive from Inequality (\ref{eq:MVR2:G:d:s}):
\beq  \label{eq:two:case:MVR2:pre}
&& \ts \EEE [  1 + \sum_{t=0}^T \delta_t^2 + \sum_{t=0}^T 4 L_t \gamma_t \GG(\x_t) ]\nn\\
&{\leq}&\ts \EEE [ 1 + \big(8\sum_{t=0}^T s_t^2 \big) + 4  \overline{F}\rho \left( \Gamma^{2/3} +  T^{1/6}\beta^{1/6} \Gamma^{1/2}\right) + 4D^2 \rho^2 \left(\vartheta^{2} (T+1)^{1/3} +  1\right) + 2 L \cdot \frac{2 \kappa }{\rho } \Gamma^{1/2}] \nn\\
&\overset{\step{1}}{\leq}& \EEE [ \ts \underbrace{ \ts \big( 1 + 4  \overline{F}\rho + 4 D^2 \rho^2 + \tfrac{4L\kappa}{\rho} \big)}_{:=P_0} \Gamma^{2/3} + \big(8 \sum_{t=0}^T s_t^2 \big) + 4  \overline{F}\rho T^{1/6}\beta^{1/6} \Gamma^{1/2} + 4D^2 \rho^2 \vartheta^{2} (T+1)^{1/3}  ] \nn\\
&\overset{\step{2}}{\leq}& \ts \EEE [3 \max \left( P_0 \Gamma^{2/3} ,~ \big(8 \sum_{t=0}^T s_t^2 \big) + 4D^2 \rho^2 \vartheta^{2} (T+1)^{1/3} ,~4  \overline{F}\rho T^{1/6}\beta^{1/6} \Gamma^{1/2} \right)],
\eeq
\noi where step \step{1} uses $\Gamma := 1+\sum_{t=0}^T \delta_t^2\geq 1$; step \step{2} uses $a+b+c\leq 3 \max(a,b,c)$ for all $a,b,c\geq 0$.

Dropping the term $\sum_{t=0}^T 4 L_t \gamma_t \GG(\x_t)$ yields:
\beq \label{eq:two:case:MVR2}
\EEE [\Gamma] &\leq &  \ts \EEE [ 3 \max \left( P_0 \Gamma^{2/3} ,~ \big(8 \sum_{t=0}^T s_t^2 \big) + 4D^2 \rho^2 \vartheta^{2} (T+1)^{1/3} ,~4  \overline{F}\rho T^{1/6}\beta^{1/6} \Gamma^{1/2} \right)] \nn\\
&\overset{\step{1}}{\leq}&  \ts \EEE [ \max \left( (3 P_0)^3,~ \big(24 \sum_{t=0}^T s_t^2 \big) + 12 D^2 \rho^2 \vartheta^{2} (T+1)^{1/3} ,~(12  \overline{F}\rho T^{1/6}\beta^{1/6} )^2 \right)]\nn\\
&\overset{\step{2}}{\leq}&  \ts \EEE [ \underbrace{27 P_0^3}_{:=Z_1} + \big(24 \sum_{t=0}^T s_t^2 \big) + \underbrace{ \big ( 12 D^2 \rho^2  + 12^2  \overline{F}^2\rho^2   \big) }_{:=Y_1} \cdot \max(\vartheta^{2},\beta^{1/3}) (T+1)^{1/3}],
\eeq
\noi where step \step{1} uses the fact that $x\leq ax^p$ implies $x\leq a^{1/(1-p)}$ for all $x\geq 0$ and $p\in[0,1)$; step \step{2} uses $\max(a,b,c)\leq a+b+c$ for all $a,b,c\geq 0$.

We discuss two cases for Inequality (\ref{eq:two:case:MVR2}). (i) $24 \sum_{t=0}^T s_t^2 \leq \tfrac{1}{2} (1 + \sum_{t=0}^T \delta_t^2)$. We have:
\beq
\ts \EEE [ 1 + \sum_{t=0}^T\delta_t^2 ] & \leq & \ts Z_1 + \tfrac{1}{2}(1 + \sum_{t=1}^T \delta_t^2)  + Y_1 \max(\beta^{1/3},\vartheta^2) (T+1)^{1/3}  \nn\\
&\leq& 2 Z_1 + 2 Y_1 \max(\beta^{1/3},\vartheta^2) (T+1)^{1/3} .
\eeq
\noi (ii) We now discuss the case $24 \sum_{t=0}^T s_t^2 >\tfrac{1}{2} (1 + \sum_{t=0}^T \delta_t^2)$. We derive:
\beq
\ts \EEE [ 1 + \sum_{t=0}^T \delta_t^2]&\leq & \EEE [ \ts 48 \sum_{t=0}^T s_t^2 ] \nn\\
&\leq &   48 \dot{B} (\sigma^2 + \tfrac{\sigma^2}{\beta}+  \sigma^2 \beta^{-2/3} T^{1/3} ) + 48 \ddot{B} \Gamma^{1/3} + 48 C_3 T^{1/3} \beta^{1/3}  \nn\\
&\leq &   2 \max\left( 48 \dot{B} ( \sigma^2  + \tfrac{\sigma^2 }{\beta}+  \sigma^2  \beta^{-2/3} T^{1/3} )  + 48 \ddot{B} T^{1/3} \beta^{1/3} ,48 \ddot{B}  \Gamma^{1/3} \right)  \nn\\
&\leq &  \max\left( 96 \dot{B} (\sigma^2+ \tfrac{\sigma^2}{\beta}+  \sigma^2 \beta^{-2/3} T^{1/3} )  + 96 \ddot{B} T^{1/3} \beta^{1/3} ,(48 \ddot{B})^{3/2} \right)  \nn\\
&\leq &    96 \dot{B} (\sigma^2 + \tfrac{\sigma^2}{\beta}+ \sigma^2 \beta^{-2/3} T^{1/3} )  + 96 \ddot{B} T^{1/3} \beta^{1/3}  + (48 \ddot{B})^{3/2}  \nn\\
&\leq &  \underbrace{ (48 \ddot{B})^{3/2} + 96 \dot{B}  (\sigma^2+ \tfrac{\sigma^2}{\beta} )}_{:= Z_2} +  \underbrace{ 96 ( \dot{B}  + \ddot{B} ) }_{:= Y_2}\cdot \max(\sigma^2 \beta^{-2/3},\beta^{1/3} ) (T+1)^{1/3}, \nn
\eeq
\noi Combining these two cases, we obtain:
\beq
\ts \EEE [ 1 + \sum_{t=0}^T \delta_t^2] \leq \max(2Z_1, Z_2) + \max(2Y_1, Y_2) \cdot \max\left(\beta^{1/3},\vartheta^2,\sigma^2 \beta^{-2/3} \right) (T+1)^{1/3} := \Omega.
\eeq
\noi Furthermore, we have:
\beq
\ts \EEE [\sum_{t=0}^T s_t^2] ~\leq~ \max\left( \tfrac{1}{48} \Omega,\tfrac{1}{48} \Omega \right)  ~\leq~ \Omega.
\eeq

\paragraph{Part (b).} Inequality (\ref{eq:MVR2:G:d:s}) implies that:
\beq
&& \ts \EEE [\sum_{t=0}^T 4 L_t \gamma_t \GG(\x_t)] \nn\\
& \leq & \ts  \max \left( 3 P_0 \Gamma^{2/3} ,~ \big(24 \sum_{t=0}^T s_t^2 \big) + 12 D^2 \rho^2 \vartheta^{2} (T+1)^{1/3} ,~12 \overline{F}\rho T^{1/6}\beta^{1/6} \Gamma^{1/2} \right) - \Gamma \nn\\
&\overset{\step{1}}{\leq}& \ts \Omega^{1/3} \Gamma^{2/3} + \tfrac{1}{2}\Omega + \tfrac{1}{2}\Omega + 12 \overline{F}\rho T^{1/6}\beta^{1/6} \Gamma^{1/2}   - \Gamma \nn\\
&\overset{\step{2}}{\leq}& \ts \Omega  + \tfrac{1}{2}\Omega + \tfrac{1}{2}\Omega + \tfrac{1}{4} (12 \overline{F}\rho T^{1/6}\beta^{1/6})^2 \nn\\
&\overset{\step{3}}{\leq}& \ts \Omega  + \tfrac{1}{2}\Omega + \tfrac{1}{2}\Omega + \tfrac{1}{8}\Omega  \leq 4 \Omega,\nn
\eeq
\noi where step \step{1} uses $3P_0 \leq Z_1^{1/3} \leq (2Z_1)^{1/3} \leq \Omega^{1/3}$, $12 D^2 \rho^2 \vartheta^{2} (T+1)^{1/3}\leq Y_1  \vartheta^{2} (T+1)^{1/3} \leq \tfrac{1}{2}\Omega$; step \step{2} uses $-x^2+ax \leq \tfrac{1}{4}a^2$ for all $a=12 \overline{F}\rho T^{1/6}\beta^{1/6}$ and $x=\Gamma^{1/2}$; step \step{3} uses $\tfrac{1}{4} (12 \overline{F}\rho T^{1/6}\beta^{1/6})^2\leq \tfrac{1}{4} Y_1 T^{1/3} \beta^{1/3} \leq \tfrac{1}{4}\Omega$. Finally, we obtain: $\ts \EEE [\sum_{t=0}^T L_t \gamma_t \GG(\x_t)  ]\leq   \Omega$.

\noi In particular, if we set $\beta =\nu \sigma^2$ with $\nu=\OO(1)$ and $\vartheta = \tfrac{\sqrt[6]{\beta}}{1 + \sqrt[6]{\beta}}$, we have:
\beq
\ts\EEE [ \sum_{t=0}^T L_t \gamma_t \GG(\x_t)] &\leq & \ts  \max(2Z_1, Z_2) +  \max(2Y_1, Y_2) \cdot \max\left(\beta^{1/3},\vartheta^2,\sigma^2 \beta^{-2/3} \right) (T+1)^{1/3} \nn\\
&\overset{\step{1}}{\leq}& \ts \max(2Z_1, Z_2) +  \max(2Y_1, Y_2) \cdot \max\left( \nu^{1/3} , \nu^{-2/3} \right)\sigma^{2/3}  (T+1)^{1/3},\nn
\eeq
\noi where step \step{1} uses $\vartheta^2  \leq \beta^{1/3}$ and $\beta =\nu \sigma^2$. 

\end{proof}

\subsection{Proof of Theorem \ref{theorem:MVR2}}
\label{app:theorem:MVR2}

\begin{proof} We let $\gamma_t = \tfrac{L_0}{2L_t} \left( \vartheta (T+1)^{-1/3} + (T+1)^{-1/2} \right)$, where $\vartheta:=\tfrac{\beta^{1/6}}{1+\beta^{1/6}}$, and $\beta = \nu \sigma^2$.

We have from Lemma \ref{lemma:lemma:MVR2:S:delta}(b):
\beq
\ts \EEE [\sum_{t=0}^T L_t\gamma_t\GG(\x_t) ] ~\leq~  \max(2Z_1, Z_2) + \max(2Y_1, Y_2) \cdot \max( \nu^{1/3}, \nu^{-2/3} ) \sigma^{2/3} (T+1)^{1/3} .\nn
\eeq
\noi This further leads to the following results:
\beq
&& \ts \EEE [\sum_{t=0}^T L_t\gamma_t\GG(\x_t) ] \leq \tilde{\OO}( \max( \nu^{1/3}, \nu^{-2/3} ) \sigma^{2/3} T^{1/3}  + 1)\nn\\
&\overset{\step{1}}{\Rightarrow} & \ts \tfrac{L_0}{2} \left( \vartheta (T+1)^{-1/3} + (T+1)^{-1/2} \right) \EEE [\sum_{t=0}^T  \GG(\x_t) ] \leq \tilde{\OO}( \max( \nu^{1/3}, \nu^{-2/3} ) \sigma^{2/3} T^{1/3}  + 1)\nn\\
&\Rightarrow& \ts \EEE [\tfrac{1}{T+1} \sum_{t=0}^T  \GG(\x_t)] \leq \tilde{\OO} \left( \tfrac{ \max( \nu^{1/3}, \nu^{-2/3} ) \sigma^{2/3}   T^{-2/3}+T^{-1}}{  \vartheta T^{-1/3} + T^{-1/2}  } \right) \nn\\
&\overset{\step{2}}{\Rightarrow} & \ts\EEE [ \tfrac{1}{T+1} \sum_{t=0}^T  \GG(\x_t)] \leq \tilde{\OO} \left( \tfrac{\max( \nu^{1/3}, \nu^{-2/3} ) \sigma^{2/3}}{ \nu^{1/6}\sigma^{1/3}} (1+\sigma^{1/3}) T^{-1/3} + T^{-1/2} \right) \nn\\
&\overset{\step{3}}{\Rightarrow} & \ts \EEE [\tfrac{1}{T+1} \sum_{t=0}^T  \GG(\x_t) ] \leq \tilde{\OO} \left( \tfrac{ (\frac{1}{\nu} +\nu) \sigma^{2/3}}{ \sigma^{1/3}} (1+\sigma^{1/3}) T^{-1/3} + T^{-1/2} \right), \nn
\eeq
\noi where step \step{1} uses the choice of $\gamma_t$; step \step{2} uses $\tfrac{a+b}{c+d}\leq \frac{a}{c} + \tfrac{b}{d}$ for all $a,b\geq 0$ and $c,d>0$ and the choice $\vartheta = \tfrac{\sigma^{1/3}}{1+\sigma^{1/3}}$; step \step{3} uses $\frac{\max( \nu^{1/3}, \nu^{-2/3} )}{\nu^{1/6}} \leq \frac{1}{\nu} + \nu$ for all $\nu>0$.

\end{proof}

\section{Experiments}
\label{app:sect:exp}

This section evaluates the empirical performance of ALFCG against state-of-the-art Frank-Wolfe (FW) variants. We conduct experiments across deterministic, finite-sum, and expectation settings, focusing on multi-class classification under nuclear norm and $\ell_p$-norm ($p=3$) ball constraints.

\subsection{Datasets and Tasks}

We consider multi-class classification with smooth loss functions. To assess the efficiency of our projection-free approach, we employ two constraint sets: (i) the nuclear norm ball, and (ii) the $\ell_p$-ball with $p=3$, where standard Euclidean projections are computationally demanding.
Datasets are denoted as `\texttt{randn}-$N$-$d$' (synthetic Gaussian data) and `\texttt{mnist}-$N$-$d$' (subsampled from the MNIST dataset), where $N$ and $d$ denote the number of samples and features, respectively. For synthetic data, labels are generated via a one-hot encoding of random class assignments.

\subsection{Compared Methods} 

We compare ALFCG against a comprehensive set of baselines.
\begin{enumerate}[label=\textbf{(\alph*)}, leftmargin=20pt, itemsep=1pt, topsep=1pt, parsep=0pt, partopsep=0pt]

\item  \textbf{Deterministic Setting:} FW-OpenLoop \cite{FrankWolfe1956},  FW-ShortStep \cite{ICMLjaggi13}, FW-Momentum \cite{li2021heavy}, FW-Sliding \cite{lan2017conditionalSIOPT}, FW-Armijo \cite{ochs2019model}, and FW-ParaFree \cite{ito2023parameter}.
    \item \textbf{Finite-Sum Setting:} SVFW \cite{reddi2016stochastic}, SAGA-FW \cite{reddi2016stochastic}, SPIDER-CG \cite{yurtsever19bSpiderCG}, OneSample-FW \cite{zhang2020oneOneSample}, and SARAH-FW \cite{beznosikov2024sarah}.
    \item \textbf{Expectation Setting:} SFW \cite{reddi2016stochastic}, SFW-GB \cite{hazan2016variance}, OneSample-STORM \cite{zhang2020oneOneSample}, OneSample-EMA \cite{zhang2020oneOneSample}, and SRFW\cite{tang2022high}.
\end{enumerate}

\subsection{Parameter Settings}

We consider the following parameter setting in our experiments.

\noi $\bullet$ Deterministic Setting. \ding{182} ALFCG-D: Uses an adaptive Lipschitz-free rule with $\rho = 10^{-5}$. \ding{183} FW-Armijo: Employs backtracking line search with initial step size $1.0$, contraction $\beta = 0.5$, and descent parameter $c = 10^{-4}$. \ding{184} FW-Momentum: Uses a momentum decay of $\rho = 0.9$ and the open-loop step size $\eta_t = 2/(t+2)$. \ding{185} FW-Openloop: Uses a diminishing step-size $\eta_t = 2/(t+1)$. \ding{186} FW-ParaFree: Adapts local curvature $L$ via line search with increase/decrease factors $\tau_{inc} = 2.0$ and $\tau_{dec} = 0.5$. \ding{187} FW-ShortStep: Minimizes a quadratic model using a constant Lipschitz parameter $L=10$. \ding{188} FW-Sliding: Sets $L=10$ and performs $5$ inner iterations per outer Nesterov acceleration step.

\noi $\bullet$ Finite-Sum Setting. \ding{182} ALFCG-FS: Employs a scaling factor $\rho = 10^{-5}$ and a SPIDER estimator with epoch length and mini-batch size $q = \lfloor\sqrt{N}\rfloor$. \ding{183} FW-SVFW: Implements an epoch length $m = 50$ and mini-batch size $\lfloor\sqrt{N}\rfloor$ with $\eta_t = 2/(t+1)$. \ding{184} FW-SAGA: Uses the SAGA estimator with $\eta_t = 2/(t+1)$. \ding{185} FW-SPIDER: Uses a refresh period $q = \sqrt{N}$ and mini-batch sizes of $\sqrt{N}$ for resets and $32$ for inner updates with $\eta_t = 2/(t+1)$. \ding{186} FW-OneSample: Uses a reset period $q = 50$, a variance-reduced gradient estimator, and step size $\eta_t = 2/(t+1)$ \ding{187} FW-SARAH: Utilizes a recursive estimator with inner-loop $m = 50$, mini-batch size $32$, and $\eta_t = 2/(t+1)$.

\noi $\bullet$ Expectation Setting. \ding{182} ALFCG-MVR1 and ALFCG-MVR2: Uses a scaling factor $\rho = 10^{-5}$, noise parameter $\beta = 100$, and mini-batch size $\lfloor\sqrt{N}\rfloor$. \ding{183} SFW: Implements stochastic gradient estimation with a mini-batch size $\lfloor\sqrt{N}\rfloor$ and $\eta_t = 2/(t+2)$. \ding{184} SFW-GB: Follows an increasing mini-batch schedule $b_t=\lfloor (t+1)^{2/3} \rfloor)$ with $\eta_t = 2/(t+1)$. \ding{185} OneSample-STORM: Utilizes a recursive STORM estimator with mini-batch size $\lfloor\sqrt{N}\rfloor$, where both step size $\eta_t$ and momentum $\rho_t$ follow a $t^{-2/3}$ schedule. \ding{186} OneSample-EMA: Updates the gradient via EMA with decay $\rho_t = (t+1)^{-2/3}$ and a mini-batch size $\lfloor\sqrt{N}\rfloor$, and step size $\eta_t = 2/(t+1)$. \ding{187} SRFW: Aggregates $K=5$ gradient groups via a coordinate-wise median, using a per-group mini-batch size $\lfloor\sqrt{N}\rfloor$ and $\eta_t = 2/(t+1)$.

\subsection{Results and Discussions}
The experimental results yield the following observations:

\begin{enumerate}[label=\textbf{(\alph*)}, leftmargin=20pt, itemsep=1pt, topsep=1pt, parsep=0pt, partopsep=0pt]

\item \textbf{Deterministic Setting}. Our $f$-value-free ALFCG-D is at least comparable to, if not faster than, FW-Armijo and FW-ParaFree, both of which rely on computationally expensive backtracking line searches. Furthermore, ALFCG-D significantly outperforms standard baselines such as FW-OpenLoop, FW-ShortStep, FW-Momentum, and FW-Sliding.

\item \textbf{Finite-Sum Setting}. While ALFCG-FS occasionally performs on par with other methods in rare instances, it achieves state-of-the-art performance in the vast majority of cases. 

\item \textbf{Expectation Setting}. ALFCG-MVR1 and ALFCG-MVR2 significantly exceed existing stochastic CG benchmarks. This is attributed to their noise-adaptive design, which facilitates faster convergence as the variance vanishes. While the two variants are largely comparable, ALFCG-MVR1 often exhibits a slight performance edge in practice.

\end{enumerate}

\noi $\bigstar$ Source code and additional implementation details are provided in the \textbf{supplemental material}.




\begin{figure}[!hb]
  \centering
  \begin{tabular}{cccc}
    \includegraphics[width=0.23\linewidth,height=2.8cm]{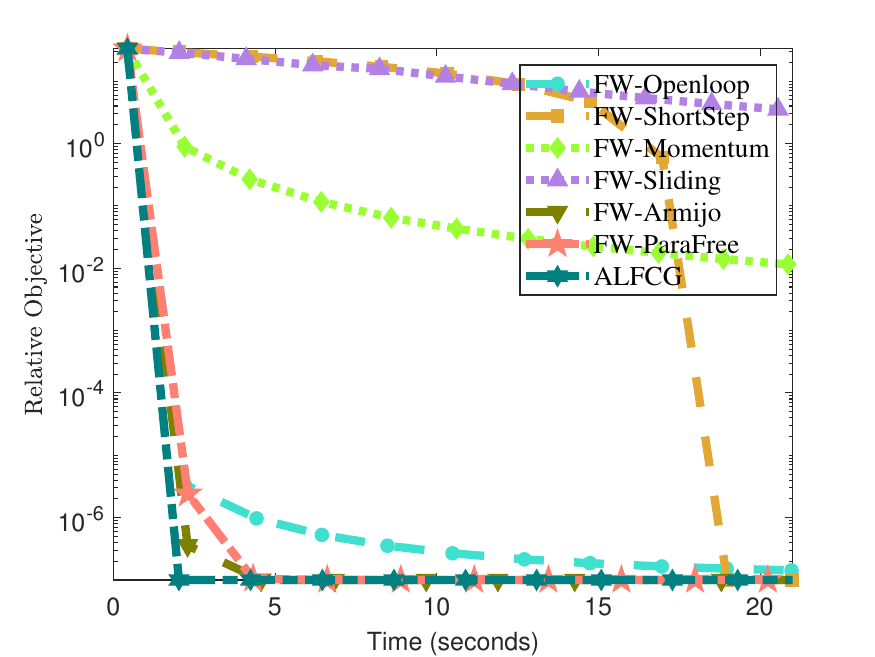} &
    \includegraphics[width=0.23\linewidth,height=2.8cm]{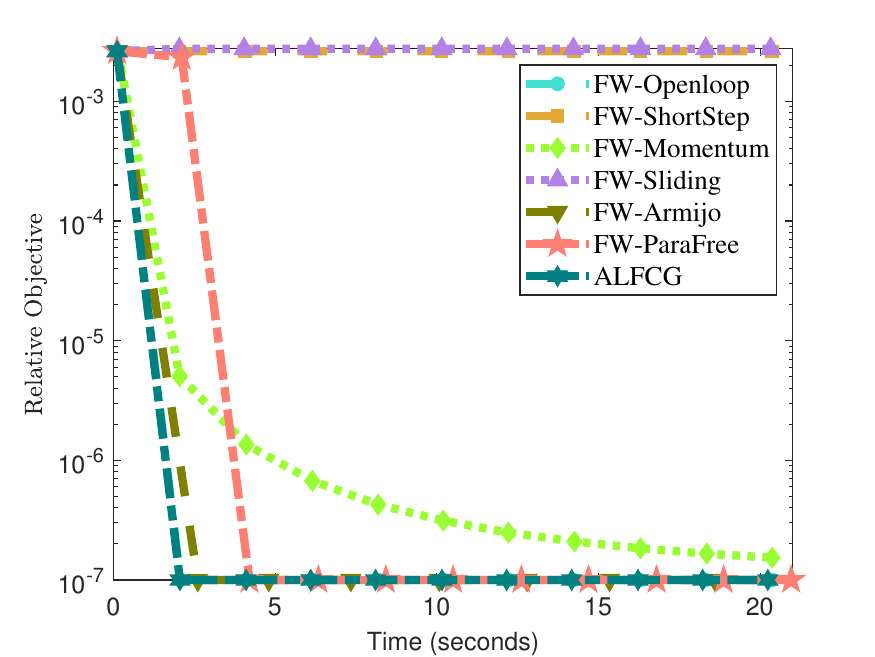} &
    \includegraphics[width=0.23\linewidth,height=2.8cm]{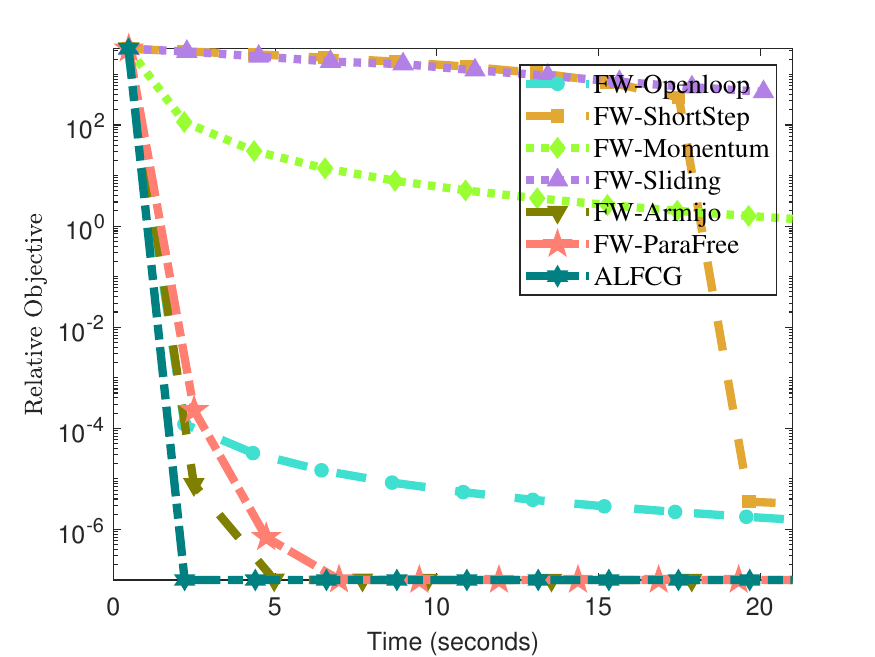} &
    \includegraphics[width=0.23\linewidth,height=2.8cm]{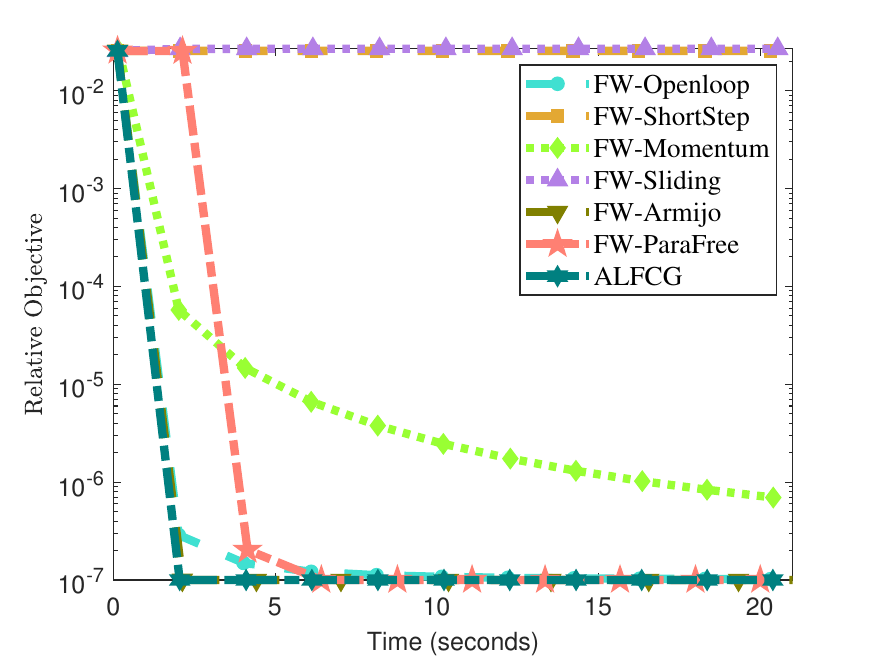} \\
    {\scriptsize (a) randn-30000-5000, $\delta_1=10$} & {\scriptsize (b) mnist-50000-768, $\delta_1=10$} & {\scriptsize (c) randn-30000-5000, $\delta_1=100$} & {\scriptsize (d) mnist-50000-768, $\delta_1=100$}
  \end{tabular}

  \centering
  \begin{tabular}{cccc}
    \includegraphics[width=0.23\linewidth,height=2.8cm]{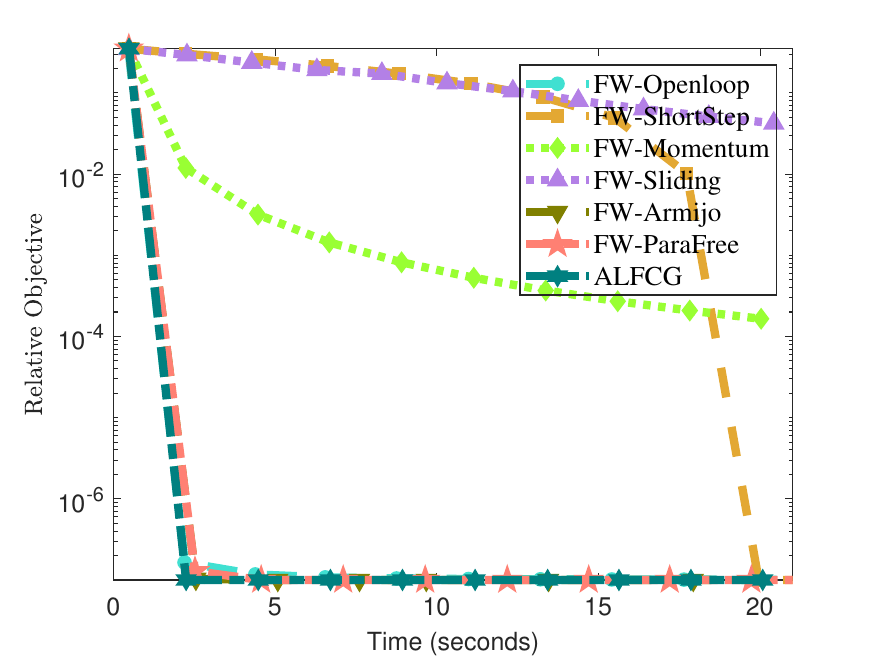} &
    \includegraphics[width=0.23\linewidth,height=2.8cm]{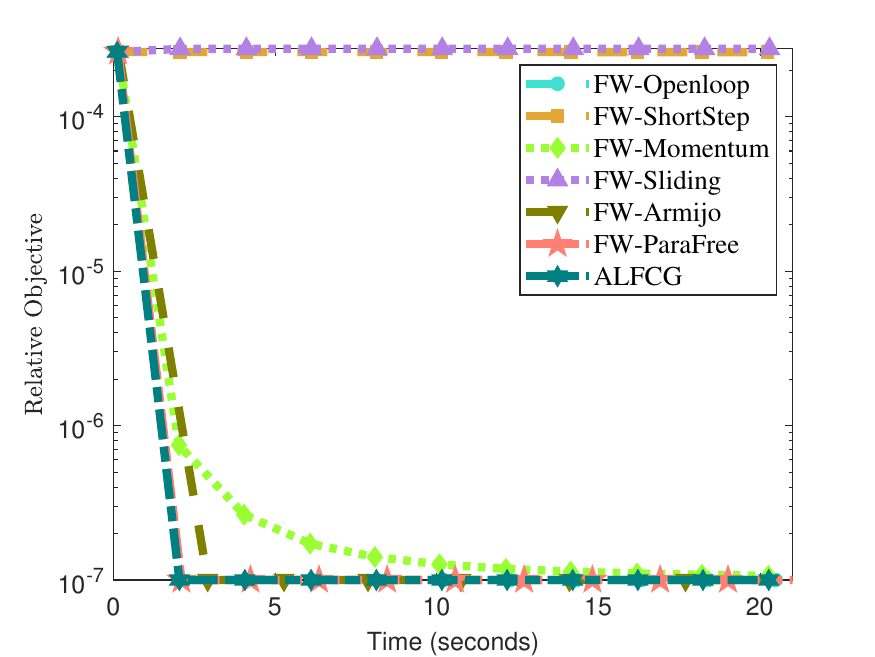} &
    \includegraphics[width=0.23\linewidth,height=2.8cm]{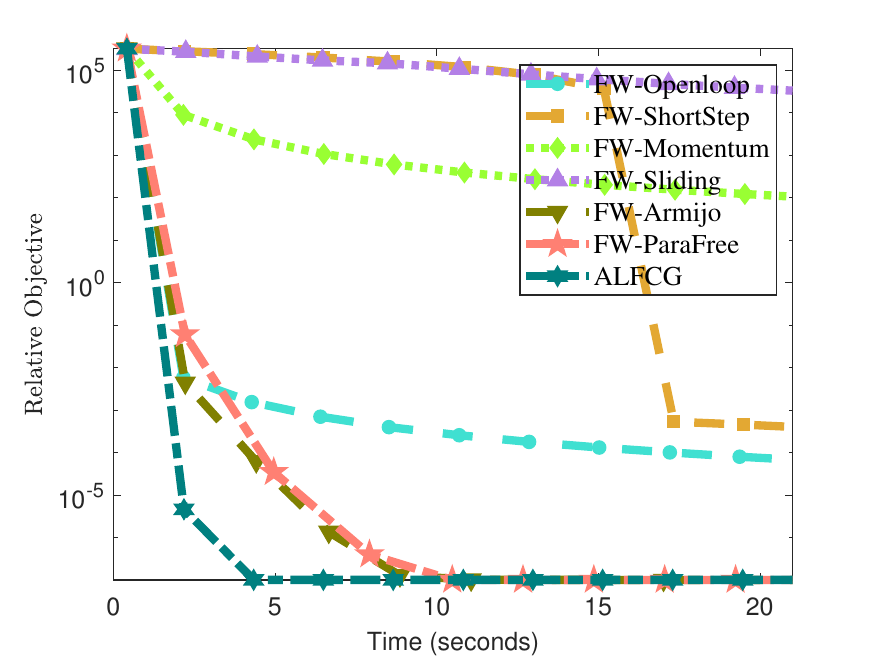} &
    \includegraphics[width=0.23\linewidth,height=2.8cm]{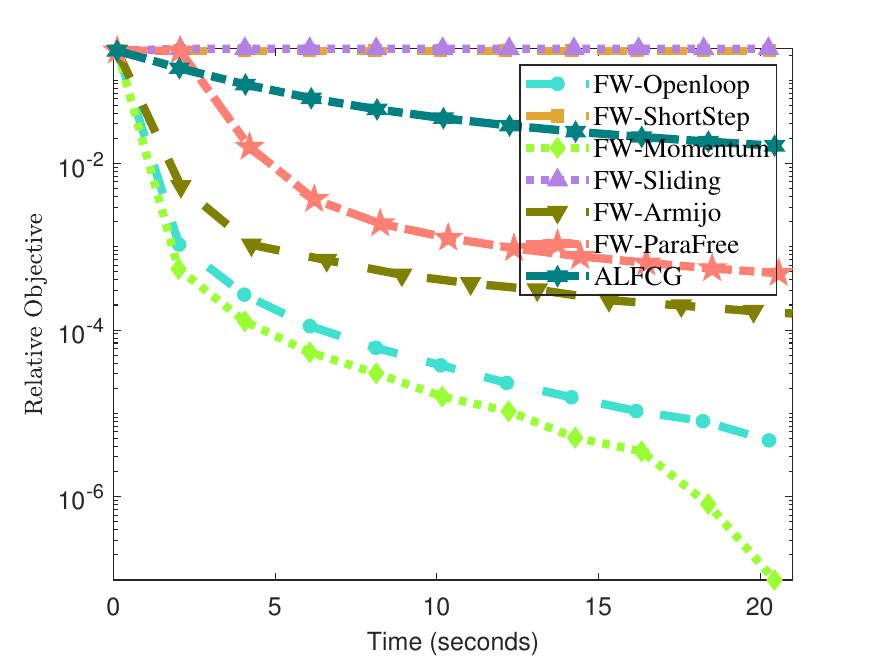} \\
    {\scriptsize (e) randn-30000-5000, $\delta_1=1$} & {\scriptsize (f) mnist-50000-768, $\delta_1=1$} & {\scriptsize (g) randn-30000-5000, $\delta_1=1000$} & {\scriptsize (h) mnist-50000-768, $\delta_1=1000$}
  \end{tabular}
    \caption{Experiment results for the nuclear norm ball constraint problem for \textbf{deterministic setting} with varying radius parameter $\delta = \{10,100,1,1000\}$.}
  \label{fig:1}
\end{figure}


\begin{figure}[t]
  \centering
  \begin{tabular}{cccc}
    \includegraphics[width=0.23\linewidth,height=2.8cm]{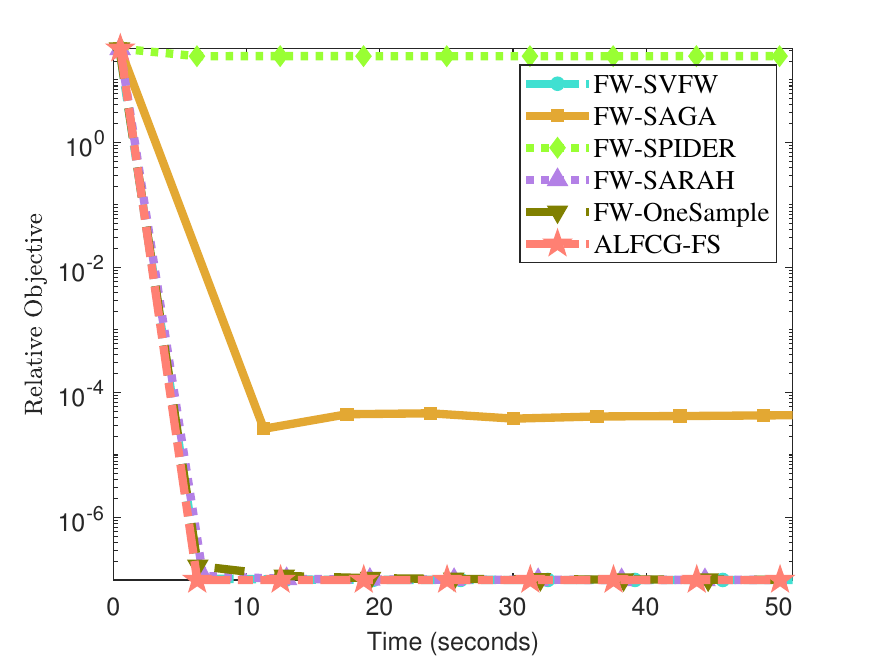} &
    \includegraphics[width=0.23\linewidth,height=2.8cm]{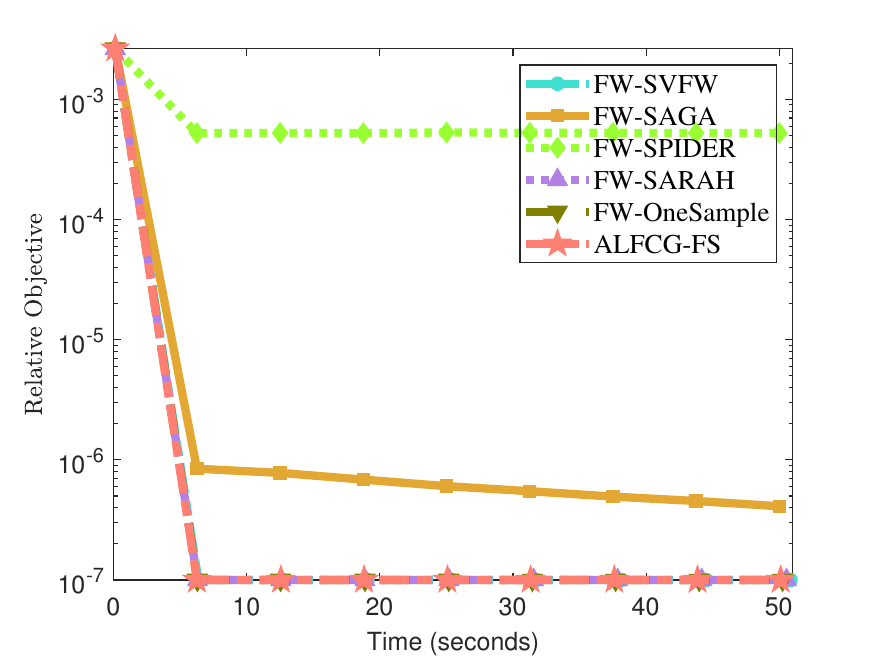} &
    \includegraphics[width=0.23\linewidth,height=2.8cm]{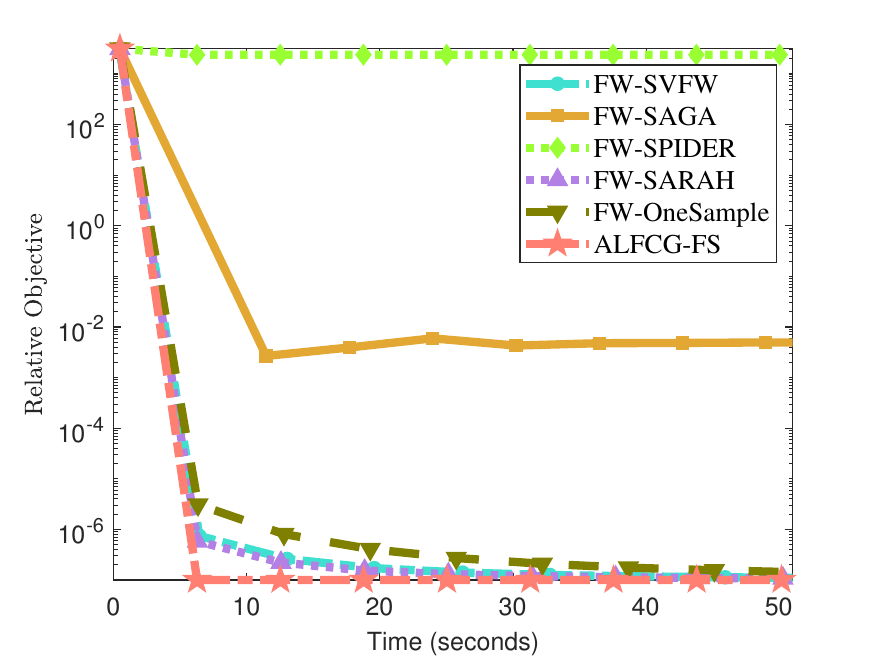} &
    \includegraphics[width=0.23\linewidth,height=2.8cm]{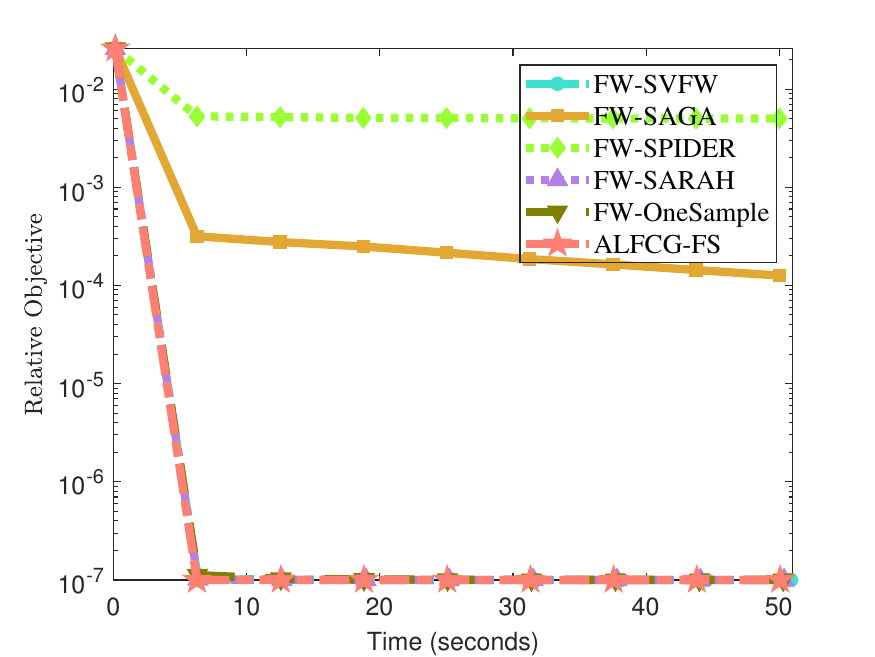} \\
    {\scriptsize (a) randn-30000-5000, $\delta_1=10$} & {\scriptsize (b) mnist-50000-768, $\delta_1=10$} & {\scriptsize (c) randn-30000-5000, $\delta_1=100$} & {\scriptsize (d) mnist-50000-768, $\delta_1=100$}
  \end{tabular}

  \centering
  \begin{tabular}{cccc}
    \includegraphics[width=0.23\linewidth,height=2.8cm]{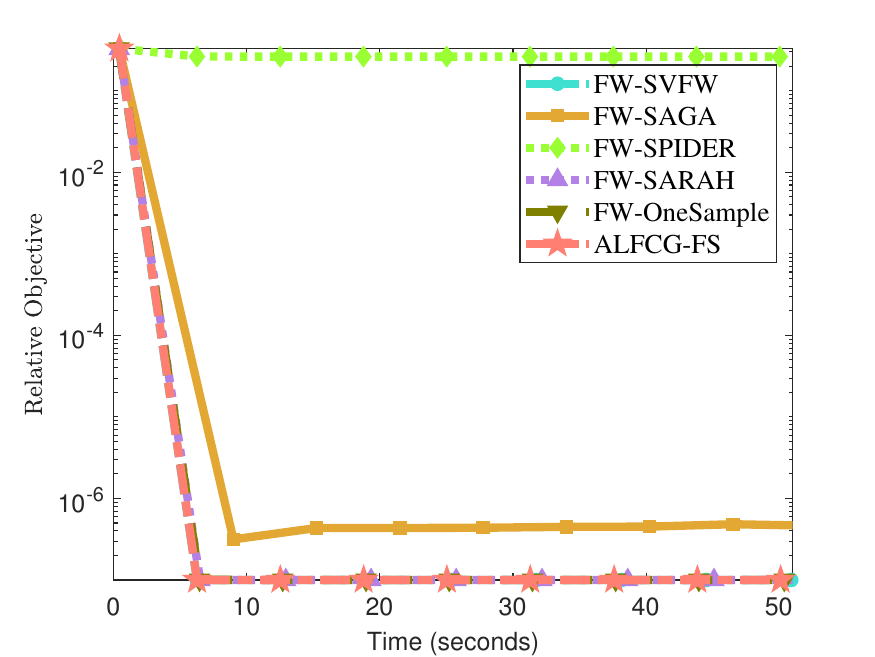} &
    \includegraphics[width=0.23\linewidth,height=2.8cm]{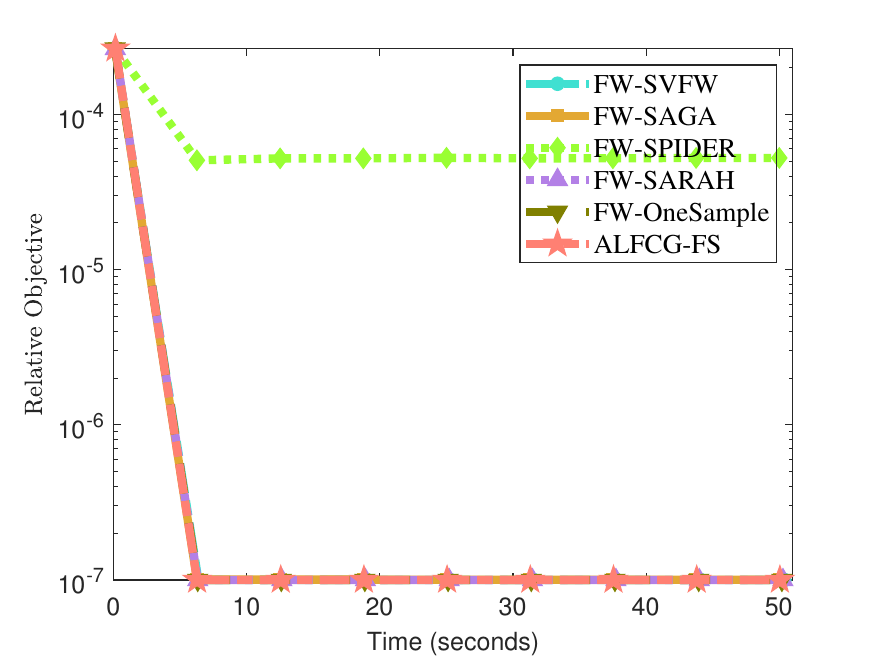} &
    \includegraphics[width=0.23\linewidth,height=2.8cm]{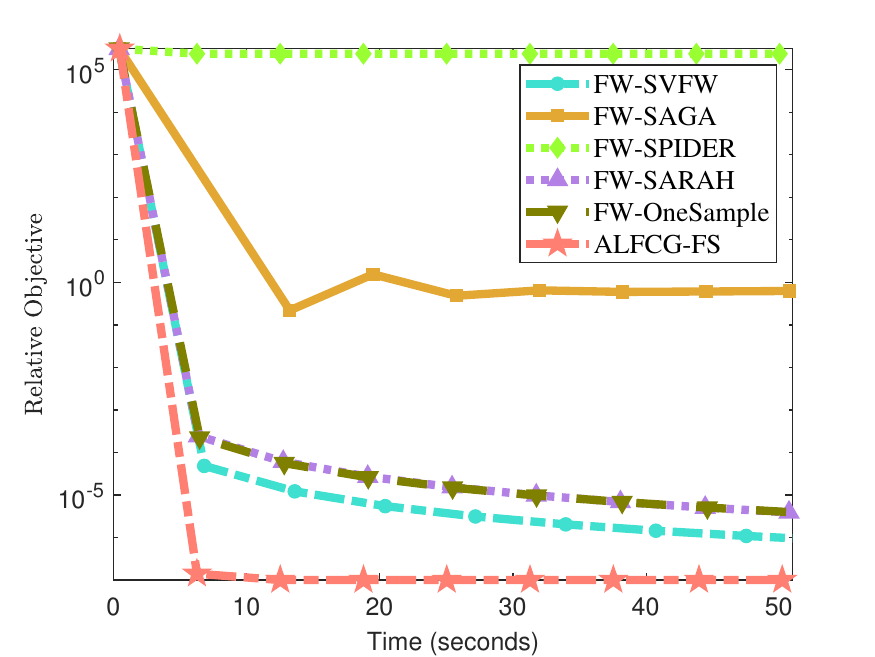} &
    \includegraphics[width=0.23\linewidth,height=2.8cm]{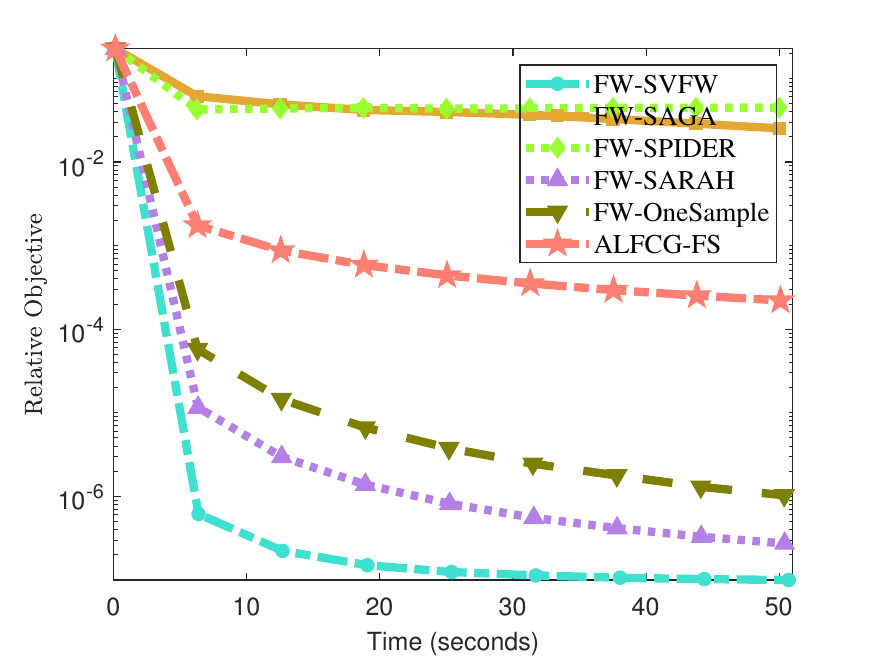} \\
    {\scriptsize (e) randn-30000-5000, $\delta_1=1$} & {\scriptsize (f) mnist-50000-768, $\delta_1=1$} & {\scriptsize (g) randn-30000-5000, $\delta_1=1000$} & {\scriptsize (h) mnist-50000-768, $\delta_1=1000$}
  \end{tabular}
    \caption{Experiment results for the nuclear norm ball constraint problem for \textbf{finite-sum setting} with varying radius parameter $\delta = \{10,100,1,1000\}$.}
  \label{fig:2}
\end{figure}


\begin{figure}[t]
  \centering
  \begin{tabular}{cccc}
    \includegraphics[width=0.23\linewidth,height=2.8cm]{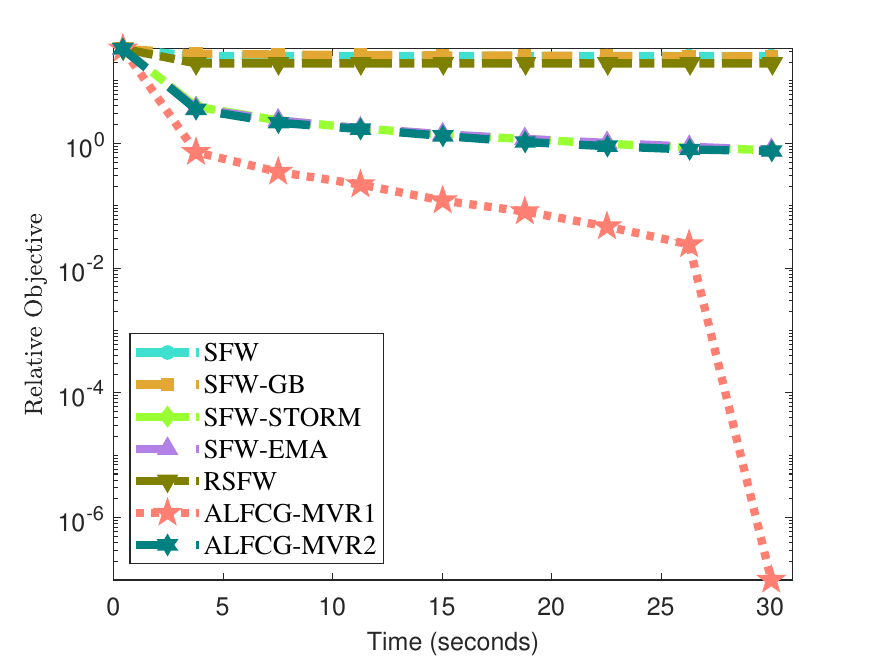} &
    \includegraphics[width=0.23\linewidth,height=2.8cm]{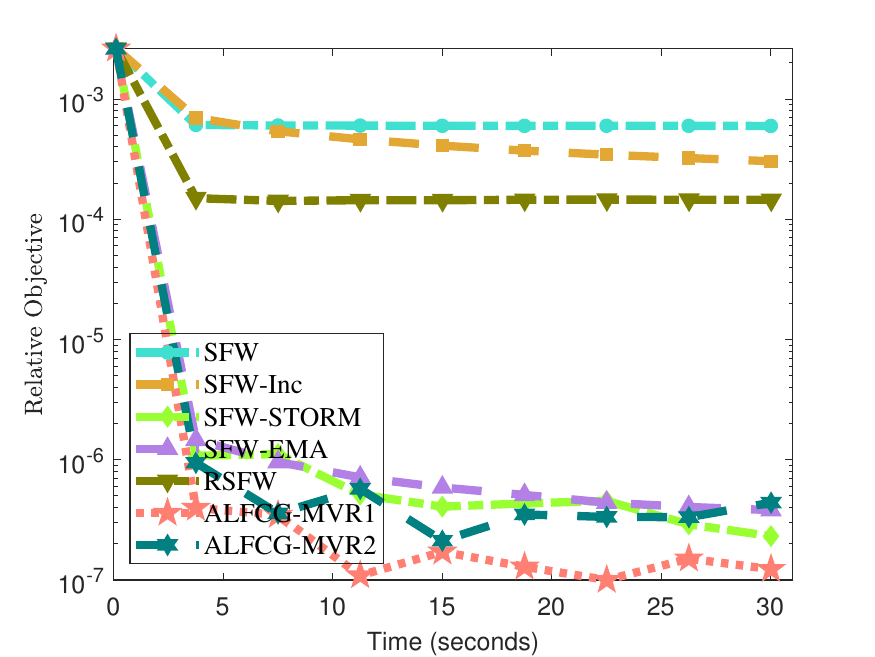} &
    \includegraphics[width=0.23\linewidth,height=2.8cm]{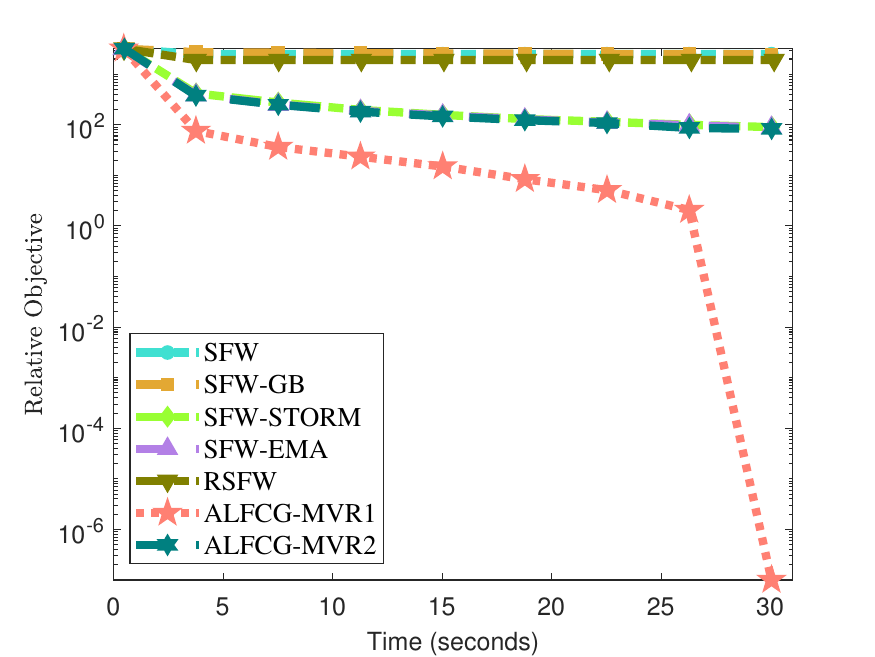} &
    \includegraphics[width=0.23\linewidth,height=2.8cm]{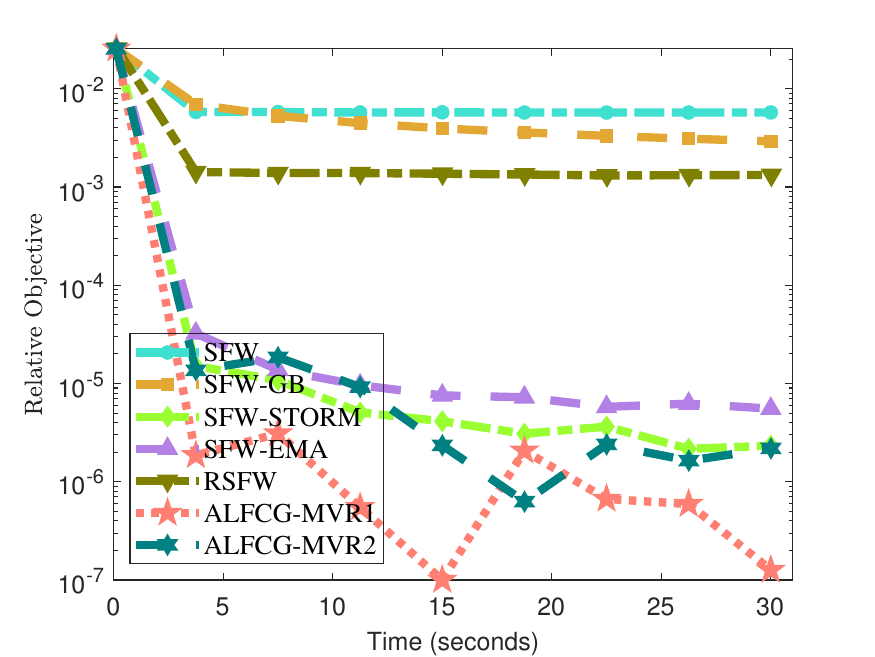} \\
    {\scriptsize (a) randn-30000-5000, $\delta_1=10$} & {\scriptsize (b) mnist-50000-768, $\delta_1=10$} & {\scriptsize (c) randn-30000-5000, $\delta_1=100$} & {\scriptsize (d) mnist-50000-768, $\delta_1=100$}
  \end{tabular}

  \centering
  \begin{tabular}{cccc}
    \includegraphics[width=0.23\linewidth,height=2.8cm]{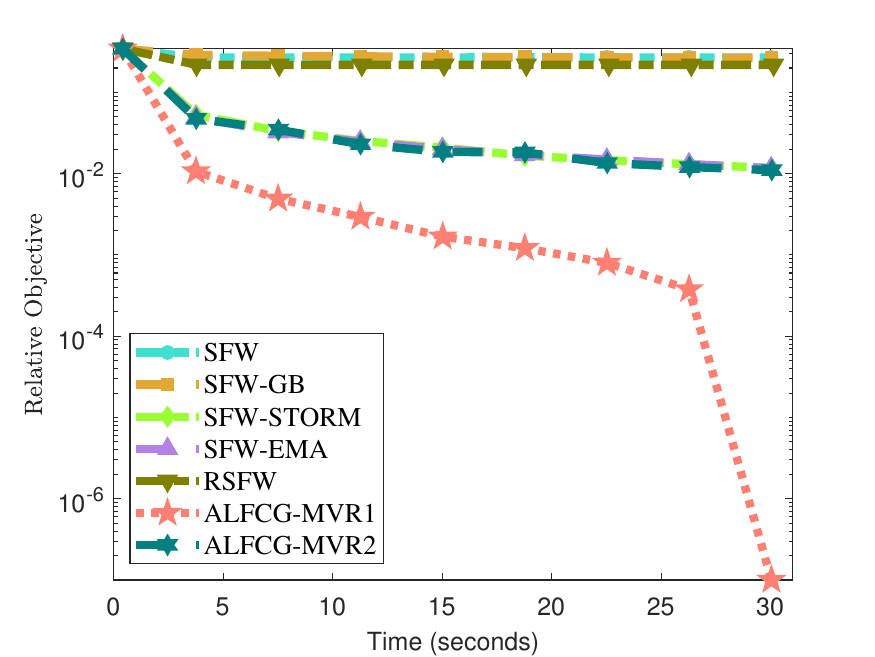} &
    \includegraphics[width=0.23\linewidth,height=2.8cm]{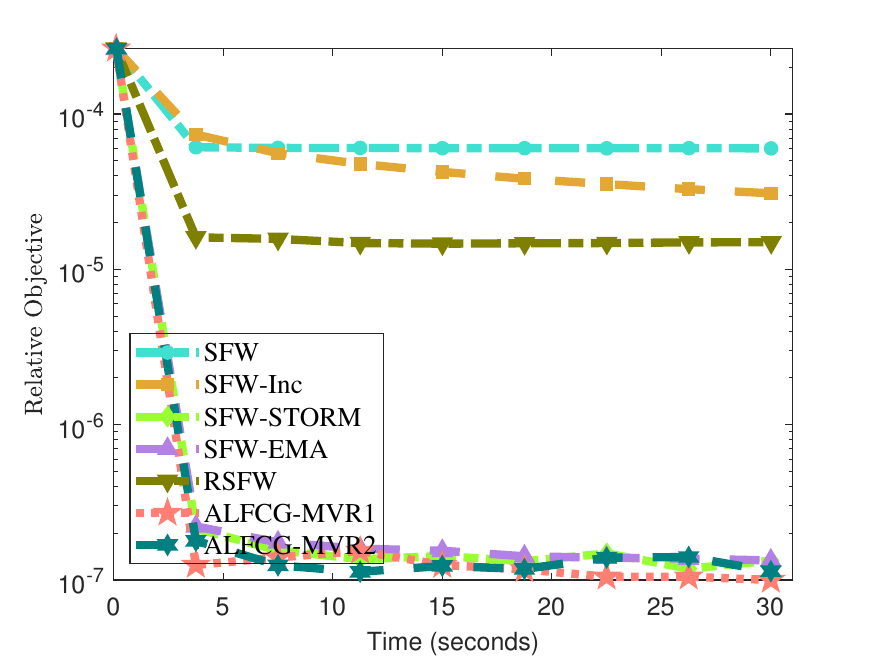} &
    \includegraphics[width=0.23\linewidth,height=2.8cm]{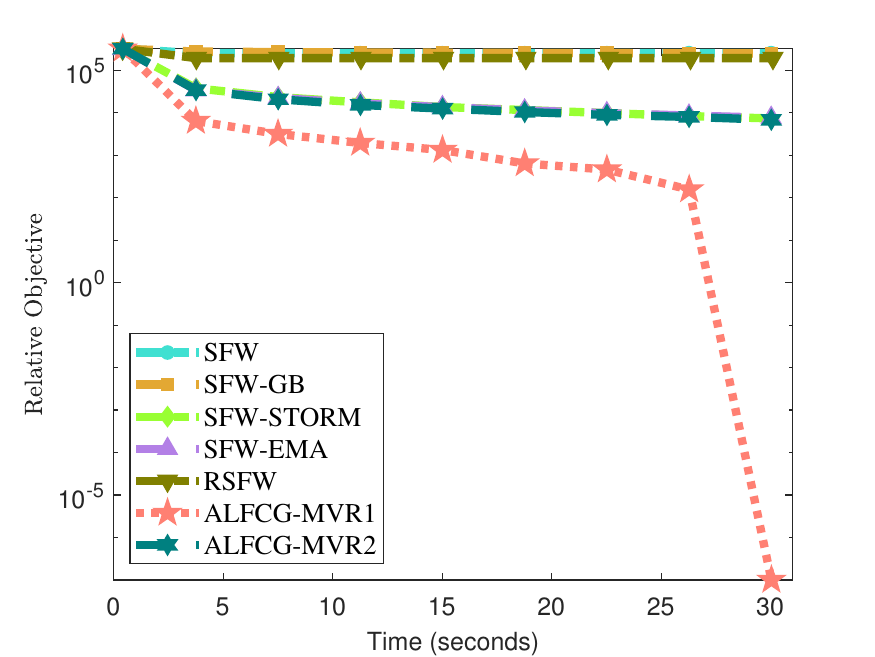} &
    \includegraphics[width=0.23\linewidth,height=2.8cm]{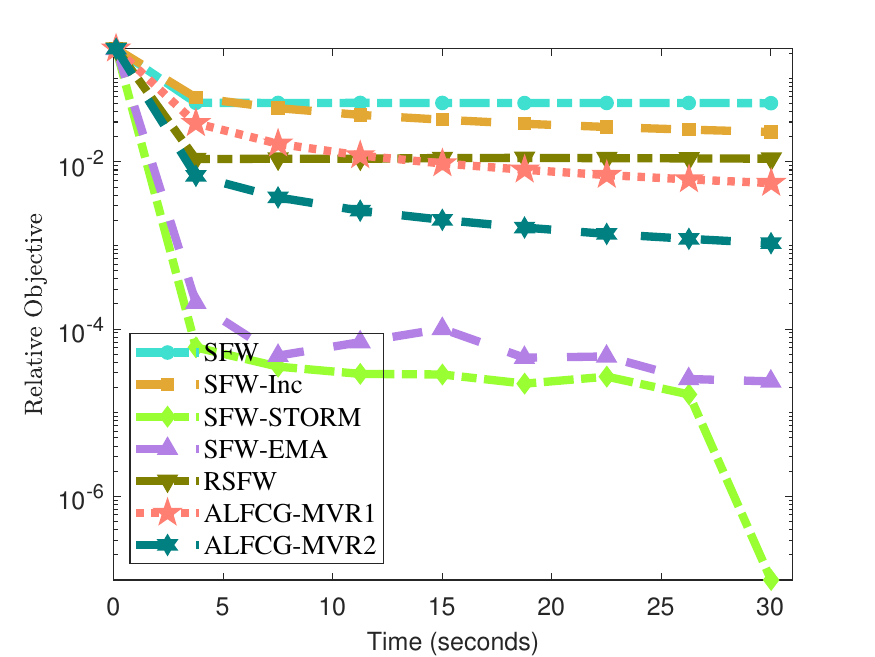} \\
    {\scriptsize (e) randn-30000-5000, $\delta_1=1$} & {\scriptsize (f) mnist-50000-768, $\delta_1=1$} & {\scriptsize (g) randn-30000-5000, $\delta_1=1000$} & {\scriptsize (h) mnist-50000-768, $\delta_1=1000$}
  \end{tabular}
    \caption{Experiment results for the nuclear norm ball constraint problem for \textbf{expectation setting} with varying radius parameter $\delta = \{10,100,1,1000\}$.}
  \label{fig:3}
\end{figure}


\begin{figure}[t]
  \centering
  \begin{tabular}{cccc}
    \includegraphics[width=0.23\linewidth,height=2.8cm]{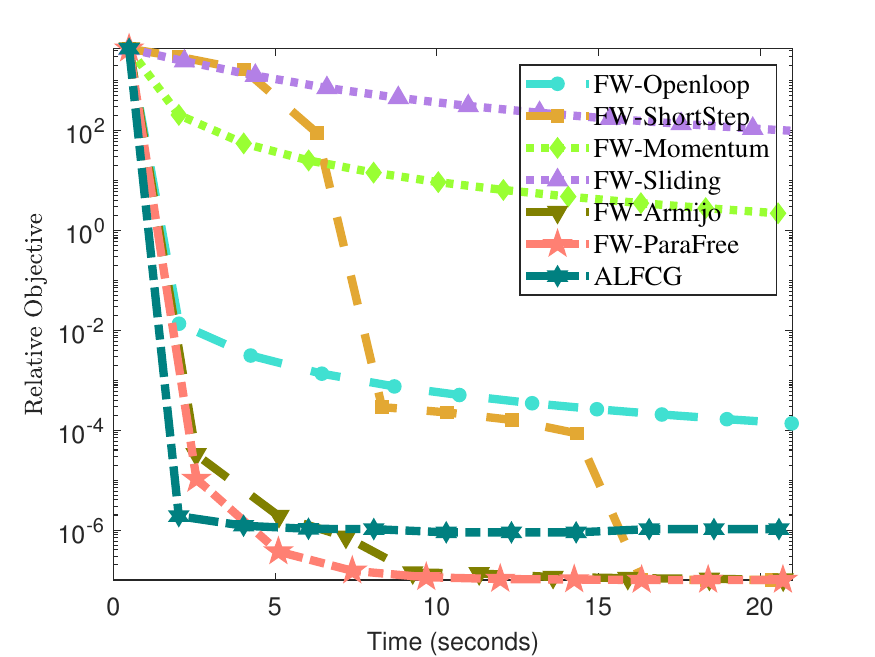} &
    \includegraphics[width=0.23\linewidth,height=2.8cm]{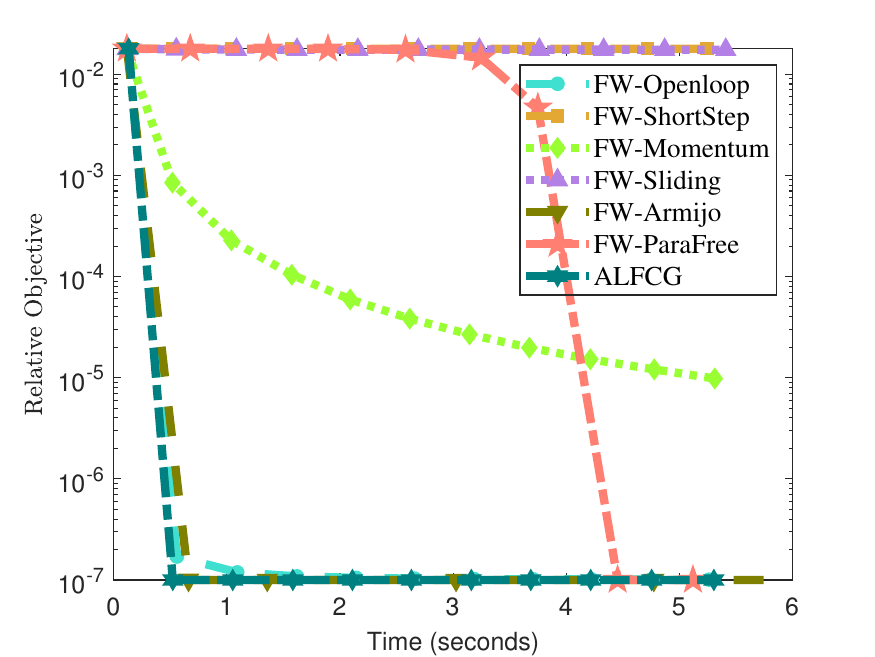} &
    \includegraphics[width=0.23\linewidth,height=2.8cm]{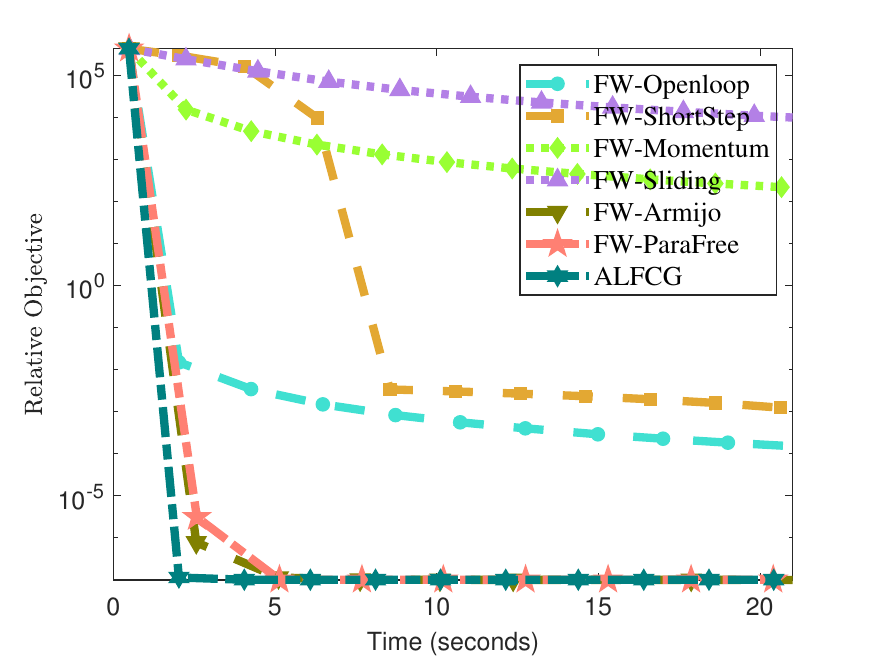} &
    \includegraphics[width=0.23\linewidth,height=2.8cm]{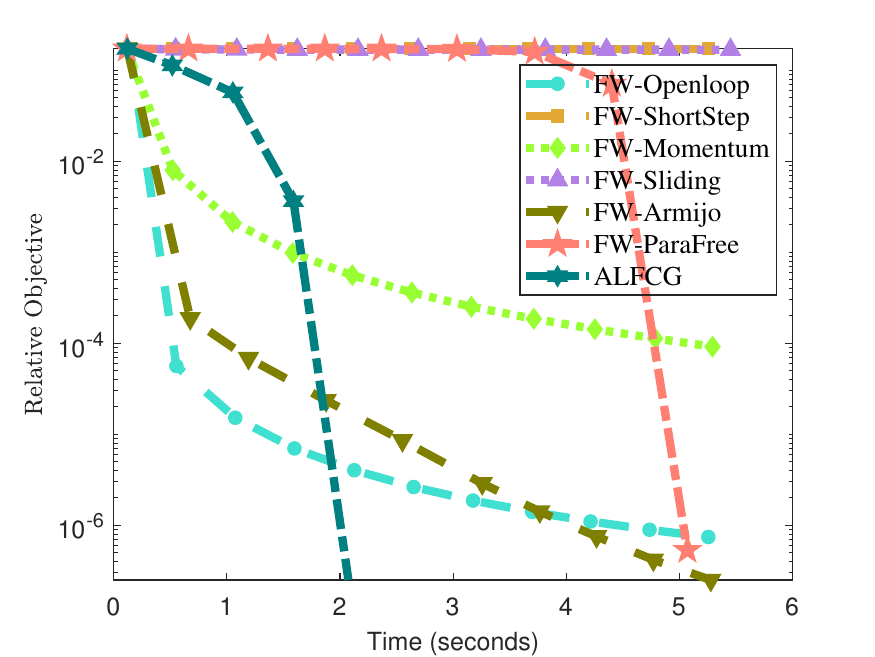} \\
    {\scriptsize (a) randn-30000-5000, $\delta_2=10$} & {\scriptsize (b) mnist-50000-768, $\delta_2=10$} & {\scriptsize (c) randn-30000-5000, $\delta_2=100$} & {\scriptsize (d) mnist-50000-768, $\delta_2=100$}
  \end{tabular}

  \centering
  \begin{tabular}{cccc}
    \includegraphics[width=0.23\linewidth,height=2.8cm]{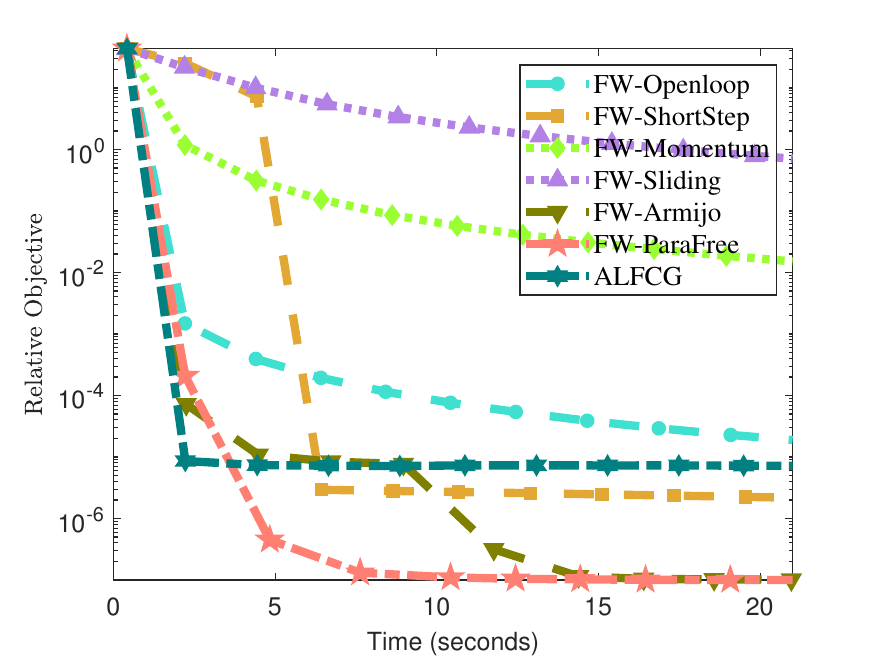} &
    \includegraphics[width=0.23\linewidth,height=2.8cm]{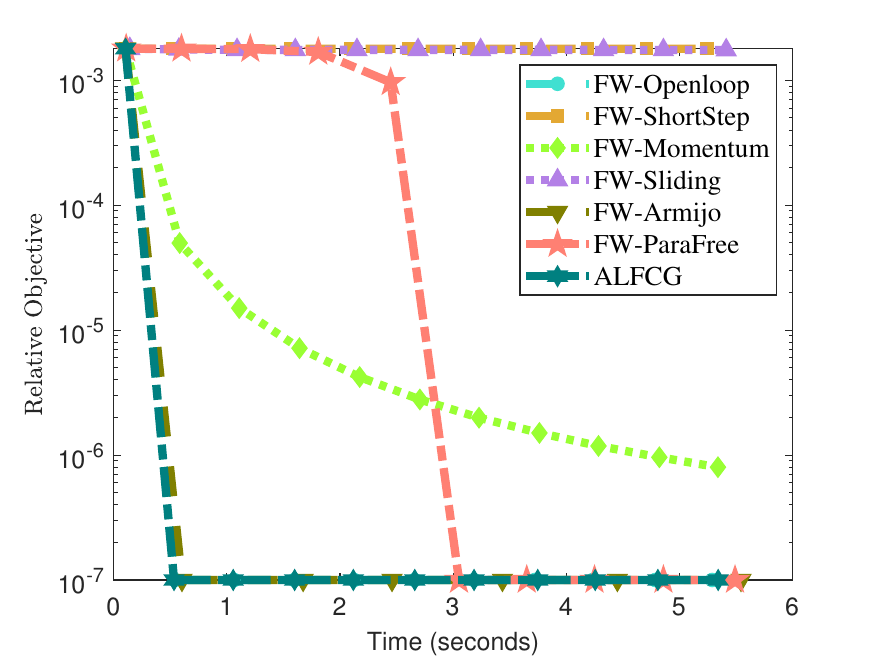} &
    \includegraphics[width=0.23\linewidth,height=2.8cm]{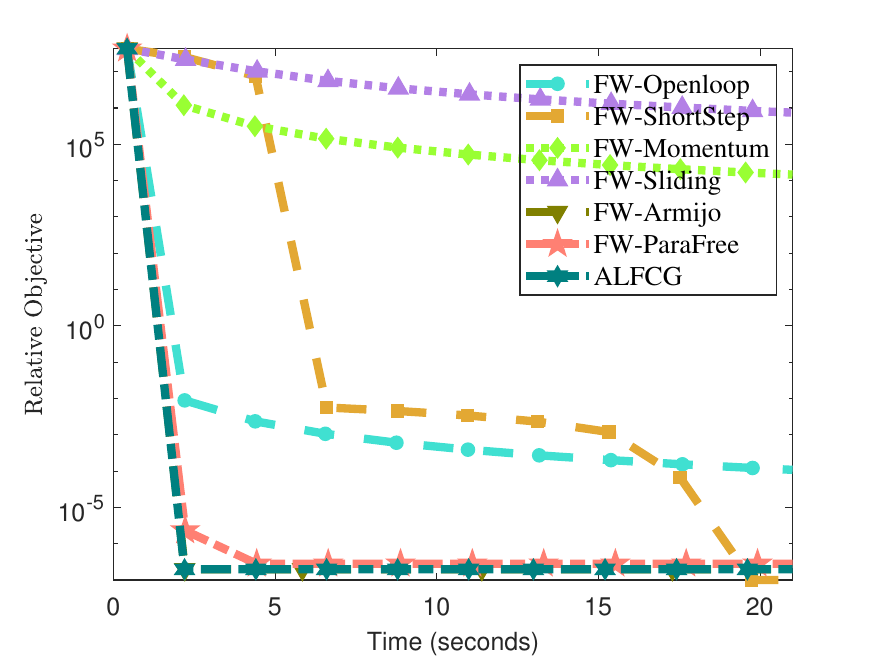} &
    \includegraphics[width=0.23\linewidth,height=2.8cm]{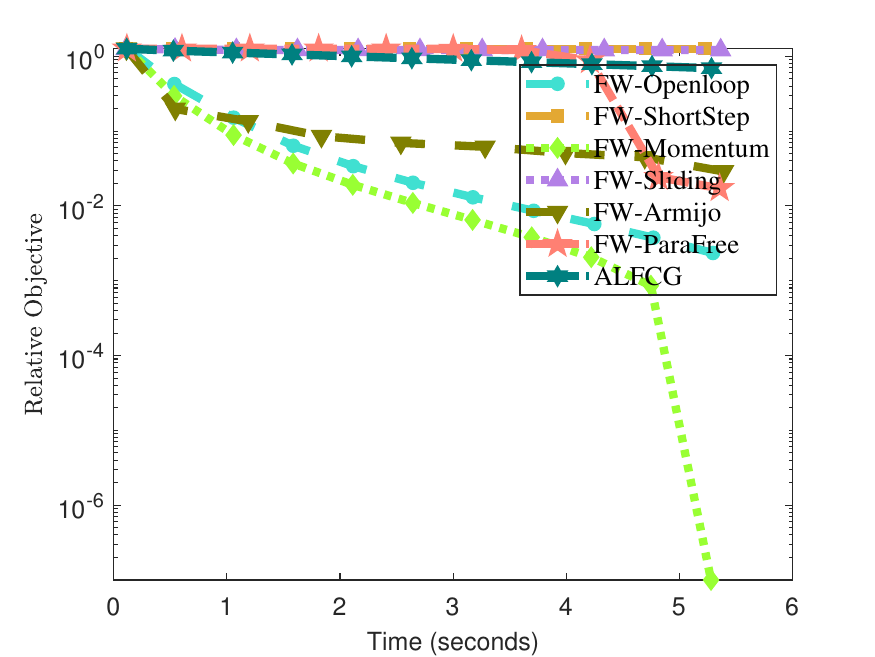} \\
    {\scriptsize (e) randn-30000-5000, $\delta_2=1$} & {\scriptsize (f) mnist-50000-768, $\delta_2=1$} & {\scriptsize (g) randn-30000-5000, $\delta_2=1000$} & {\scriptsize (h) mnist-50000-768, $\delta_2=1000$}
  \end{tabular}
    \caption{Experiment results for the $\ell_p$ norm ball constraint problem for \textbf{deterministic setting} with varying radius parameter $\delta_2 = \{10,100,1,1000\}$.}
  \label{fig:4}
\end{figure}


\begin{figure}[t]
  \centering
  \begin{tabular}{cccc}
    \includegraphics[width=0.23\linewidth,height=2.8cm]{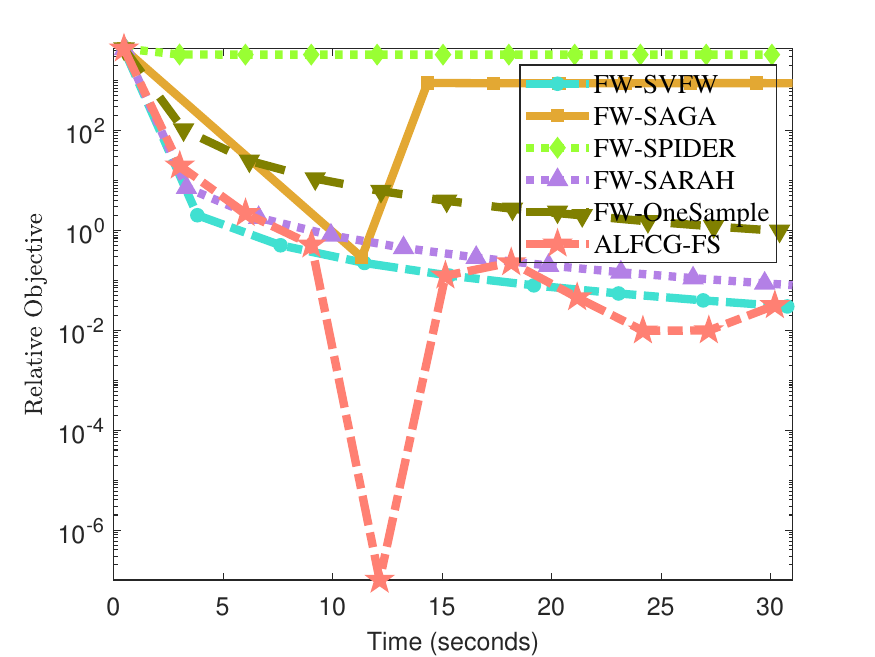} &
    \includegraphics[width=0.23\linewidth,height=2.8cm]{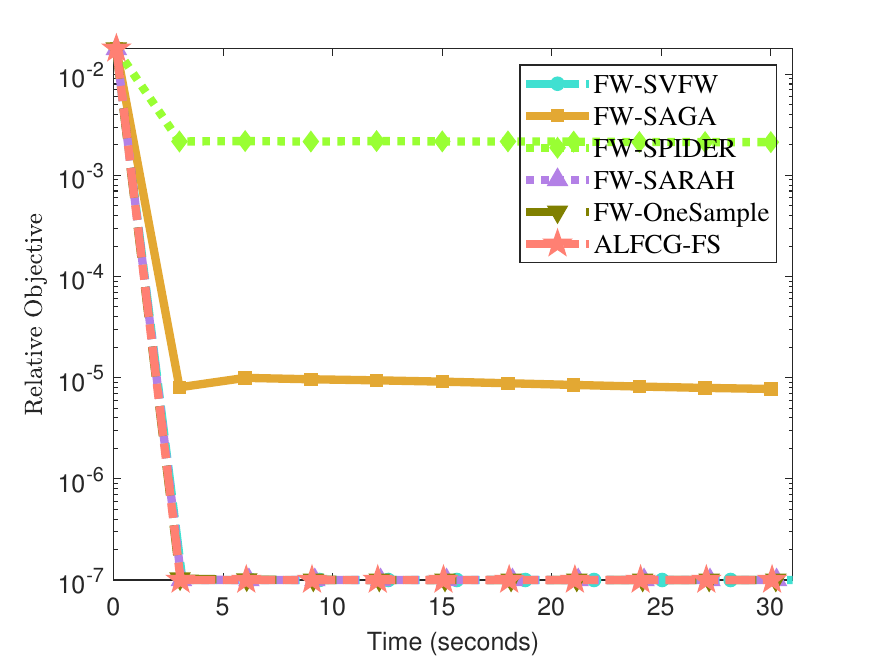} &
    \includegraphics[width=0.23\linewidth,height=2.8cm]{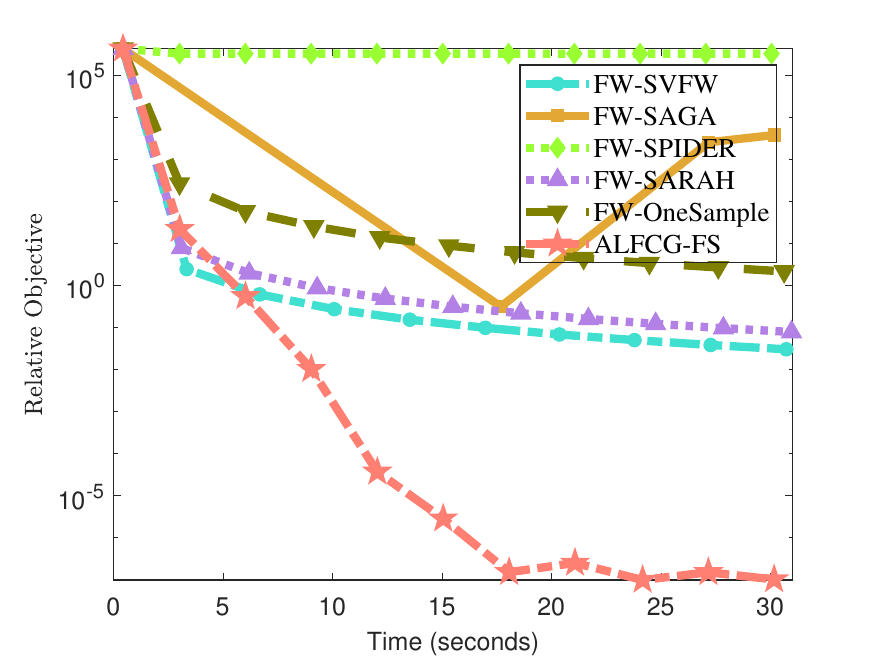} &
    \includegraphics[width=0.23\linewidth,height=2.8cm]{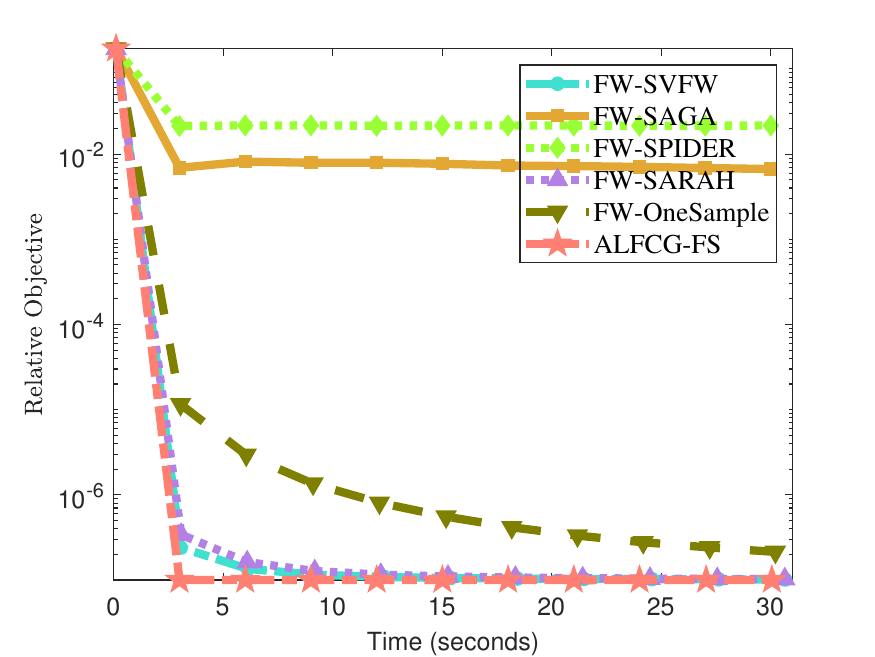} \\
    {\scriptsize (a) randn-30000-5000, $\delta_2=10$} & {\scriptsize (b) mnist-50000-768, $\delta_2=10$} & {\scriptsize (c) randn-30000-5000, $\delta_2=100$} & {\scriptsize (d) mnist-50000-768, $\delta_2=100$}
  \end{tabular}

  \centering
  \begin{tabular}{cccc}
    \includegraphics[width=0.23\linewidth,height=2.8cm]{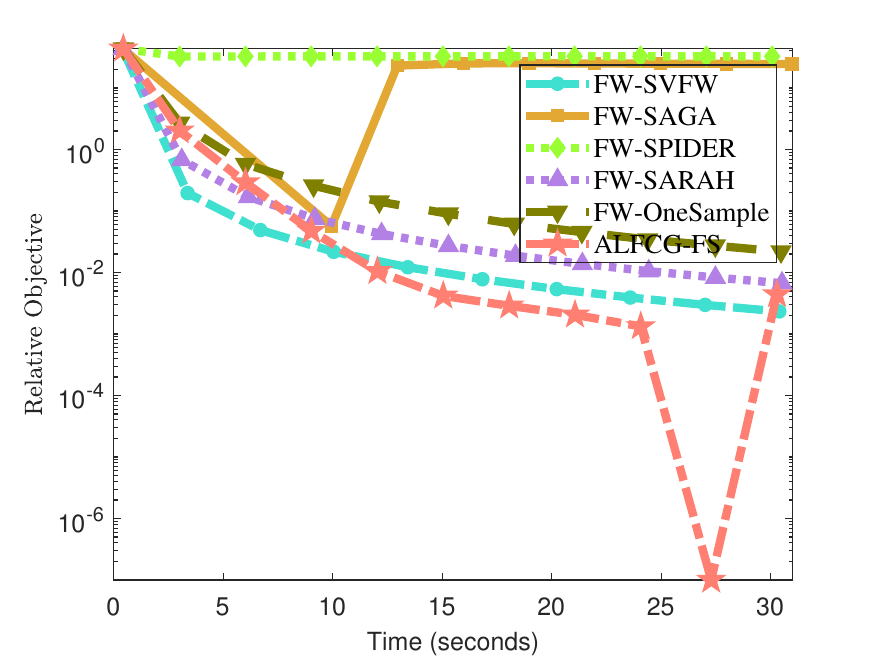} &
    \includegraphics[width=0.23\linewidth,height=2.8cm]{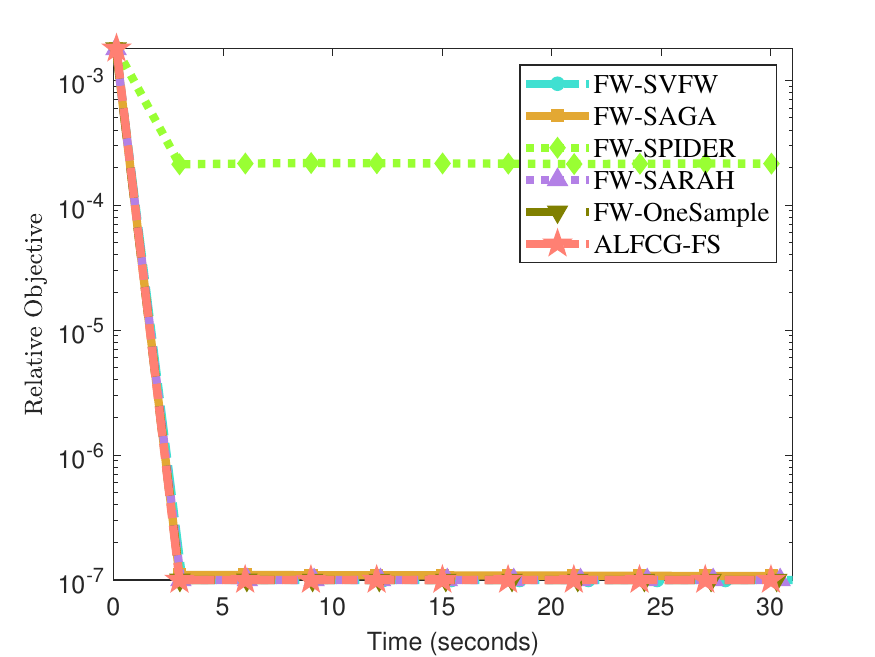} &
    \includegraphics[width=0.23\linewidth,height=2.8cm]{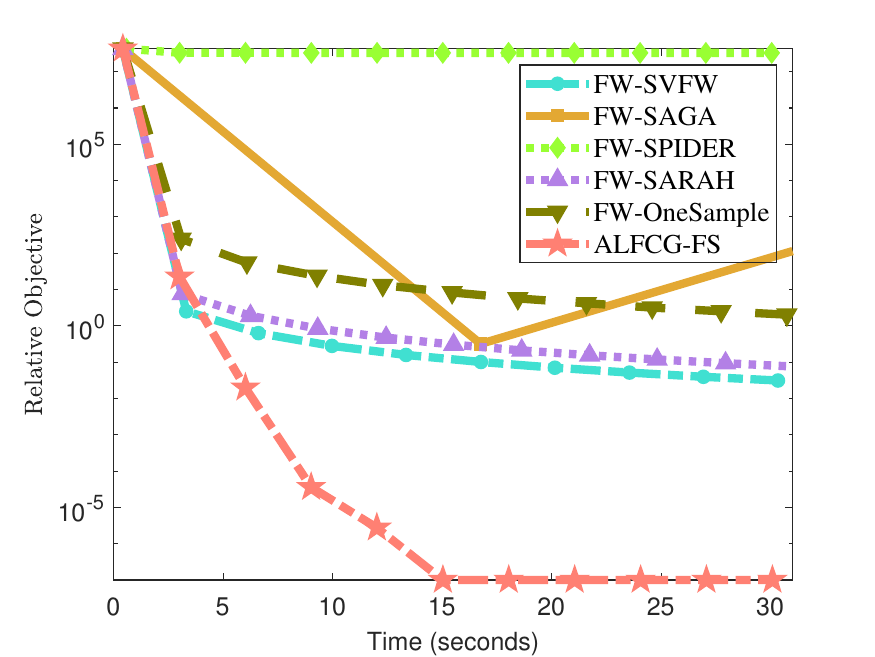} &
    \includegraphics[width=0.23\linewidth,height=2.8cm]{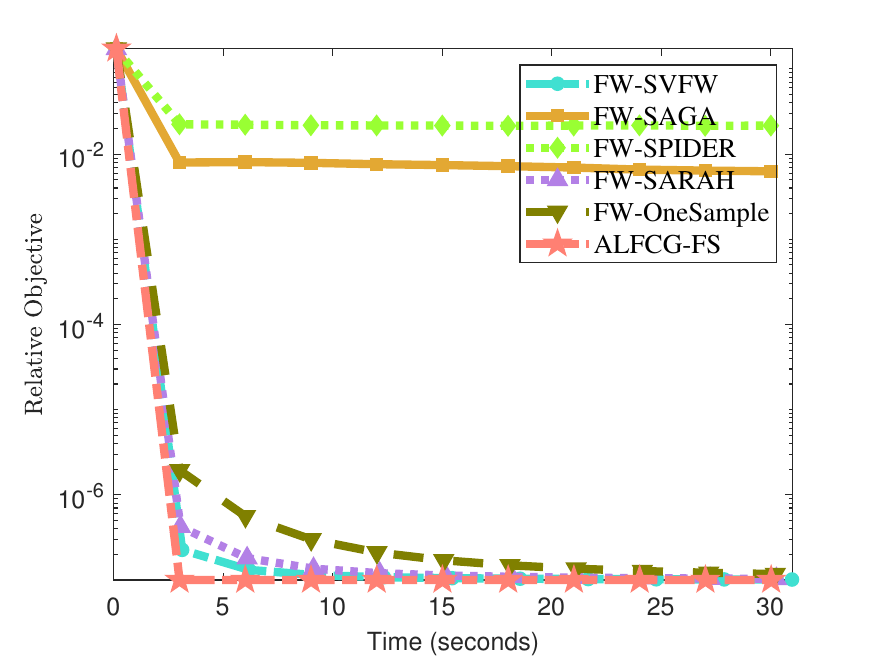} \\
    {\scriptsize (e) randn-30000-5000, $\delta_2=1$} & {\scriptsize (f) mnist-50000-768, $\delta_2=1$} & {\scriptsize (g) randn-30000-5000, $\delta_2=1000$} & {\scriptsize (h) mnist-50000-768, $\delta_2=1000$}
  \end{tabular}
    \caption{Experiment results for the $\ell_p$ norm ball constraint problem for \textbf{finite-sum setting} with varying radius parameter $\delta_2 = \{10,100,1,1000\}$.}
  \label{fig:5}
\end{figure}


\begin{figure}[t]
  \centering
  \begin{tabular}{cccc}
    \includegraphics[width=0.23\linewidth,height=2.8cm]{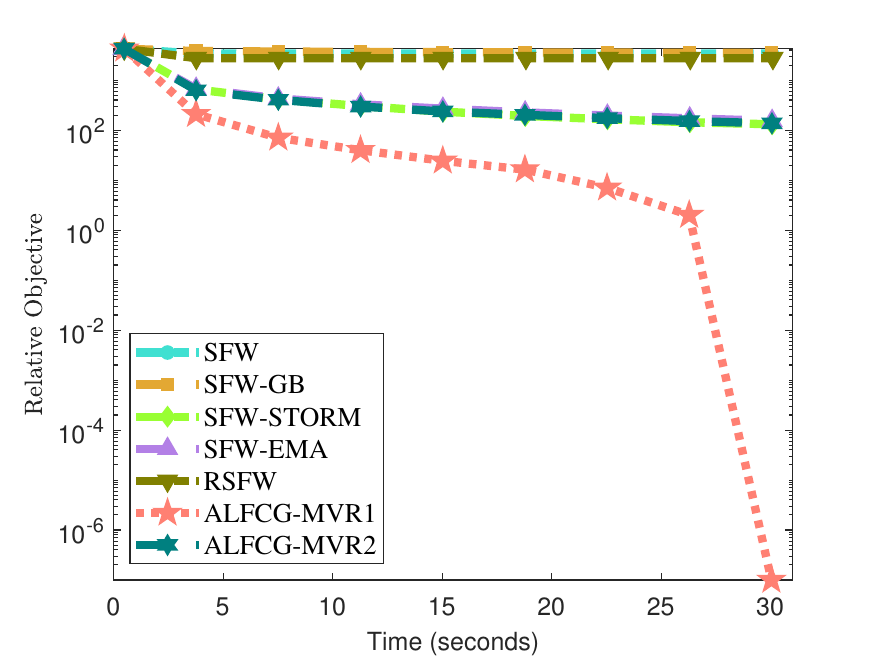} &
    \includegraphics[width=0.23\linewidth,height=2.8cm]{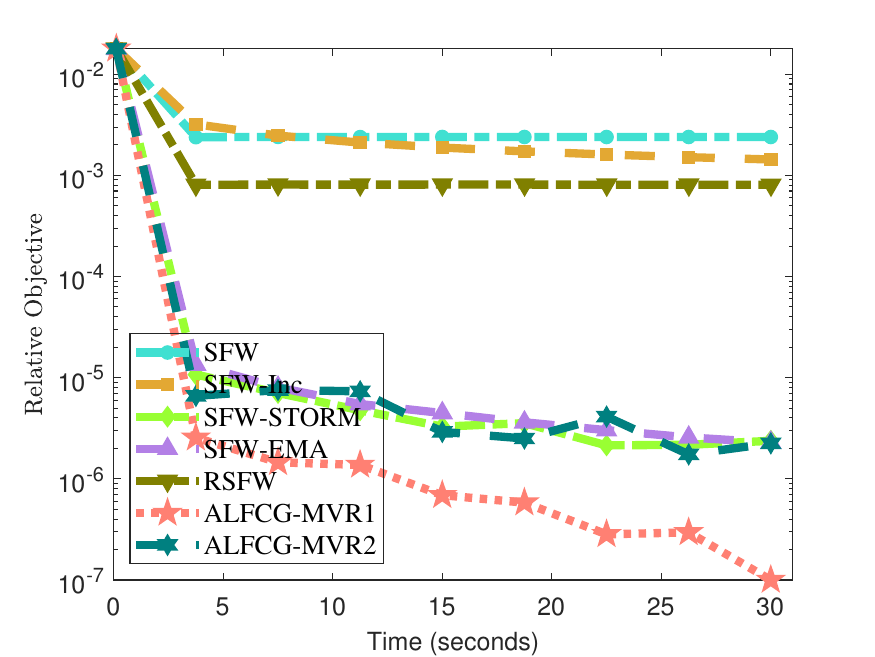} &
    \includegraphics[width=0.23\linewidth,height=2.8cm]{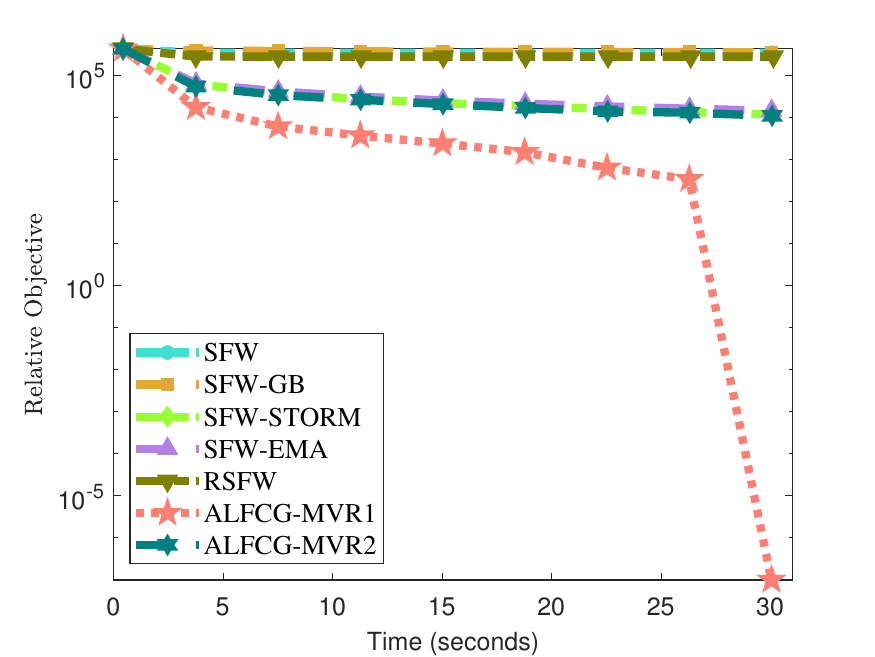} &
    \includegraphics[width=0.23\linewidth,height=2.8cm]{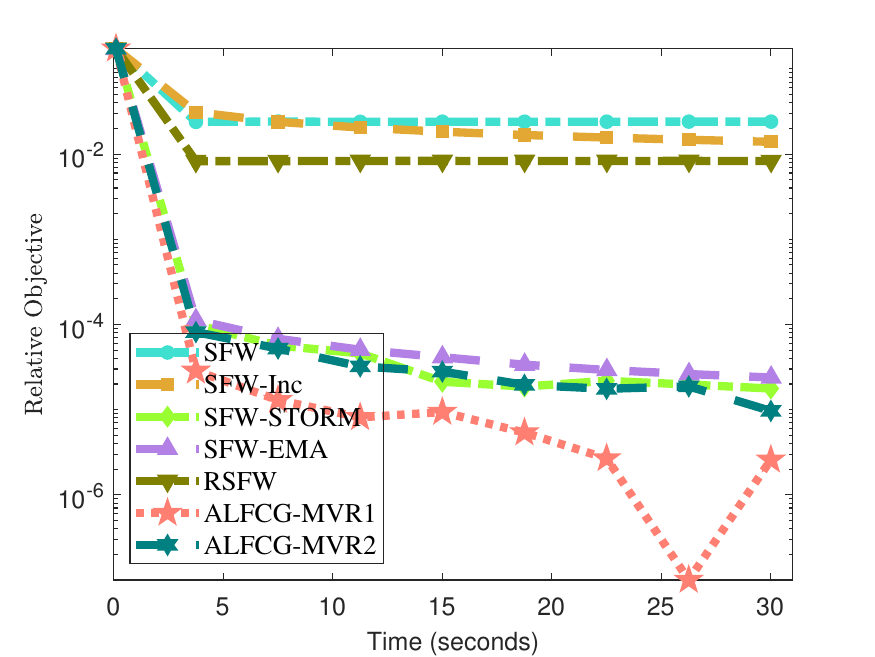} \\
    {\scriptsize (a) randn-30000-5000, $\delta_2=10$} & {\scriptsize (b) mnist-50000-768, $\delta_2=10$} & {\scriptsize (c) randn-30000-5000, $\delta_2=100$} & {\scriptsize (d) mnist-50000-768, $\delta_2=100$}
  \end{tabular}

  \centering
  \begin{tabular}{cccc}
    \includegraphics[width=0.23\linewidth,height=2.8cm]{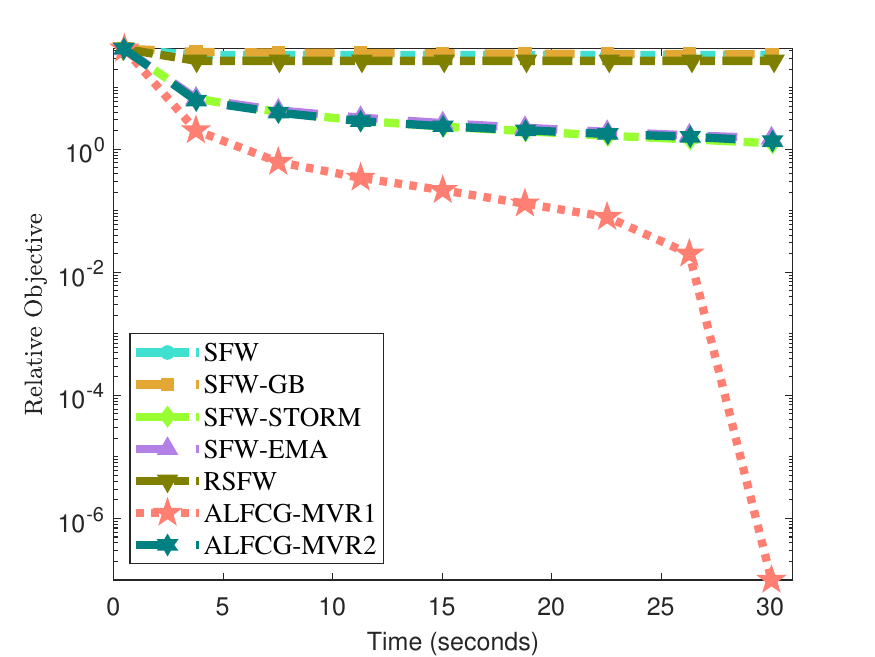} &
    \includegraphics[width=0.23\linewidth,height=2.8cm]{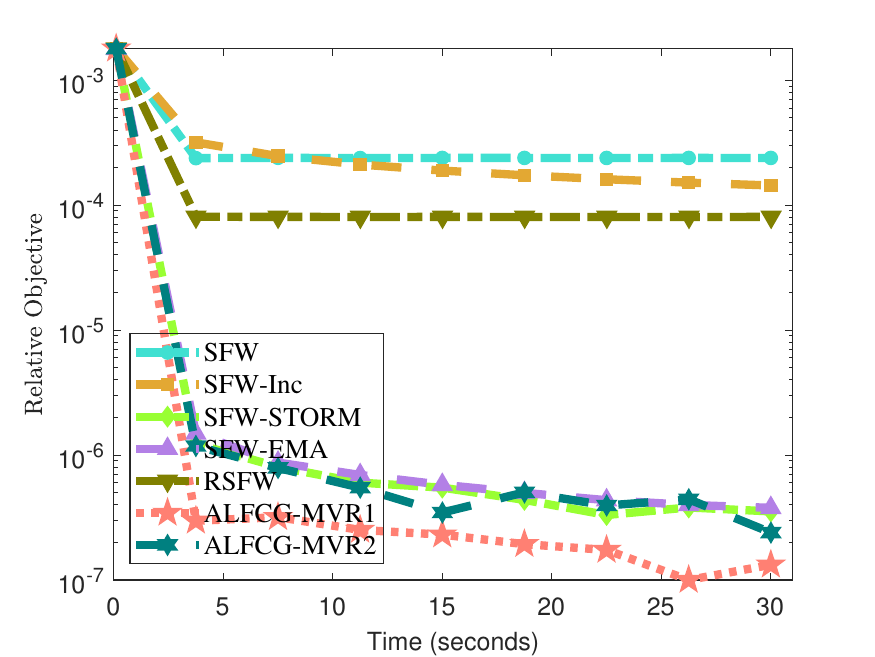} &
    \includegraphics[width=0.23\linewidth,height=2.8cm]{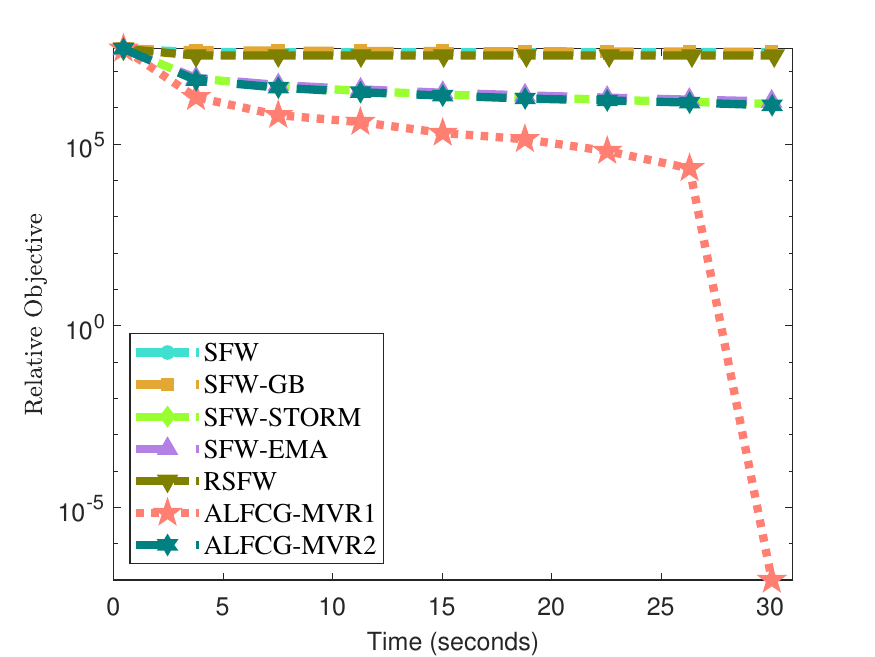} &
    \includegraphics[width=0.23\linewidth,height=2.8cm]{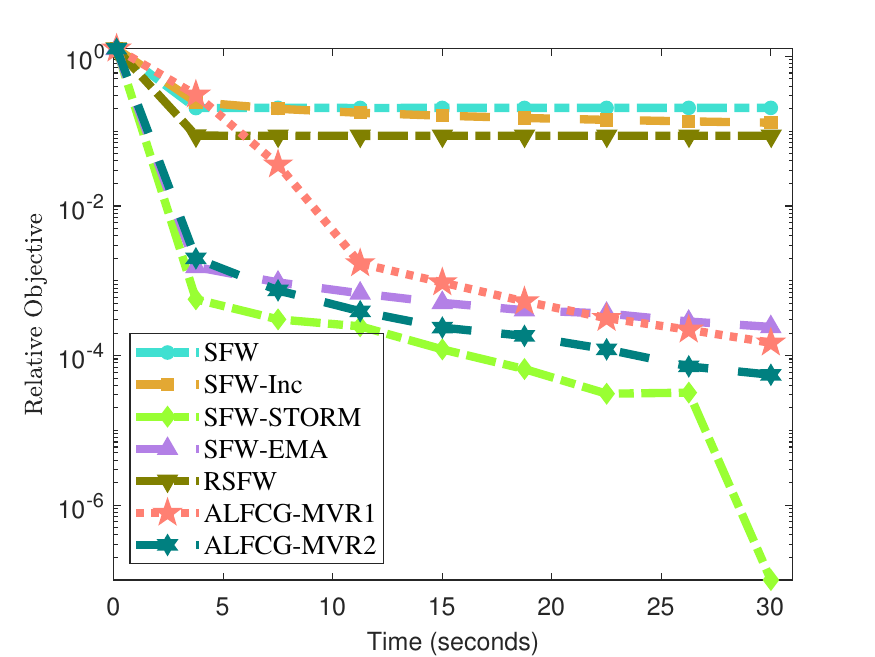} \\
    {\scriptsize (e) randn-30000-5000, $\delta_2=1$} & {\scriptsize (f) mnist-50000-768, $\delta_2=1$} & {\scriptsize (g) randn-30000-5000, $\delta_2=1000$} & {\scriptsize (h) mnist-50000-768, $\delta_2=1000$}
  \end{tabular}
    \caption{Experiment results for the $\ell_p$ norm ball constraint problem for \textbf{expectation setting} with varying radius parameter $\delta_2 = \{10,100,1,1000\}$.}
  \label{fig:6}
\end{figure}

\begin{figure}[t]
  \centering
  \begin{tabular}{cccc}
    \includegraphics[width=0.23\linewidth,height=2.8cm]{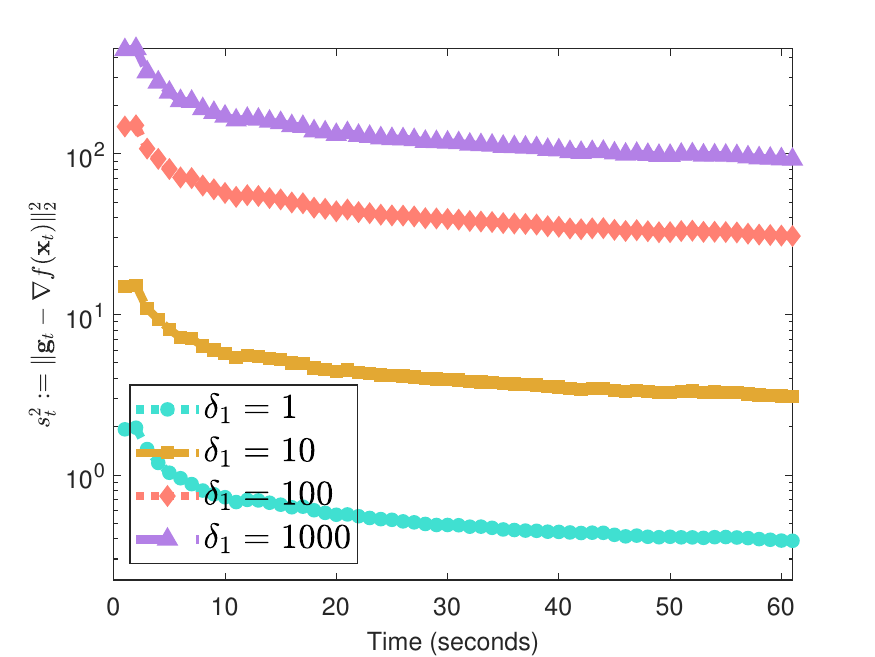} &
    \includegraphics[width=0.23\linewidth,height=2.8cm]{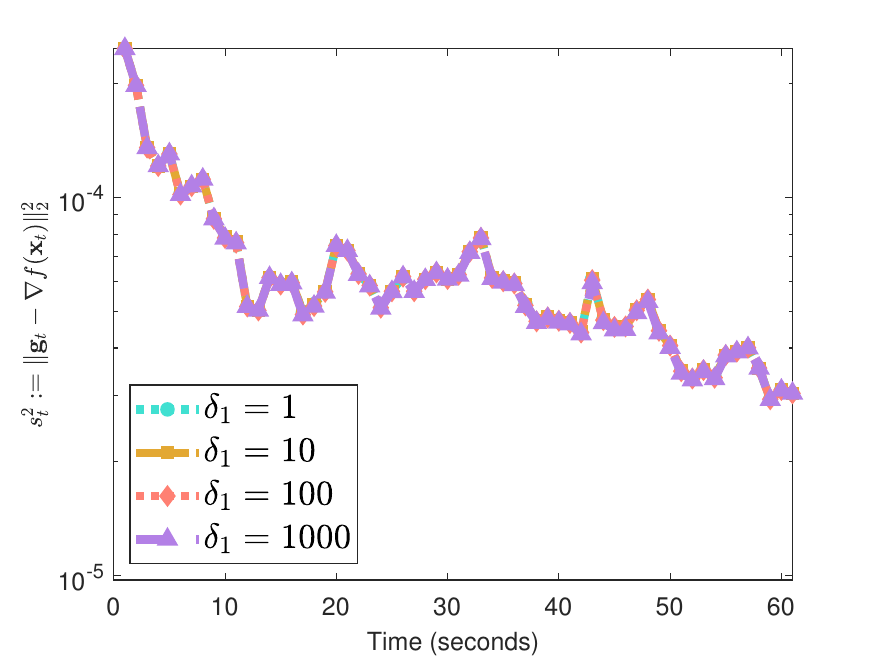} &
    \includegraphics[width=0.23\linewidth,height=2.8cm]{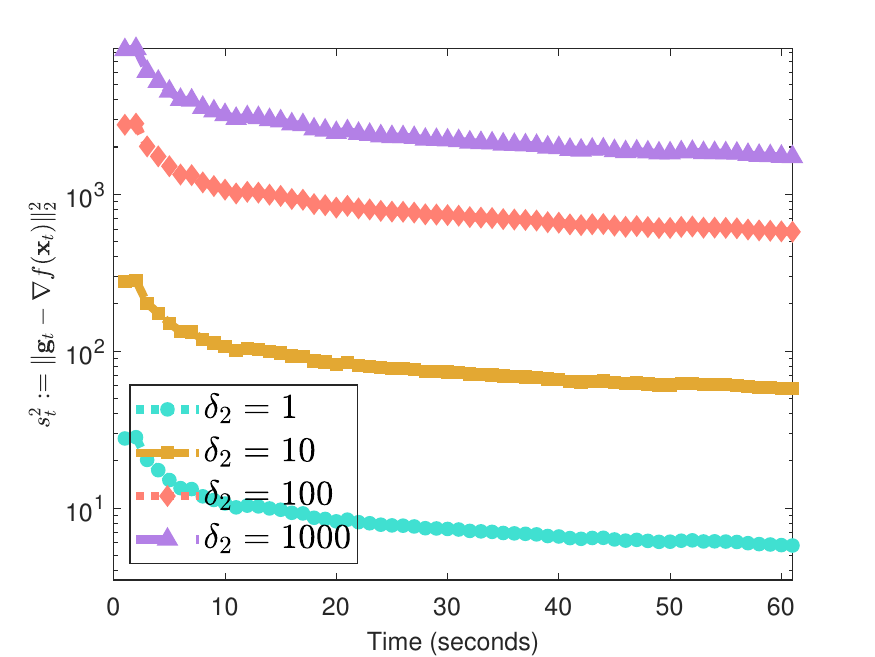} &
    \includegraphics[width=0.23\linewidth,height=2.8cm]{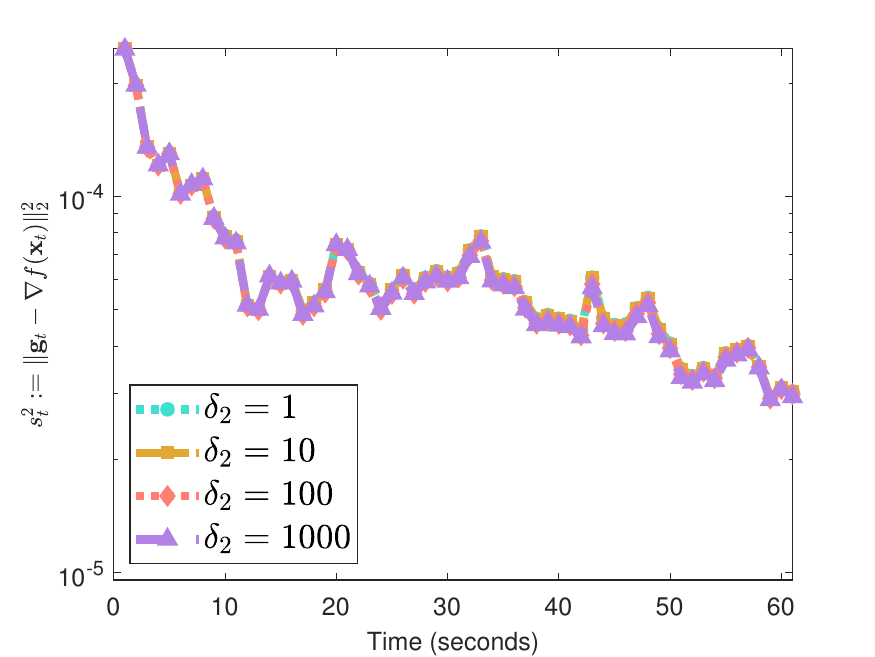} \\
    {\scriptsize (a) randn-30000-5000} & {\scriptsize (b) mnist-50000-768} & {\scriptsize (c) randn-30000-5000} & {\scriptsize (d) mnist-50000-768}
  \end{tabular}
    \caption{The evolution of $s_t^2 := \|\g_t - \nabla f(\x_t)\|_2^2$ for ALFCG-MVR2 (under the expectation setting) on the nuclear norm ball constraint problem (subfigures (a) and (b)) and $\ell_p$-norm ball constraint problem (subfigures (c) and (d)), with varying radius $\delta_1$ or $\delta_2$.}
  \label{fig:7}
\end{figure}

\end{document}

%% file: myinit.tex
\usepackage{graphicx}
\usepackage{epstopdf}

\usepackage{enumitem,amsfonts,amssymb,mathtools,bm,pifont,bbm,listings,xcolor,threeparttable,xcolor}
\usepackage{ulem,hyperref,comment,amsmath}
\usepackage{natbib}
\usepackage{multirow}

\newtheorem{assumption}[theorem]{Assumption}

\def\Rn{\mathbb{R}}  \def\zero{\mathbf{0}}

\def\fro {\mathsf{F}}

\definecolor{mydarkgreen}{RGB}{39,130,67}
\definecolor{mydarkred}{RGB}{192,25,25}

\usepackage{relsize} 

\newcommand{\beq}{\begin{eqnarray}}
\newcommand{\eeq}{\end{eqnarray}}
\newcommand{\beqq}{\begin{equation}}
\newcommand{\eeqq}{\end{equation}}
\newcommand{\bel}{\begin{align}}
\newcommand{\eel}{\end{align}}
\newcommand{\la}{\langle}
\newcommand{\ra}{\rangle}
\newcommand{\noi}{\noindent}
\newcommand{\nn}{\nonumber}
\def\noi{\noindent}
\def\nn{\nonumber}
\def\la{\langle}
\def\ra{\rangle}

\newcommand{\step}[1]{\text{\ding{\numexpr#1+171\relax}}}

\def\ts{\textstyle}